\newcommand{\plotpath}{colorplots}
\def\e{\ensuremath{\mathrm{e}}}
\def\R{\ensuremath{\mathbb{R}}}
\def\E{\ensuremath{\mathbb{E}}}
\newcommand{\mat}[1]{\ensuremath{\mathbf{#1}}}
\def\matA{\mat{A}}
\def\matB{\mat{B}}
\def\matD{\mat{D}}
\def\matH{\mat{H}}
\def\matU{\mat{U}}
\def\matY{\mat{Y}}
\def\matC{\mat{C}}
\def\matW{\mat{W}}
\def\matP{\mat{P}}
\def\matI{\mat{I}}
\def\matR{\mat{R}}
\def\matS{\mat{S}}
\def\matT{\mat{T}}
\def\matV{\mat{V}}
\def\matX{\mat{X}}
\def\matZ{\mat{Z}}
\def\matF{\mat{F}}
\def\matM{\mat{M}}
\def\matQ{\mat{Q}}
\def\matSig{\boldsymbol{\Sigma}}
\def\matOmega{\boldsymbol{\Omega}}
\def\pinv{\dagger}
\def\transp{T}
\renewcommand{\vec}[1]{\ensuremath{\mathbf{#1}}}
\newcommand{\tr}[1]{\ensuremath{\operatorname{Tr}\left(#1\right)}}
\newcommand{\xiNorm}[1]{\ensuremath{\left\|#1\right\|_\xi}}
\newcommand{\TNorm}[1]{\ensuremath{\left\|#1\right\|_2}}
\newcommand{\TNormS}[1]{\ensuremath{\left\|#1\right\|_2^2}}
\newcommand{\FNorm}[1]{\ensuremath{\left\|#1\right\|_\mathrm{F}}}
\newcommand{\FNormS}[1]{\ensuremath{\left\|#1\right\|_\mathrm{F}^2}}
\newcommand{\VTNorm}[1]{\ensuremath{\left\|#1\right\|_2}}
\newcommand{\tracenorm}[1]{\ensuremath{\left\|#1\right\|_\star}}
\newcommand{\Prob}[1]{\ensuremath{\mathbb{P}\left\{#1\right\}}}
\newcommand{\const}[1]{\ensuremath{\mathrm{#1}}}
\newtheorem{theorem}{Theorem}
\newtheorem{lemma}{Lemma}
\theoremstyle{remark}
\title{
Revisiting the Nystr\"{o}m method for improved large-scale machine learning
%A unified approach to the low-rank approximation of large Laplacian and kernel matrices 
}
\author[Alex Gittens]{Alex Gittens$^1$}
\thanks{$^1$Department of Applied and Computational Mathematics, %
California Institute of Technology, %
Pasadena, CA 91125. Email: \emph{gittens@caltech.edu}}
\author[Michael W. Mahoney]{Michael W. Mahoney$^2$}
\thanks{$^2$Department of Mathematics, %
Stanford University, %
Stanford, CA 9430. Email: \emph{mmahoney@cs.stanford.edu}}
\begin{document}

\begin{abstract}
We reconsider randomized algorithms for the low-rank approximation of 
symmetric positive semi-definite (SPSD) 
matrices such as Laplacian and kernel matrices that arise in data analysis 
and machine learning applications.
Our main results consist of an empirical evaluation of the performance 
quality and running time of sampling and projection methods on a diverse 
suite of SPSD matrices.
Our results highlight complementary aspects of sampling versus projection 
methods; they characterize the effects of common data preprocessing steps 
on the performance of these algorithms; and they point to important 
differences between uniform sampling and nonuniform sampling methods based 
on leverage scores.
In addition, our empirical results illustrate that existing theory is so 
weak that it does not provide even a qualitative guide to practice.
Thus, we complement our empirical results with a suite of worst-case 
theoretical bounds for both random sampling and random projection methods.
These bounds are qualitatively superior to existing bounds---\emph{e.g.}, 
improved additive-error bounds for spectral and Frobenius norm error and 
relative-error bounds for trace norm error---and they point to future 
directions to make these algorithms useful in even larger-scale machine
learning applications.
\end{abstract}

\maketitle

%{\bf
%\emph{
%\begin{center}
%XXX.  Note:  This is still a rough draft - from \today. \\ Please do NOT distribute.
%\end{center}
%}
%}

% IF YOU HAVE ERRORS COMPILING:
% listoftodos clashes with asmart class
% uncomment one of the following workarounds 
% depending on which machine you're using

% WORKAROUND 1
% \makeatletter
%   \providecommand\@dotsep{5}
% \makeatother
% \listoftodos%\relax

% WORKAROUND 2
\makeatletter
\providecommand\@dotsep{5}
\def\listtodoname{List of Todos}
\def\listoftodos{\@starttoc{tdo}\listtodoname}
\makeatother
%\listoftodos

\section{Introduction}
\label{sxn:intro}

\todo[inline]{Use a remark environment for numbering and better formatting of the remarks}
\todo[inline]{Talwalkar: References [36] and [57] are often mixed up during 
citations, e.g., on page 2 (last paragraph), page 5 (above equation 5 and below 
equation 6).  In particular, [57] provides coherence-based bounds for the 
Nystrom method in the low-rank setting.}

We reconsider randomized algorithms for the low-rank 
approximation of symmetric positive semi-definite (SPSD) 
matrices such as Laplacian and kernel matrices that 
arise in data analysis and machine learning applications.
Our goal is to obtain an improved understanding, both empirically and 
theoretically, of the complementary strengths of sampling versus projection 
methods on realistic data.
Our main results consist of an empirical evaluation of the performance quality and 
running time of sampling and projection methods on a diverse suite of 
dense and sparse SPSD matrices drawn both from machine learning as well as 
more general data analysis applications.  
These results are not intended to be comprehensive but instead to be 
illustrative of how randomized algorithms for the low-rank 
approximation of SPSD matrices behave in a broad range of realistic machine 
learning and data analysis applications.

In addition to being of interest in their own right, our empirical results 
point to several directions that are not explained well by existing theory. 
(For example, that the results are much better than existing worst-case 
theory would suggest, and that sampling with respect to the statistical 
leverage scores leads to results that are complementary to those achieved 
by projection-based methods.)
Thus, we complement our empirical results with a suite of worst-case 
theoretical bounds for both random sampling and random projection methods.
These bounds are qualitatively superior to existing bounds---\emph{e.g.}, 
improved additive-error bounds for spectral and Frobenius norm error and 
relative-error bounds for trace norm error.
Importantly, by considering random sampling and random projection algorithms 
on an equal footing, we identify within our analysis deterministic 
structural properties of the input data and sampling/projection methods that 
are responsible for high-quality low-rank approximation.

In more detail, our main contributions are fourfold.
%\begin{itemize}[topsep=.5em,parsep=.75em,after=.5em]
\def\pltopsep{.5em}
\begin{compactitem}
\item
First, we provide an empirical illustration of the complementary strengths 
and weaknesses of data-independent random projection methods and 
data-dependent random sampling methods when applied to SPSD matrices. 
We do so for a diverse class of SPSD matrices drawn from machine learning 
and more general data analysis applications, and we consider reconstruction
error with respect to the spectral, Frobenius, as well as trace norms.  
Depending on the parameter settings, the matrix norm of interest, the data 
set under consideration, etc., one or the other method might be preferable.
In addition, we illustrate how these empirical properties can often be 
understood in terms of the structural nonuniformities of the input data that
are of independent interest.
\item
Second, we consider the running time of high-quality sampling and 
projection algorithms.
For random sampling algorithms, the computational bottleneck is typically the
exact or approximate computation of the importance sampling distribution
with respect to which one samples; and for random projection methods, the
computational bottleneck is often the implementation of the random 
projection.
By exploiting and extending recent work on ``fast'' random projections and 
related recent work on ``fast'' approximation of the statistical leverage 
scores, we illustrate that high-quality leverage-based random sampling and 
high-quality random projection algorithms have comparable running times. 
Although both are slower than simple (and in general much lower-quality) 
uniform sampling, both can be implemented more quickly than a na\"{i}ve 
computation of an orthogonal basis for the top part of the spectrum.
\item
Third, our main technical contribution is a set of deterministic structural 
results that hold for any ``sketching matrix'' applied to an SPSD matrix.
(A precise statement of these results is given in 
Theorems~\ref{thm:spectral-deterministic-error},
\ref{thm:frobenius-deterministic-error}, 
and~\ref{thm:trace-deterministic-error}
in Section~\ref{sxn:theory-det}.)
We call these ``deterministic structural results'' since there is no
randomness involved in their statement or analysis and since they depend on
structural properties of the input data matrix and the way the sketching 
matrix interacts with the input data.
In particular, they highlight the importance of the statistical
leverage scores (and other related structural nonuniformities having to do 
with the subspace structure of the input matrix), which have proven 
important in other applications of random sampling and random projection 
algorithms.
\item
Fourth, our main algorithmic contribution is to show that when the low-rank
sketching matrix represents certain random projection or random sampling 
operations, then we obtain worst-case quality-of-approximation bounds that 
hold with high probability.
(A precise statement of these results is given in 
Lemmas~\ref{lem:sample-lev}, \ref{lem:proj-fourier}, 
\ref{lem:proj-gaussian}, and~\ref{lem:sample-unif} in
Section~\ref{sxn:theory-rand}.)
These bounds are qualitatively better than existing bounds (when nontrivial 
prior bounds even exist); they hold for reconstruction error of the input 
data with respect to the spectral norm and trace norm as well as the 
Frobenius norm; and they illustrate how high-quality random sampling 
algorithms and high-quality random projection algorithms can be treated from 
a unified perspective.
%\end{itemize}
\end{compactitem}

A novel aspect of our work is that we adopt a unified 
approach to these low-rank approximation questions---unified in the sense 
that we consider both sampling and projection algorithms on an equal 
footing, and that we illustrate how the structural nonuniformities 
responsible for high-quality low-rank approximation in worst-case 
analysis also have important empirical consequences in a diverse class of 
SPSD matrices.
%% \footnote{Most prior work on low-rank matrix approximation has focused 
%% \emph{either} on random projection algorithms \emph{or} on random sampling 
%% algorithms.  Similarly, most prior work has typically focused \emph{either} 
%% on theoretical analysis of one of these procedures \emph{or} on empirical 
%% properties of one of these procedures in particular application domains.}
By identifying deterministic structural conditions responsible for 
high-quality low-rank approximation of SPSD matrices, we highlight 
complementary aspects of sampling and projection methods; and
by illustrating the empirical consequences of structural nonuniformities, we 
provide theory that is a much closer guide to practice than has been 
provided by prior work.
More generally, we should note that, although it is beyond the scope of 
this paper, our deterministic structural results could be used to check, in 
an \emph{a posteriori} manner, the quality of a sketching method for which 
one cannot establish an \emph{a priori} bound.

Our analysis is timely for several reasons.
First, in spite of the empirical successes of Nystr\"{o}m-based and other 
randomized low-rank methods, existing theory for the Nystr\"{o}m method is 
quite modest.
For example, existing worst-case bounds such as those 
of~\cite{dm_kernel_JRNL} are very weak, especially compared with existing 
bounds for least-squares regression and general low-rank matrix approximation 
problems~\cite{DMM08_CURtheory_JRNL,DMMS07_FastL2_NM10,Mah-mat-rev_BOOK}.%
\footnote{This statement may at first surprise the reader, since an SPSD 
matrix is an example of a general matrix, and one might suppose that the 
existing theory for general matrices could be applied to SPSD matrices.  
While this is true, these existing methods for general matrices do not in 
general respect the symmetry or positive semi-definiteness of the input.}
%% and, moreover, they are are sufficiently weak that they do not even explain 
%% the success of the Nystr\"{o}m method when it performs well.
Moreover, many other worst-case bounds make very strong assumptions about the 
coherence properties of the input data~\cite{KMT12,Gittens12_TR}. 
Second, there have been conflicting views in the literature about the 
usefulness of uniform sampling versus nonuniform sampling based on the 
empirical statistical leverage scores of the data in realistic data analysis 
and machine learning applications.
For example, some work has concluded that the statistical leverage scores of 
realistic data matrices are fairly uniform, meaning that the coherence is 
small and thus uniform sampling is appropriate~\cite{WS01,KMT12}; while 
other work has demonstrated that leverage scores are often very nonuniform 
in ways that render uniform sampling inappropriate and that can be essential 
to highlight properties of downstream interest~\cite{Paschou07b,CUR_PNAS}. 
\todo[inline]{Talwalkar: In [38] we don't strictly claim that uniform sampling is 
the best.  A more accurate statement is the quote at the end of Section 4: "The 
empirical results suggest a trade-off between time and space requirements, as noted 
by Scholkopf and Smola (2002)[Chapter 10.2]. Adaptive techniques spend more time to 
find a concise subset of informative columns, but as in the case of the K-means 
algorithm, can provide improved approximation accuracy." }
Third, in recent years several high-quality numerical implementations of 
randomized matrix algorithms for least-squares and low-rank approximation 
problems have been developed~\cite{AMT10,MSM11_TR,WLRT08,RST09,MRT11}.
These have been developed from a ``scientific computing'' perspective, 
where condition numbers, spectral norms, etc. are of greater 
interest~\cite{algstat10_CHAPTER}, and where relatively strong homogeneity 
assumptions can be made about the input data.
In many ``data analytics'' applications, the questions one asks are very 
different, and the input data are much less well-structured.  Thus, we 
expect that some of our results will help guide the development of 
algorithms and implementations that are more appropriate for large-scale
analytics applications.

In the next section, Section~\ref{sxn:prelim}, we start by presenting some
notation, preliminaries, and related prior work.
Then, in Section~\ref{sxn:emp} we present our main empirical results; 
and in Section~\ref{sxn:theory} we present our main theoretical results.
We conclude in Section~\ref{sxn:conc} with a brief discussion of our 
results in a broader context.

\section{Notation, Preliminaries, and Related Prior Work}
\label{sxn:prelim}

In this section, we introduce the notation used throughout the paper, and we
address several preliminary considerations, including reviewing related 
prior work.

\subsection{Notation}
\label{sxn:prelim-notation}

Let $\matA \in \R^{n \times n}$ be an arbitrary SPSD matrix with eigenvalue 
decomposition $\matA = \matU \matSig \matU^\transp$, where we partition 
$\matU$ and $\matSig$ as
\begin{equation}
 \matU = \begin{pmatrix} \matU_1 & \matU_2 \end{pmatrix} \text{ and } 
 \matSig = \begin{pmatrix} \matSig_1 & \, \\ \, & \matSig_2 \end{pmatrix}.
 \label{eqn:eigendecompositionpartition}
\end{equation}
Here, $\matU_1$ has $k$ columns and spans the top $k$-dimensional eigenspace 
of $\matA$, and $\matSig_1 \in \R^{k \times k}$ is full-rank.% 
\footnote{Variants of our results hold trivially if the rank of $\matA$ is 
$k$ or less, and so we focus on this more general case here.}
We denote the eigenvalues of $\matA$ with $\lambda_1(\mat{A}) \geq 
\ldots \geq \lambda_n(\matA).$

Given $\matA$ and a rank parameter $k$, the \emph{statistical leverage 
scores of $\matA$ relative to the best rank-$k$ approximation to $\matA$} 
equal the squared Euclidean norms of the rows of the $n \times k$ matrix 
$\matU_1$:
\begin{equation}
\ell_j = \|(\mat{U}_1)_j\|^2 .
\label{eqn:levscores}
\end{equation}
The leverage scores provide a more refined notion of the structural 
nonuniformities of $\matA$ than does the notion of \emph{coherence}, 
$\mu = \frac{n}{k} \max_{i\in\{1,\ldots,n\}} \ell_i $, which equals (up to scale) the largest 
leverage score; and they have been used historically in regression 
diagnostics to identify particularly influential or outlying data points.
Less obviously, the statistical leverage scores play a crucial role in 
recent work on randomized matrix algorithms: they define the key structural 
nonuniformity that must be dealt with in order to obtain high-quality 
low-rank and least-squares approximation of general matrices via random 
sampling and random projection methods~\cite{Mah-mat-rev_BOOK}.
Although Equation~(\ref{eqn:levscores}) defines them with respect to a 
particular basis, the statistical leverage scores equal the diagonal 
elements of the projection matrix onto the span of that basis, and thus they 
can be computed from any basis spanning the same space.
Moreover, they can be approximated more quickly than the time required to 
compute that basis with a truncated SVD or a QR 
decomposition~\cite{DMMW12_JMLR}.

We denote by $\matS$ an arbitrary $n \times \ell$ ``sketching'' matrix 
that, when post-multiplying a matrix $\matA$, maps points from $\R^{n}$ to 
$\R^{\ell}$.
We are most interested in the case where $\matS$ is a random matrix that represents 
a random sampling process or a random projection process, but we do not impose 
this as a restriction unless explicitly stated.
In order to provide high-quality low-rank matrix approximations, we  
control the error of our approximation in terms of the interaction of the 
sketching matrix $\matS$ with the eigenspaces of $\matA$, and thus we~let 
\begin{equation}
 \matOmega_1 = \matU_1^\transp \matS \quad \text{ and } \matOmega_2 = \matU_2^\transp \matS
 \label{eqn:interactiontermsdefinition}
\end{equation}
denote the projection of $\matS$ onto the top and bottom eigenspaces of 
$\matA$, respectively.

Recall that, by keeping just the top $k$ singular vectors, the matrix
$\matA_k := \matU_1 \matSig_1 \matU_1^T$ is the best rank-$k$ 
approximation to $\matA$, when measured with respect to any 
unitarily-invariant matrix norm, \emph{e.g.}, the spectral, Frobenius, or 
trace norm.
For a vector $\vec{x}\in\mathbb{R}^{n}$, let $\|\vec{x}\|_\xi,$ for $\xi=1,2,\infty,$ denote the 
$1$-norm, the Euclidean norm, and the $\infty$-norm, respectively, and let 
$\mbox{Diag}(\matA)$ denote the vector consisting of the diagonal entries of the matrix 
$\matA.$ Then, 
$\TNorm{\matA} = \|\mbox{Diag}(\matSig)\|_{\infty}$ denotes the 
\emph{spectral norm} of $\matA$;
$\FNorm{\matA} = \|\mbox{Diag}(\matSig)\|_2$ denotes the \emph{Frobenius 
norm} of $\matA$; and
$\tracenorm{\matA} = \|\mbox{Diag}(\matSig)\|_1$ denotes the \emph{trace 
norm} (or nuclear norm) of $\matA$.
Clearly, 
$$
\TNorm{\matA} \le \FNorm{\matA} \le \tracenorm{\matA} \le \sqrt{n} \FNorm{\matA} \le n \TNorm{\matA} .
$$
We quantify the quality of our algorithms by the ``additional error'' 
(above and beyond that incurred by the best rank-$k$ approximation to 
$\matA$).
In the theory of algorithms, bounds of the form provided by
(\ref{eqn:additive1}) and (\ref{eqn:additive2}) below are known as 
\emph{additive-error bounds}, the reason being that the additional error is 
an additive factor of the form $\epsilon$ times a size scale that is larger 
than the ``base error'' incurred by the best rank-$k$ approximation.
In this case, the goal is to minimize the ``size scale'' of the additional 
error.
Bounds of this form are very different and in general weaker than when the
additional error enters as a multiplicative factor, such as when the error
bounds are of the form
$\|\matA-\tilde{\matA}\| \le f(n,k,\eta)\|\matA-\matA_k\|$, 
where $f(\cdot)$ is some function and $\eta$ represents other parameters of 
the problem.
These latter bounds are of greatest interest when $f = 1+\epsilon$, for an 
error parameter $\epsilon$, as in (\ref{eqn:relative1}) and 
(\ref{eqn:relative2}) below.
These \emph{relative-error bounds}, in which the size scale of the additional
error equals that of the base error, provide a \emph{much} stronger notion of 
approximation than additive-error bounds.

\todo[inline]{Address Ilse's comments on the numerical analysts' usual use of the 
terminology relative-error, and her concern about not knowing how our TCS idea
of relative-error is calibrated (i.e. state that close to 1 is good, and why, and that
we want to achieve close to 1 with as few col samples as possible). So our
desidera are low relative-error achieved for few col samples}

\subsection{Preliminaries}
\label{sxn:prelim-prelim}

%%% Perhaps the simplest random sketching operation is to sample columns of 
%%% $\matA$ uniformly, with or without replacement. 
%%% In general, however, agnostically sampling columns of $\matA$ does 
%%% \emph{not} provide a high-quality low-rank approximation: 
%%% worst-case bounds are very weak; and, 
%%% unless $\matA$ has certain ``smoothness'' properties, computations on the 
%%% sketch can be very nonrepresentative of computations on the original data.
%%% Instead, either one must ``mix up'' the columns of $\matA$ with a random 
%%% projection and \emph{then} sample uniformly; or one must sample columns 
%%% according to a judiciously-chosen nonuniform importance sampling 
%%% distribution that is formed using knowledge of the structure of 
%%% $\matA$~\cite{Mah-mat-rev_BOOK}.

In many machine learning and data analysis applications, one is interested in symmetric 
positive semi-definite (SPSD) matrices, \emph{e.g.}, kernel matrices and 
Laplacian matrices. One common column-sampling-based approach to low-rank 
approximation of SPSD matrices is the so-called Nystr\"{o}m 
method~\cite{WS01,dm_kernel_JRNL,KMT12}. The Nystr\"{o}m method---
both randomized and deterministic variants---has 
proven useful in applications where the kernel matrices are reasonably 
well-approximated by low-rank matrices; and it has been applied to Gaussian 
process regression, spectral clustering and image segmentation, manifold 
learning, and a range of other common machine learning 
tasks~\cite{WS01,WRST02_TR,FBCM04,TKR08,ZK10,KMT12}.
The simplest Nystr\"om-based procedure selects columns from the 
original data set uniformly at random and then uses those columns to 
construct a low-rank SPSD approximation.
Although this procedure can be effective in practice for certain input 
matrices, two extensions (both of which are more expensive) can 
substantially improve the performance, \emph{e.g.}, lead to lower 
reconstruction error for a fixed number of column samples, both in theory 
and in practice. 
The first extension is to sample columns with a judiciously-chosen 
nonuniform importance sampling distribution; and the second extension is to 
randomly mix (or combine linearly) columns before sampling them.
For the random sampling algorithms, an important question is what importance 
sampling distribution should be used to construct the sample; while for the 
random projection algorithms, an important question is how to implement the 
random projections.
In either case, appropriate consideration should be paid to questions such as
whether the data are sparse or dense, how the eigenvalue spectrum decays, 
the nonuniformity properties of eigenvectors, \emph{e.g.}, as quantified by 
the statistical leverage scores, whether one is interested in reconstructing 
the matrix or performing a downstream machine learning task, and so on.

The following sketching model subsumes both of these classes of methods.
\begin{itemize}
\item
\emph{SPSD Sketching Model.} 
Let $\matA$ be an $n \times n$ positive semi-definite matrix, and 
let $\matS$ be a matrix of size $n \times \ell$, where $\ell \ll n$. 
Take
\[
 \matC = \matA \matS \quad \text{ and } \matW = \matS^\transp \matA \matS.
\]
Then $\matC \matW^\pinv \matC^\transp$ is a low-rank approximation to $\matA$ with rank at most $\ell.$
\end{itemize}
\todo[inline]{Should define/introduce/explain CUR methods if we want to refer to them}
We should note that the SPSD Sketching Model, formulated in this way, is 
\emph{not} guaranteed to be numerically stable: if $\matW$ is 
ill-conditioned, then instabilities may arise in forming the product 
$\matC \matW^\dagger \matC^\transp$. 
Thus, we are also interested in $\matC \matW_k^\pinv \matC^\transp$, where 
$\matW_k$ is the best rank-$k$ approximation to $\matW$, and where $k$ is a 
rank parameter.
For example, one might specify $k$ and then ``oversample'' by choosing 
$\ell>k$ but still be interested in an approximation that has rank no greater
than $k$.
Often, ``filtering'' a low-rank approximation in this way through a (lower) 
rank-$k$ space has a regularization effect:
for example, relative-error CUR matrix decompositions are implicitly 
regularized by letting the ``middle matrix'' have rank no greater than 
$k$~\cite{DMM08_CURtheory_JRNL,CUR_PNAS}; and
\cite{CD11_TR} considers a regularization of the uniform column sampling 
Nystr\"om extension where, before forming the extension, all singular values 
of $\matW$ smaller than a threshold are truncated to zero.
For our empirical evaluation, we consider both cases, which we
refer to as ``non-rank-restricted'' and ``rank-restricted,'' respectively.
For our theoretical results, for simplicity of notation, we do \emph{not} 
describe the generalization of our results to this rank-restricted model; 
but we note that our analysis could be extended to include this, \emph{e.g.}, 
by letting the sketching matrix $\matS$ be a combination of a sampling 
operation and an operation that projects to the best rank-$k$ approximation.

The choice of distribution for the sketching matrix $\matS$ leads to 
different classes of low-rank approximations.
For example, if $\matS$ represents the process of column sampling, either uniformly 
or according to a nonuniform importance sampling distribution, then we refer 
to the resulting approximation as a Nystr\"{o}m extension; if $\matS$ consists of 
random linear combinations of most or all of the columns of $\matA$, then we 
refer to the resulting approximation as a projection-based SPSD approximation.
In this paper, we focus on Nystr\"om extensions and projection-based SPSD 
approximations that fit the above SPSD Sketching Model.
In particular, we do not consider adaptive schemes, which iteratively select 
columns to progressively decrease the approximation error. 
While these methods often perform well in 
practice~\cite{BW09_PNAS,BW09_JRNL,FGK11,KMT12}, rigorous analyses of them 
are hard to come by---interested readers are referred to the discussion 
in~\cite{FGK11,KMT12}.

\todo[inline]{Talwalkar: You should perhaps also also include column-projection 
approximations (defined in [36]) in your discussion / experiments.  I would 
argue that a truly unified approach would include Nystrom, Column-Projection 
and Random Projection (and this indeed is what we do in our DFC work [43])}

\todo[inline]{Talwalkar: Considering parallel run times would be quite interesting.
For instance, random projection can be trivially parallelized, and in a large-scale
setting, reporting parallel runtime is appropriate.}

\subsection{The Power Method}
\label{sxn:prelim-power}

One can obtain the optimal rank-$k$ approximation to $\matA$ by forming an 
SPSD sketch where the sketching matrix $\matS$ is an orthonormal basis for the range of $\matA_k,$ 
because with such a~choice,
\begin{align*}
 \matC \matW^\pinv \matC^\transp & = 
   \matA \matS (\matS^\transp \matA \matS)^\pinv \matS^\transp \matA 
   = \matA (\matS \matS^\transp \matA \matS \matS^\transp)^\pinv \matA
   = \matA (\matP_{\matA_k} \matA \matP_{\matA_k} )^\pinv \matA 
   = \matA \matA_k^\pinv \matA = \matA_k.
\end{align*}
Of course, one cannot quickly obtain such a basis; this motivates 
considering sketching matrices $\matS_q$ obtained using the power method: that is,
taking $\matS_q = \matA^q \matS_0$ where $q$ is a positive integer and $\matS_0 \in \R^{n \times \ell}$
with $l \geq k.$ As $q \rightarrow \infty,$ assuming $\matU_1^\transp \matS_0$
has full row-rank, the matrices $\matS_q$ increasingly capture the dominant 
$k$-dimensional eigenspaces of $\matA$~\cite[Chapter 8]{GL96}, so one can 
reasonably expect that the sketching matrix $\matS_q$ produces SPSD sketches 
of $\matA$ with lower additional error.

SPSD sketches produced using $q$ iterations
of the power method have lower error than sketches produced without using the 
power method, but are roughly $q$ times more costly to produce. Thus, the power
method is most applicable when $\matA$ is such that one can compute the product 
$\matA^q \matS_0$ fast. 
We consider the empirical performance of sketches
produced using the power method in Section~\ref{sxn:emp}, and
we consider the theoretical performance in Section~\ref{sxn:theory}.

\subsection{Related Prior Work}
\label{sxn:prelim-prior}
Motivated by large-scale data analysis and machine learning applications, 
recent theoretical and empirical work has focused on ``sketching'' methods 
such as random sampling and random projection algorithms. 
A large part of the recent body of this work on randomized matrix algorithms 
has been summarized 
in the recent monograph of Mahoney~\cite{Mah-mat-rev_BOOK} and the recent 
review article of Halko, Martinsson, and Tropp~\cite{HMT09_SIREV}.
Here, we note that, on the empirical side, both random projection 
methods (\emph{e.g.}, \cite{BM01,FM03,VW11} and \cite{BDT11_TR})
and random sampling methods (\emph{e.g.}, \cite{Paschou07b,CUR_PNAS}) have 
been used in applications for clustering and classification of general data 
matrices; 
and that some of this work has highlighted the importance of the statistical 
leverage scores that we use in this 
paper~\cite{Paschou07b,CUR_PNAS,Mah-mat-rev_BOOK,chingwa-astro-draft}. 
In parallel, so-called Nystr\"om-based methods have also been used in 
machine learning applications.
Originally used by Williams and Seeger to solve regression and 
classification problems involving Gaussian processes when the SPSD matrix 
$\matA$ is well-approximated by a low-rank matrix~\cite{WS01,WRST02_TR}, the 
Nystr\"om extension has been used in a large body of subsequent work.
For example, applications of the Nystr\"{o}m method 
to large-scale machine learning problems
include~\cite{TKR08,KMT09,KMT09c,MTJ11_TR} and~\cite{ZTK08,LKL10,ZK10}, and
applications in statistics and signal processing
include~\cite{PWT05,BW07_WKSHP,BW07_TR,SW08,BW08,BW09_PNAS,BW09_JRNL}.

Much of this work has focused on new proposals for selecting columns 
(\emph{e.g.}, \cite{ZTK08,ZK09,LZS10,AW10,LKL10})
and/or coupling the method with downstream applications
(\emph{e.g.}, \cite{BJ05,CMT10_AISTATS,JYNLZ11_TR,HM11,MPARG11,Bac12_TR}).
The most detailed results are provided by~\cite{KMT12} (as well as the 
conference papers on which it is based~\cite{KMT09,KMT09b,KMT09c}).
Interestingly, they observe that uniform sampling performs quite well, 
suggesting that in the data they considered the leverage scores are
quite uniform, which also motivated the related 
work~\cite{TalRos10,MT11_AISTATS}.
This is in contrast with applications in genetics~\cite{Paschou07b},
term-document analysis~\cite{CUR_PNAS}, and astronomy~\cite{chingwa-astro-draft}, 
where the statistical leverage scores were seen to be very nonuniform in 
ways of interest to the downstream scientist; we return to this issue 
in Section~\ref{sxn:emp}.

On the theoretical side, much of the work has followed that of Drineas and 
Mahoney~\cite{dm_kernel_JRNL}, who provided the first rigorous bounds for the 
Nystr\"om extension of a general SPSD matrix. 
They show that when $\Omega(k \epsilon^{-4} \ln \delta^{-1})$ columns 
are sampled with an importance sampling distribution that is 
proportional to the square of the diagonal entries of~$\matA$, then 
\begin{equation}
   \| \matA - \matC \matW^\pinv \matC^\transp \|_{\xi} \leq \| \mat{A} - \mat{A}_k \|_{\xi} + \epsilon \sum\nolimits_{k=1}^n (\matA)_{ii}^2 \\
\label{eqn:nystrom-dm}
\end{equation}
holds with probability $1-\delta$, where $\xi=2,F$ represents the Frobenius 
or spectral norm.
(Actually, they prove a stronger result of the form given in 
Equation~(\ref{eqn:nystrom-dm}), except with $\matW^\pinv$ replaced with 
$\matW_k^\pinv$, where $\matW_k$ represents the best rank-$k$ approximation 
to $\matW$~\cite{dm_kernel_JRNL}.)
%%% \begin{align*}
%%%  \TNorm{\matA - \matC \matW^\pinv \matC^\transp} & \leq \TNorm{\mat{A} - \mat{A}_k} + \epsilon \sum\nolimits_{k=1}^n (\matA)_{ii}^2 \\
%%%  \FNorm{\matA - \matC \matW^\pinv \matC^\transp} & \leq \FNorm{\mat{A} - \mat{A}_k} + \epsilon \sum\nolimits_{k=1}^n (\matA)_{ii}^2.
%%% \end{align*}
Subsequently, Kumar, Mohri, and Talwalkar show that if $\mu k \ln(k/\delta))$ columns
are sampled uniformly at 
random with replacement from an $\matA$ that has \emph{exactly} rank $k$, 
then one achieves exact recovery, \emph{i.e.}, 
$\matA=\matC\matW^\pinv\matC^\transp$, with high probability~\cite{KMT09}. 
Gittens extends this to the case where $\matA$ is only approximately 
low-rank~\cite{Gittens12_TR}. 
In particular, he shows that if $\ell = \Omega( \mu k \ln k)$ columns are 
sampled uniformly at random (either with or without replacement),~then 
\begin{equation}
 \TNorm{\matA - \matC \matW^\pinv \matC^\transp} \leq \TNorm{\matA - \matA_k} \left( 1 + \frac{2n}{\ell} \right)
\label{eqn:nystrom-git1}
\end{equation}
with probability exceeding $1 - \delta$ and
\begin{equation}
 \TNorm{\matA - \matC \matW^\pinv \matC^\transp} \leq \TNorm{\matA - \matA_k} + \frac{2}{\delta} \cdot \tracenorm{\matA - \matA_k}
\label{eqn:nystrom-git2}
\end{equation}
with probability exceeding $1 - 2\delta.$

We have described these prior theoretical bounds in detail to emphasize how
strong, relative to the prior work, our new bounds are.
For example, 
Equation~(\ref{eqn:nystrom-dm}) provides an additive-error approximation 
with a very large scale; 
the bounds of Kumar, Mohri, and Talwalkar require a sampling complexity 
that depends on the coherence of the input matrix~\cite{KMT09}, which means 
that unless the coherence is very low one needs to sample essentially all 
the rows and columns in order to reconstruct the matrix; 
Equation~(\ref{eqn:nystrom-git1}) provides a bound where the additive scale 
depends on $n$; and
Equation~(\ref{eqn:nystrom-git2}) provides a spectral norm bound where the 
scale of the additional error is the (much larger) trace norm.
Table~\ref{table:bounds-comparison} compares the bounds on the approximation errors of
SPSD sketches derived in this work to those available in the literature. We note further
that Wang and Zhang recently established lower-bounds on the worst-case relative spectral
and trace norm errors of uniform Nystr\"om extensions~\cite{WZ13}. Our Lemma~\ref{lem:sample-unif}
provides matching upper bounds, showing the optimality of these estimates. 

A related stream of research concerns projection-based low-rank approximations of
general (\emph{i.e.}, non-SPSD) matrices~\cite{HMT09_SIREV,Mah-mat-rev_BOOK}. Such approximations are formed by first
constructing an approximate basis for the top left invariant subspace of $\matA,$
and then restricting $\matA$ to this space. Algorithmically, one constructs $\matY = \matA \matS,$
where $\matS$ is a sketching matrix, then takes $\matQ$ to be a basis obtained from the
QR decomposition of $\matY,$ and then forms the low-rank approximation $\matQ\matQ^\transp \matA.$  
The survey paper~\cite{HMT09_SIREV} proposes two schemes for the approximation of SPSD matrices that 
fit within this paradigm: $\matQ (\matQ^\transp \matA \matQ) \matQ^\transp$
and $(\matA \matQ) (\matQ^\transp \matA \matQ)^\pinv (\matQ^\transp \matA).$ The first scheme---for which
\cite{HMT09_SIREV} provides quite sharp error bounds when $\matS$ is a matrix of i.i.d. 
standard Gaussian random variables---has the salutary property
of being numerically stable. 
On the other hand, although~\cite{HMT09_SIREV} does not provide any 
theoretical guarantees for the second scheme, it points out that this latter 
scheme produces noticeably more accurate approximations in practice.
In Section~\ref{sxn:emp}, we provide empirical evidence of the superior 
performance of the second scheme, and we show that it is actually an 
instantiation of the power method (as described in 
Section~\ref{sxn:prelim-power}) with $q=2.$ 
Accordingly, the deterministic and stochastic error bounds provided in 
Section~\ref{sxn:theory} are applicable to this SPSD sketch.

It is worth noting that in~\cite{WZ13}, the authors propose a modified Nystr\"om method wherein the matrix
$\matW$ is replaced by $\matC^\pinv \matA (\matC^\pinv)^\transp,$ so that the low rank approximation to
$\matA$ is given by $\matC \matC^\pinv \matA (\matC^\pinv)^\transp \matC^\transp.$ Note that $\matC \matC^\pinv$ is
another expression for the orthoprojector $\matQ \matQ^\transp$ onto the range of $\matY = \matA \matS,$ so 
this Nystr\"om method is an instantiation of the projection-based low-rank approximations
analyzed in~\cite{HMT09_SIREV}. However,~\cite{WZ13}, unlike~\cite{HMT09_SIREV}, considers the case where $\matC$
is constructed by sampling from the columns of $\matA$ adaptively. The low-rank approximation produced by 
the algorithm proposed in~\cite{WZ13} satisfies
\[
 \E\FNorm{\matA - \matC \matC^\pinv \matA (\matC^\pinv)^\transp \matC^\transp} \leq (1 + \epsilon) \FNorm{\matA - \matA_k}
\]
when $\const{O}(k/\epsilon^2)$ columns are sampled.

\begin{table}[ht]
\small
\begin{center}
\begin{tabular}{|p{1in}|c|c|c|c|}
\hline
Source
 & $\ell$
 & $\|\matA - \matC\matW^\pinv \matC^\transp\|_2$ 
 & $\|\matA - \matC\matW^\pinv \matC^\transp\|_F$ 
 & $\|\matA - \matC\matW^\pinv \matC^\transp\|_\star$ \\
 \hline
 \multicolumn{5}{|c|}{Prior works} \\
 \hline
\cite{dm_kernel_JRNL}
 & $\Omega(\epsilon^{-4}k)$
 & $\mbox{opt}_2 + \epsilon \sum_{i=}^n A_{ii}^2$ 
 & $\mbox{opt}_F + \epsilon \sum_{i=1}^n A_{ii}^2$  
 & -- \\
\hline
 \cite{BW09_PNAS}
 & $\Omega(1)$
 & --  
 & --  
 & $\mbox{O}\left(\frac{n-\ell}{n}\right)\tracenorm{\mat{A}}$ \\
\hline
 \cite{TalRos10}
 & $\Omega(\mu_r r \ln r)$
 & 0
 & 0
 & 0 \\
\hline
\cite{KMT12}
 & $ \Omega(1)$
 & $\mbox{opt}_2 + \frac{n}{\sqrt{\ell}} \TNorm{\matA}$ 
 & $\mbox{opt}_F + n(\frac{k}{\ell})^{1/4} \TNorm{\matA}$ 
 & -- \\
\hline
\multicolumn{5}{|c|}{This work}\\
\hline
 Lemma~\ref{lem:sample-unif}, uniform column \linebreak sampling
 & $\Omega\left(\frac{ \mu_k k \ln k}{(1-\epsilon)^{2}}\right)$
 & $\mbox{opt}_2(1 + \frac{n}{\epsilon\ell})$ 
 & $\mbox{opt}_F + \epsilon^{-1} \mbox{opt}_\star$  
 & $\mbox{opt}_\star(1 + \epsilon^{-1})$ \\
\hline
Lemma~\ref{lem:sample-lev} leverage-based column sampling
 & $\Omega\left(\frac{k \ln(k/\beta)}{\beta \epsilon^2 }\right)$
 & $\mbox{opt}_2 + \epsilon^2 \mbox{opt}_\star$
 & $\mbox{opt}_F + \epsilon \mbox{opt}_\star$ 
 & $(1 + \epsilon^2)\mbox{opt}_\star$\\
\hline 
 Lemma~\ref{lem:proj-fourier}, Fourier-based projection
 & $\Omega(\epsilon^{-1} k \ln n)$
 &$ \big(1 + \frac{1}{1 - \sqrt{\epsilon}}\big) \mbox{opt}_2 + \frac{\epsilon \mbox{opt}_\star}{(1 - \sqrt{\epsilon})k}
  $
 & $\mbox{opt}_F + \sqrt{\epsilon} \mbox{opt}_\star$
 & $(1 + \epsilon)\mbox{opt}_\star$\\
\hline
 Lemma~\ref{lem:proj-gaussian}, Gaussian-based projection
 & $\Omega(k\epsilon^{-1})$
 & $(1 + \epsilon^2) \mbox{opt}_2 + \frac{\epsilon}{k}\mbox{opt}_\star$
 & $\mbox{opt}_F + \epsilon \mbox{opt}_\star $
 & $(1+\epsilon^2) \mbox{opt}_\star$\\
 \hline
\end{tabular}
\end{center}
\caption{Comparison of our bounds on the approximation errors of several types
of SPSD sketches with those provided in prior works. Only the asymptotically 
largest terms (as $\epsilon \rightarrow 0$) are displayed and constants are 
omitted, for simplicity. Here, $\epsilon \in (0,1),$ $\mbox{opt}_\xi$ is the smallest $\xi$-norm error possible when
approximating $\matA$ with a rank-$k$ matrix ($k \geq \ln n)$, $r = \mbox{rank}(\matA),$ $\ell$ is 
the number of column samples sufficient for the stated bounds to hold, $k$ is a target rank,
and $\mu_s$ is the coherence of $\matA$ relative to the best rank-$s$ approximation to $\matA.$
The parameter $\beta \in (0,1]$ allows for the possibility of sampling using $\beta$-approximate
leverage scores (see Section~\ref{sxn:theory-rand-levscore}) rather than the exact leverage
scores.
With the exception of \cite{dm_kernel_JRNL}, which 
samples columns with probability proportional to their Euclidean norms, 
and our novel leverage-based Nystr\"om bound,
these bounds are for sampling columns or linear combinations of columns uniformly
at random. All bounds hold with constant probability.
}
\label{table:bounds-comparison}
\end{table}

\subsection{An overview of our bounds}
Our bounds in Table~\ref{table:bounds-comparison} (established as Lemmas~\ref{lem:sample-lev}--\ref{lem:sample-unif} 
in Section~\ref{sxn:theory-rand}) exhibit a common structure: for the spectral and Frobenius
norms, we see that the additional error is on a larger scale than the optimal error, and the trace norm
bounds all guarantee relative error approximations. 
This follows from the fact, as detailed in Section~\ref{sxn:theory-det}, 
that low-rank approximations that conform to the SPSD sketching model can be 
understood as forming column-sample/projection-based approximations to the 
\emph{square root} of $\mat{A}$, and thus squaring this approximation yields 
the resulting
approximation to $\mat{A}.$ The squaring process unavoidably results in potentially large additional errors
in the case of the spectral and Frobenius norms---
whether or not the additional errors are large in practice depends upon the properties of the matrix
and the form of stochasticity used in the sampling process. For instance, from our bounds it is clear that 
Gaussian-based SPSD sketches are expected to have lower additional error in the spectral norm
than any of the other sketches considered.

From Table~\ref{table:bounds-comparison}, we also 
see, in the case of uniform Nystr\"om extensions, a necessary dependence on the
coherence of the input matrix since columns are sampled uniformly at random. 
However, we also see that the scales of the additional error of the Frobenius and
trace norm bounds are substantially improved over those in prior results. 
The large additional error in the spectral norm
error bound is necessary in the worse case~\cite{Gittens12_TR}.
Lemmas~\ref{lem:sample-lev}, \ref{lem:proj-fourier} and
\ref{lem:proj-gaussian} in Section~\ref{sxn:theory-rand}---which respectively
address leverage-based, Fourier-based, and Gaussian-based SPSD sketches---show that
spectral norm additive-error bounds with additional error on a substantially smaller scale
can be obtained if one first mixes the columns before
sampling from $\matA$ or one samples from a judicious nonuniform distribution over
the~columns.

Table~\ref{table:sketchstatistics} compares the minimum, mean, and maximum approximation errors of 
several SPSD sketches of four matrices (described in Section~\ref{sxn:emp-datasets}) to the optimal
rank-$k$ approximation errors. We consider
three regimes for $\ell,$ the number of column samples used to construct the sketch: $\ell = \const{O}(k),$
$\ell = \const{O}(k \ln k),$ and $\ell = \const{O}(k \ln n).$
These
matrices exhibit a diverse range of properties: {\emph e.g.}, Enron is sparse and has a slowly decaying
spectrum, while Protein is dense and has a rapidly decaying spectrum. Yet we notice that the sketches
perform quite well on each of these matrices. In particular, when $\ell = \const{O}(k \ln n),$ the average
errors of the sketches are within $1 + \epsilon$ of the optimal rank-$k$ approximation errors, where $\epsilon \in [0,1].$
Also note that the leverage-based sketches consistently have lower average errors (in all of the three norms considered)
than all other sketches. Likewise, the uniform Nystr\"om extensions usually have larger average errors than the other sketches.
These two sketches represent opposite extremes: uniform Nystr\"om extensions (constructed using uniform column
sampling) are constructed using no knowledge about the matrix, while leverage-based sketches use an
importance sampling distribution derived from the SVD of the matrix to determine which columns to use in the 
construction of the sketch.

Table~\ref{table:theory-practice-gap} illustrates the gap between the theoretical results 
currently available in the literature and what is observed in practice: it depicts the ratio
between the error bounds in Table~\ref{table:bounds-comparison} and the average errors observed
over 30 runs of the SPSD approximation algorithms (the error bound from~\cite{TalRos10} is not considered in the table,
as it does not apply at the number of samples $\ell$ used in the experiments). 

\begin{landscape}
\begin{center}
\begin{table}
\tiny
\begin{center}
 \begin{tabular}{cc}
 \begin{tabular}{|c|}
  \hline
  Enron, $k = 60$ \\
 \hline
$\|\matA - \matC\matW^\pinv\matC^\transp\|_2/\|\matA - \matA_k\|_2$ \\
\hline
\begin{tabular}{lccc}
& $\ell = k+8$ & $\ell = k\ln k$ & $\ell = k \ln n$ \\
Nystr\"om & 1.386/1.386/1.386 & 1.386/1.386/1.386 & 1.386/1.386/1.386 \\
SRFT sketch & 1.378/1.379/1.381 & 1.357/1.360/1.364 & 1.310/1.317/1.323 \\
Gaussian sketch & 1.378/1.380/1.381 & 1.357/1.360/1.364 & 1.314/1.318/1.323 \\
Leverage sketch & 1.321/1.381/1.386 & 1.039/1.188/1.386 & 1.039/1.042/1.113
\end{tabular} \\
\hline
$\|\matA - \matC\matW^\pinv\matC^\transp\|_{\mathrm{F}}/\|\matA - \matA_k\|_{\mathrm{F}}$ \\
\hline
\begin{tabular}{lccc}
& $\ell = k+8$ & $\ell = k\ln k$ & $\ell = k \ln n$ \\
Nystr\"om & 1.004/1.004/1.004 & 0.993/0.994/0.994 & 0.972/0.972/0.973 \\
SRFT sketch & 1.004/1.004/1.004 & 0.994/0.994/0.994 & 0.972/0.972/0.972 \\
Gaussian sketch & 1.004/1.004/1.004 & 0.994/0.994/0.994 & 0.972/0.972/0.972 \\
Leverage sketch & 1.002/1.002/1.003 & 0.994/0.995/0.996 & 0.988/0.989/0.989
\end{tabular} \\ 
\hline
$\|\matA - \matC\matW^\pinv\matC^\transp\|_{\star}/\|\matA - \matA_k\|_{\star}$ \\
\hline
\begin{tabular}{lccc}
& $\ell = k+8$ & $\ell = k\ln k$ & $\ell = k \ln n$ \\
Nystr\"om & 1.002/1.002/1.003 & 0.984/0.984/0.984 & 0.943/0.944/0.944 \\
SRFT sketch & 1.002/1.002/1.002 & 0.984/0.984/0.984 & 0.944/0.944/0.944 \\
Gaussian sketch & 1.002/1.002/1.002 & 0.984/0.984/0.984 & 0.944/0.944/0.944 \\
Leverage sketch & 1.002/1.002/1.003 & 0.990/0.991/0.992 & 0.977/0.978/0.980
\end{tabular} \\ 
\hline
 \end{tabular}
 &
  \begin{tabular}{|c|}
  \hline
  Protein, $k=10$ \\
\hline
$\|\matA - \matC\matW^\pinv\matC^\transp\|_2/\|\matA - \matA_k\|_2$ \\
\hline
\begin{tabular}{lccc}
& $\ell = k+8$ & $\ell = k\ln k$ & $\ell = k \ln n$ \\
Nystr\"om & 1.570/2.104/2.197 & 1.496/2.100/2.196 & 1.023/1.350/2.050 \\
SRFT sketch & 1.835/1.950/2.039 & 1.686/1.874/2.009 & 1.187/1.287/1.405 \\
Gaussian sketch & 1.812/1.956/2.058 & 1.653/1.894/2.007 & 1.187/1.293/1.438 \\
Leverage sketch & 1.345/1.644/2.166 & 1.198/1.498/2.160 & 0.942/0.994/1.073
\end{tabular} \\
\hline
$\|\matA - \matC\matW^\pinv\matC^\transp\|_{\mathrm{F}}/\|\matA - \matA_k\|_{\mathrm{F}}$ \\
\hline
\begin{tabular}{lccc}
& $\ell = k+8$ & $\ell = k\ln k$ & $\ell = k \ln n$ \\
Nystr\"om & 1.041/1.054/1.065 & 1.023/1.042/1.054 & 0.867/0.877/0.894 \\
SRFT sketch & 1.049/1.054/1.058 & 1.032/1.037/1.043 & 0.873/0.877/0.880 \\
Gaussian sketch & 1.049/1.054/1.060 & 1.032/1.039/1.043 & 0.874/0.878/0.883 \\
Leverage sketch & 1.027/1.036/1.054 & 1.011/1.018/1.034 & 0.862/0.868/0.875
\end{tabular} \\ 
\hline
$\|\matA - \matC\matW^\pinv\matC^\transp\|_{\star}/\|\matA - \matA_k\|_{\star}$ \\
\hline
\begin{tabular}{lccc}
& $\ell = k+8$ & $\ell = k\ln k$ & $\ell = k \ln n$ \\
Nystr\"om & 1.011/1.014/1.018 & 0.988/0.994/0.998 & 0.760/0.764/0.770 \\
SRFT sketch & 1.013/1.015/1.016 & 0.990/0.993/0.995 & 0.762/0.764/0.766 \\
Gaussian sketch & 1.013/1.015/1.017 & 0.991/0.993/0.994 & 0.762/0.765/0.767 \\
Leverage sketch & 1.004/1.008/1.014 & 0.982/0.985/0.991 & 0.758/0.765/0.771
\end{tabular} \\ 
\hline
 \end{tabular} \\
 \\
 
 \begin{tabular}{|c|}
  \hline
  AbaloneD, $\sigma = .15$, $k = 20$ \\
\hline
$\|\matA - \matC\matW^\pinv\matC^\transp\|_2/\|\matA - \matA_k\|_2$ \\
\hline
\begin{tabular}{lccc}
& $\ell = k+8$ & $\ell = k\ln k$ & $\ell = k \ln n$ \\
Nystr\"om & 2.168/2.455/2.569 & 2.022/2.381/2.569 & 1.823/2.204/2.567 \\
SRFT sketch & 2.329/2.416/2.489 & 2.146/2.249/2.338 & 1.741/1.840/1.918 \\
Gaussian sketch & 2.347/2.409/2.484 & 2.161/2.254/2.361 & 1.723/1.822/1.951 \\
Leverage sketch & 1.508/1.859/2.377 & 1.152/1.417/2.036 & 0.774/0.908/1.091
\end{tabular} \\
\hline
$\|\matA - \matC\matW^\pinv\matC^\transp\|_{\mathrm{F}}/\|\matA - \matA_k\|_{\mathrm{F}}$ \\
\hline
\begin{tabular}{lccc}
& $\ell = k+8$ & $\ell = k\ln k$ & $\ell = k \ln n$ \\
Nystr\"om & 1.078/1.090/1.098 & 1.061/1.078/1.091 & 1.026/1.040/1.054 \\
SRFT sketch & 1.088/1.089/1.090 & 1.074/1.075/1.077 & 1.034/1.035/1.037 \\
Gaussian sketch & 1.087/1.089/1.091 & 1.073/1.075/1.077 & 1.033/1.035/1.036 \\
Leverage sketch & 1.028/1.040/1.059 & 0.998/1.006/1.020 & 0.959/0.963/0.968
\end{tabular} \\ 
\hline
$\|\matA - \matC\matW^\pinv\matC^\transp\|_{\star}/\|\matA - \matA_k\|_{\star}$ \\
\hline
\begin{tabular}{lccc}
& $\ell = k+8$ & $\ell = k\ln k$ & $\ell = k \ln n$ \\
Nystr\"om & 1.022/1.024/1.026 & 1.010/1.014/1.016 & 0.977/0.980/0.983 \\
SRFT sketch & 1.024/1.024/1.024 & 1.014/1.014/1.014 & 0.980/0.980/0.981 \\
Gaussian sketch & 1.024/1.024/1.024 & 1.014/1.014/1.014 & 0.980/0.980/0.981 \\
Leverage sketch & 1.009/1.012/1.016 & 0.994/0.997/1.000 & 0.965/0.968/0.971
\end{tabular} \\ 
\hline
\end{tabular} 
 &
  \begin{tabular}{|c|}
  \hline
  WineS, $\sigma = 1,$ $k= 20$ \\
\hline
$\|\matA - \matC\matW^\pinv\matC^\transp\|_2/\|\matA - \matA_k\|_2$ \\
\hline
\begin{tabular}{lccc}
& $\ell = k+8$ & $\ell = k\ln k$ & $\ell = k \ln n$ \\
Nystr\"om & 1.989/2.001/2.002 & 1.987/1.998/2.002 & 1.739/1.978/2.002 \\
SRFT sketch & 1.910/1.938/1.966 & 1.840/1.873/1.905 & 1.624/1.669/1.709 \\
Gaussian sketch & 1.903/1.942/1.966 & 1.839/1.873/1.910 & 1.619/1.670/1.707 \\
Leverage sketch & 1.242/1.762/1.995 & 1.000/1.317/1.987 & 1.000/1.000/1.005
\end{tabular} \\
\hline
$\|\matA - \matC\matW^\pinv\matC^\transp\|_{\mathrm{F}}/\|\matA - \matA_k\|_{\mathrm{F}}$ \\
\hline
\begin{tabular}{lccc}
& $\ell = k+8$ & $\ell = k\ln k$ & $\ell = k \ln n$ \\
Nystr\"om & 1.036/1.040/1.043 & 1.028/1.034/1.038 & 0.998/1.009/1.018 \\
SRFT sketch & 1.038/1.039/1.039 & 1.029/1.030/1.030 & 1.000/1.000/1.001 \\
Gaussian sketch & 1.038/1.039/1.039 & 1.029/1.030/1.030 & 1.000/1.000/1.001 \\
Leverage sketch & 1.004/1.011/1.018 & 0.996/1.000/1.005 & 0.994/0.995/0.997
\end{tabular} \\
\hline
$\|\matA - \matC\matW^\pinv\matC^\transp\|_{\star}/\|\matA - \matA_k\|_{\star}$ \\
\hline
\begin{tabular}{lccc}
& $\ell = k+8$ & $\ell = k\ln k$ & $\ell = k \ln n$ \\
Nystr\"om & 1.013/1.015/1.016 & 1.002/1.005/1.007 & 0.965/0.970/0.976 \\
SRFT sketch & 1.014/1.014/1.015 & 1.004/1.004/1.004 & 0.970/0.970/0.970 \\
Gaussian sketch & 1.014/1.014/1.015 & 1.004/1.004/1.004 & 0.970/0.970/0.970 \\
Leverage sketch & 1.002/1.005/1.009 & 0.997/0.999/1.002 & 0.995/0.996/0.997
\end{tabular} \\ 
\hline
 \end{tabular}
 \end{tabular}
\end{center}
 \caption{The min/mean/max ratios of the errors of several non-rank-restricted SPSD sketches to the optimal rank-$k$ approximation error for several of the
 matrices considered in Table~\ref{table:datasets}. Here $k$ is the target rank and $\ell$ is the number of column samples used to form
 the SPSD sketches. The min/mean/max ratios were computed using 30 trials for combination of $\ell$ and sketching~method.}
 \label{table:sketchstatistics}
\end{table}
\end{center}
\end{landscape}

\todo[inline]{Make sure the footnote associated with this table is on the 
right page after the document is finalized}
\begin{savenotes}
\begin{table}[!p]
\fontsize{8}{10}
\selectfont
\begin{center}
\begin{tabular}{|p{1.5in}|c|c|c|}
\hline
source, sketch
 & pred./obs. spectral error
 & pred./obs. Frobenius error
 & pred./obs. trace error \\
  \hline
  \multicolumn{4}{|c|}{Enron, $k = 60$ } \\
  \hline
\cite{dm_kernel_JRNL}, column sampling with probabilities proportional to squared diagonal entries
 & 3041.0
 & 66.2
 & -- \\
 \cline{1-1}
\cite{BW09_PNAS}, uniform column sampling with replacement
 & --  
 & --  
 & 2.0\\
 \cline{1-1}
%\cite{TalRos10}, Nystr\"om
% & NA
% & NA
% & NA \\
\cite{KMT12}, uniform column sampling without replacement
 & 331.2
 & 77.7
 & -- \\
 \cline{1-1}
 Lemma~\ref{lem:sample-lev},  leverage-based column sampling
 & 1287.0
 & 20.5
 & 1.2 \\
 \cline{1-1}
Lemma~\ref{lem:proj-fourier}, Fourier-based
 & 102.1
 & 42.0
 & 1.6 \\
 \cline{1-1}
Lemma~\ref{lem:proj-gaussian}, Gaussian-based
 & 20.1
 & 7.6
 & 1.4 \\
 \cline{1-1}
Lemma~\ref{lem:sample-unif}, uniform column sampling with replacement
 & 9.4
 & 285.1
 & 9.5 \\
 \hline
 \multicolumn{4}{|c|}{Protein, $k = 10$ }\\
 \hline
\cite{dm_kernel_JRNL}, column sampling with probabilities proportional to squared diagonal entries
 & 125.2
 & 18.6
 & -- \\
 \cline{1-1}
\cite{BW09_PNAS}, uniform column sampling with replacement
 & --  
 & --  
 & 3.6\\
 \cline{1-1}
%\cite{TalRos10}, Nystr\"om
% & NA
% & NA
% & NA \\
\cite{KMT12}, uniform column sampling without replacement
 & 35.1
 & 20.5
 & -- \\
 \cline{1-1}
Lemma~\ref{lem:sample-lev},  leverage-based
 & 42.4
 & 6.2
 & 2.0 \\
 \cline{1-1}
Lemma~\ref{lem:proj-fourier}, Fourier-based
 & 155.0
 & 20.4
 & 3.1\\
 \cline{1-1}
Lemma~\ref{lem:proj-gaussian}, Gaussian-based
 & 5.7
 & 5.6
 & 2.2 \\
 \cline{1-1}
Lemma~\ref{lem:sample-unif}, uniform column sampling with replacement
 & 90.0
 & 63.4
 & 14.3 \\
 \hline
 \multicolumn{4}{|c|}{AbaloneD, $\sigma = .15, k = 20$ } \\
 \hline
\cite{dm_kernel_JRNL}, column sampling with probabilities proportional to squared diagonal entries
 & 360.8
 & 42.5
 & -- \\
 \cline{1-1}
\cite{BW09_PNAS}, uniform column sampling with replacement
 & --  
 & --  
 & 2.0 \\
 \cline{1-1}
%\cite{TalRos10}, Nystr\"om
% & NA
% & NA
% & NA \\
\cite{KMT12}, uniform column sampling without replacement
 & 62.0
 & 45.7
 & -- \\
 \cline{1-1}
Lemma~\ref{lem:sample-lev},  leverage-based
 & 235.4
 & 14.1
 & 1.3 \\
 \cline{1-1}
Lemma~\ref{lem:proj-fourier}, Fourier-based
 & 70.1 
 & 36.0
 & 1.7 \\
 \cline{1-1}
Lemma~\ref{lem:proj-gaussian}, Gaussian-based
 & 8.7
 & 8.3
 & 1.3\\
 \cline{1-1}
Lemma~\ref{lem:sample-unif}, uniform column sampling with replacement
 & 13.2
 & 166.2
 & 9.0 \\
 \hline
 \multicolumn{4}{|c|}{WineS, $\sigma = 1, k = 20$ }\\
 \hline
 \cite{dm_kernel_JRNL}, column sampling with probabilities proportional to squared diagonal entries
 & 408.4
 & 41.1
 & -- \\
 \cline{1-1}
\cite{BW09_PNAS}, uniform column sampling with replacement
 & --  
 & --  
 & 2.1\\
 \cline{1-1}
%\cite{TalRos10}, Nystr\"om
% & NA
% & NA
% & NA \\
\cite{KMT12}, uniform column sampling without replacement
 & 70.3
 & 44.3
 & -- \\
 \cline{1-1}
Lemma~\ref{lem:sample-lev},  leverage-based
 & 244.6
 & 12.9
 & 1.2\\
 \cline{1-1}
Lemma~\ref{lem:proj-fourier}, Fourier-based
 & 94.8
 & 36.0
 & 1.7\\
 \cline{1-1}
Lemma~\ref{lem:proj-gaussian}, Gaussian-based
 & 11.4
 & 8.1
 & 1.4\\
 \cline{1-1}
Lemma~\ref{lem:sample-unif}, uniform column sampling with replacement
 & 13.2
 & 162.2
 & 9.1\\
 \hline
\end{tabular}
\end{center}
\caption{Comparison of the empirically observed approximation errors to the guarantees provided in this and other works, for several
datasets. Each approximation was formed using $\ell = 6 k\ln k$ samples. To evaluate the error guarantees,
$\delta = 1/2$ was taken{\protect\footnote{Taking $\delta$ too much smaller
results in estimates for the number of samples required that exceed the dimensions of the matrices considered.}} and all constants present in the statements of the bounds were replaced with ones. The observed errors were
taken to be the average errors over 30 runs of the approximation algorithms.
The datasets, described in Section~\ref{sxn:emp-datasets}, are representative of several classes of matrices prevalent
in machine learning applications.}
\label{table:theory-practice-gap}
\end{table}
\end{savenotes}

\noindent
Several trends can be identified; among them, we
note that the bounds provided in this paper for Gaussian-based sketches come quite close to capturing 
the errors seen in practice, and the Frobenius and trace norm error guarantees of the 
leverage-based and Fourier-based sketches tend to more closely reflect the empirical behavior than
the error guarantees provided in prior work for Nystr\"om sketches. Overall, the trace norm error
bounds are quite accurate. On the other hand, prior bounds are sometimes more informative in the case 
of the spectral norm (with the notable exception of the Gaussian sketches). Several important points
can be gleaned from these observations. First, the accuracy of the Gaussian error bounds suggests that the main theoretical
contribution of this work, the deterministic structural results given as Theorems~\ref{thm:spectral-deterministic-error} 
through~\ref{thm:trace-deterministic-error}, captures the underlying behavior of the SPSD sketching process. This supports our belief that this
work provides a foundation for truly informative error bounds. Given that this is the case, it is clear that the 
analysis of the stochastic elements of the SPSD sketching process is much sharper in the Gaussian case than in the 
leverage-score, Fourier, and uniform Nystr\"om cases. We expect that, at least in the case of leverage and Fourier-based sketches, the 
stochastic analysis can and will be sharpened to produce error guarantees almost as informative as the ones we have provided for
Gaussian-based sketches.
%% \subsection{High-level approach to low-rank matrix approximation}
%% 
%% XXX.  SUMMARIZE THIS HERE, INCLUDING ROLE OF LEVERAGE SCORES IN ORIGINAL OR ROTATED SPACE.

\section{Empirical Aspects of SPSD Low-rank Approximation}
\label{sxn:emp}

\todo[inline]{Talwalkar: The empirical results are very detailed, which is great, 
but having a short summary of take-away messages (e.g., which methods perform best 
for which types of data) might be helpful.}

In this section, we present our main empirical results, which consist of 
evaluating sampling and projection algorithms applied to a diverse set of 
SPSD matrices.
In addition to understanding the relative merits, in terms of both running 
time and solution quality, of different sampling/projection schemes, we 
would like to understand the effects of various data preprocessing decisions. 
The bulk of our empirical evaluation considers two random projection
procedures and two random sampling procedures for the sketching matrix 
$\matS$:
for random projections, we consider using SRFTs (Subsampled Randomized Fourier Transforms) as well as uniformly 
sampling from Gaussian mixtures of the columns; and for random sampling, we 
consider sampling columns uniformly at random as well as sampling 
columns according to a nonuniform importance sampling distribution that 
depends on the empirical statistical leverage scores.
In the latter case of leverage score-based sampling, we also consider 
the use of both the (na\"{i}ve and expensive) exact algorithm as well as 
a (recently-developed fast) approximation algorithm.
Section~\ref{sxn:emp-datasets} starts with a brief description of 
the data sets we consider; 
Section~\ref{sxn:algorithms} describes the details of our SPSD sketching 
algorithms;
and then Section~\ref{sxn:emp-decisions} 
briefly describes the effect of various data preprocessing decisions.
In Section~\ref{sxn:emp-reconstruction}, we present our main 
results on reconstruction quality for the random sampling and random 
projection methods; and, in Section~\ref{sxn:approx-levmethods}, we 
discuss running time issues, and we present our main results for running time 
and reconstruction quality for both exact and approximate versions of 
leverage-based sampling.

We emphasize that we don't intend these results to be ``comprehensive'' but 
instead to be ``illustrative'' case-studies---that are representative of a 
much wider range of applications than have been considered previously.
In particular, we would like to illustrate the tradeoffs between these 
methods in different realistic applications in order, \emph{e.g.}, to provide 
directions for future work.
For instance, \emph{prima facie}, algorithms based on leverage-based column 
sampling might be expected to be more expensive than those based on uniform 
column sampling or random projections, but (based on previous work for general 
matrices~\cite{DMM08_CURtheory_JRNL,DMMS07_FastL2_NM10,Mah-mat-rev_BOOK})
they might also be expected to deliver lower approximation errors.
Similarly, using approximate leverage scores to construct the importance 
sampling distribution might be expected to perform worse than using exact 
leverage scores, but this might be acceptable given its computational 
advantages. 
In addition to clarifying some of these issues, our empirical evaluation 
also illustrates ways in which existing theory is insufficient to 
explain the success of sampling and projection methods.
This motivates our improvements to existing theory that we describe in 
Section~\ref{sxn:theory}.

With respect to our computational environment,
all of our computations were conducted using 64-bit MATLAB R2012a under 
Ubuntu on a 2.6--GHz quad-core Intel i7 machine with 6Gb of RAM. 
To allow for accurate timing comparisons, all computations were carried out 
in a single thread. 
When applied to an $n \times n$ SPSD matrix $\mat{A}$, 
our implementation of the SRFT requires $\mathrm{O}(n^2 \ln n)$ operations, 
as it applies MATLAB's \texttt{fft} to the entire matrix $\mat{A}$ and
\emph{then} it samples $\ell$ columns from the resulting matrix. 
We note that the SRFT computation can be made more competitive: a more rigorous 
implementation of the SRFT algorithm could reduce this running time to 
$\mathrm{O}(n^2 \ln \ell)$; but due to the complexities involved in 
optimizing pruned FFT codes, we did not pursue this~avenue.

\subsection{Data Sets}
\label{sxn:emp-datasets}

Table~\ref{table:datasets} provides summary statistics for the data sets used 
in our empirical evaluation. 
In order to illustrate the complementary strengths and weaknesses of 
different sampling versus projection methods in a wide range of realistic 
applications, we consider four classes of matrices which are commonly 
encountered in machine learning and data analysis applications: 
normalized Laplacians of very sparse graphs drawn from ``informatics graph'' 
applications;
dense matrices corresponding to Linear Kernels from machine learning 
applications;
dense matrices constructed from a Gaussian Radial Basis Function Kernel 
(RBFK); and
sparse RBFK matrices constructed using Gaussian radial basis functions, 
truncated to be nonzero only for nearest neighbors.
Although not exhaustive, this collection of data sets represents a wide range of 
data sets with very different (sparsity, spectral, leverage score, etc.) 
properties that have been of interest recently not only in machine learning 
but in data analysis more~generally.

\begin{table}[t]
\begin{tabular}{|l|c|l|l|l|}
\hline
Name & Description & n & d  & \%nnz \\
\hline
\hline
\multicolumn{5}{|c|}{Laplacian Kernels} \\
\hline
HEP      & arXiv High Energy Physics collaboration graph & 9877  & NA  & 0.06 \\
GR       & arXiv General Relativity collaboration graph  & 5242  & NA  & 0.12 \\
Enron    & subgraph of the Enron email graph             & 10000 & NA  & 0.22 \\
Gnutella & Gnutella peer to peer network on Aug. 6, 2002 & 8717  & NA  & 0.09 \\
\hline
\hline
\multicolumn{5}{|c|}{Linear Kernels} \\
\hline
Dexter  & bag of words                             & 2000 & 20000  & 83.8 \\
Protein & derived feature matrix for S. cerevisiae & 6621 & 357    & 99.7 \\
SNPs    & DNA microarray data from cancer patients & 5520 & 43     & 100 \\
Gisette & images of handwritten digits             & 6000 & 5000   & 100 \\
\hline
\hline
\multicolumn{5}{|c|}{Dense RBF Kernels} \\
\hline
AbaloneD & physical measurements of abalones & 4177 & 8   & 100 \\
WineD    & chemical measurements of wine     & 4898 & 12  & 100 \\
%Kin8nm   & simulated dynamics of a robot arm & 8192 & 9   & 100 \\
%Spam     & word and character frequencies    & 4601 & 57  & 100 \\
\hline
\hline
\multicolumn{5}{|c|}{Sparse RBF Kernels} \\
\hline
AbaloneS & physical measurements of abalones & 4177 & 8  & 82.9/48.1 \\
WineS    & chemical measurements of wine     & 4898 & 12 & 11.1/88.0 \\
\hline
\end{tabular}
\caption{The data sets used in our empirical evaluation 
(\cite{LKF07}, \cite{KY04}, \cite{GGBD05}, \cite{GSPDK06}, 
\cite{Netal02}, \cite{Corke96}, \cite{UCIMachineLearningRepository}). 
Here, $n$ is the number of data points, $d$ is the number of features in 
the input space before kernelization, and \%nnz is the percentage of nonzero
entries in the matrix.
For Laplacian ``kernels,'' $n$ is the number of nodes in the graph (and thus 
there is no $d$ since the graph is ``given'' rather than ``constructed'').
The \%nnz for the Sparse RBF Kernels depends on the $\sigma$ parameter; 
see Table~\ref{table:datasets_stats}.
}
\label{table:datasets}
\end{table}

\todo[inline]{ Talwalkar: it would be nice to remind the user why low-rank approximation is 
useful for each of the 4 categories of data that you look at.  In particular, such 
motivation is important in cases where the matrices have a slowly decaying 
spectrum (e.g., Laplacian matrices).}

To understand better the Laplacian data, recall that, given an undirected graph with 
weighted adjacency matrix $\mat{W}$, its normalized graph Laplacian is 
\[
  \mat{A} = \mat{I} - \mat{D}^{-1/2} \mat{W} \mat{D}^{-1/2},
\]
where $\mat{D}$ is the diagonal matrix of weighted degrees of the nodes of the
graph, \emph{i.e.}, $D_{ii} = \sum_{j \neq i} W_{ij}$.
This Laplacian is an SPSD matrix, but note that \emph{not} all SPSD matrices 
can be written as the Laplacian of a~graph.

The remaining datasets are positive-semidefinite kernel matrices associated with datasets 
drawn from a variety of application areas. Recall that, given given 
points $\vec{x}_1, \ldots, \vec{x}_n \in \R^d$ and a function
$\kappa : \R^d \times \R^d \rightarrow \R,$ the $n \times n$ matrix with elements
\[
 A_{ij} = \kappa(\vec{x}_i, \vec{x}_j)
\]
is called the kernel matrix of $\kappa$ with respect to $\vec{x}_1, \ldots, \vec{x}_n.$
Appropriate choices of $\kappa$ ensure that $\matA$ is positive semidefinite. When this 
is the case, the entries $\matA_{ij}$ can be interpreted as measuring, in a sense
determined by the choice of $\kappa$, the similarity of points $i$ and $j$. Specifically, if 
$\matA$ is SPSD, then $\kappa$ determines a so-called \emph{feature map}
$\Phi_\kappa: \R^d \rightarrow \R^n$ such that
\[
 A_{ij} = \langle \Phi_\kappa(\vec{x}_i), \Phi_\kappa(\vec{x}_j) \rangle
\]
measures the similarity (correlation) of $\vec{x}_i$ and $\vec{x}_j$ in feature space~\cite{SS01-book}.

When $\kappa$ is the usual Euclidean inner-product, so that 
\[
 A_{ij} = \langle \vec{x}_i, \vec{x}_k \rangle,
\]
$\matA$ is called a Linear Kernel matrix. Gaussian RBFK matrices, defined by 
\[
 A_{ij}^\sigma = \exp\bigg(\frac{-\TNormS{\vec{x}_i -
\vec{x}_j}}{\sigma^2}\bigg),
\]
correspond to the similarity measure $\kappa(\vec{x}, \vec{y}) = \exp(-\|\vec{x} - \vec{y}\|^2_2/\sigma^2).$
Here $\sigma$, a nonnegative number, defines the scale of the kernel.
Informally, $\sigma$ defines the ``size scale'' over which pairs of 
points $\vec{x}_i$ and $\vec{x}_j$ ``see'' each other.
Typically $\sigma$ is determined by a global cross-validation criterion, as 
$\mat{A}^\sigma$ is generated for some specific machine learning task; and, 
thus, one may have no \emph{a priori} knowledge of the behavior of the 
spectrum or leverage scores of $\mat{A}^\sigma$ as $\sigma$ is varied. 
Accordingly, we consider Gaussian RBFK matrices with different values of $\sigma$. 

Finally, given the same data points, $\vec{x}_1, \ldots, \vec{x}_n$, one
can construct sparse Gaussian RBFK matrices 
\[
 A_{ij}^{(\sigma,\nu,C)} = \left[ \left( 1 - \frac{\TNorm{\vec{x}_i - \vec{x}_j}}{C}\right)^\nu \right]^+ \cdot \exp\bigg(\frac{-\TNormS{\vec{x}_i -
\vec{x}_j}}{\sigma^2}\bigg),
\]
where $[x]^+ = \max\{0, x\}.$
When $\nu$ is larger than $(d+1)/2,$ this kernel matrix is positive semidefinite~\cite{Genton01}.
 Increasing $\nu$ shrinks the magnitudes of the off-diagonal
entries of the matrix toward zero.
As the cutoff point $C$ decreases the matrix becomes more sparse; in particular,
$C \rightarrow 0$ ensures that $\matA^{(\sigma,\nu,C)} \rightarrow \mat{I}.$ On the other
hand, $C \rightarrow \infty$ ensures that $\matA^{(\sigma,\nu,C)}$ approaches the
(dense) Gaussian RBFK matrix $\mat{A}^\sigma.$ For simplicity, in our empirical evaluations, we fix 
$\nu = \lceil (d+1)/2 \rceil$ and $C = 3 \sigma$, and we vary $\sigma$.
As with the effect of varying $\sigma$, the effect of varying the 
sparsity parameter $C$ is not obvious \emph{a priori}--- $C$ is typically chosen
according to a global criterion to ensure good performance at a specific machine learning 
task, without consideration for its effect on the spectrum or leverage 
scores of~$A_{ij}^{(\sigma,\nu,C)}$.

\begin{table}[t]
\begin{tabular}{|l|l|l|l|l|l|l|l|}
\hline
Name & \%nnz 
     & $\Big\lceil\tfrac{\FNormS{\mat{A}}}{\TNormS{\mat{A}}} \Big\rceil$
     & $k$ 
     & $\tfrac{\lambda_{k+1}}{\lambda_k}$ 
     & $100 \tfrac{\FNorm{\mat{A} - \mat{A}_k}}{\FNorm{\mat{A}}}$ 
     & $100 \tfrac{\tracenorm{\mat{A} - \mat{A}_k}}{\tracenorm{\mat{A}}}$ 
     & \pbox{3cm}{$k$th-largest \\ leverage score \\
     scaled by $n/k$ } \\
\hline
\hline
% Laplacian data sets
HEP & 0.06 & 3078 & 20 & 0.998 & 7.8 & 0.4 & 128.8 \\ 
HEP & 0.06 & 3078 & 60 & 0.998 & 13.2 & 1.1 & 41.9 \\
GR & 0.12 & 1679 & 20 & 0.999 & 10.5 & 0.74 & 71.6 \\
GR & 0.12 & 1679 & 60 & 1 & 17.9 & 2.16 & 25.3 \\
Enron & 0.22 & 2588 & 20 & 0.997 & 7.77 & 0.352 & 245.8 \\
Enron & 0.22 & 2588 & 60 & 0.999 & 12.0 & 0.94 & 49.6 \\
Gnutella & 0.09 & 2757 & 20 & 1 & 8.1 & 0.41 & 166.2 \\
Gnutella & 0.09 & 2757 & 60 & 0.999 & 13.7 & 1.20 & 49.4 \\
\hline
\hline
% Linear data sets
Dexter  & 83.8 & 176 & 8  & 0.963 & 14.5 & .934 & 16.6 \\
Protein & 99.7 & 24  & 10 & 0.987 & 42.6 & 7.66 & 5.45 \\
SNPs    & 100  & 3   & 5  & 0.928 & 85.5 & 37.6 & 2.64 \\
Gisette & 100  & 4   & 12 & 0.90  & 90.1 & 14.6 & 2.46 \\
\hline
\hline
% Dense RBF data sets
AbaloneD (dense, $\sigma = .15$) & 100 & 41 & 20 & 0.992 & 42.1 & 3.21 & 18.11 \\
AbaloneD (dense, $\sigma = 1$)   & 100 & 4  & 20 & 0.935 & 97.8 & 59   & 2.44 \\
WineD (dense, $\sigma = 1$)      & 100 & 31 & 20 & 0.99  & 43.1 & 3.89 & 26.2 \\
WineD (dense, $\sigma = 2.1$)    & 100 & 3  & 20 & 0.936 & 94.8  & 31.2 & 2.29 \\
\hline
\hline
% Sparse RBF data sets
AbaloneS (sparse, $\sigma = .15$) & 82.9 & 400 & 20 & 0.989 & 15.4 & 1.06 & 48.4 \\
AbaloneS (sparse, $\sigma = 1$)   & 48.1 & 5   & 20 & 0.982 & 90.6 & 21.8 & 3.57 \\
WineS (sparse, $\sigma = 1$)      & 11.1 & 116 & 20 & 0.995 & 29.5 & 2.29 & 49.0 \\
WineS (sparse, $\sigma = 2.1$)    & 88.0   & 39  & 20 & 0.992 & 41.6 & 3.53 & 24.1 \\
\hline
\end{tabular}
\caption{Summary statistics for the data sets from Table~\ref{table:datasets}
that we used in our empirical evaluation.}
\label{table:datasets_stats}
\end{table}

To illustrate the diverse range of properties exhibited by these four classes
of data sets, consider Table~\ref{table:datasets_stats}.
Several observations are particularly relevant to our
discussion below.
\begin{itemize}
\item
All of the Laplacian Kernels drawn from informatics graph applications are 
extremely sparse in terms of number of nonzeros, and they all tend to have 
very slow spectral decay, as illustrated both by the quantity 
$\big\lceil\FNormS{\mat{A}}/\TNormS{\mat{A}}\big\rceil$ (this is the 
\emph{stable rank}, which  is a numerically stable (under)estimate of the 
rank of $\mat{A}$) as well as by the relatively small fraction of the 
Frobenius norm that is captured by the best rank-$k$ approximation to 
$\mat{A}$.
For the Laplacian Kernels we considered two values of the rank parameter $k$ 
that were chosen (somewhat) arbitrarily; many of the results we report 
continue to hold qualitatively if $k$ is chosen to be (say) an order of 
magnitude larger. % but doing so tends to densify the low-rank approximation 
%to the Laplacian matrix.
\item
Both the Linear Kernels and the Dense RBF Kernels are much denser and are 
much more well-approximated by moderately to very low-rank matrices.
In addition, both the Linear Kernels and the Dense RBF Kernels have 
statistical leverage scores that are much more uniform---there are several 
ways to illustrate this, none of them perfect, and here, we illustrate this 
by considering the $k^{th}$ largest leverage score, scaled by the factor $n/k$
(if $\matA$ were exactly rank $k$, this would be the coherence of $\matA$).
For the Linear Kernels and the Dense RBF Kernels, this quantity is typically one to
two orders of magnitude smaller than for the Laplacian Kernels.
\item
For the Dense RBF Kernels, we consider two values of the $\sigma$ parameter, 
again chosen (somewhat) arbitrarily.
For both AbaloneD and WineD, we see that decreasing $\sigma$ from $1$ to 
$0.15$, \emph{i.e.}, letting data points ``see'' fewer nearby points, has two 
important effects:
first, it results in matrices that are much \emph{less} well-approximated by 
low-rank matrices; and
second, it results in matrices that have \emph{much} more heterogeneous leverage
scores.
For example, for AbaloneD, the fraction of the Frobenius norm that is captured 
decreases from $97.8$ to $42.1$ and the scaled $k^{th}$ largest leverage score 
increases from $2.44$ to~$18.11$.
\item
For the Sparse RBF Kernels, there are a range of sparsities, ranging from 
above the sparsity of the sparsest Linear Kernel, but all are denser
than the Laplacian Kernels.
Changing the $\sigma$ parameter has the same effect (although it is even 
more pronounced) for Sparse RBF Kernels as it has for Dense RBF Kernels.
In addition, ``sparsifying'' a Dense RBF Kernel also has the effect of 
making the matrix less well approximated by a low-rank matrix and of making
the leverage scores more nonuniform.
For example, for AbaloneD with $\sigma=1$ (respectively, $\sigma=0.15$), 
the fraction of the Frobenius norm that is captured decreases from $97.8$ 
(respectively, $42.1$) to $90.6$ (respectively, $15.4$), 
and the scaled $k^{th}$ largest leverage score increases from $2.44$ 
(respectively, $18.11$) to $3.57$ (respectively,~$48.4$).
\end{itemize}
As we see below, when we consider the RBF Kernels as the width parameter 
and sparsity are varied, we observe a range of intermediate cases between 
the extremes of the (``nice'') Linear Kernels and the (very ``non-nice'') Laplacian Kernels.

\subsection{SPSD Sketching Algorithms}
\label{sxn:algorithms}
The sketching matrix $\matS$ may be selected in a variety of ways. We will provide
empirical results for two sampling-based SPSD sketches and two projection-based SPSD
sketches. In the former case, the sketching matrix $\matS$ contains
exactly one nonzero in each column, corresponding to a single sample from the columns
of $\matA.$ In the latter case, $\matS$ is dense, and mixes the columns of 
 $\matA$ before sampling from the resulting matrix. % we refer to approximations
% produced in this manner as projection-based sketches because, as we see in 
% Section~\ref{sxn:theory}, the resulting approximation $\mat{C}\mat{W}^\dagger \mat{C}^\transp$
% is in fact $\mat{A}^{1/2} \mat{P}_{\mat{A}^{1/2} \mat{S}} \mat{A}^{1/2}.$

In more detail, we consider two types of sampling-based SPSD sketches (\emph{i.e.} Nystr\"om extensions):
those constructed by sampling columns uniformly at random with replacement, and those
constructed by sampling columns from a distribution based upon the leverage scores
of the matrix filtered through the optimal rank-$k$ approximation of the matrix.
In the case of column sampling, the sketching matrix $\matS$ is simply the first $\ell$
columns of a matrix that was chosen uniformly at random from the set of all permutation matrices.

In the case of leverage-based sampling, $\matS$ has a more complicated distribution.  Recall
that the leverage scores relative to the best rank-$k$ approximation to $\matA$ are the squared
Euclidean norms of the rows of the $n \times k$ matrix $\matU_1:$ 
\[
 \ell_j = \|(\matU_1)_j\|^2.
\]
It follows from the orthonormality of $\matU_1$ that $\sum\nolimits_j (\ell_j/k) = 1,$ and the leverage scores
can thus be interpreted as a probability distribution over the columns of $\matA.$ To construct a
sketching matrix corresponding to sampling from this distribution, we first select the columns to be 
used by sampling with replacement from this distribution. Then,
$\matS$ is constructed as $\matS = \matR \matD$ where $\matR \in \R^{n \times \ell}$ 
is a column selection matrix that samples columns of $\matA$ from the given 
distribution---\emph{i.e.}, $\matR_{ij} = 1$ iff the $i$th column of $\matA$ 
is the $j$th column selected---and $\matD$ is a diagonal rescaling matrix 
satisfying $\matD_{jj} = \frac{1}{\sqrt{\ell p_i}}$ iff $\matR_{ij} = 1$. It is often expensive to
compute the leverage scores exactly; and so in Section~\ref{sxn:approx-levmethods}, we consider the empirical
performance of sketches based on several different approximation algorithms for the leverage scores. The sketching matrices
for these approximations take the same form; the only difference is the distribution used to select the
column samples.

The two projection-based sketches we consider are based upon Gaussians and the real Fourier transform.
In the former case, $\matS$ is a matrix of i.i.d. $\mathcal{N}(0,1)$ random variables. In the latter case,
$\matS$ is a \emph{subsampled randomized Fourier transform} (SRFT) matrix; that is, $\matS  = \sqrt{\frac{n}{\ell}} \matD\matF\matR$, where 
$\matD$ is a diagonal matrix of Rademacher random variables, $\matF$ is the real Fourier transform matrix,
and $\matR$ restricts to $\ell$ columns

In the figures, we refer to sketches constructed by selecting columns uniformly at random 
with the label `unif', leverage score-based sketches
with `lev', Gaussian sketches with `gaussian', and Fourier sketches with `srft'.

\subsection{Effects of Data Analysis Preprocessing Decisions}
\label{sxn:emp-decisions}

\todo[inline]{Talwalkar:
You may want to include k-means as a competitor in Section 3.3?  
It lacks a solid theoretical grounding, but in our work [38], 
k-means was clearly the best method.}

Before proceeding with our main empirical results, we pause to describe 
the effects of various machine learning and data analysis ``design 
decisions'' on the behavior of SPSD sketching algorithms in general 
as well as on the behavior of the statistical leverage scores in particular.
We should emphasize that, for ``worst case'' matrices, very little can be 
said in this regard.
Thus, these observations are based on our experiences with a diverse range of
data sets, including those from Section~\ref{sxn:emp-datasets}.
While not completely general, these observations are likely to hold in modified
form for many other realistic data, and they can potentially be useful as 
heuristic guides to practice.
For example, if preprocessing does not significantly change the leverage 
score distribution, then one could compute the leverage scores on the raw 
data and use these to sample columns from the processed data or to certify 
that the data have low coherence. 
Likewise, the behavior of the leverage scores as the rank parameter $k$ is
varied or as the $\sigma$ scale parameter of RBF kernels varies is of 
interest, as it is expensive to compute the leverage scores anew for each 
value of $k$ or $\sigma$ as part of a cross-validation computation. 
\todo[inline]{Incorporate Ilse's point that the leverage scores could potentially
be cheaply updated--- it's a reasonable, but not clear how best to do so, so leave as research direction}
%%
%%\subsubsection{Effects of Whitening the Input Data}

One common preprocessing step is to ``whiten'' the data before applying a 
machine learning algorithm.
If the data are given in the form of $\mat{X} \in \R^{n \times d}$ where the 
$i$th row of $\mat{X}$ is an observation of $d$ covariates, then these 
covariates may have different means and characteristic size scales 
(\emph{i.e.}, variances).
In this case, it is often appropriate to transform the covariates so they 
all have zero mean and are on the same size scale. 
The whitening transform generates a new matrix $\hat{\mat{X}},$ 
corresponding to these transformed covariates, by removing the mean of each
column and rescaling the columns so they all have unit norm. 
In our experience, whitening modifies the statistical leverage scores, often 
by making them somewhat more homogeneous, but for a fixed rank parameter $k$ 
it does not change them too substantially, \emph{e.g.}, to within no more 
than a multiplicative factor of $2$.
Given the sensitivity of matrix reconstruction algorithms to various 
structural properties of the input data that we describe below, however, 
the more important observation is that whitening tends to decrease the 
effective rank of the input data set, and at the same time it often tends to 
shrink the spectral gaps.
As shown below, this has observable consequences on the reconstruction 
errors of all the sketching methods considered, but in particular those 
involving approximate leverage score computations. 

%%
%%\subsubsection{Effects of Varying the Rank Parameter}
%%
Another preprocessing decision has to do with the choice of rank $k$ with 
which to describe the data.
This is typically determined according to an exogeneously-specified ``model 
selection'' criterion that does not explicitly take into account the 
spectrum or leverage score structure of the input matrix.
It enters our discussion since we consider sampling columns with 
probabilities proportional to their statistical leverage scores 
\emph{relative to a rank-$k$ space}, and thus the leverage scores 
depend on $k$.
In our experience, increasing $k$ tends to uniformize or homogeneize the 
leverage scores, often gradually, but sometimes quite substantially.
(We should note, however, that there are exceptions to this, where one 
observes very strong localization on low-order eigenvectors of data 
matrices~\cite{CM11_TR}.)

%%WHAT IF ANYTHING TO DO WITH THIS PAR.
%%Notice that two sets of leverage scores can be quite different in the $\ell_2$
%%sense but yield similar approximation errors, because columns are sampled with
%%probability \textit{proportional} to their leverage scores. Likewise, two
%%leverage score distributions can be close in the $\ell_2$ sense but yield very
%%different approximation errors. For our purposes, rather than comparing the
%%values of the leverage scores, it is more appropriate to compare the locations
%%of the peaks of the distributions, as these identify the columns that are 
%%likely to be selected. Accordingly, we measure the similarity of two leverage 
%%score distributions by the amount of overlap between the peaks of the two
%%distributions. Concretely, given a distribution $\vec{p} = \{p_i\}_{i=1}^n$ on
%%the columns of $\mat{A},$ let $m_r(\vec{p})$ denote the indices of the $r$
%%largest probabilities $p_i$ % If $\vec{p}$ is the leverage score distribution
%%for $\mat{A}$ filtered through its dominant $k$-dimensional subspace, then
%%$m_r(\vec{p})$ 
%%identifes the $r$ columns of $\mat{A}$ that are identified as most influential
%%in the construction of a rank-$k$ approximation to $\mat{A}.$ 
%%For two distributions $\vec{p}$ and $\vec{q},$ let $o_r(\vec{p}, \vec{q}) =
%%|m_r(\vec{p}) \cap m_r(\vec{q})|$ measure the overlap of these sets of peaks.
%%The larger $o_r$ is, the more columns $\vec{p}$ and $\vec{q}$ identify as 
%%being of equal influence.

%%
%%\subsubsection{Effects of Varying Local Width Parameters}
%%
Yet another preprocessing decision has to do with the choice of the 
$\sigma$ scale parameter in Gaussian RBFK matrices.
As with the rank parameter, the scale parameter $\sigma$ in practice is
determined according to an exogeneously-specified model selection criterion 
that does not explicitly take into account the spectrum or leverage score 
structure of the input matrix.
In our experience, as $\sigma$ increases, the leverage scores become more 
and more uniform; and they become more heterogeneous as $\sigma$ decreases. 
Informally, as a data point ``sees'' more data points, any outlying effect
is mitigated.
Varying $\sigma$ also has an effect on the spectrum.
As a general rule, letting $\sigma \rightarrow 0$ tends to make the spectrum 
of $\mat{A}^\sigma$ flatter, \emph{i.e.}, decay more slowly, and letting 
$\sigma \rightarrow \infty$ makes $\mat{A}^\sigma$ lower-rank.
Recall that the diagonal entries of $\mat{A}^\sigma$ are identically one, 
and as $\sigma \rightarrow \infty,$ $\mat{A}^\sigma$ tends to the matrix of 
all ones. 
That is, increasing $\sigma$ corresponds to considering all the observations 
$\vec{x}_i$ as being equally dissimilar, 
so all columns are equally 
noninformative. 
On the other hand, as $\sigma \rightarrow 0,$ $\mat{A}^\sigma$ approaches 
the identity, 
and very dissimilar observations (in the sense that 
$\TNorm{\vec{x}_i - \vec{x}_j}$ is large) are penalized more heavily than 
similar observations, and thus there is some nonuniformity in the columns 
of $\mat{A}^\sigma.$ 
In some cases, we observed that, as the scale $\sigma$ decreases, the 
leverage scores stabilize, identifying the same columns as being important 
or influential over a range of scales.

\subsection{Reconstruction Accuracy of Sampling and Projection Algorithms}
\label{sxn:emp-reconstruction}

Here, we describe the performances of the SPSD
sketches described in Section~\ref{sxn:algorithms}---column sampling uniformly at random without 
replacement, column sampling according to the nonuniform leverage score 
probabilities, and sampling using Gaussian and SRFT mixtures of the 
columns---in terms of reconstruction accuracy for the data sets described in 
Section~\ref{sxn:emp-datasets}.
We describe general observations we have made about each class of 
matrices in turn, and then we summarize our observations.
We consider only the use of exact leverage scores here, and we
postpone until Section~\ref{sxn:approx-levmethods} a discussion of running 
time issues and similar reconstruction results when approximate leverage 
scores are used for the importance sampling distribution.
In each case, we present results for both the ``non-rank-restricted'' case 
as well as the ``rank-restricted'' case. 
Recall that by non-rank-restricted, we mean that the error
\begin{equation}
\xiNorm{\mat{A} - \mat{C} \mat{W}^\dagger \mat{C}^\transp}/\xiNorm{\mat{A} - \mat{A}_k}
\label{eqn:relerr1}
\end{equation}
is plotted; while by rank-restricted, we mean that the error
\begin{equation}
\xiNorm{\mat{A} - \mat{C} \mat{W}_k^\dagger \mat{C}^\transp}/\xiNorm{\mat{A} - \mat{A}_k} 
\label{eqn:relerr2}
\end{equation}
is plotted (\emph{viz.}, the matrix $\mat{W}$ in Eqn.~(\ref{eqn:relerr1}) has been
replaced with the low-rank approximation $\mat{W}_k$).
Note that previous work has shown that relative-error guarantees can be 
obtained, \emph{e.g.}, with CUR matrix decompositions, not only when one 
projects onto the span of judiciously-chosen columns, analogously to
Eqn.~(\ref{eqn:relerr1}) and as our worst-case guarantees in this paper are 
formulated, but also when one restricts the rank of the low-rank 
approximation to be no greater than $k$ by projecting onto the best rank-$k$ 
approximation to the original matrix~\cite{DMM08_CURtheory_JRNL}. 
We evaluate the ``rank-restricted'' case of the form of 
Eqn.~(\ref{eqn:relerr2}), that depends on projecting onto the best rank-$k$ 
approximation of the subsample (and not the original matrix) since it is more 
algorithmically tractable;
but we note that similar but ``smoother'' results (\emph{e.g.}, the error is 
much more monotonic as a function of the number of samples, when compared 
with the ``rank-restricted'' results we present below) are obtained 
empirically with this more expensive rank-restriction procedure.
The data points plotted in each figure of this section represent the average errors observed
over 30 trials.

Finally, we note that previous work has shown that the statistical leverage 
scores reflect an important nonuniformity structure in the columns of 
general data matrices~\cite{CUR_PNAS,Mah-mat-rev_BOOK}; that randomly 
sampling columns according to this distribution results in lower worst-case 
error (for problems such as least-squares approximation and low-rank 
approximation of general matrices) than sampling columns uniformly at 
random~\cite{DMM08_CURtheory_JRNL,DMMS07_FastL2_NM10,Mah-mat-rev_BOOK}; and
that leverage scores have proven useful in a wide range of practical 
applications~\cite{Paschou07b,CUR_PNAS,Mah-mat-rev_BOOK,chingwa-astro-draft}.
In spite of this, ours is the first work to implement and evaluate leverage 
score sampling for low-rank approximation of SPSD matrices.

\subsubsection{Graph Laplacians}

\begin{figure}[!p]
 \centering
 \subfigure[GR, $k = 20$]{\includegraphics[width=1.6in, keepaspectratio=true]{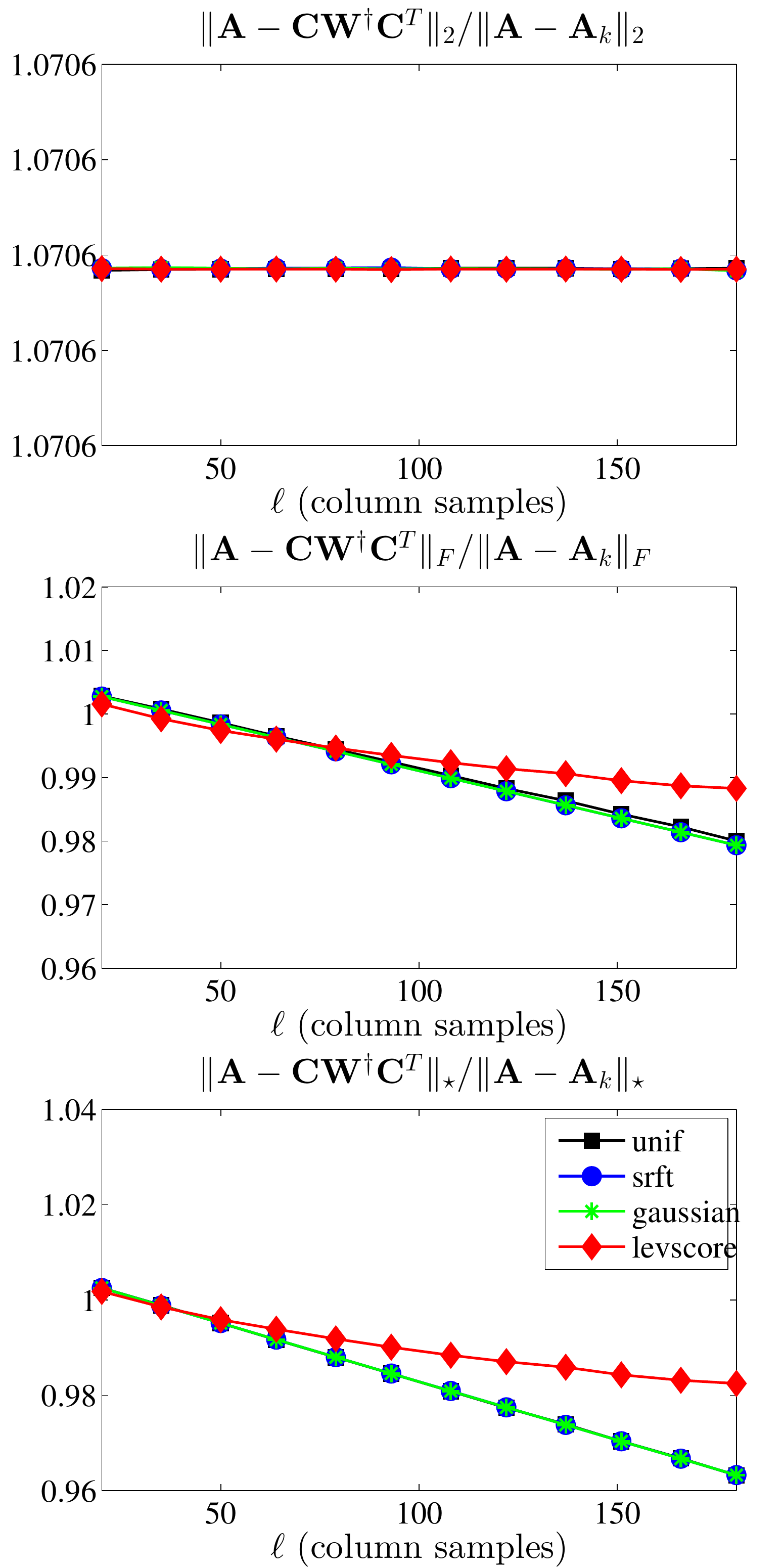}}%
 \subfigure[GR, $k = 60$]{\includegraphics[width=1.6in, keepaspectratio=true]{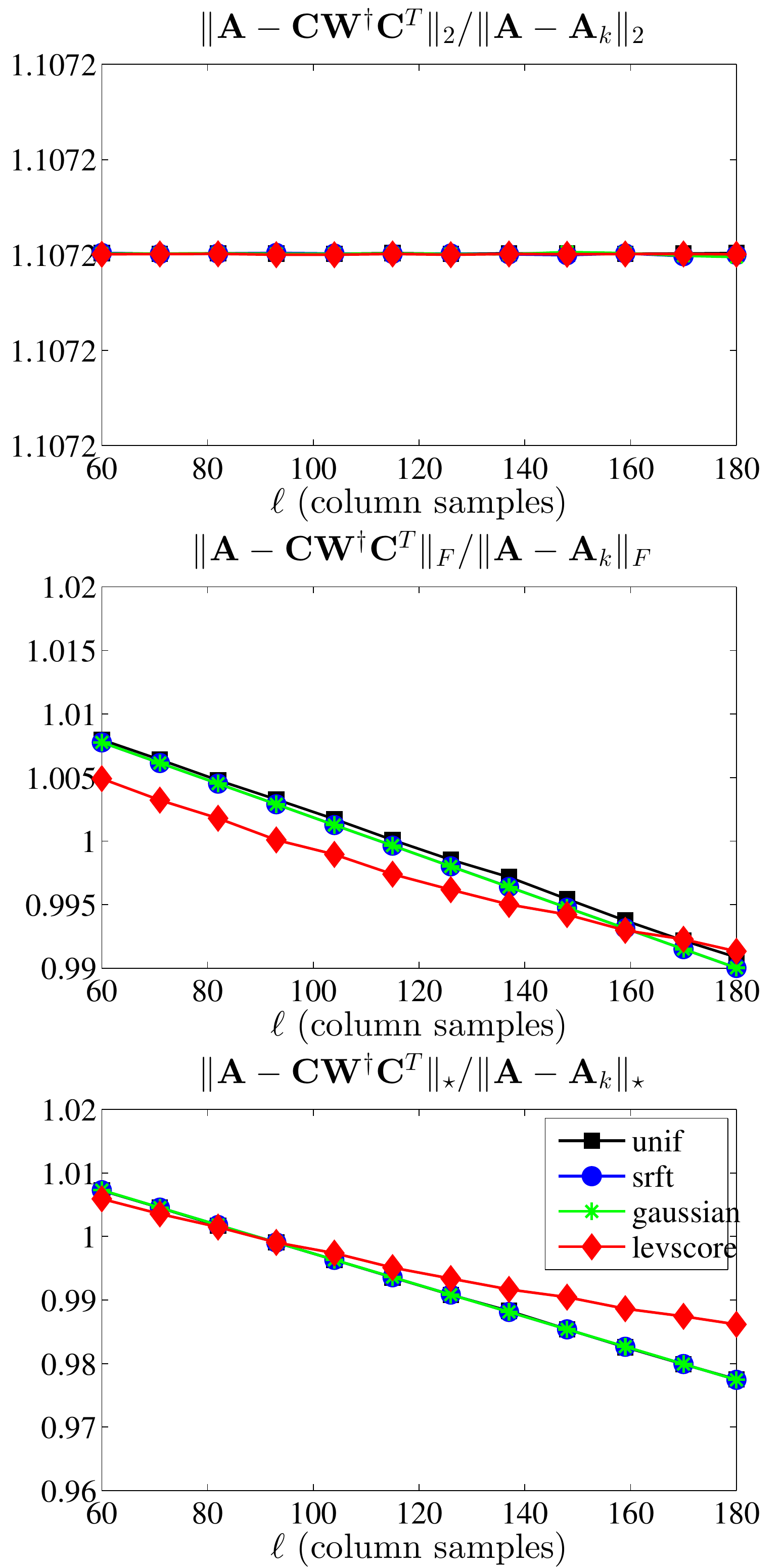}}%
 \subfigure[HEP, $k = 20$]{\includegraphics[width=1.6in, keepaspectratio=true]{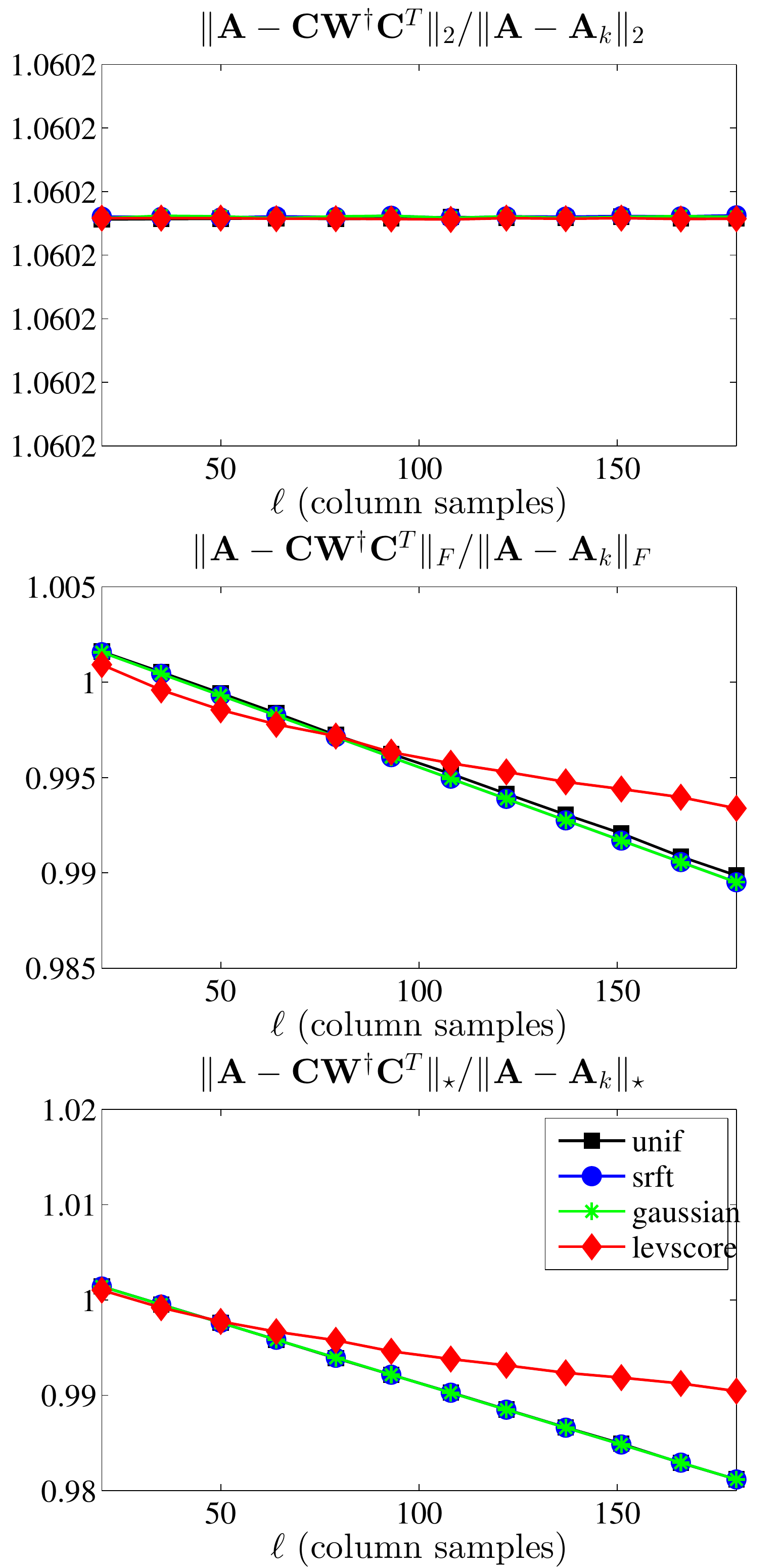}}%
 \subfigure[HEP, $k = 60$]{\includegraphics[width=1.6in, keepaspectratio=true]{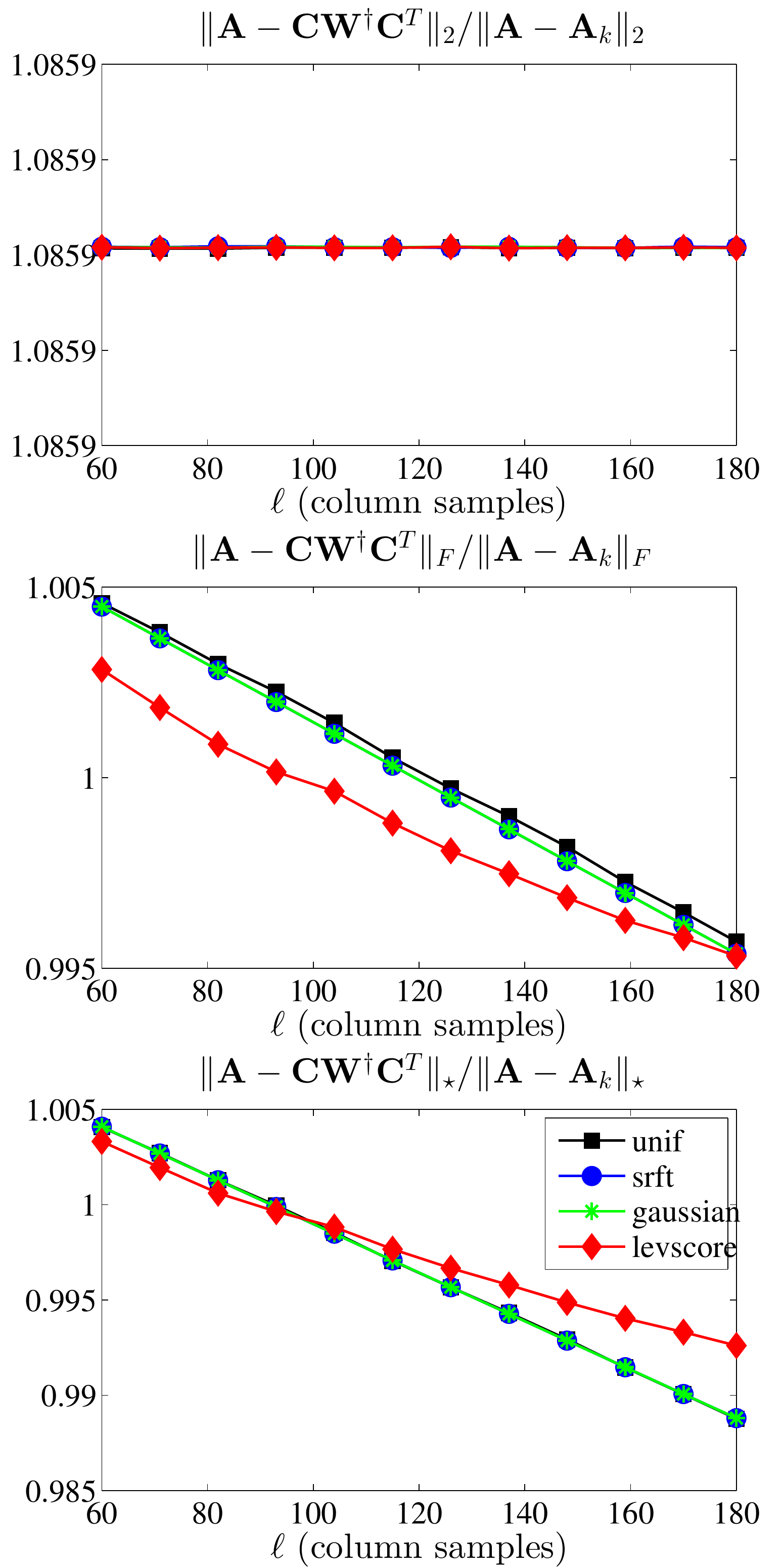}}%
 \\%
 \subfigure[GR, $k = 20$]{\includegraphics[width=1.6in, keepaspectratio=true]{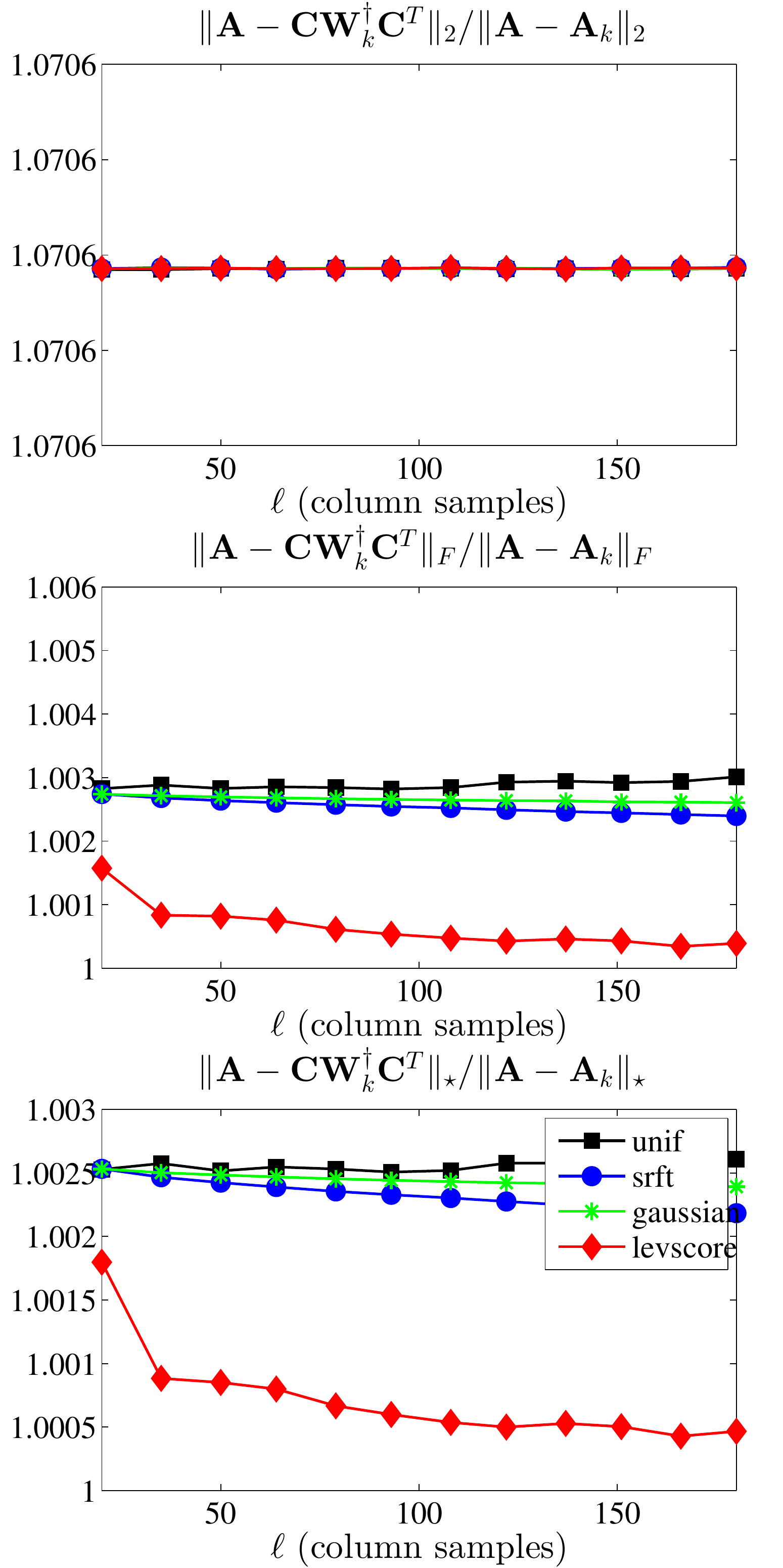}}%
 \subfigure[GR, $k = 60$]{\includegraphics[width=1.6in, keepaspectratio=true]{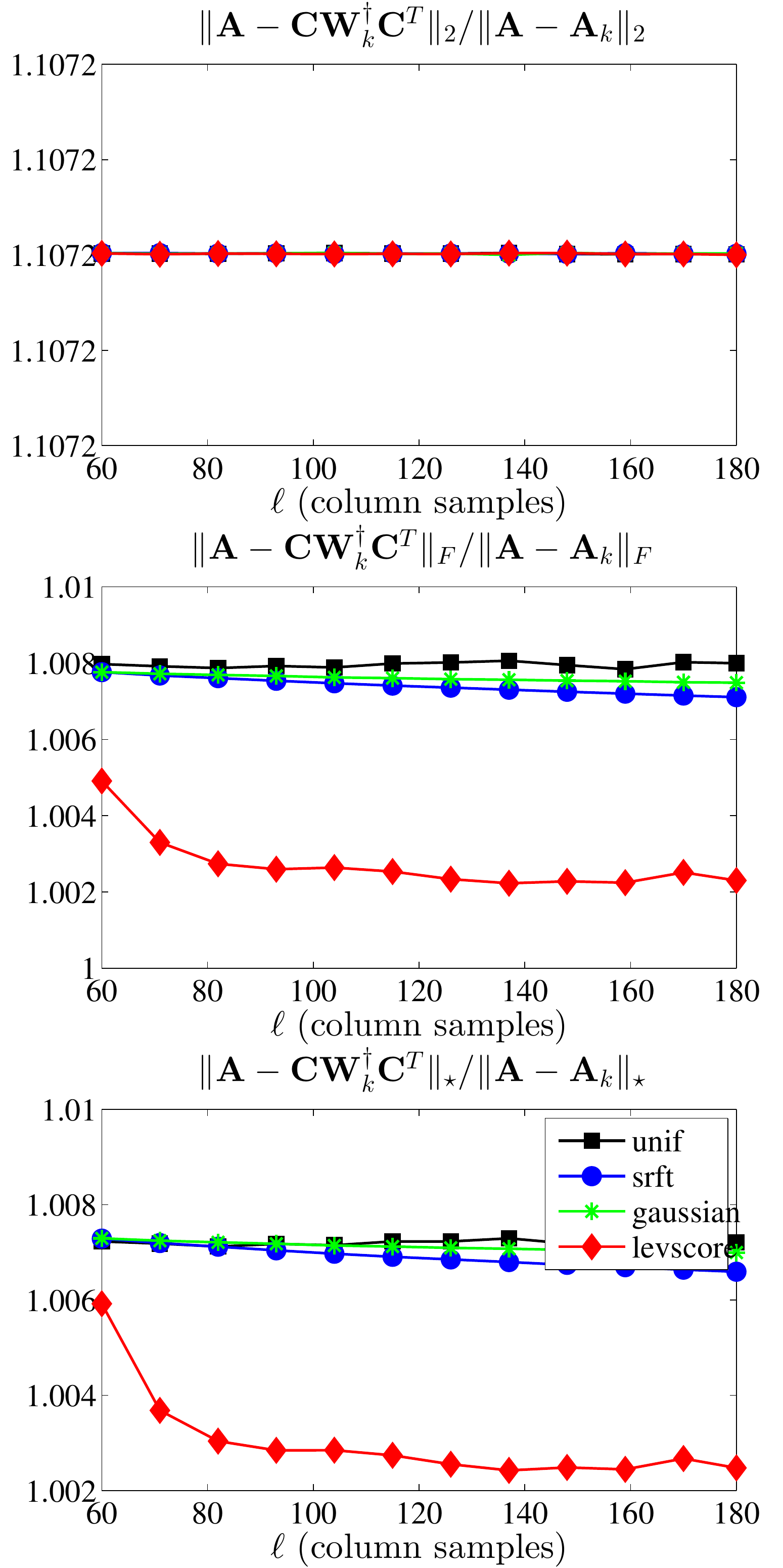}}%
 \subfigure[HEP, $k = 20$]{\includegraphics[width=1.6in, keepaspectratio=true]{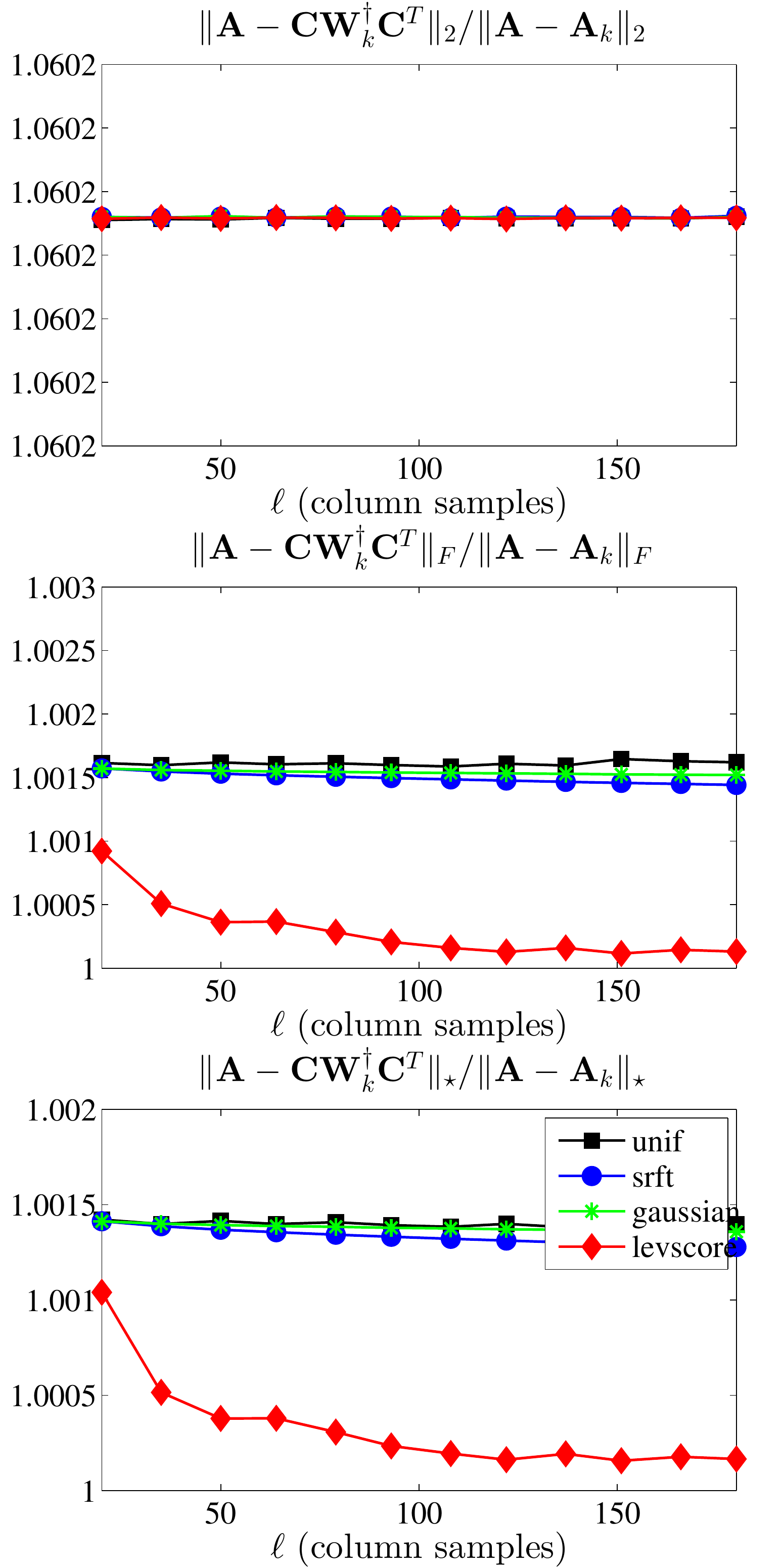}}%
 \subfigure[HEP, $k = 60$]{\includegraphics[width=1.6in, keepaspectratio=true]{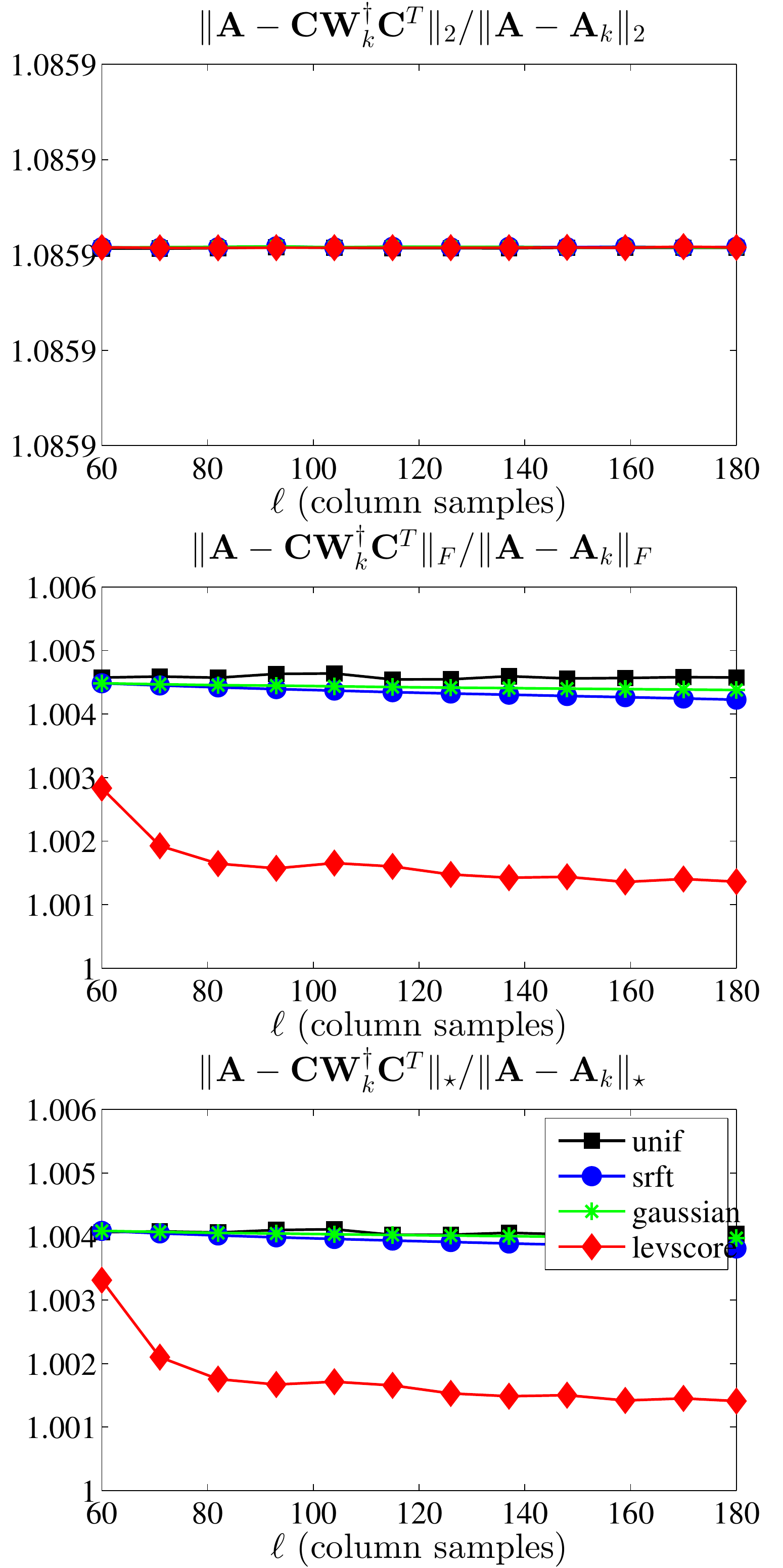}}
 \caption{The spectral, Frobenius, and trace norm errors 
 (top to bottom, respectively, in each subfigure) of several
 (non-rank-restricted in top panels and rank-restricted in bottom panels)
 SPSD sketches, as a function of the number of columns samples $\ell$, 
 for the GR and HEP Laplacian data sets, with two choices of the rank parameter $k$.}%
 \label{fig:laplacian-exact-errors-1}
\end{figure}

\begin{figure}[p]
 \centering
 \subfigure[Enron, $k = 20$]{\includegraphics[width=1.6in, keepaspectratio=true]{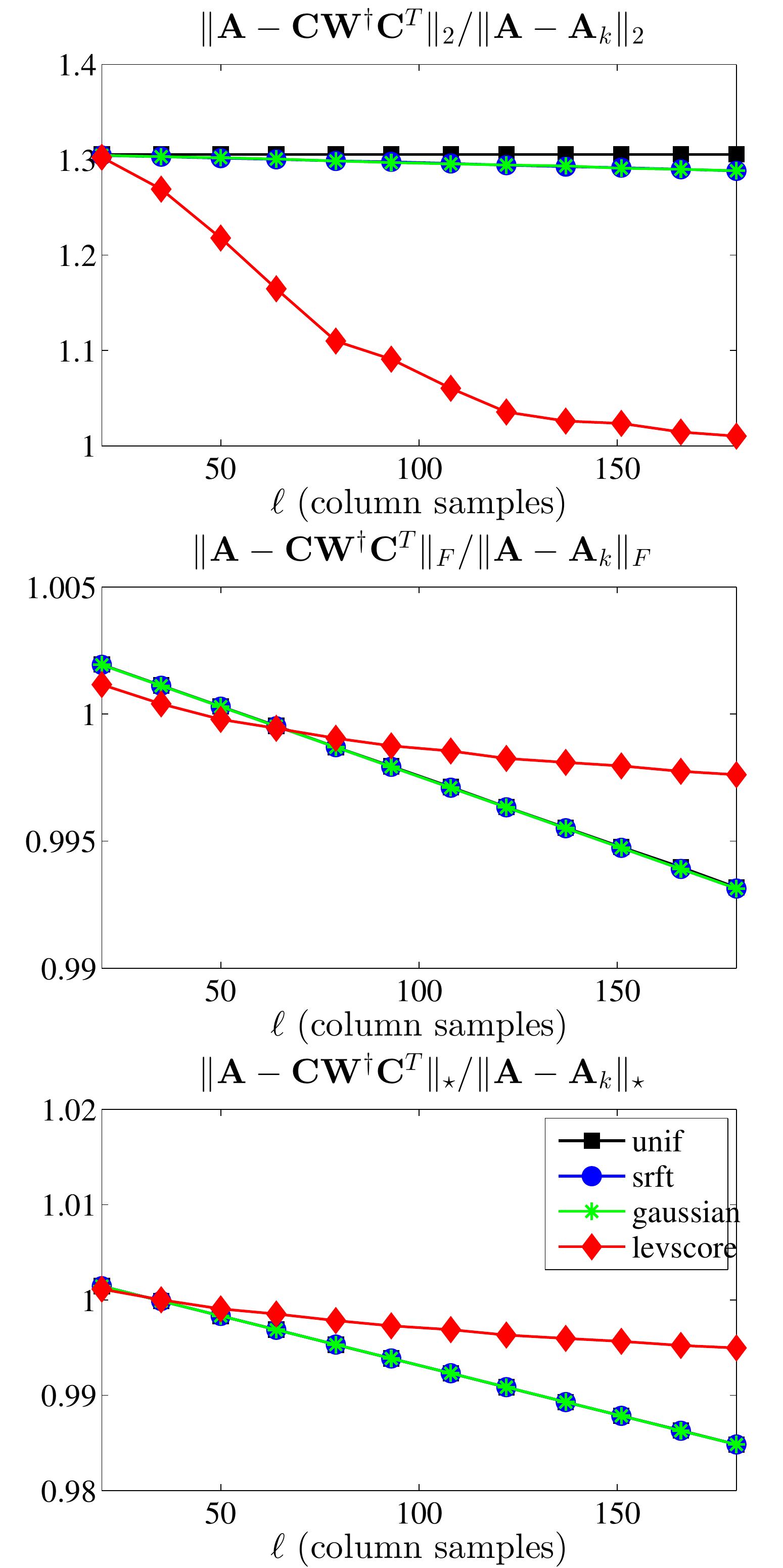}}%
 \subfigure[Enron, $k = 60$]{\includegraphics[width=1.6in, keepaspectratio=true]{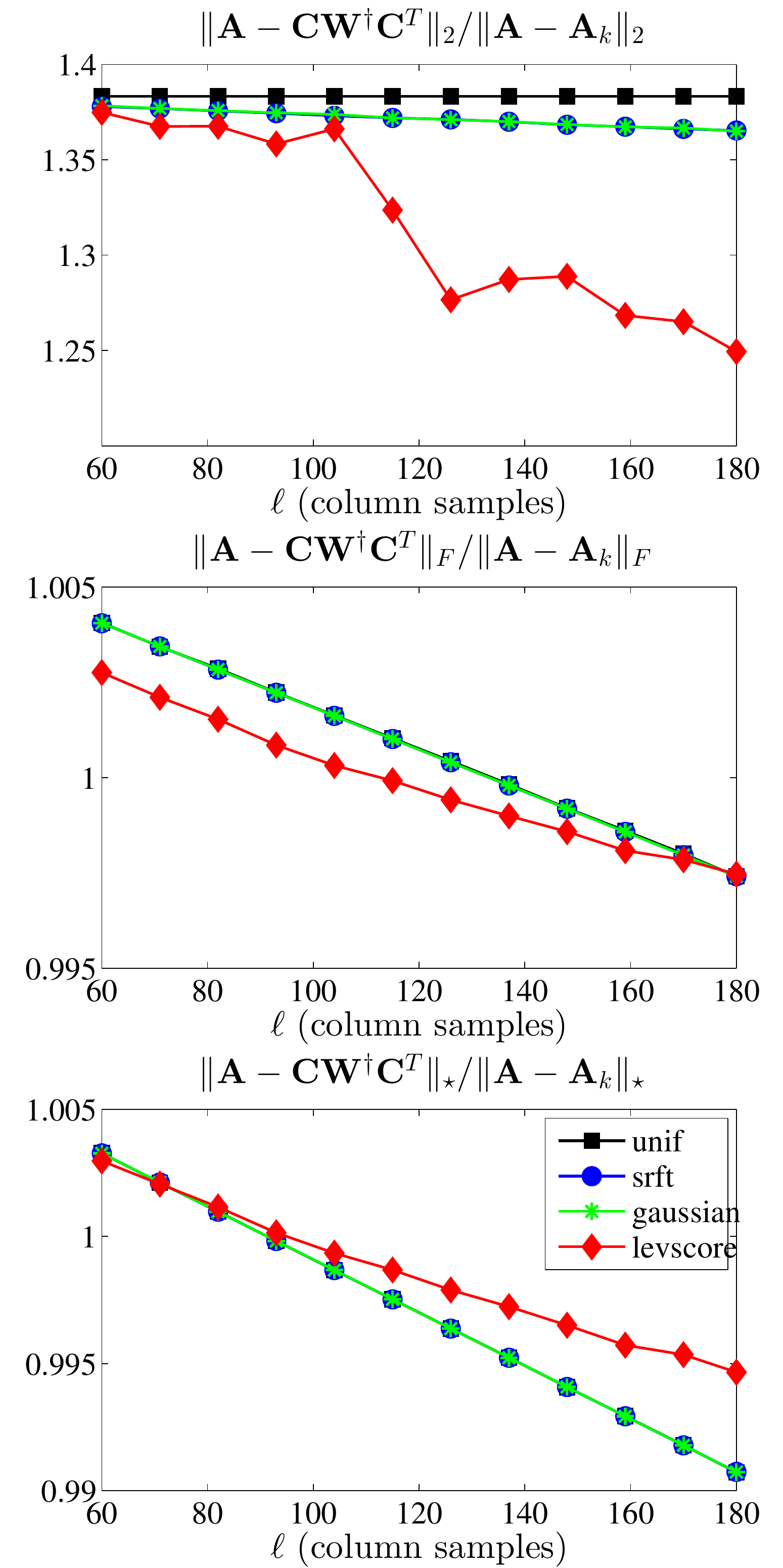}}%
 \subfigure[Gnutella, $k = 20$]{\includegraphics[width=1.6in, keepaspectratio=true]{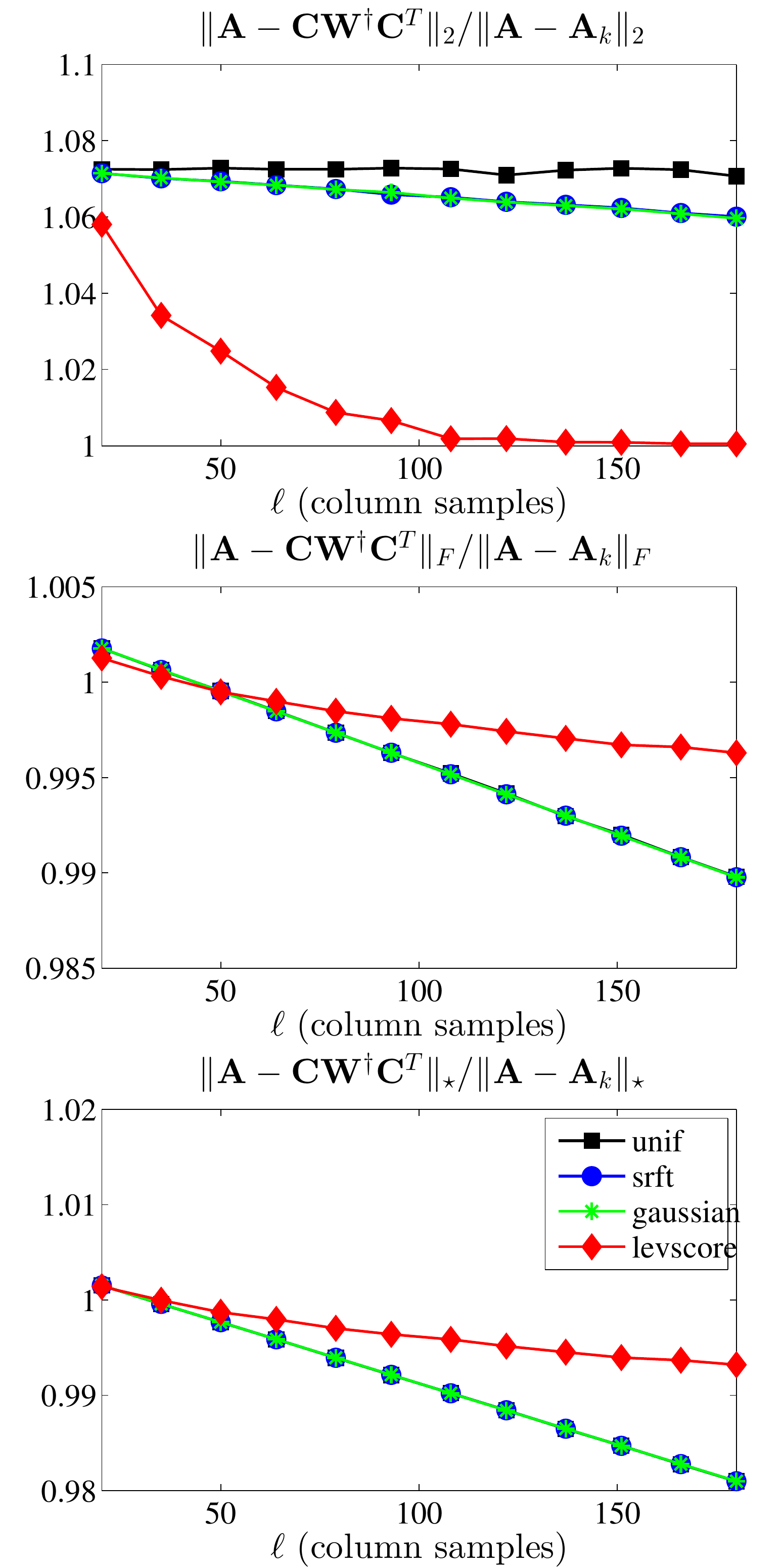}}%
 \subfigure[Gnutella, $k = 60$]{\includegraphics[width=1.6in, keepaspectratio=true]{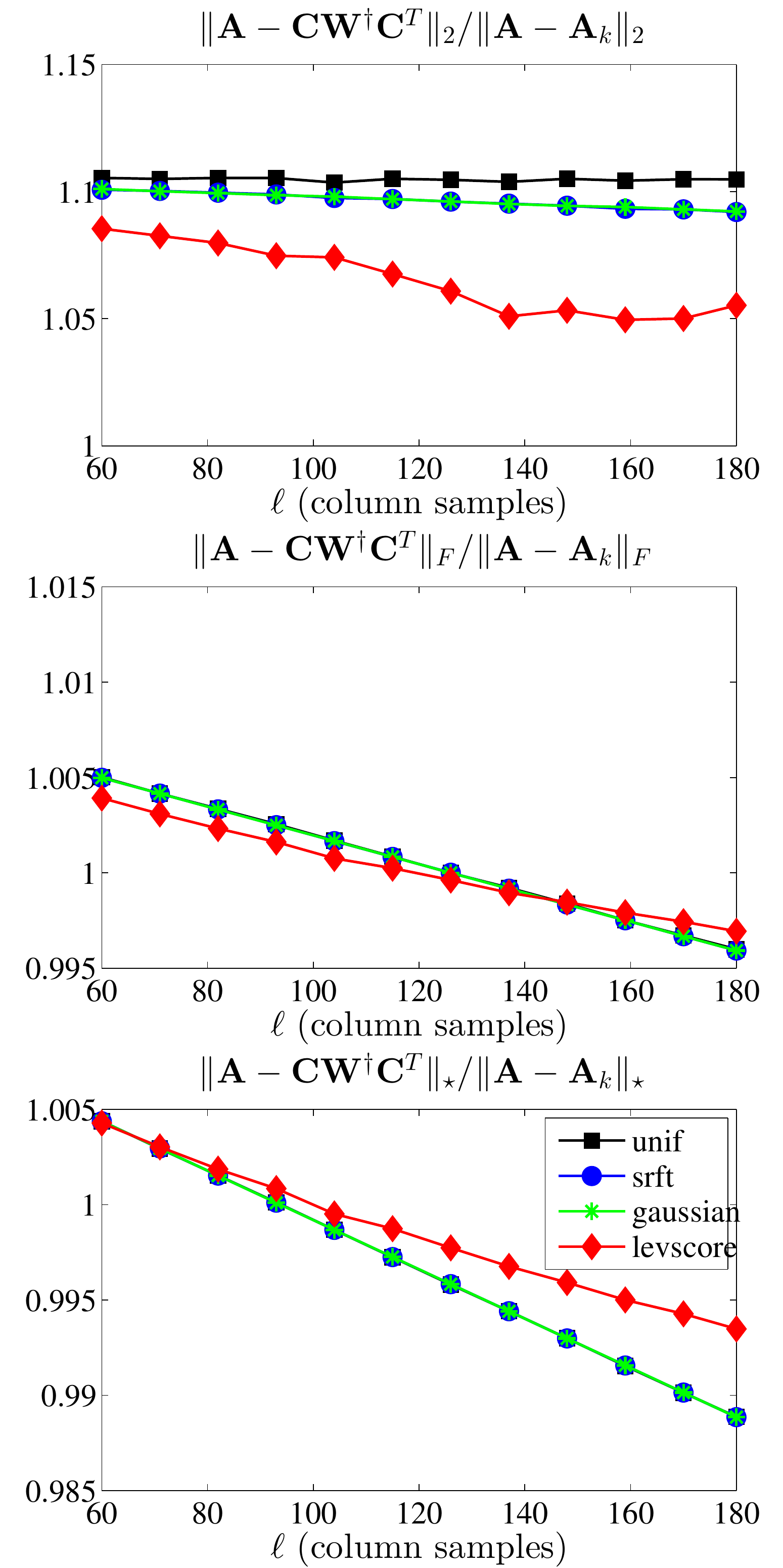}}%
 \\%
 \subfigure[Enron, $k = 20$]{\includegraphics[width=1.6in, keepaspectratio=true]{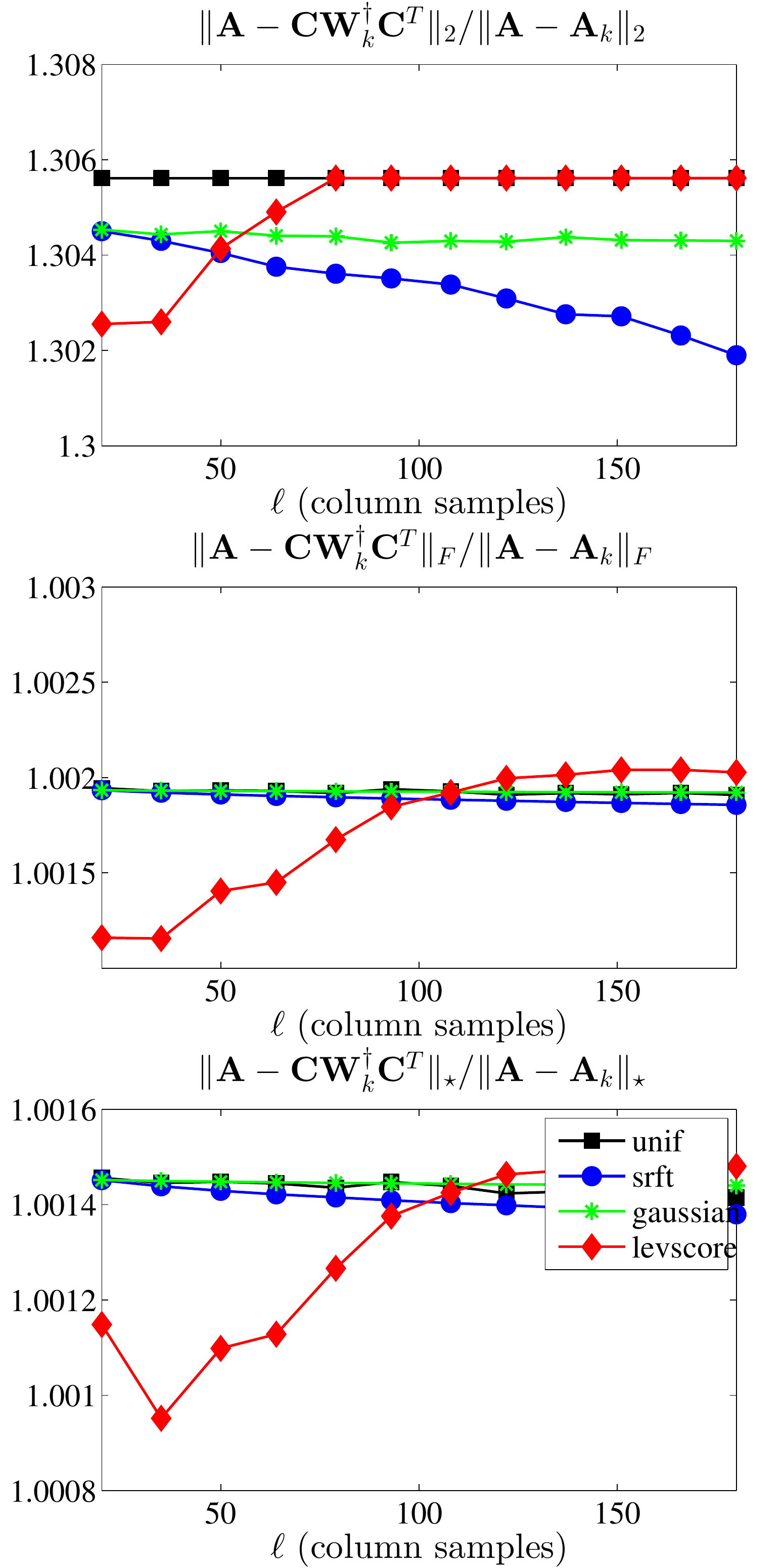}}%
 \subfigure[Enron, $k = 60$]{\includegraphics[width=1.6in, keepaspectratio=true]{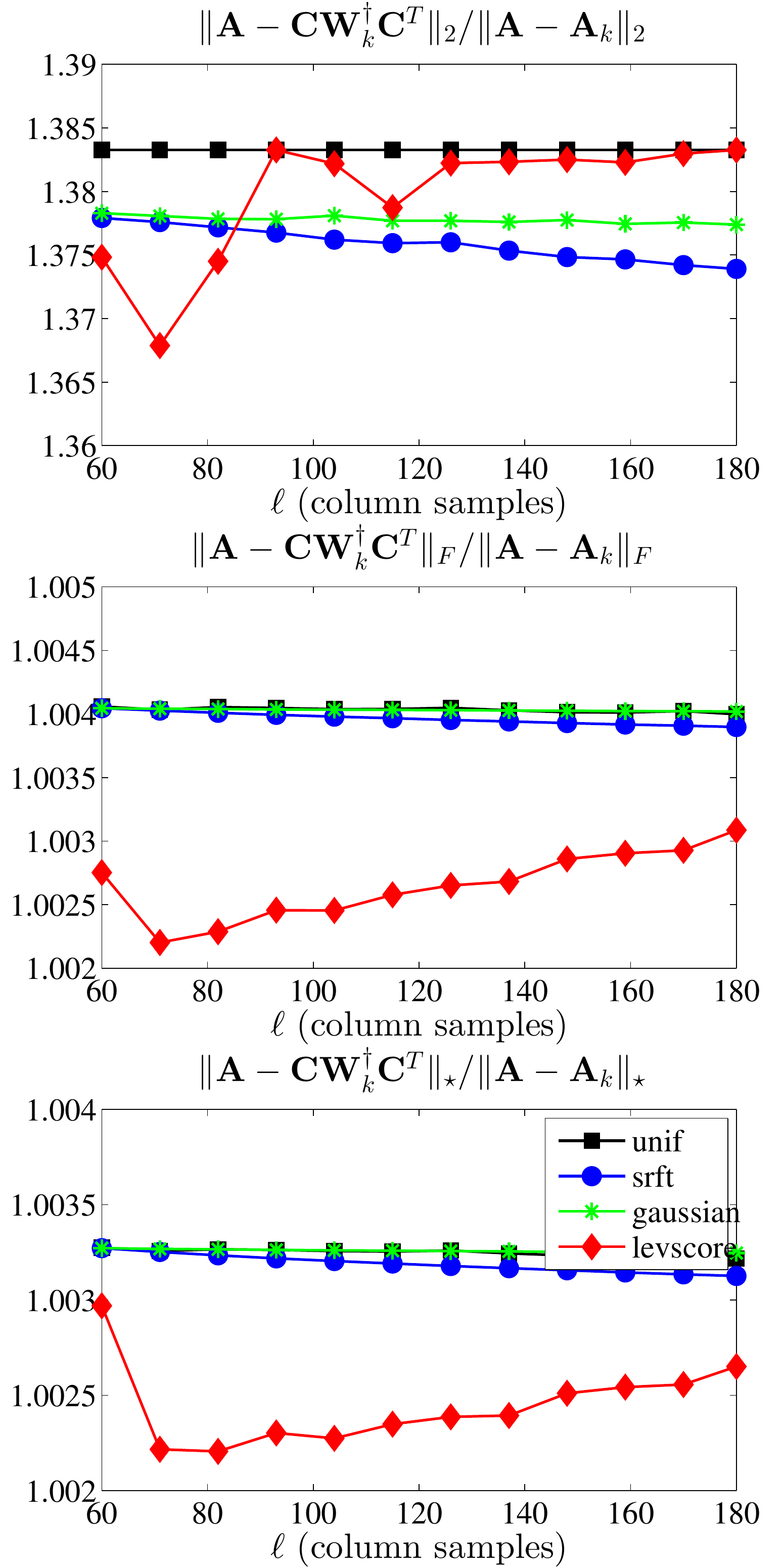}}%
 \subfigure[Gnutella, $k = 20$]{\includegraphics[width=1.6in, keepaspectratio=true]{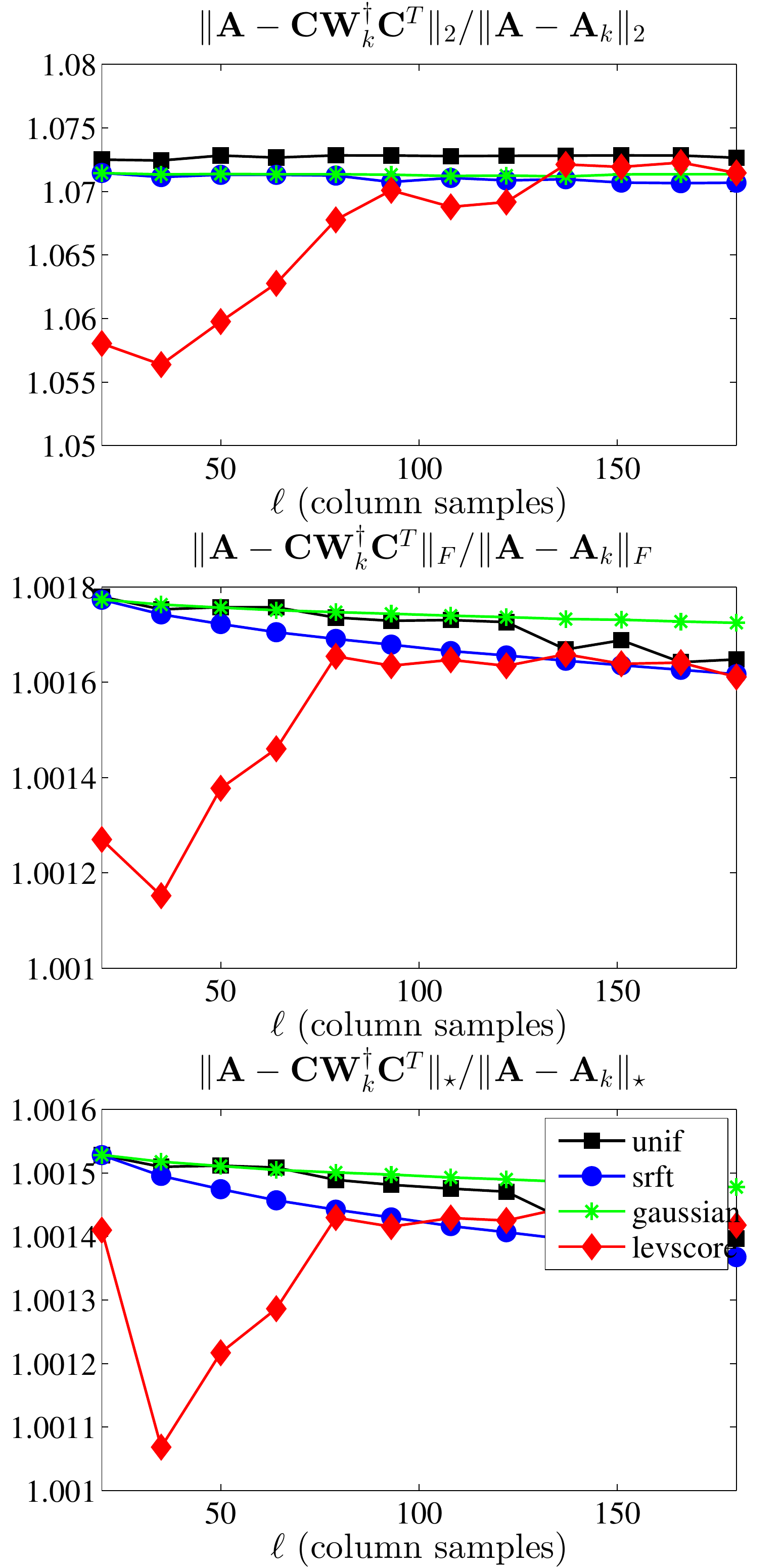}}%
 \subfigure[Gnutella, $k = 60$]{\includegraphics[width=1.6in, keepaspectratio=true]{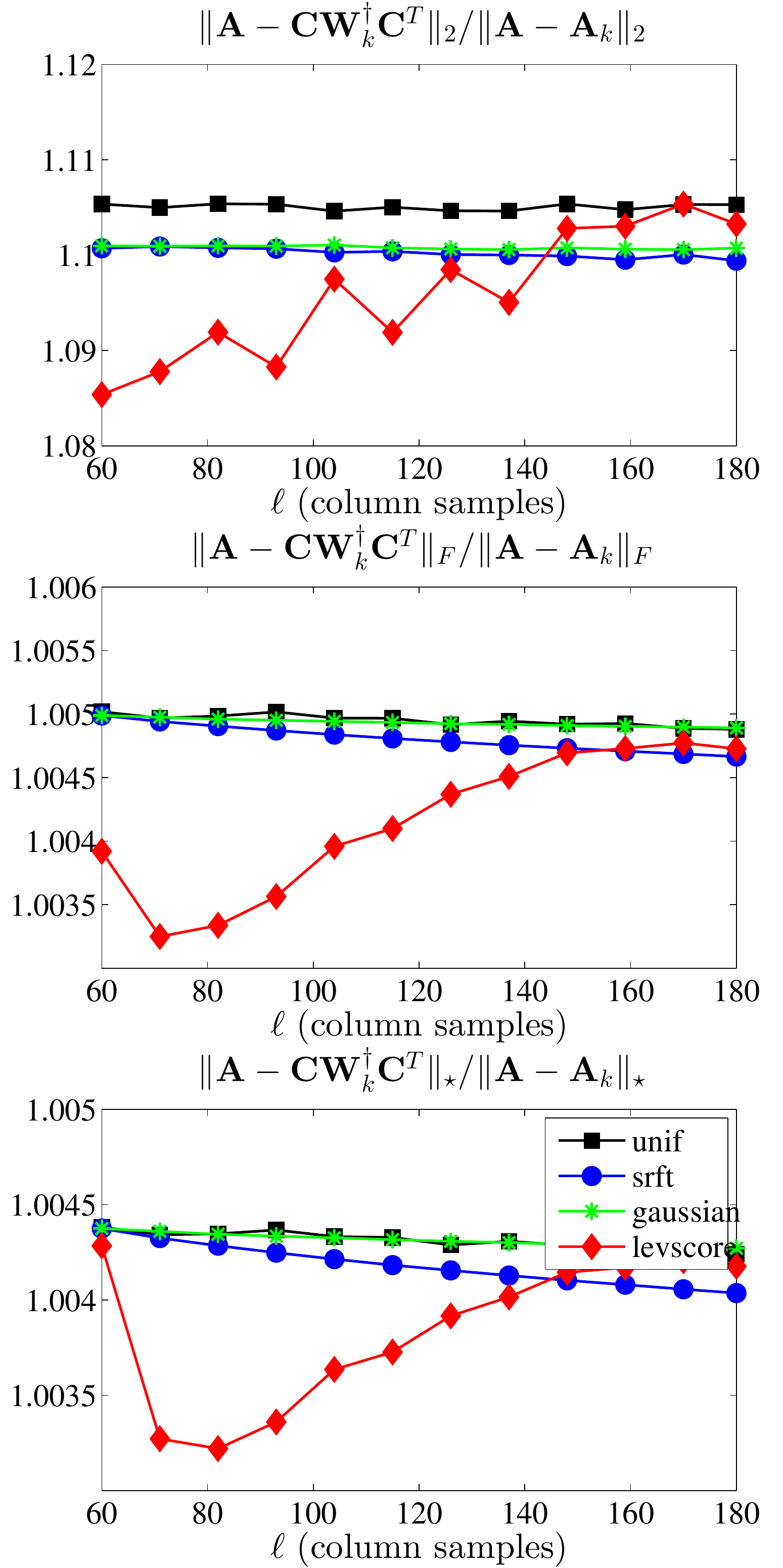}}%
 \caption{The spectral, Frobenius, and trace norm errors 
 (top to bottom, respectively, in each subfigure) of several
 (non-rank-restricted in top panels and rank-restricted in bottom panels)
 SPSD sketches, as a function of the number of columns samples $\ell$, 
 for the Enron and Gnutella Laplacian data sets, with two choices of the rank parameter $k$.}%
 \label{fig:laplacian-exact-errors-2}
\end{figure}

Figure~\ref{fig:laplacian-exact-errors-1} and
Figure~\ref{fig:laplacian-exact-errors-2} 
show the reconstruction error results for sampling and projection methods 
applied to several normalized graph Laplacians.
The former shows GR and HEP, each for two values of the rank parameter, 
and the latter shows Enron and Gnutella, again each for two values of the 
rank parameter.
Both figures show the spectral, Frobenius, and trace norm approximation 
errors, as a function of the number of column samples~$\ell$, relative to 
the error of the optimal rank-$k$ approximation of $\mat{A}$. 
In both figures, the first four (\emph{i.e.}, top) subfigures show the 
results for the non-rank-restricted case, and the last four (\emph{i.e.}, 
bottom) subfigures show the results for the rank-restricted case. 
In particular, in the rank-restricted case, the low-rank approximation is 
``filtered'' through a rank-$k$ space, and thus the approximation ratio is 
always greater than unity.

These and subsequent figures contain a lot of information, some of which is
peculiar to the given data sets and some of which is more general.
In light of subsequent discussion, several observations are worth making 
about the results presented in these two figures.
\begin{itemize}
\item
All of the SPSD sketches provide quite accurate 
approximations---relative to the best possible approximation factor for that 
norm, and relative to bounds provided by existing theory, as reviewed in 
Section~\ref{sxn:prelim-prior}---even with only $k$ column samples (or 
in the case of the Gaussian and SRFT mixtures, with only $k$ linear 
combinations of vectors). 
Upon examination, this is partly due to the extreme sparsity and extremely 
slow spectral decay of these data sets which means, as shown in 
Table~\ref{table:datasets}, that only a small fraction of the (spectral or 
Frobenius or trace) mass is captured by the optimal rank $20$ or $60$ 
approximation. 
Thus, although an SPSD sketch constructed from $20$ or $60$ vectors also only 
captures a small portion of the mass of the matrix, the relative error is 
small, since the scale of the residual error is large.  
\item
The scale of the Y axes is different between different figures and 
subfigures.
This is to highlight properties within a given plot, but it can hide 
several things.
In particular, note that the scale for the spectral norm is generally
larger than for the Frobenius norm, which is generally larger than for the
trace norm, consistent with the size of those norms; and that the scale is 
larger for higher-rank approximations, \emph{e.g.} compare GR $k=20$ with 
GR $k=60$, also consistent with the larger amount of mass captured by 
higher-rank approximations.
\item
Both the non-rank-restricted and rank-restricted results are the same for 
$\ell=k$.
For $\ell > k$, the non-rank-restricted errors tend to decrease (or 
at least not increase, as for GR and HEP the spectral norm error is flat as 
a function of $\ell$), which is intuitive.
While the rank-restricted errors also tend to decrease for $\ell > k$, 
the decrease is much less (since the rank-restricted plots are bounded below 
by unity) and the behavior is much more complicated as a function of 
increasing $\ell$.
\item
The X axes ranges from $k$ to $9k$ for the $k=20$ plots and from $k$ to $3k$ for the 
$k=60$ plots.
As a practical matter, choosing $\ell$ between $k$ and (say) $2k$ or $3k$ is 
probably of greatest interest.
In this regime, there is an interesting tradeoff for the non-rank-restricted
plots: for moderately large values of $\ell$ in this regime, the error for 
leverage-based sampling is moderately better than for uniform sampling or 
random projections, while if one chooses $\ell$ to be much larger then the 
improvements from leverage-based sampling saturate and the uniform sampling 
and random projection methods are better.
This is most obvious in the Frobenius norm plots, although it is also seen in
the trace norm plots, and it suggests that some combination of leverage-based
sampling and uniform sampling might be best.
\item
For the rank-restricted plots, in some cases, \emph{e.g.}, with GR and HEP, 
the errors for leverage-based sampling are much better than for the other 
methods and quickly improve with increasing $\ell$ until they saturate; 
while in other cases, \emph{e.g.}, with Enron and Gnutella, the errors for
leverage-based sampling improve quickly and then degrade with increasing 
$\ell$.
Upon examination, the former phenomenon is similar to what was observed in 
the non-rank-restricted case and is due to the strong ``bias'' provided by 
the leverage score importance sampling distribution to the top part of the 
spectrum, allowing the sampling process to focus very quickly on the 
low-rank part of the input matrix.
(In some cases, this is due to the fact that the heterogeneity of 
the leverage score importance sampling distribution means that one is likely 
to choose the same high leverage columns multiple times, rather than 
increasing the accuracy of the sketch by adding new columns whose 
leverage scores are lower.) 
The latter phenomenon of degrading error quality as $\ell$ is increased is 
more complex and seems to be due to some sort of ``overfitting'' caused by 
this strong bias and by choosing many more than $k$ columns.  
%% Although we do not completely undersand it, this phenomenon is seen in 
%% several of the examples below.
\item
The behavior of the approximations with respect to the spectral norm is 
quite different from the behavior in the Frobenius and trace norms. 
In the latter, as the number of samples $\ell$ increases, the errors tend to 
decrease, although in an erratic manner for some of the rank-restricted 
plots; while for the former, the errors tend to be much flatter as a function 
of increasing $\ell$ for at least the Gaussian, SRFT, and uniformly sampled 
sketches.
\end{itemize}
All in all, there seems to be quite complicated behavior for low-rank 
sketches for these Laplacian data sets.
Several of these observations can also be made for subsequent figures; but
in some other cases the (very sparse and not very low rank) structural 
properties of the data are primarily responsible.

\subsubsection{Linear Kernels}

\begin{figure}[p]
 \centering
 \subfigure[Dexter, $k = 8$]{\includegraphics[width=1.6in, keepaspectratio=true]{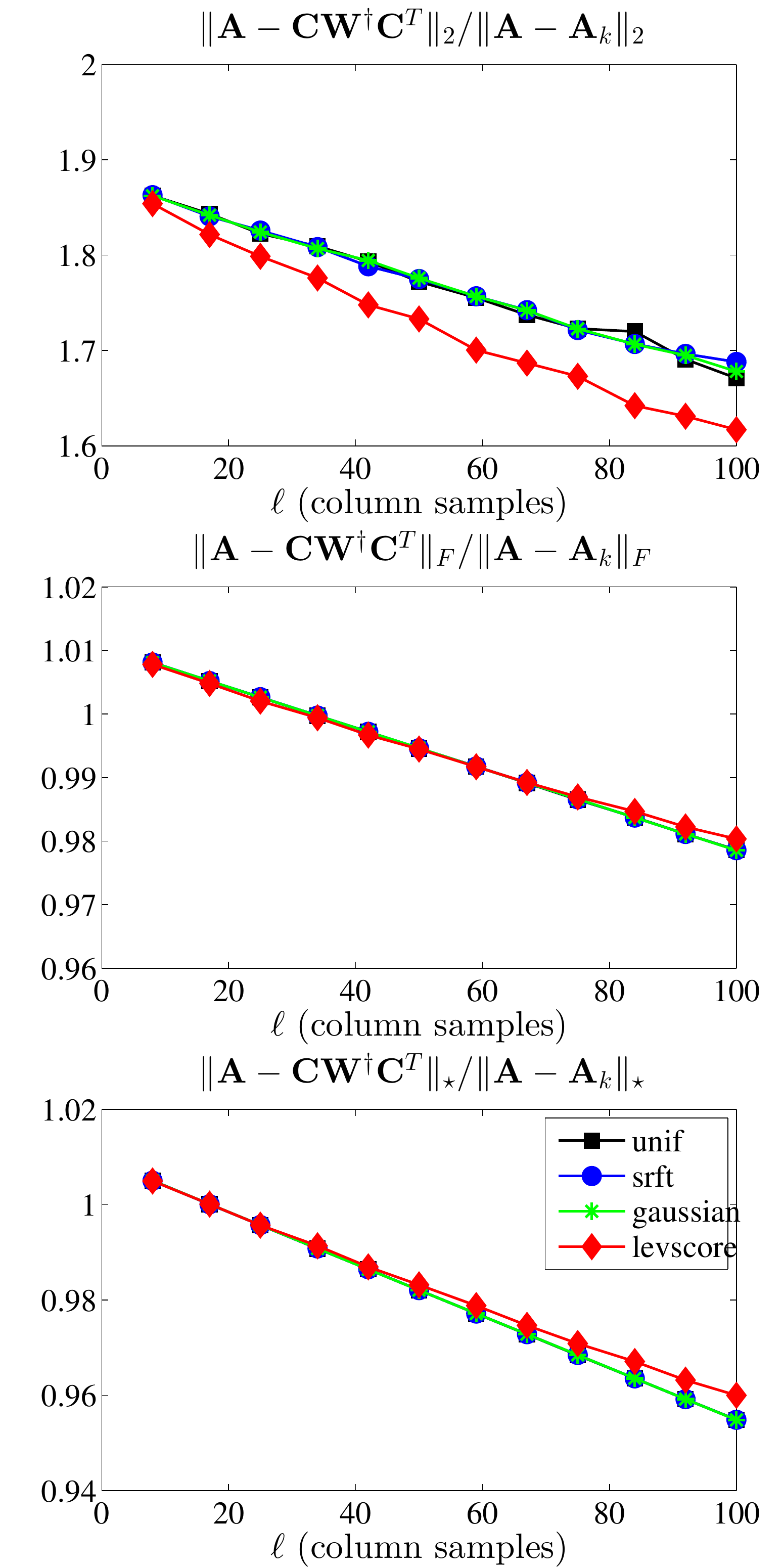}}%
 \subfigure[Protein, $k = 10$]{\includegraphics[width=1.6in, keepaspectratio=true]{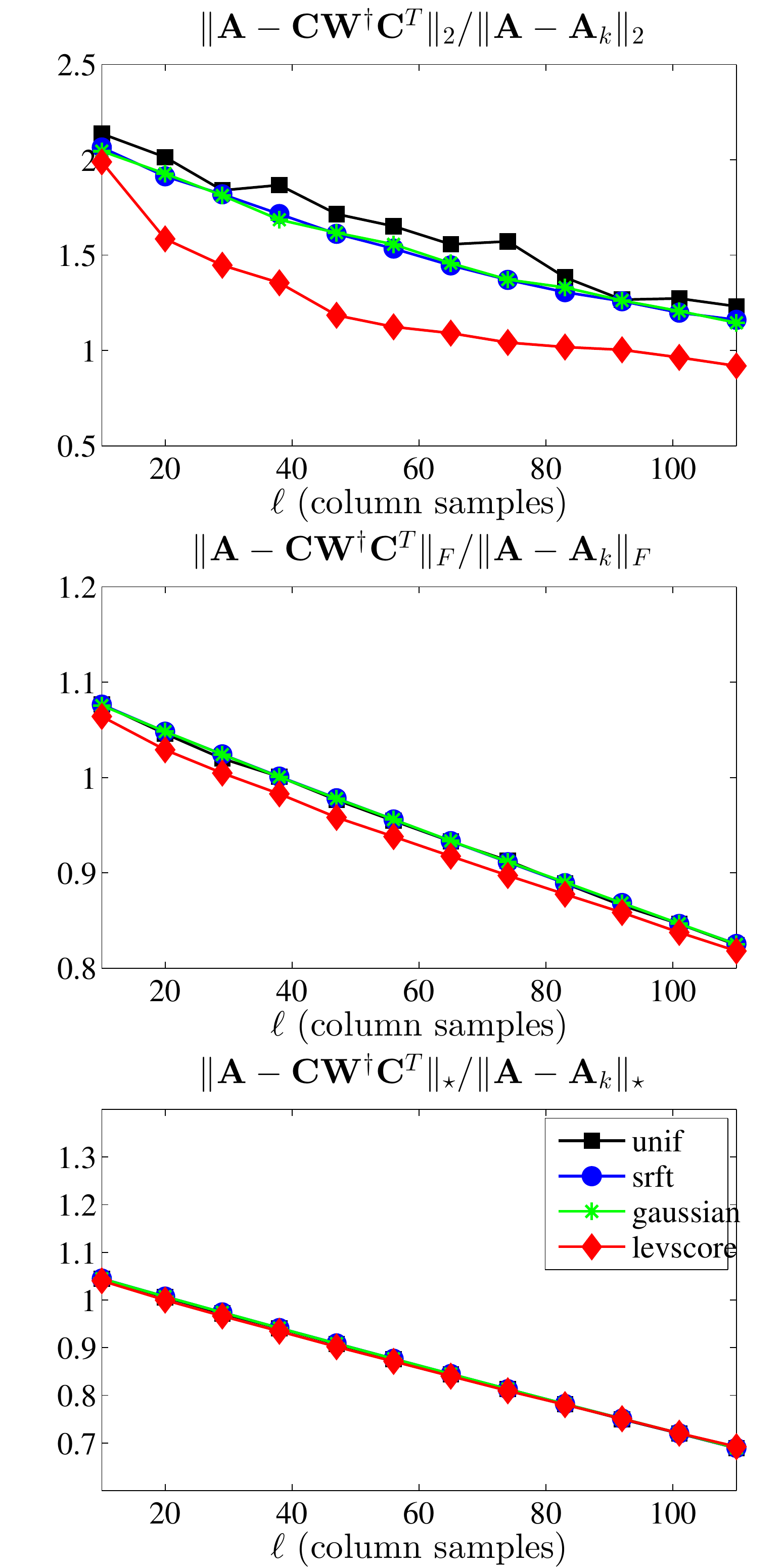}}%
 \subfigure[SNPs, $k = 5$]{\includegraphics[width=1.6in, keepaspectratio=true]{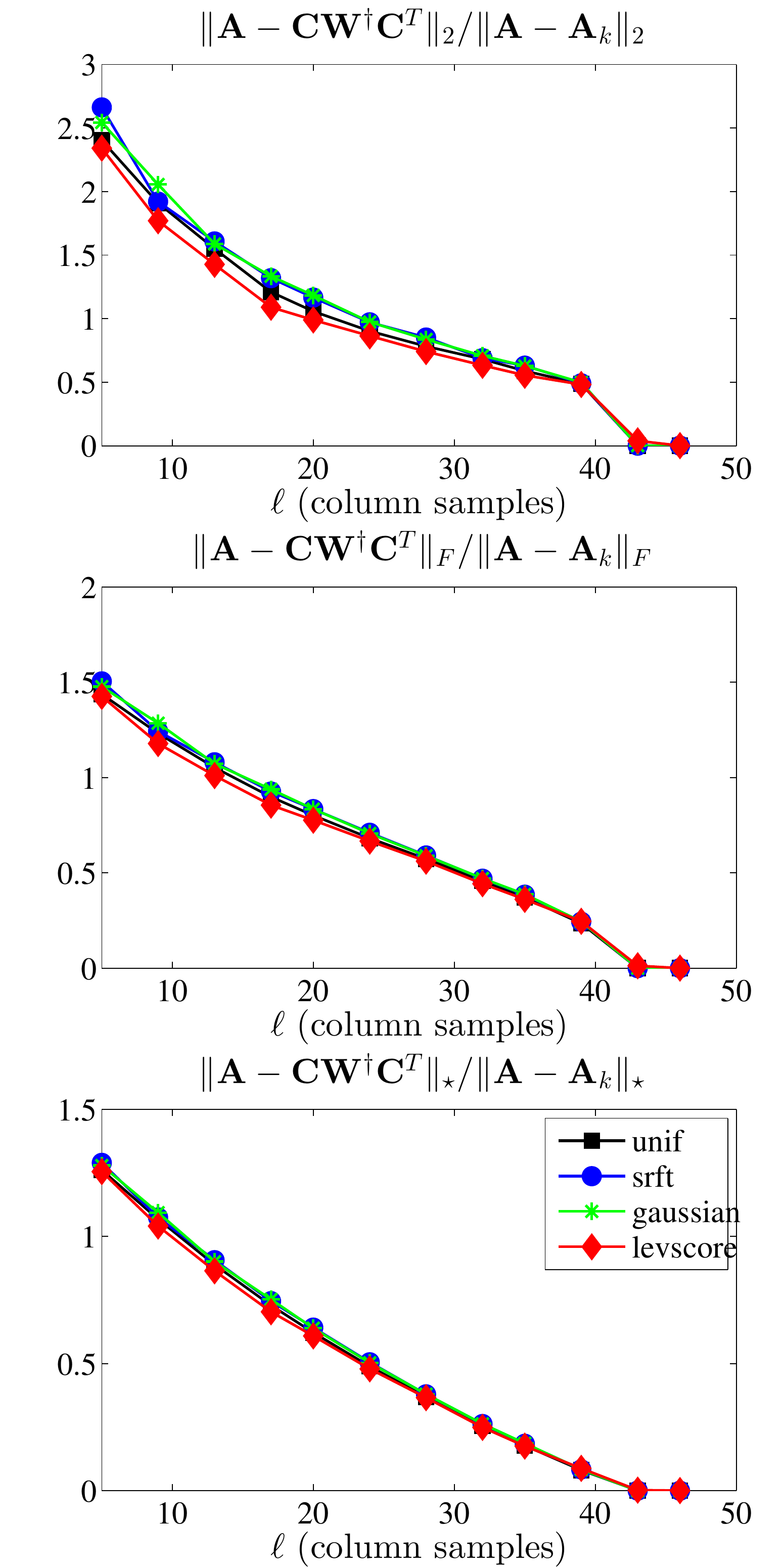}}%
 \subfigure[Gisette, $k = 12$]{\includegraphics[width=1.6in, keepaspectratio=true]{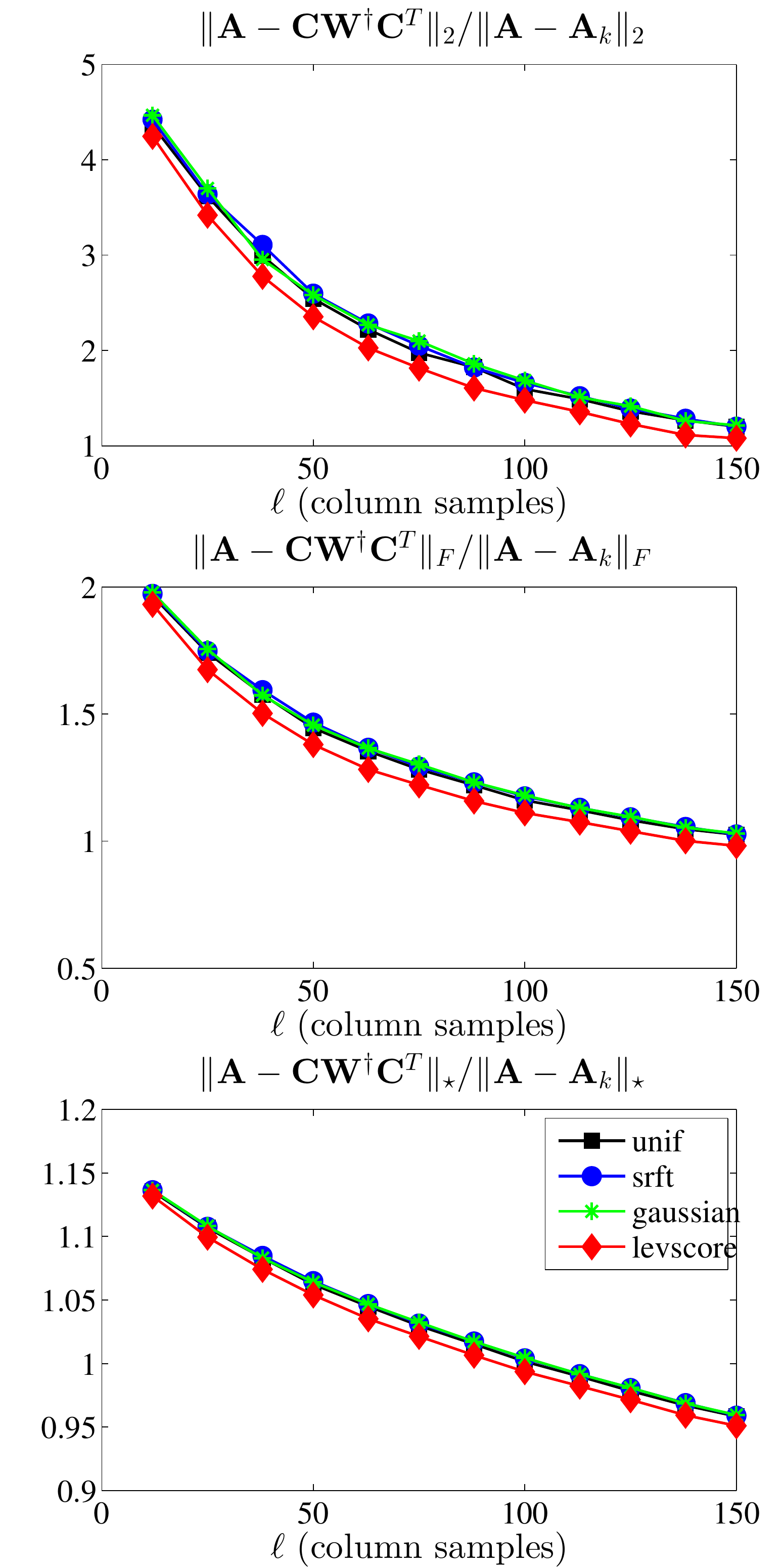}}%
 \\%
 \subfigure[Dexter, $k = 8$]{\includegraphics[width=1.6in, keepaspectratio=true]{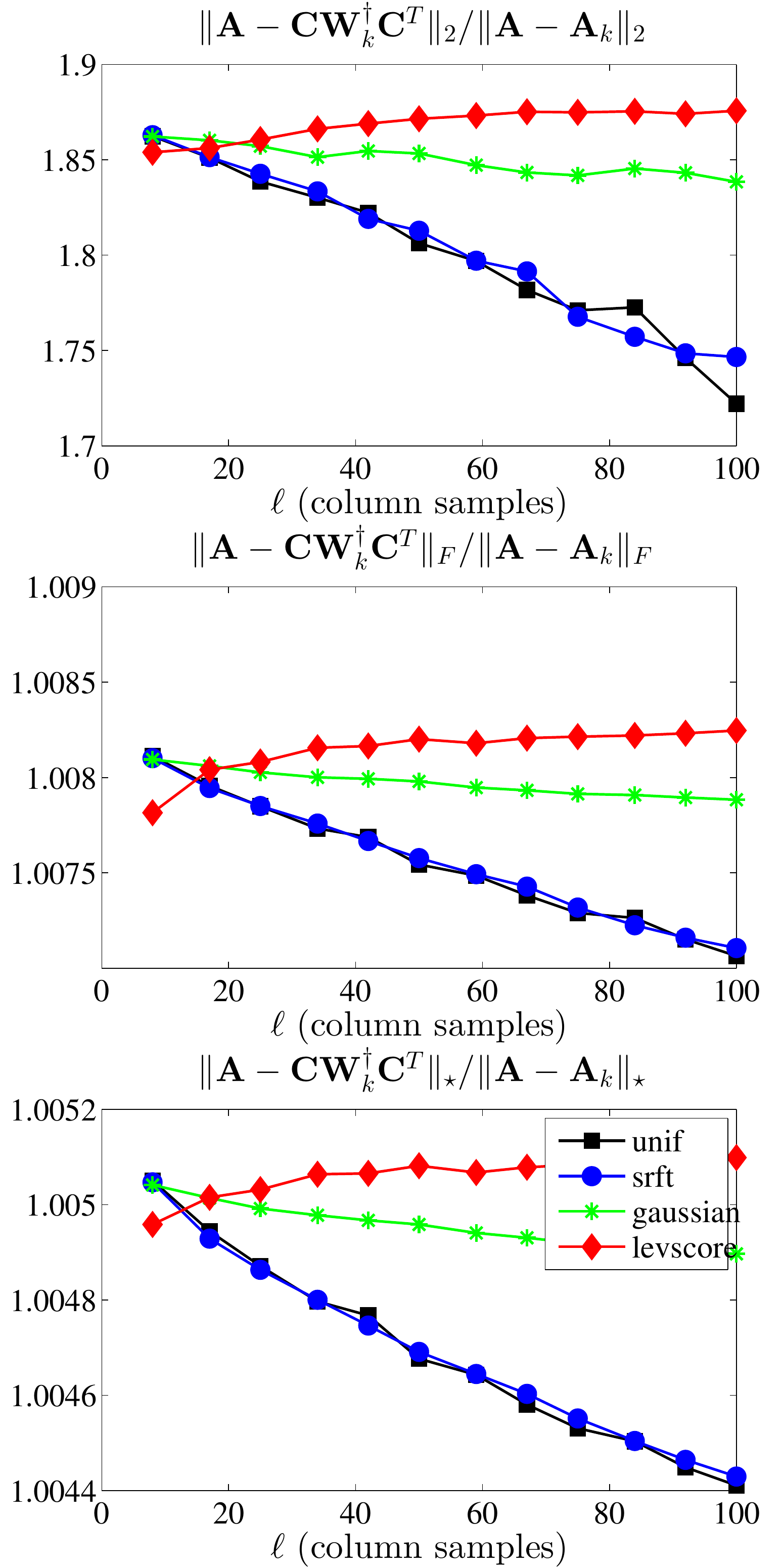}}%
 \subfigure[Protein, $k = 10$]{\includegraphics[width=1.6in, keepaspectratio=true]{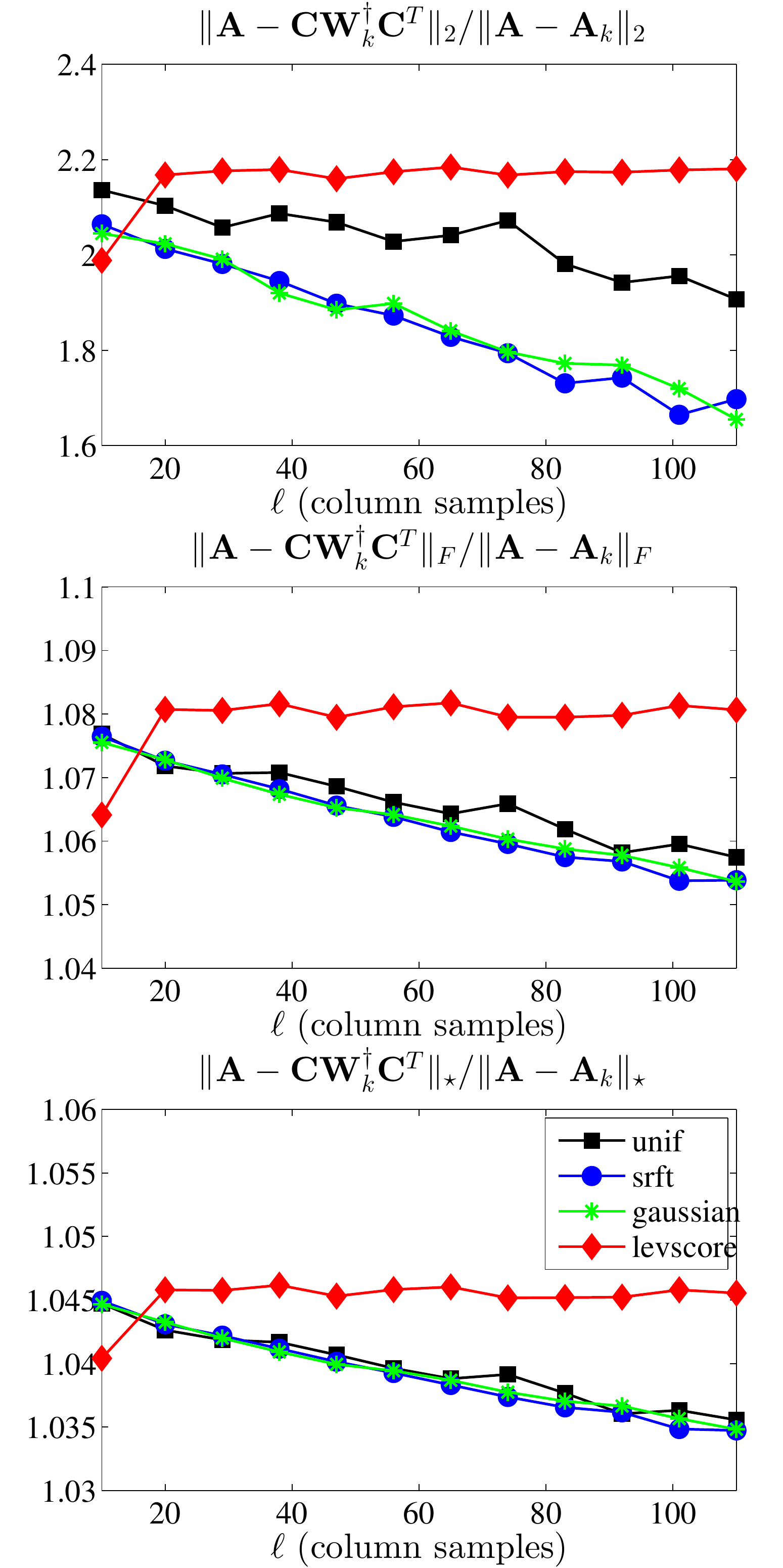}}%
 \subfigure[SNPs, $k = 5$]{\includegraphics[width=1.6in, keepaspectratio=true]{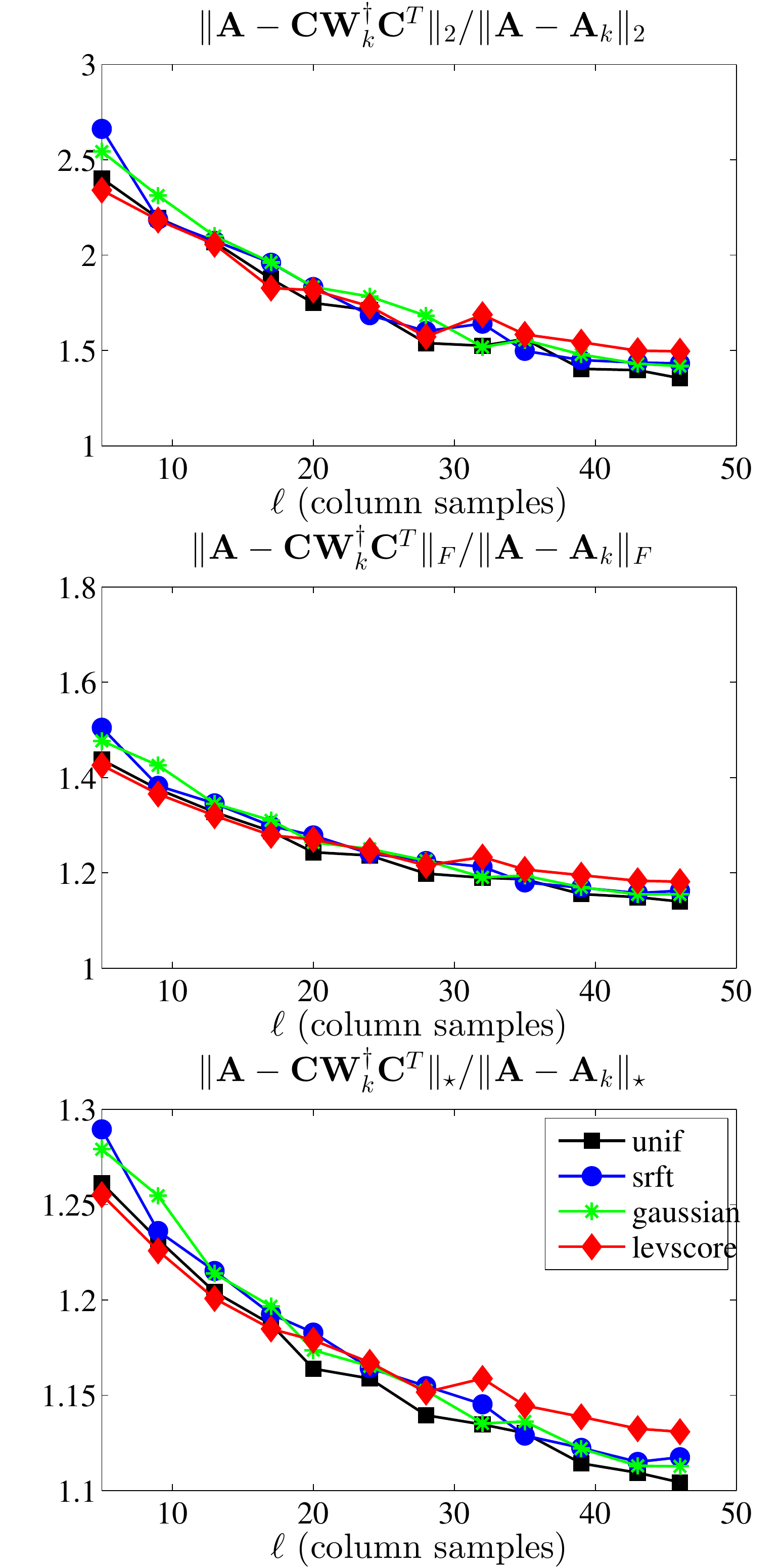}}%
 \subfigure[Gisette, $k = 12$]{\includegraphics[width=1.6in, keepaspectratio=true]{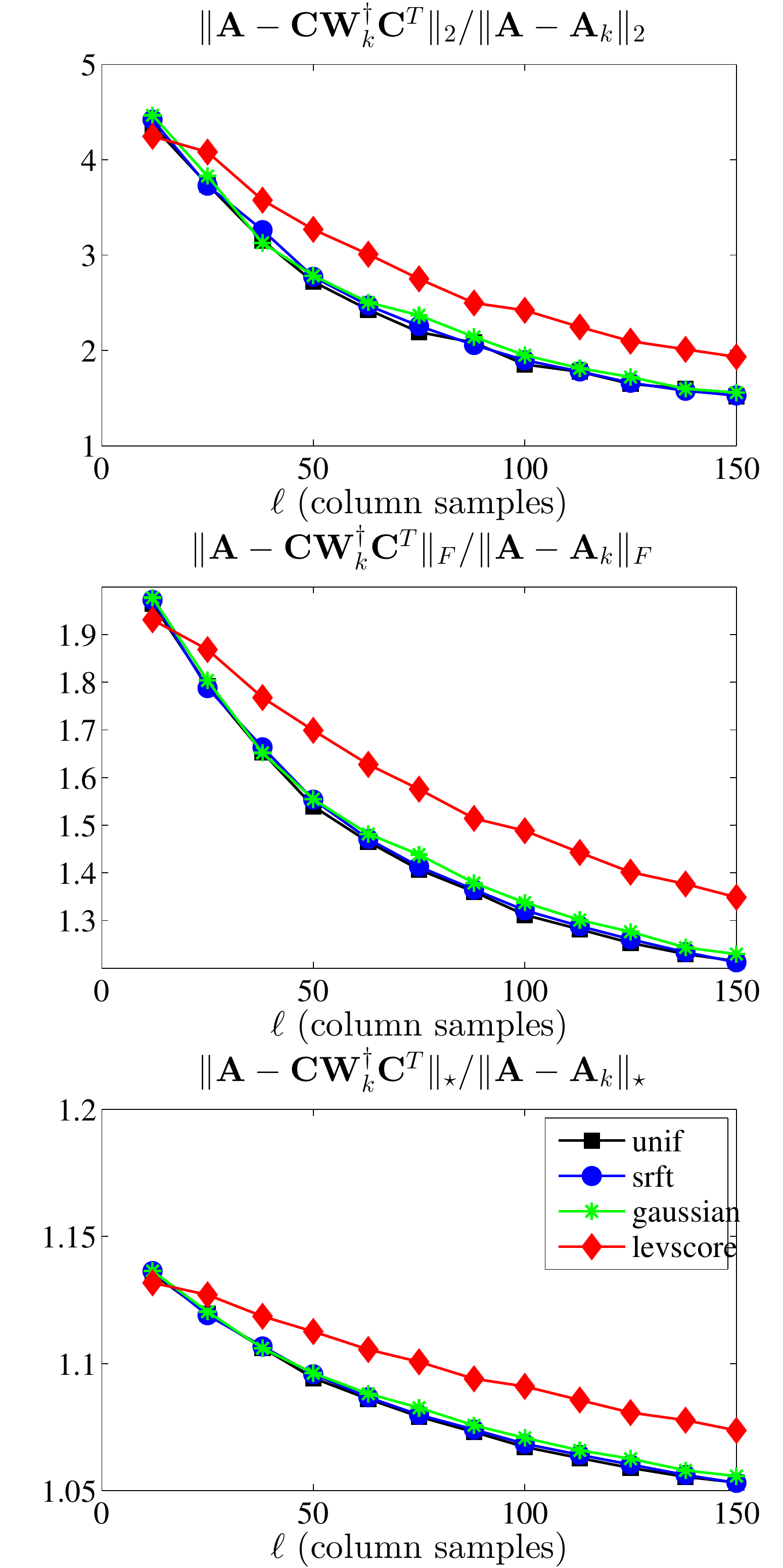}}%
 \caption{The spectral, Frobenius, and trace norm errors 
 (top to bottom, respectively, in each subfigure) of several
 (non-rank-restricted in top panels and rank-restricted in bottom panels)
 SPSD sketches, as a function of the number of columns samples $\ell$, 
 for the Linear Kernel data sets.
 }%
 \label{fig:linearkernel-exact-errors}
\end{figure}

Figure~\ref{fig:linearkernel-exact-errors}
shows the reconstruction error results for sampling and projection methods 
applied to several Linear Kernels.  
The data  sets (Dexter, Protein, SNPs, and Gisette) are all quite low-rank 
and have fairly uniform leverage scores.
Several observations are worth making about the results presented in this 
figure.
\begin{itemize}
\item
All of the methods perform quite similarly for the non-rank-restricted 
case: all have errors that decrease smoothly with increasing $\ell$, and in 
this case there is little advantage to using methods other than uniform 
sampling (since they perform similarly and are more expensive).
Also, since the ranks are so low and the leverage scores are so uniform, the 
leverage score sketch is no longer significantly distinguished by its 
tendency to saturate quickly.
\item
The scale of the Y axes is much larger than for the Laplacian data sets,
mostly since the matrices are much more well-approximated by low-rank 
matrices, although the scale decreases as one goes from spectral to 
Frobenius to trace reconstruction error, as before.
\item
For SNPs and Gisette, the rank-restricted reconstruction results are 
very similar for all four methods, with a smooth decrease in error as
$\ell$ is increased, although interestingly using leverage scores is
slightly worse for Gisette.
For Dexter and Protein, the situation is more complicated: using the SRFT 
always leads to smooth decrease as $\ell$ is increased, and uniform sampling
generally behaves the same way also; Gaussian projections behave this way 
for Protein, but 
for Dexter Gaussian projections are noticably worse than SRFT and uniform 
sampling; and, except for very small values of $\ell$, leverage-based 
sampling is worse still and gets noticably worse as $\ell$ is increased.
Even this poor behavior of leverage score sampling on the Linear Kernels is 
notably worse than for the rank-restricted Laplacians, where there was a 
range of moderately small $\ell$ where leverage score sampling was much 
superior to other methods.
\end{itemize}
These linear kernels (and also to some extent the dense RBF kernels below
that have larger $\sigma$ parameter) are examples of relatively ``nice'' 
machine learning data sets that are similar to matrices where uniform 
sampling has been shown to perform well 
previously~\cite{TKR08,KMT09,KMT09c,KMT12}; and for these matrices our 
empirical results agree with these prior works.

\subsubsection{Dense and Sparse RBF Kernels}

\begin{figure}[p]
 \centering
 \subfigure[AbaloneD, $\sigma = .15, k = 20$]{\includegraphics[width=1.6in, keepaspectratio=true]{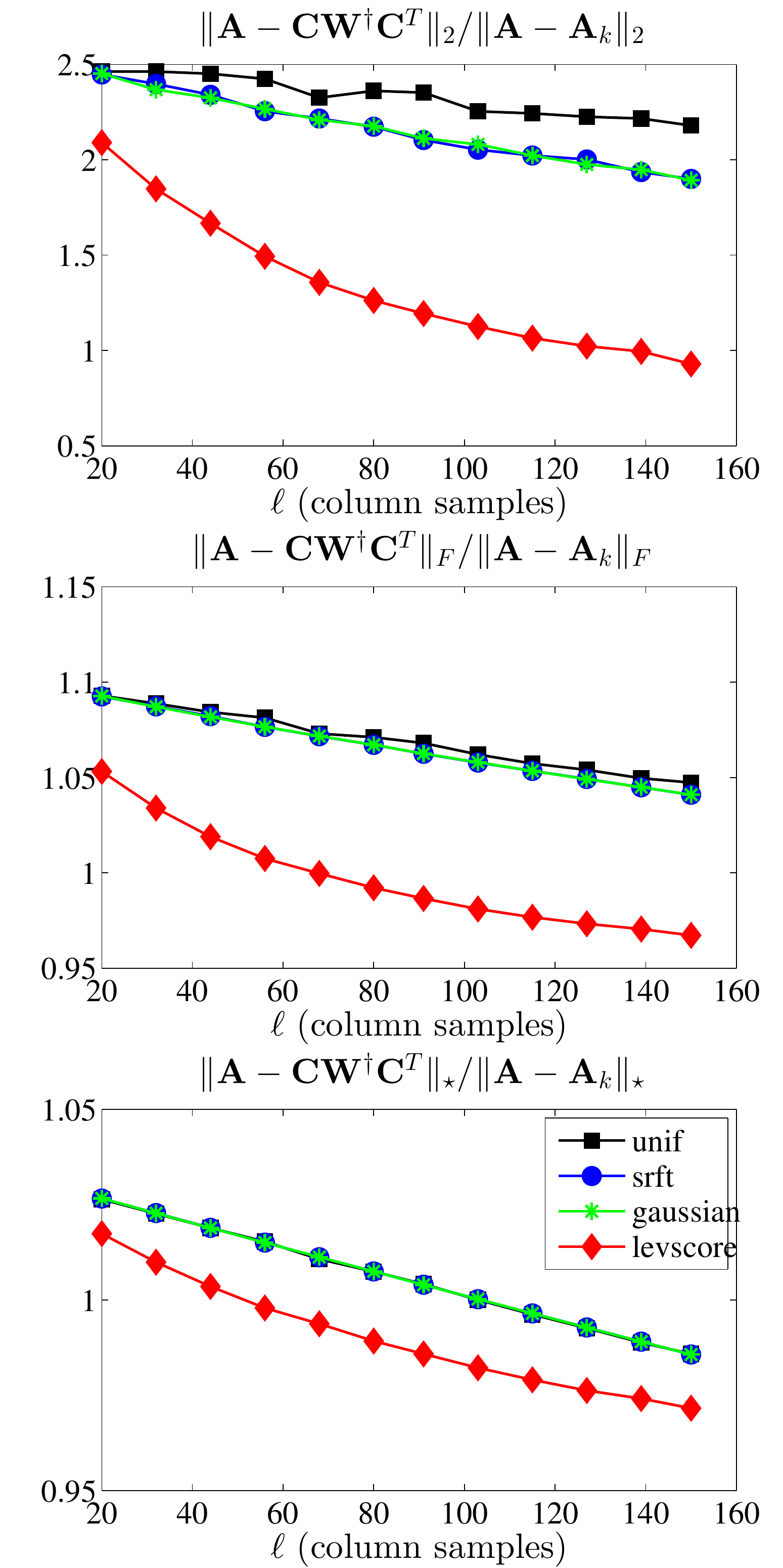}}%
 \subfigure[AbaloneD, $\sigma = 1, k = 20$]{\includegraphics[width=1.6in, keepaspectratio=true]{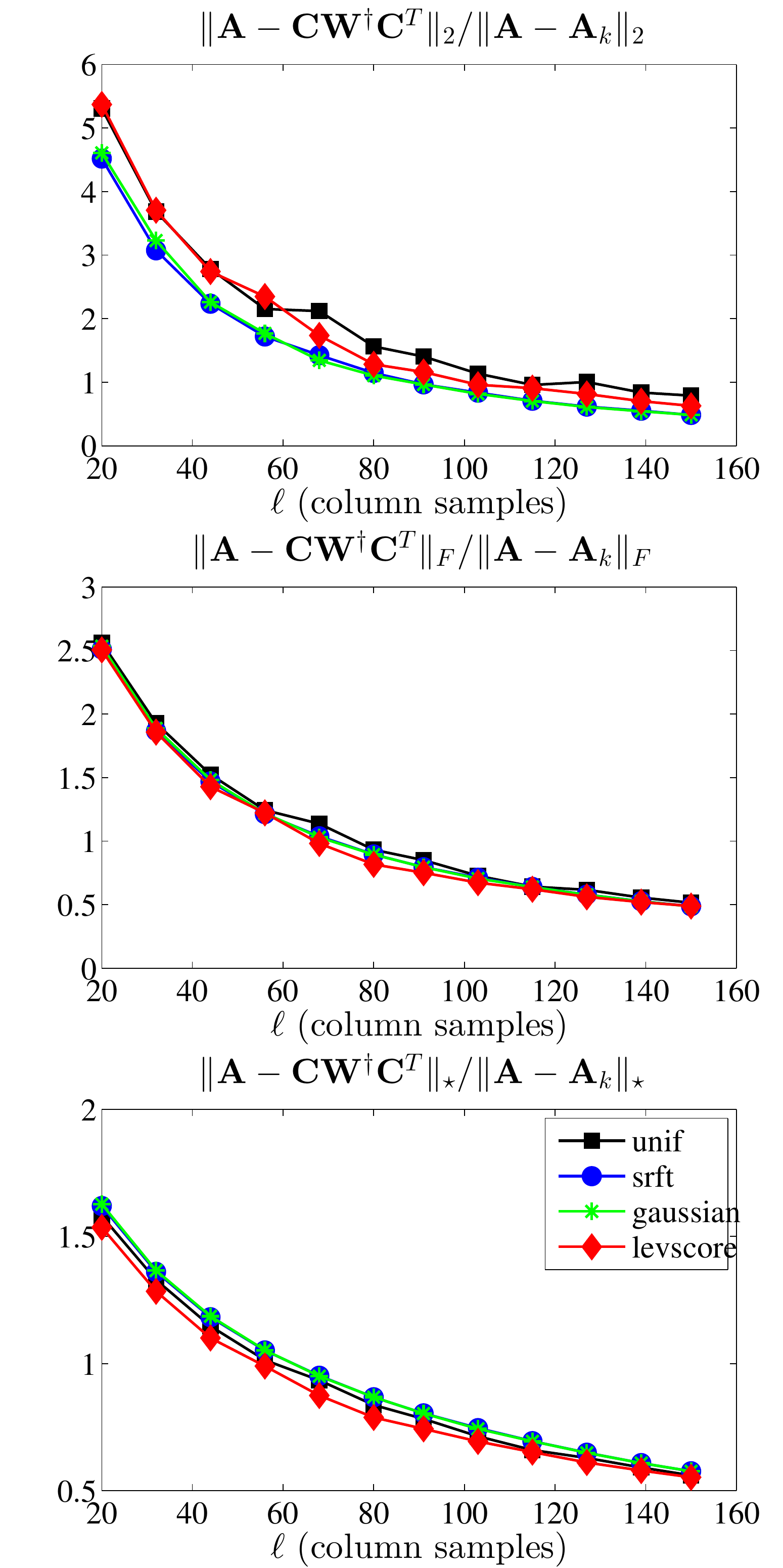}}%
 \subfigure[WineD, $\sigma = 1, k = 20$]{\includegraphics[width=1.6in, keepaspectratio=true]{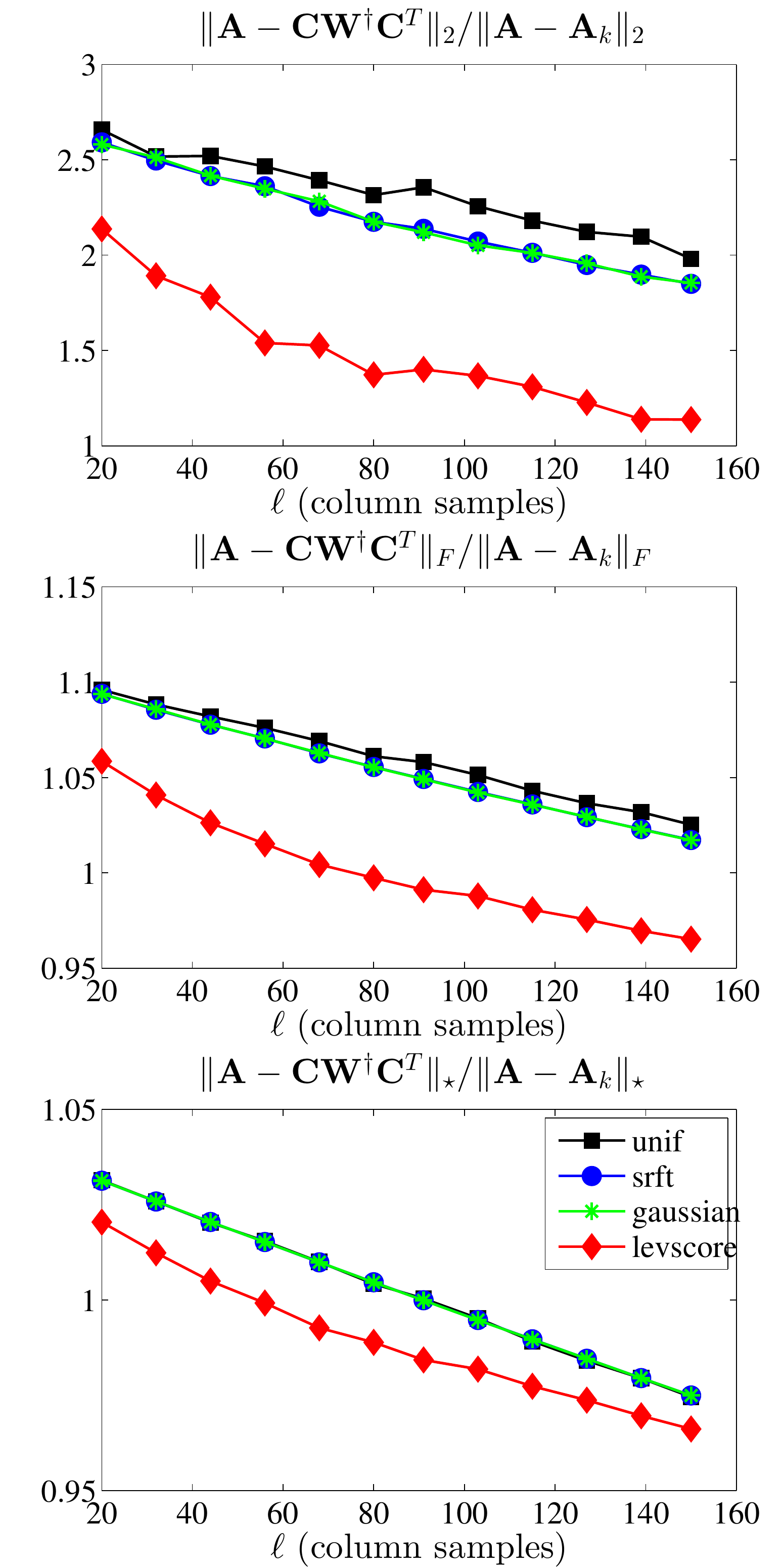}}%
 \subfigure[WineD, $\sigma = 2.1, k = 20$]{\includegraphics[width=1.6in, keepaspectratio=true]{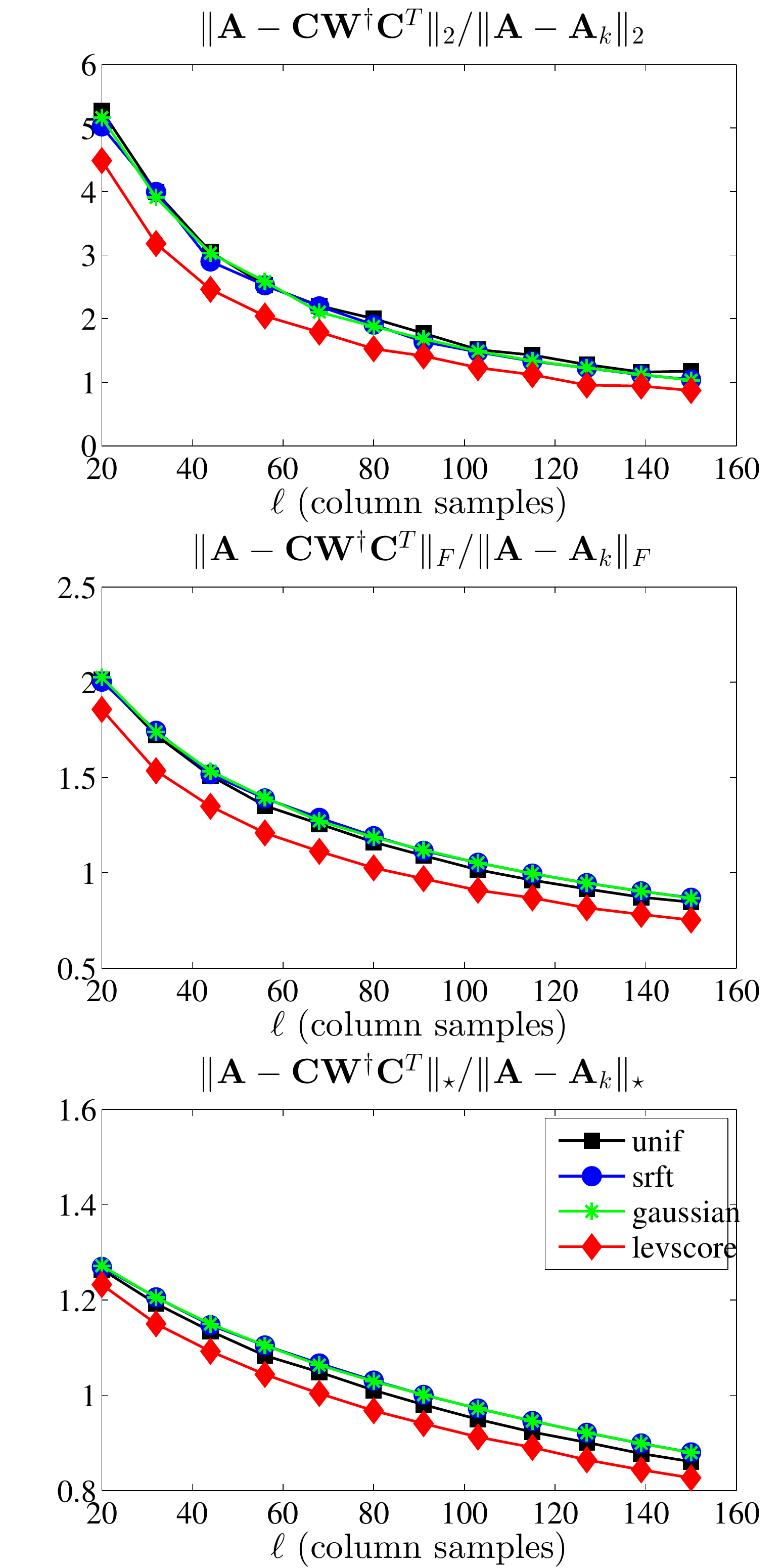}}%
 \\%
 \subfigure[AbaloneD, $\sigma = .15, k = 20$]{\includegraphics[width=1.6in, keepaspectratio=true]{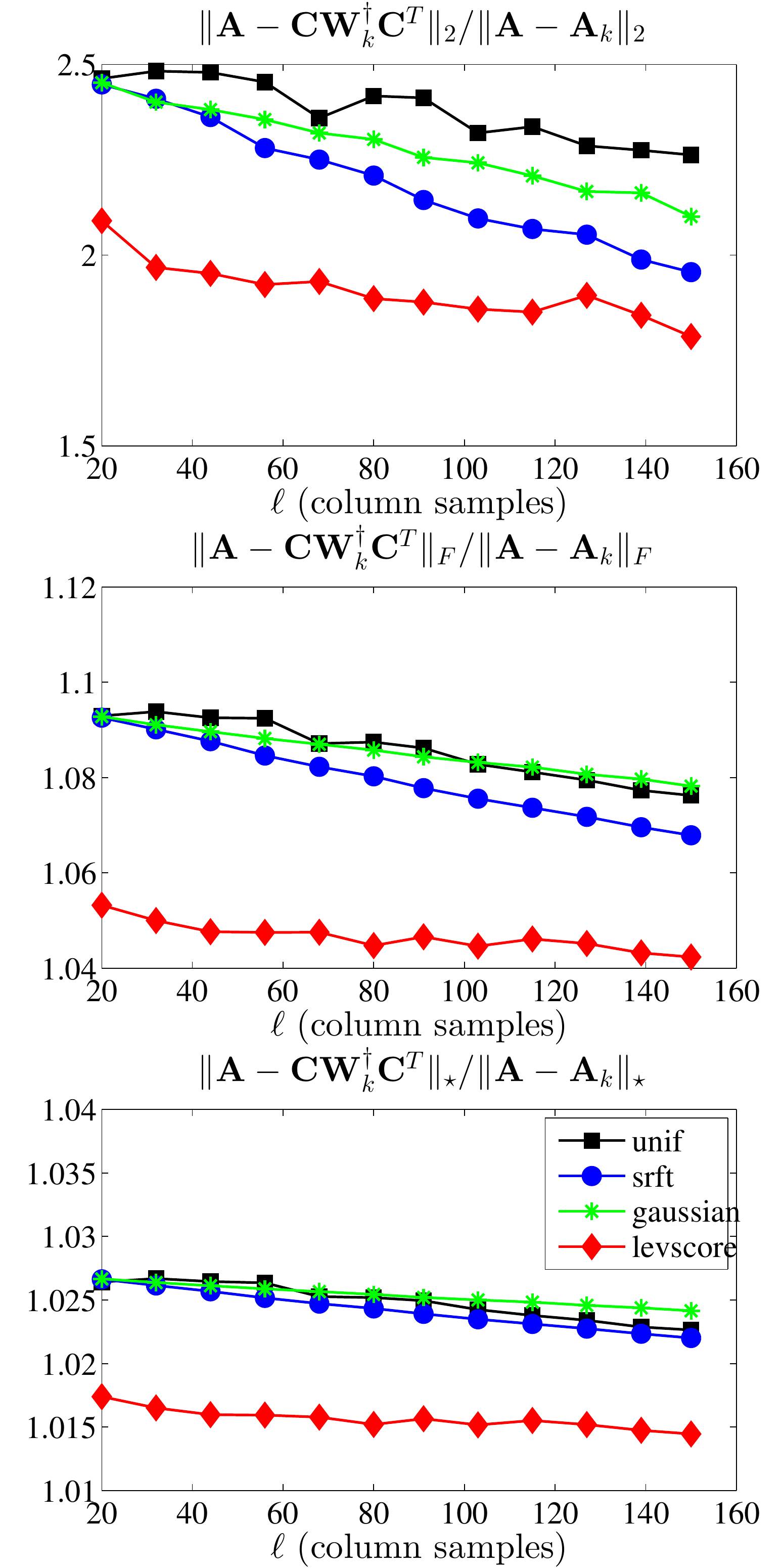}}%
 \subfigure[AbaloneD, $\sigma = 1, k = 20$]{\includegraphics[width=1.6in, keepaspectratio=true]{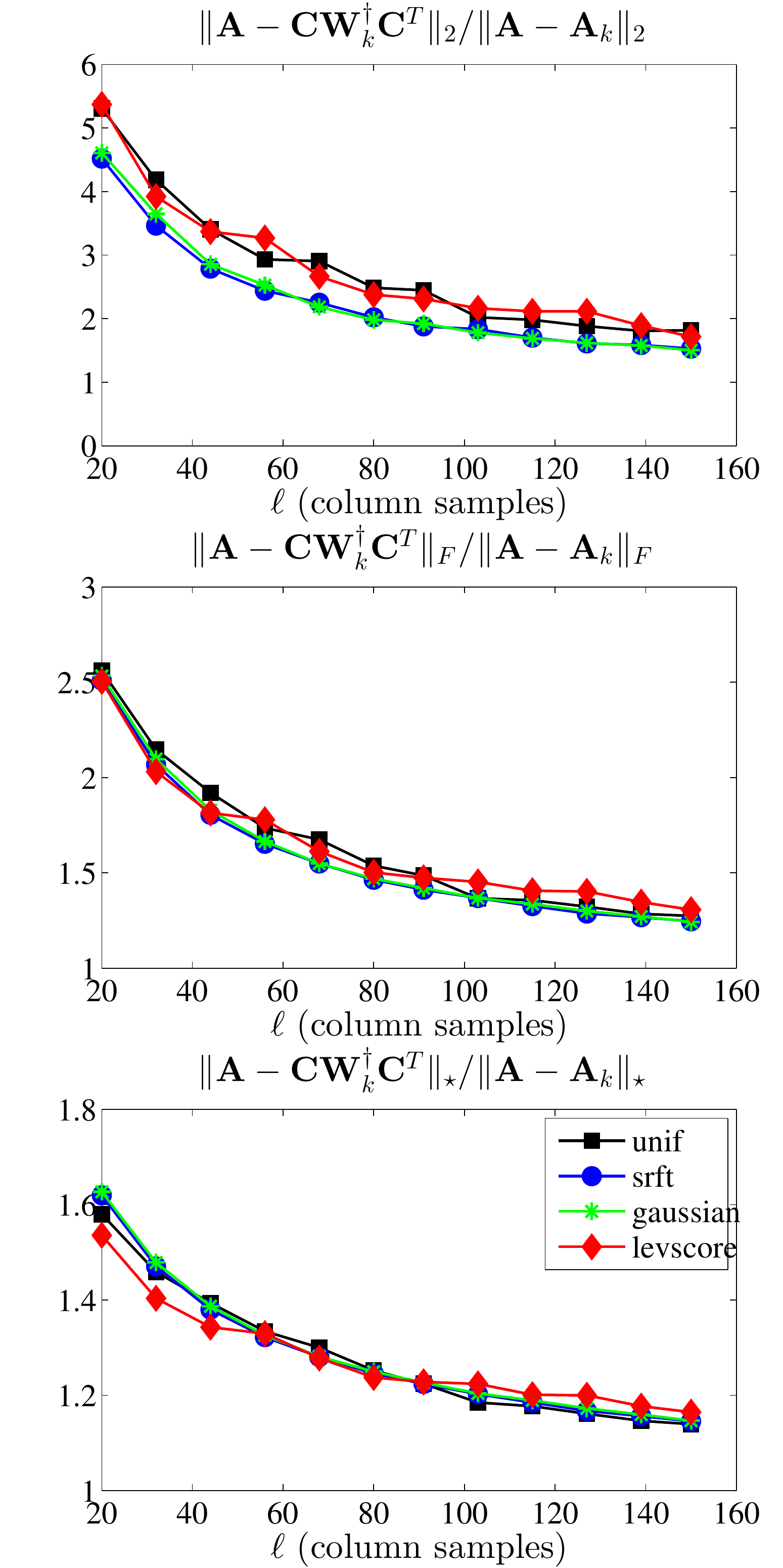}}%
 \subfigure[WineD, $\sigma = 1, k = 20$]{\includegraphics[width=1.6in, keepaspectratio=true]{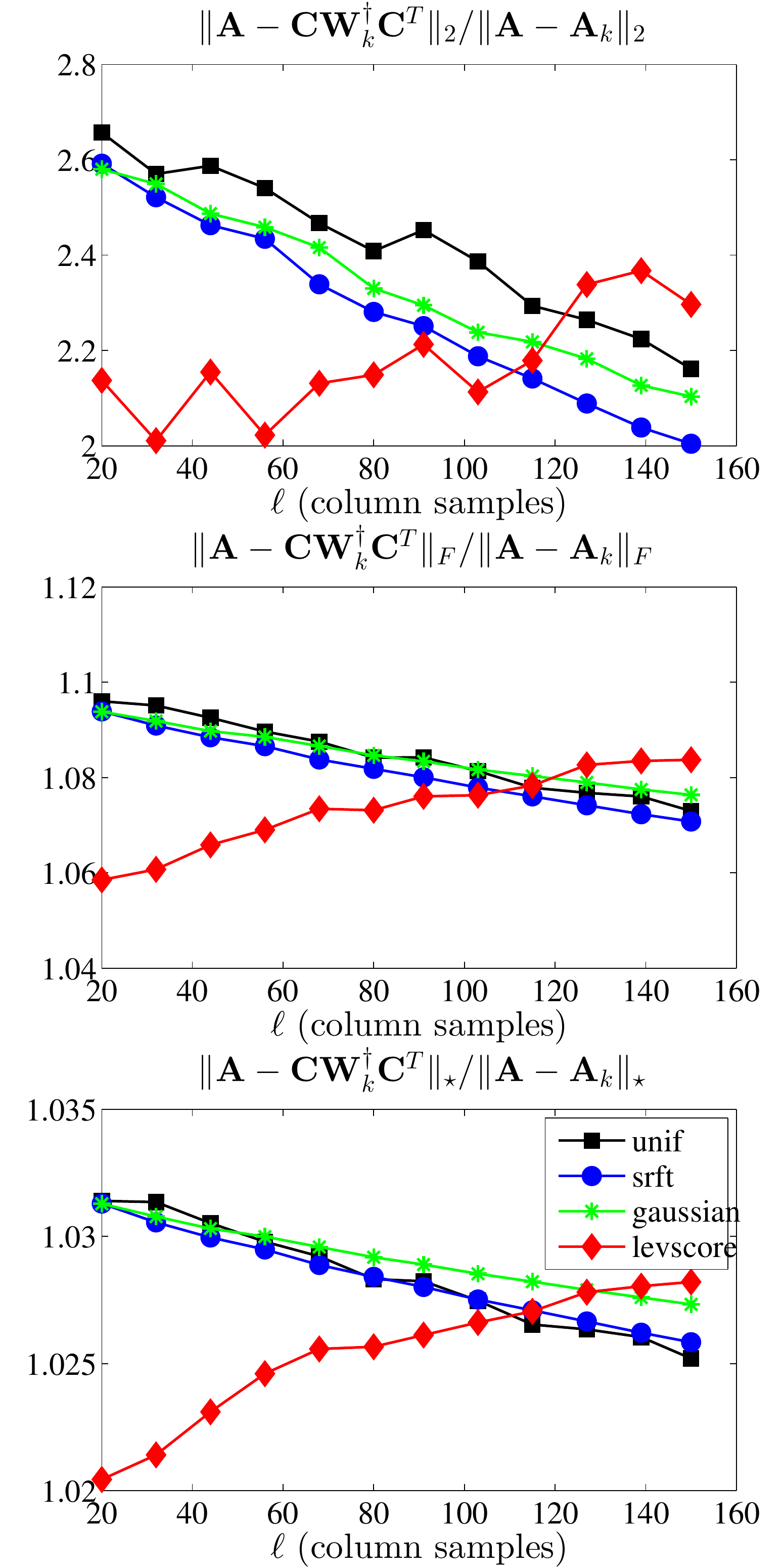}}%
 \subfigure[WineD, $\sigma = 2.1, k = 20$]{\includegraphics[width=1.6in, keepaspectratio=true]{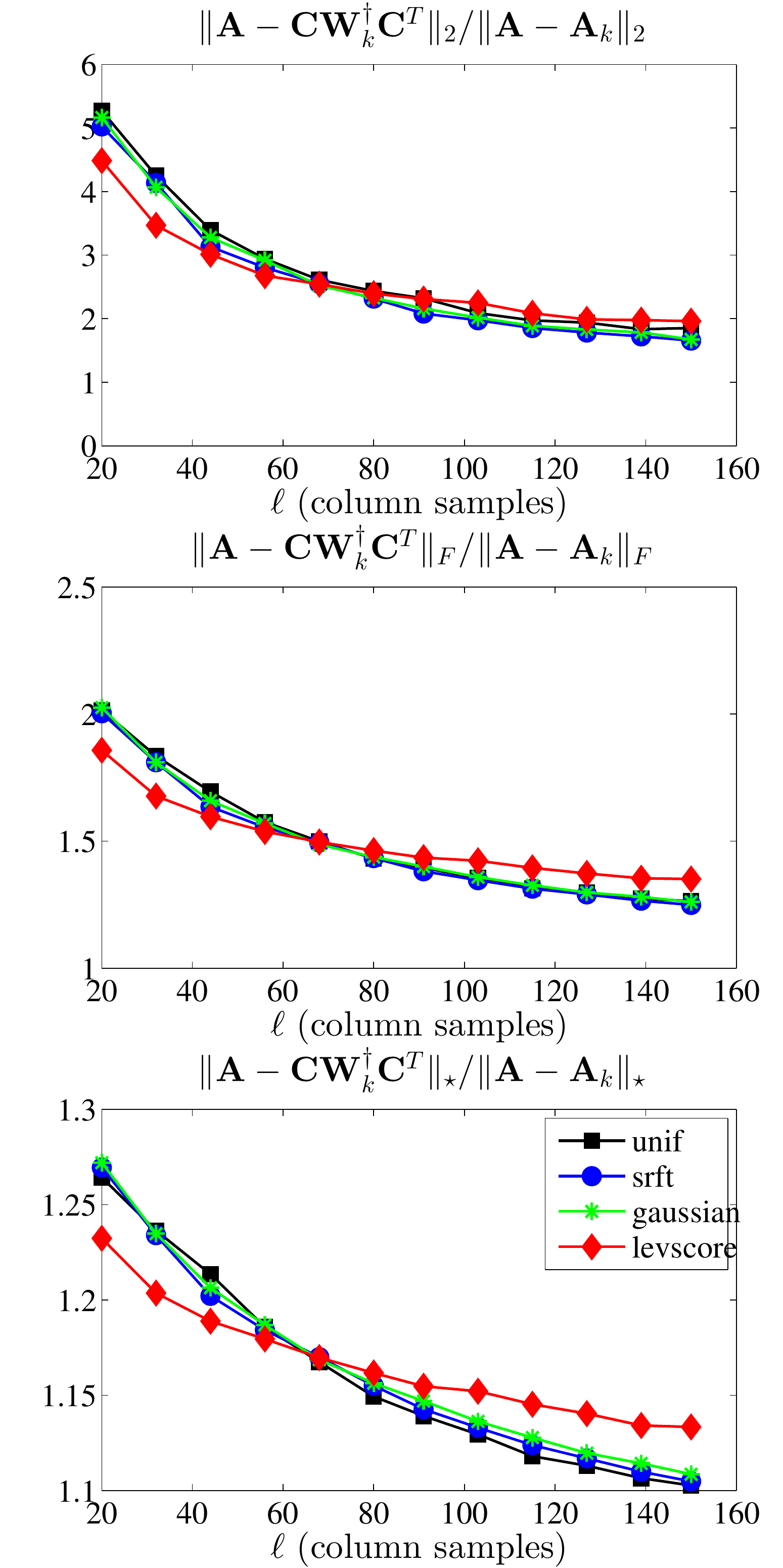}}
 \caption{The spectral, Frobenius, and trace norm errors 
 (top to bottom, respectively, in each subfigure) of several
 (non-rank-restricted in top panels and rank-restricted in bottom panels)
 SPSD sketches, as a function of the number of columns samples $\ell$, 
 for several dense RBF data sets.}%
 \label{fig:denserbf-exact-errors}
\end{figure}

\begin{figure}[p]
 \centering
 \subfigure[AbaloneS, $\sigma = .15, k = 20$]{\includegraphics[width=1.6in, keepaspectratio=true]{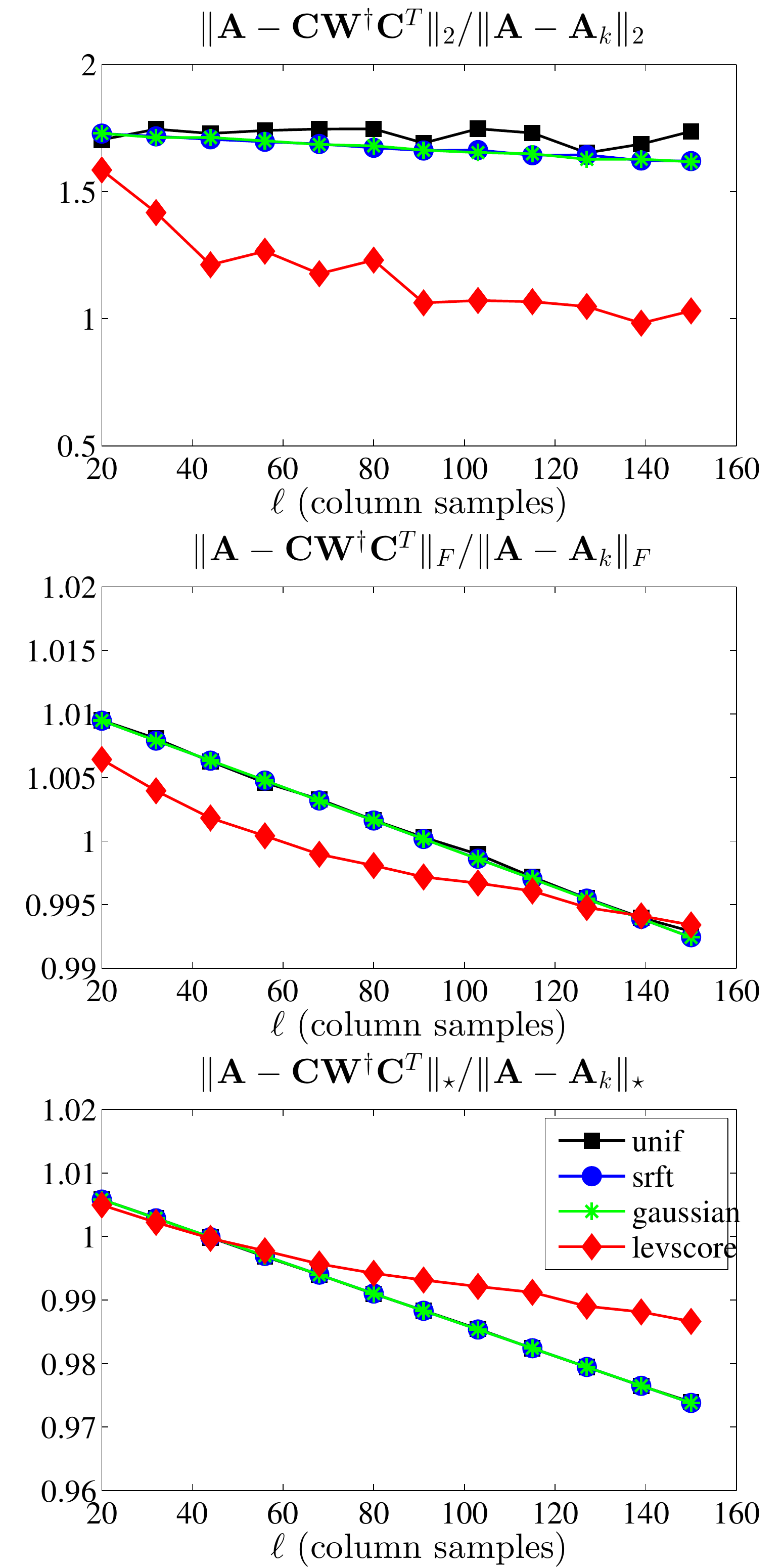}}%
 \subfigure[AbaloneS, $\sigma = 1, k = 20$]{\includegraphics[width=1.6in, keepaspectratio=true]{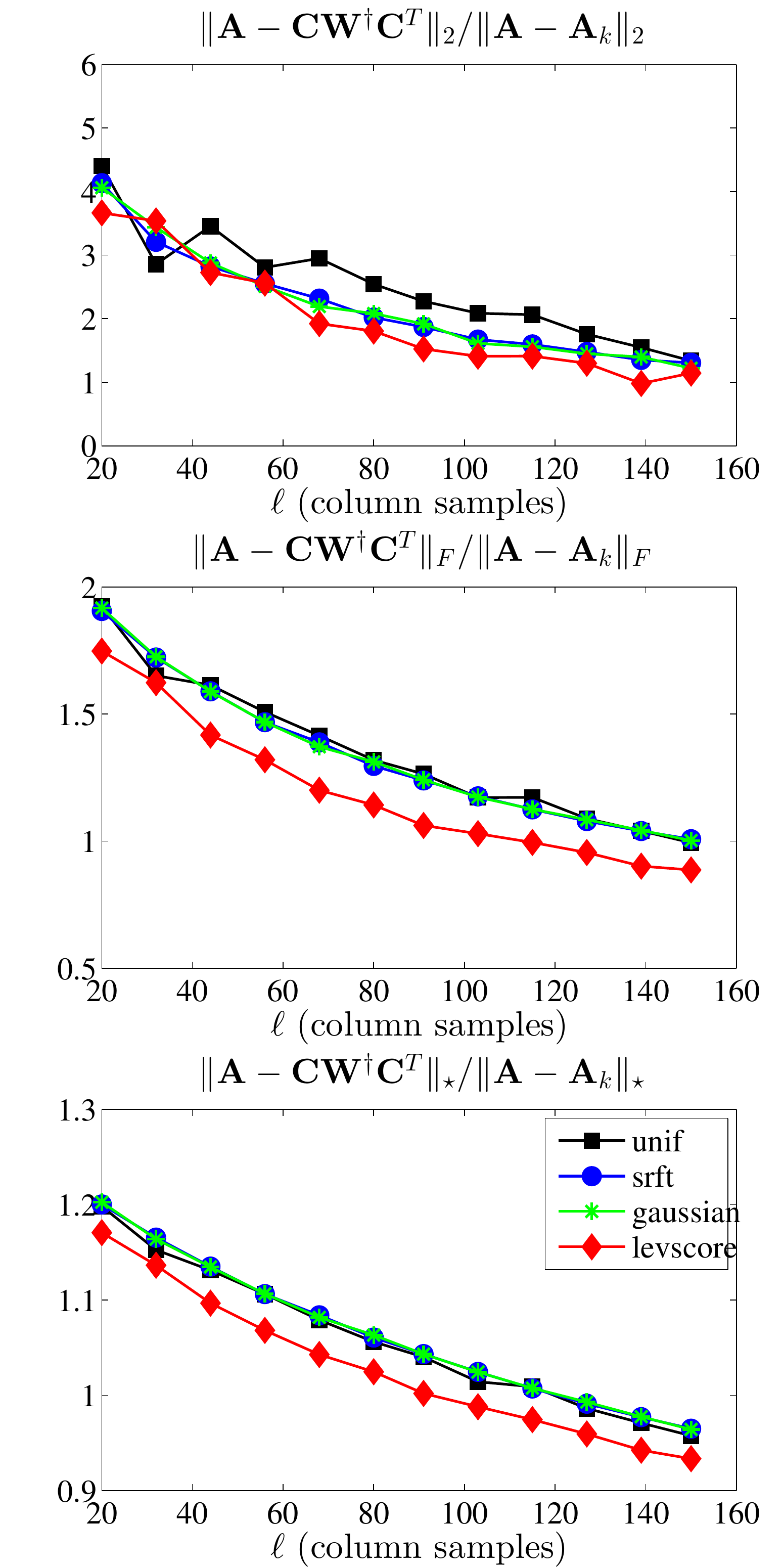}}%
 \subfigure[WineS, $\sigma = 1, k = 20$]{\includegraphics[width=1.6in, keepaspectratio=true]{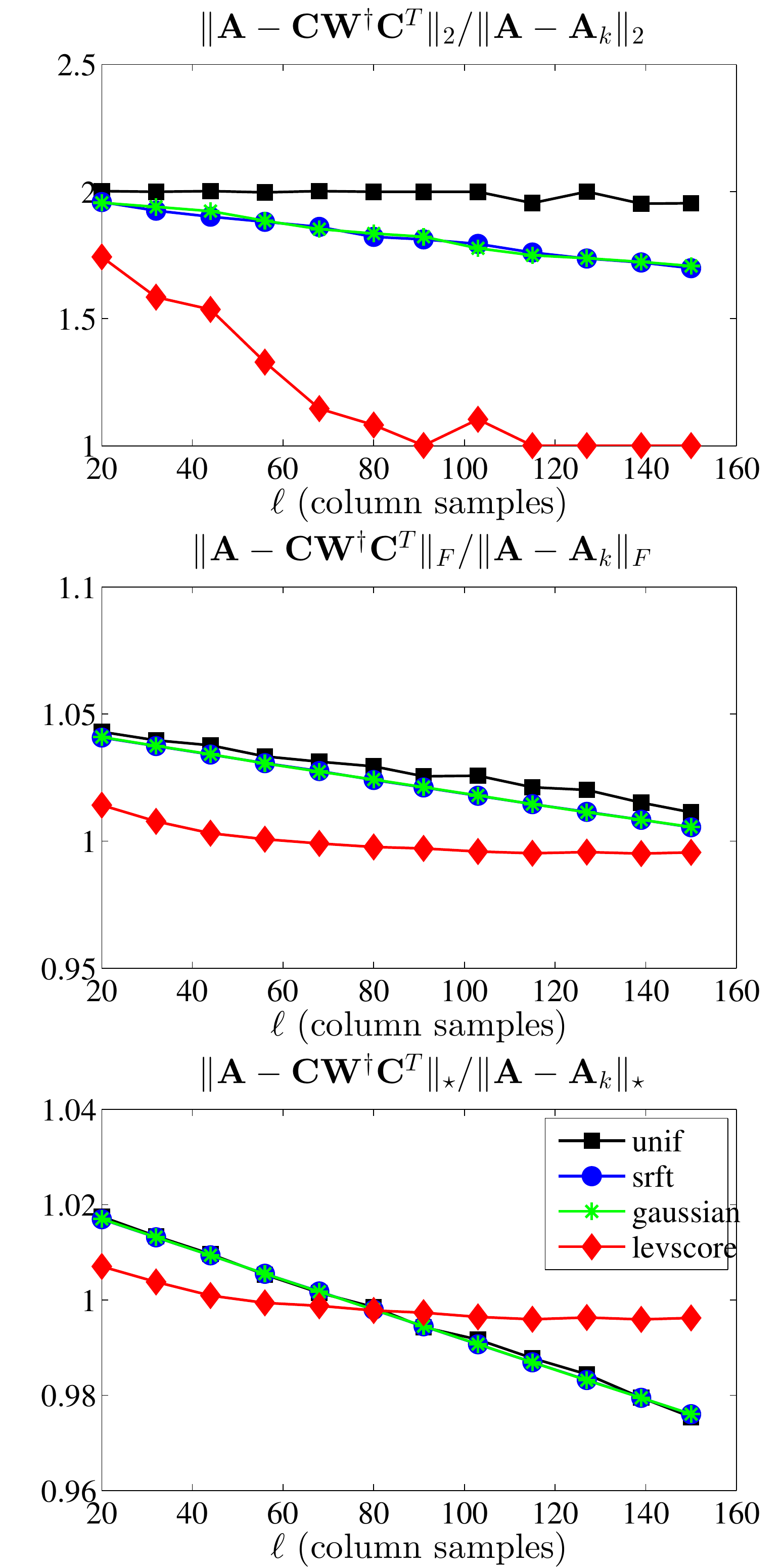}}%
 \subfigure[WineS, $\sigma = 2.1, k = 20$]{\includegraphics[width=1.6in, keepaspectratio=true]{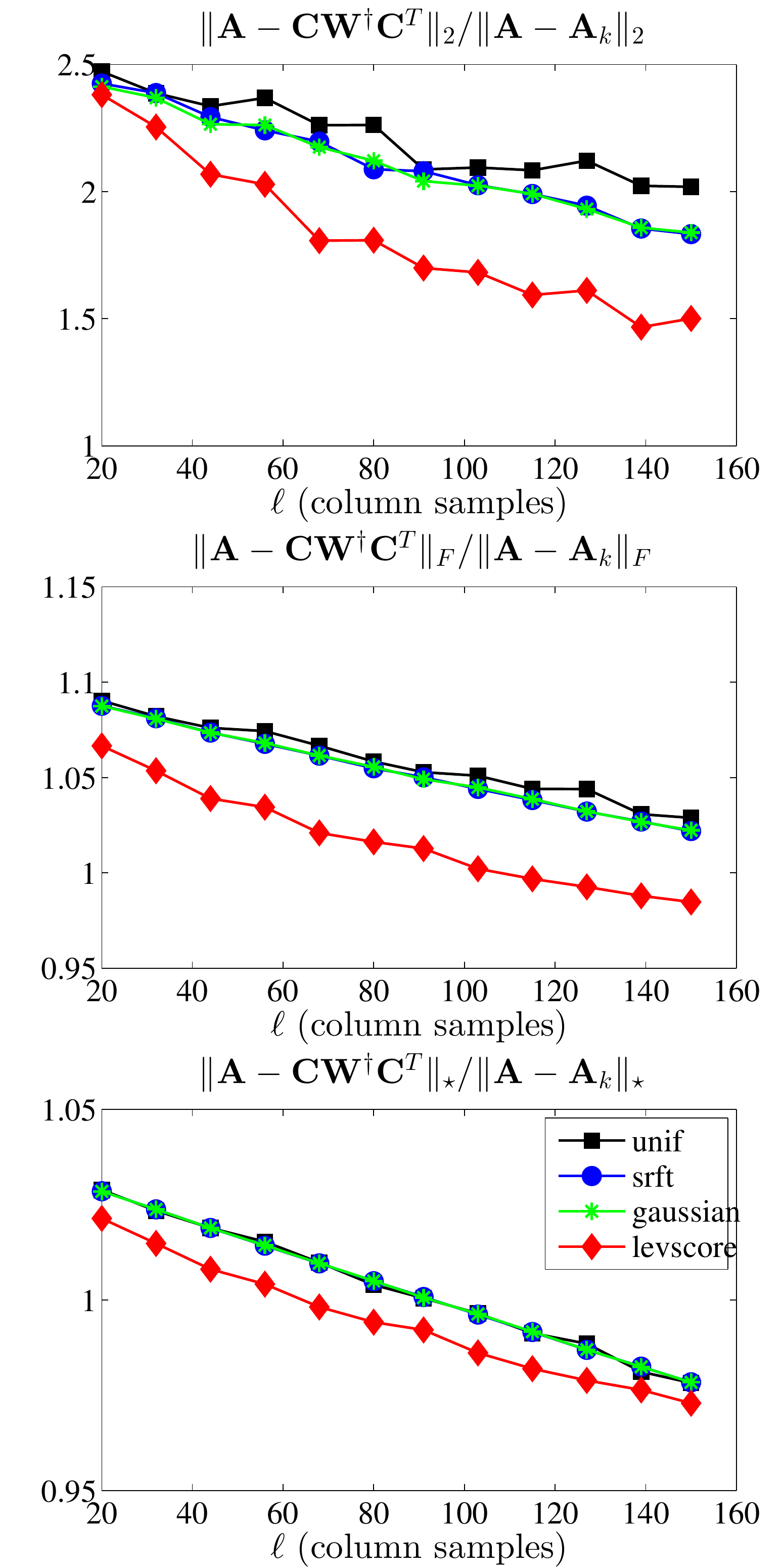}}%
 \\%
 \subfigure[AbaloneS, $\sigma = .15, k = 20$]{\includegraphics[width=1.6in, keepaspectratio=true]{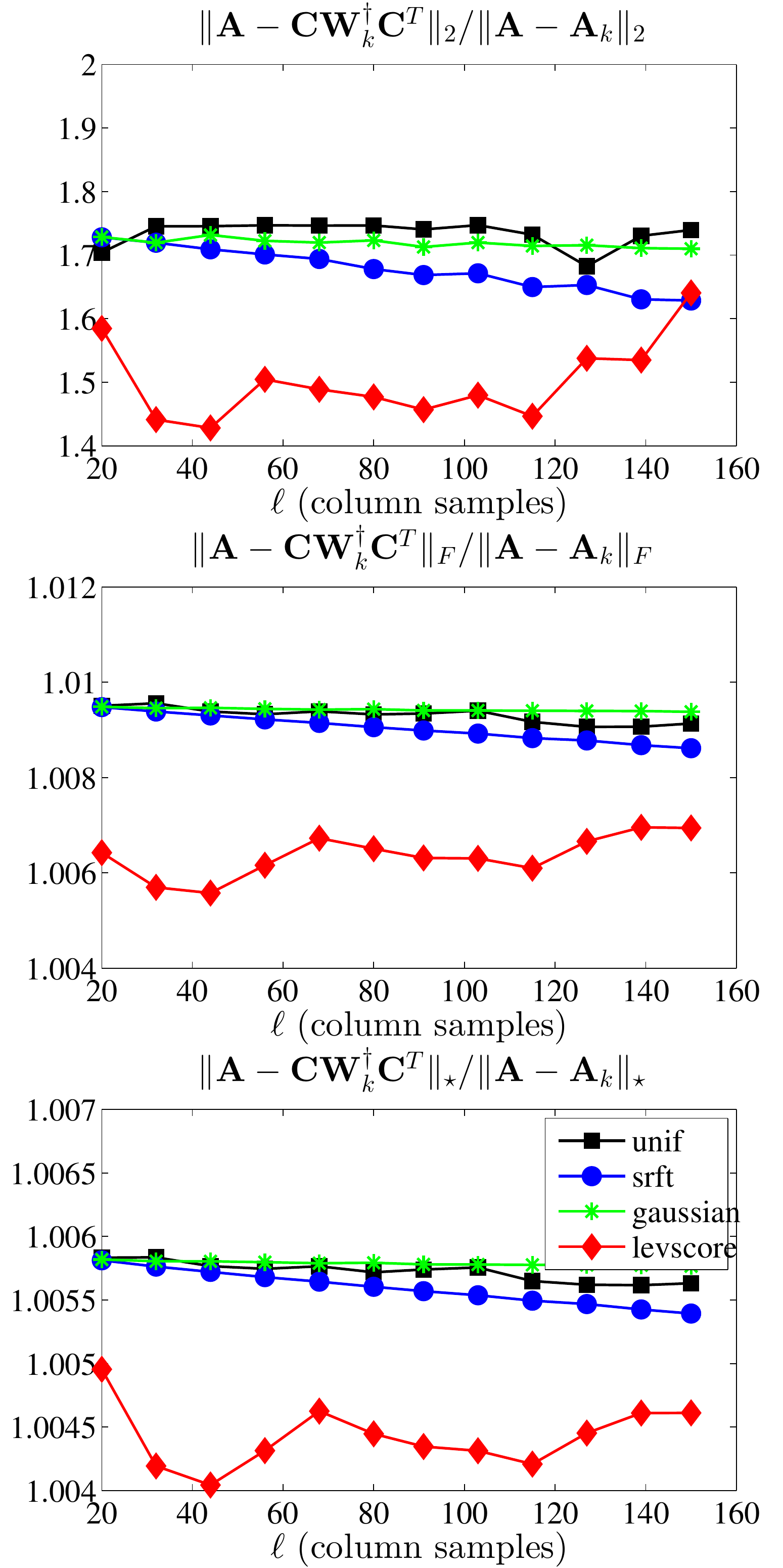}}%
 \subfigure[AbaloneS, $\sigma = 1, k = 20$]{\includegraphics[width=1.6in, keepaspectratio=true]{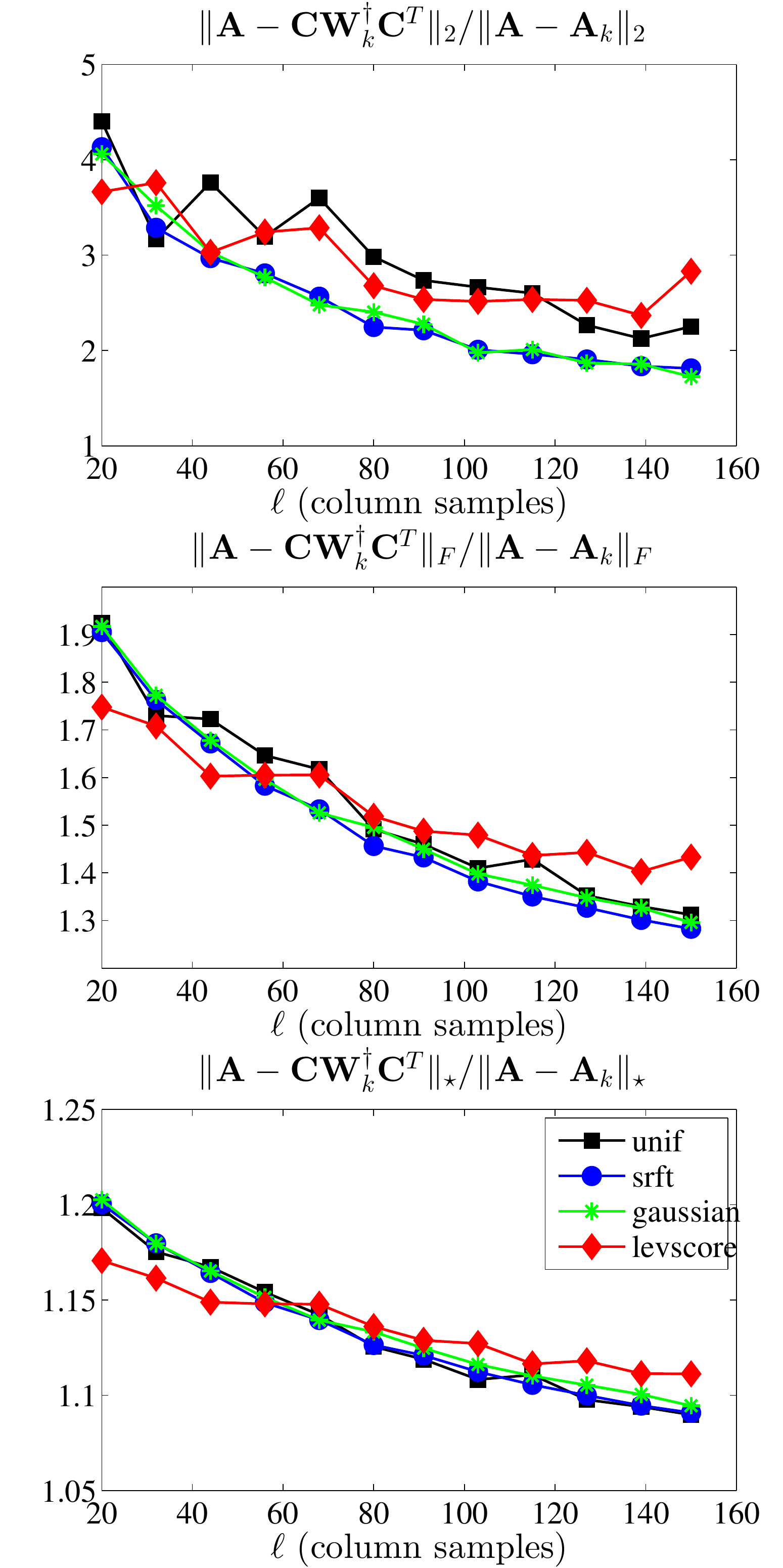}}%
 \subfigure[WineS, $\sigma = 1, k = 20$]{\includegraphics[width=1.6in, keepaspectratio=true]{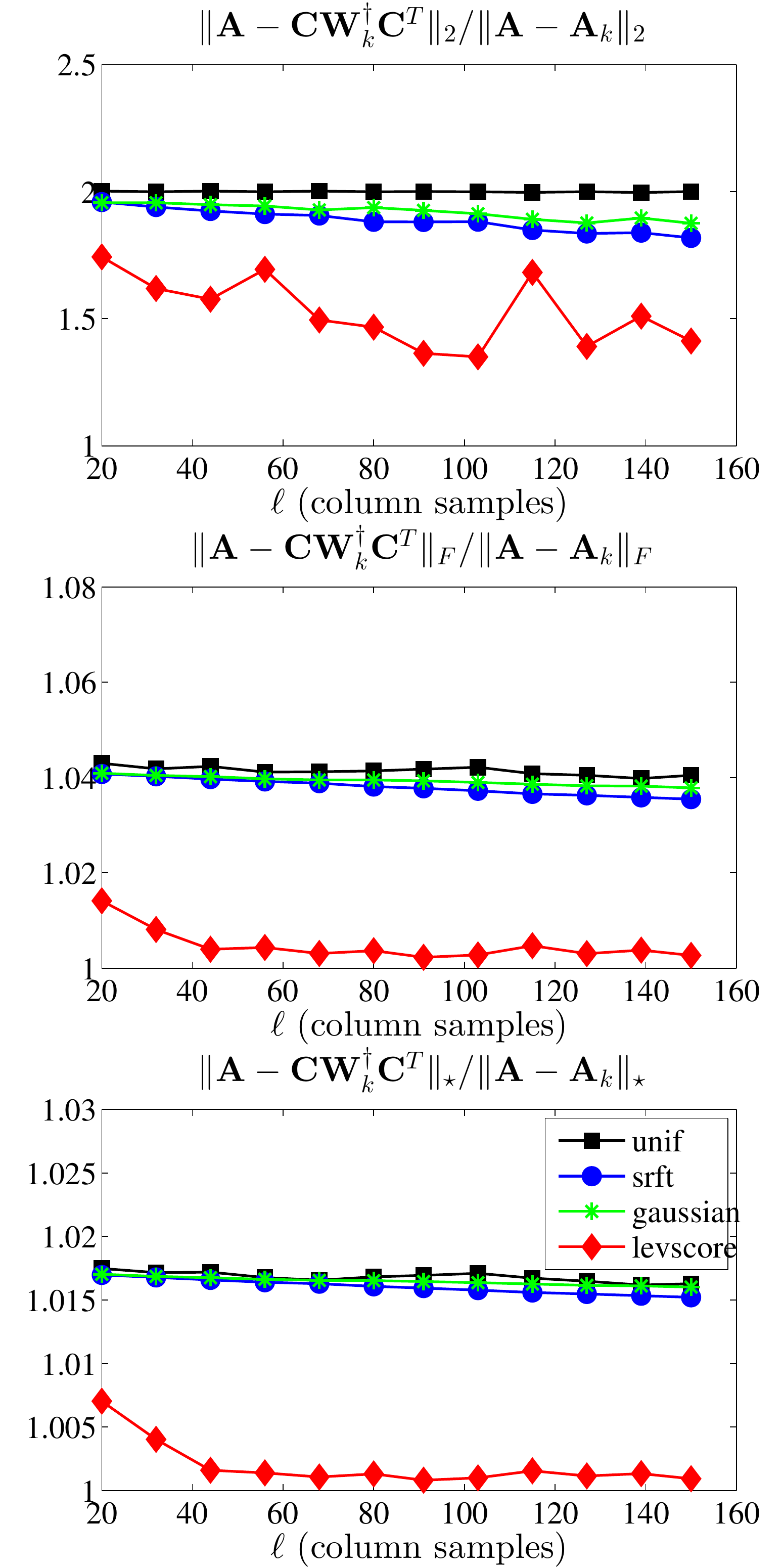}}%
 \subfigure[WineS, $\sigma = 2.1, k = 20$]{\includegraphics[width=1.6in, keepaspectratio=true]{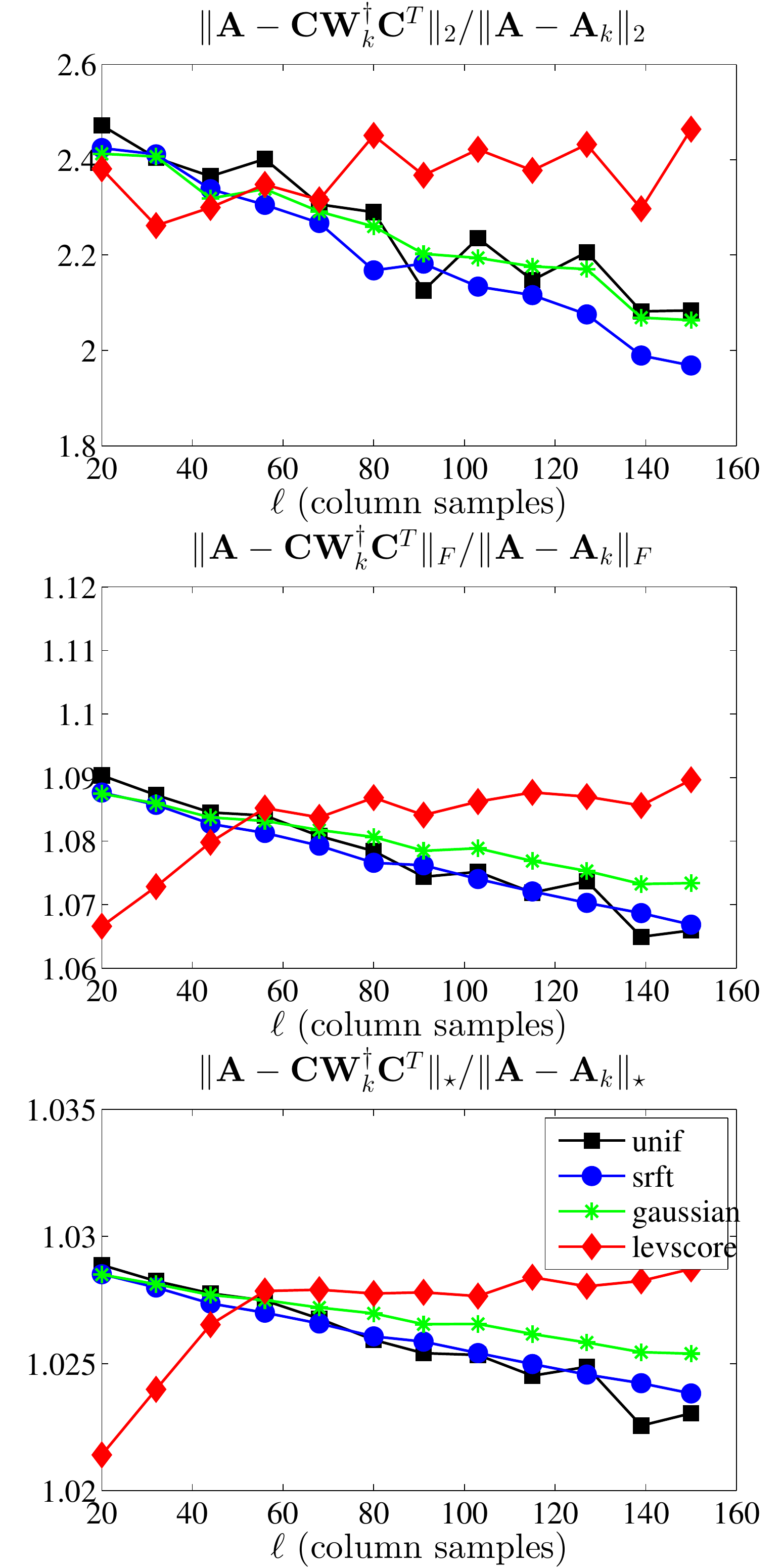}}%
 \caption{The spectral, Frobenius, and trace norm errors 
 (top to bottom, respectively, in each subfigure) of several
 (non-rank-restricted in top panels and rank-restricted in bottom panels)
 SPSD sketches, as a function of the number of columns samples $\ell$, 
 for several sparse RBF data sets.
 }%
 \label{fig:sparserbf-exact-errors}
\end{figure}

Figure~\ref{fig:denserbf-exact-errors} and
Figure~\ref{fig:sparserbf-exact-errors} present 
the reconstruction error results for sampling and projection methods 
applied to several dense RBF and sparse RBF kernels.
Several observations are worth making about the results presented in these 
figures.
\begin{itemize}
\item
For the non-rank-restricted results, all of the methods have errors that 
decrease with increasing $\ell$.
In particular, for larger values of $\sigma$ and for denser data, the 
decrease is somewhat more regular, and the four methods tend to perform 
similarly.
For larger values of $\sigma$ and sparser data, leverage
score sampling is somewhat better.
This parallels what we observed with the Linear Kernels, except that here the 
leverage score sampling is somewhat better for all values of $\ell$.
\item
For the non-rank-restricted results for the smaller values of $\sigma$, 
leverage score sampling tends to be much better than uniform sampling and 
projection-based methods.
For the sparse data, however, this effect saturates; and we again observe 
(especially when $\sigma$ is smaller in AbaloneS and WineS) the tradeoff we 
observed previously with the Laplacian data---leverage score sampling is 
better when $\ell$ is moderately larger than $k$, while uniform sampling and 
random projections are better when $\ell$ is much larger than $k$.
\item
For the rank-restricted results, we see that when $\sigma$ is large, all 
of the results tend to perform similarly.
(The exception to this is WineS, for which leverage score sampling starts out 
much better than other methods and then gets worse as $\ell$ is increased.)
On the other hand, when $\sigma$ is small, the results are more complex.
Leverage score sampling is typically much better than other methods, 
although the results are quite choppy as a function of $\ell$, and in some 
cases the effect diminished as $\ell$ is increased.
\end{itemize}
Recall from Table~\ref{table:datasets_stats} that for smaller values of 
$\sigma$ and for sparser kernels, the SPSD matrices are less 
well-approximated by low-rank matrices, and they have more heterogeneous 
leverage scores.
Thus, they are more similar to the Laplacian data than the Linear Kernel 
data; and this suggests (as we have observed) that leverage score sampling 
should perform relatively better than uniform column sampling and 
projection-based schemes when in these two cases.
In particular, nowhere do we see that leverage score sampling performs much 
worse than other methods, as we saw with the rank-restricted Linear 
Kernel results.

\subsubsection{Summary of Comparison of Sampling and Projection Algorithms}

Before proceeding, there are several summary observations that we can make 
about sampling versus projection methods for the data sets we have considered.
\begin{itemize}
\item
Linear Kernels and to a lesser extent Dense RBF Kernels with larger $\sigma$
parameter have relatively low-rank and relatively uniform leverage scores, 
and in these cases uniform sampling does quite well.
These data sets correspond most closely with those that have been studied 
previously in the machine learning literature, and for these data sets our 
results are in agreement with that prior work.
\item
Sparsifying RBF Kernels and/or choosing a smaller $\sigma$ parameter tends
to make these kernels less well-approximated by low-rank matrices and to 
have more heterogeneous leverage scores.
In general, these two properties need not be directly related---the spectrum is 
a property of eigenvalues, while the leverage scores are determined by the 
eigenvectors---but for the data we examined they are related, in that 
matrices with more slowly decaying spectra also often have more 
heterogeneous leverage scores.
\item
For Dense RBF Kernels with smaller $\sigma$ and Sparse RBF Kernels, leverage 
score sampling tends to do much better than other methods.
Interestingly, the Sparse RBF Kernels have many properties of very sparse 
Laplacian Kernels corresponding to relatively-unstructured informatics 
graphs, an observation which should be of interest for researchers who 
construct sparse graphs from data using, \emph{e.g.}, ``locally linear'' 
methods, to try to reconstruct hypothesized low-dimensional~manifolds.
\item
Reconstruction quality under leverage score sampling saturates, as a 
function of choosing more samples $\ell$; this is seen both for 
non-rank-restricted and rank-restricted situations.
As a consequence, there can be a tradeoff between leverage score sampling
or other methods being better, depending on the values of $\ell$ that are 
chosen.
\item
Although they are potentially ill-conditioned, non-rank-restricted approximations 
behave better in terms of reconstruction quality.
Rank-constrained approximations tend to have much more complicated behavior
as a function of increasing the numbe of samples $\ell$, including choppier
and non-monotonic behavior.
This is particularly severe for leverage score sampling, but it occurs with
other methods; and it suggests that other forms of regularization (other than
what is essentially a Tikhonov form of regularization for the 
rank-restricted cases) might be appropriate.
\end{itemize}
In general, \emph{all} of the sampling and projection methods we considered 
perform \emph{much} better on the SPSD matrices we considered than previous
worst-case bounds (\emph{e.g.},~\cite{dm_kernel_JRNL,KMT12,Gittens12_TR}) 
would suggest.
(That is, even the worst results correspond to single-digit approximation 
factors in relative scale.)
This observation is intriguing, because the motivation of leverage score
sampling (and, recall, that in this context random projections should be
viewed as performing uniform random sampling in a randomly-rotated basis 
where the leverage scores have been approximately 
uniformized~\cite{Mah-mat-rev_BOOK}) is very much tied to the Frobenius 
norm, and so there is no {\em a priori} reason to expect its good 
performance to extend to the spectral or trace norms.
Motivated by this, we revisit the question of proving improved 
worst-case theoretical bounds in Section~\ref{sxn:theory}.

Before describing these improved theoretical results, however, we address
in Section~\ref{sxn:approx-levmethods} running time questions.
After all, a na\"{i}ve implementation of sampling with exact leverage 
scores is slower than other methods (and much slower than uniform sampling).
As shown below, by using the recently-developed approximation algorithm 
of~\cite{DMMW12_JMLR}, not only does this approximation algorithm run in time 
comparable with random projections (for certain parameter settings), but
it leads to approximations that soften the strong bias that the exact 
leverage scores provide toward the best rank-$k$ approximation to the 
matrix, thereby leading to improved reconstruction results in many cases.

\subsection{Reconstruction Accuracy of Leverage Score Approximation Algorithms}
\label{sxn:approx-levmethods}

A na\"{i}ve view might assume that computing probabilities that permit 
leverage-based sampling requires an $O(n^3)$ computation of the full 
SVD, or at least the full computation of a partial SVD, and thus that it
would be much more expensive than recently-developed random projection 
methods.
Indeed, an ``exact'' computation of the leverage scores with a QR 
decomposition or truncated SVD takes roughly $\mathrm{O}(n^2k)$ time (and 
the running time results of Section~\ref{sxn:emp-reconstruction} actually 
used this na\"{i}ve procedure).
Recent work, however, has shown that relative-error approximations to all 
the statistical leverage scores can be computed more quickly than this 
exact algorithm~\cite{DMMW12_JMLR}. 
Here, we implement and evaluate a version of this algorithm, and we evaluate
it both in terms of running time and in terms of reconstruction quality on 
the diverse suite of real data matrices we considered above.
We note that ours is the first work to provide an empirical 
evaluation of an implementation of the leverage score approximation 
algorithms of~\cite{DMMW12_JMLR}, illustrating empirically the tradeoffs 
between cost and efficiency in a practical setting.

\subsubsection{Description of the Fast Approximation Algorithm of~\cite{DMMW12_JMLR}}

\begin{algorithm}[t]
\begin{framed}

\textbf{Input:} $\mat{A} \in \mathbb{R}^{n \times d}$ (with SVD
$\mat{A}=\mat{U}\boldsymbol{\Sigma}\mat{V}^\transp$), 
error parameter $\epsilon \in (0,1/2]$.

\vspace{0.1in}

\textbf{Output:} $\tilde{\ell_i}, i=1,\ldots,n$, approximations to the leverage
scores of $\mat{A}.$

\begin{enumerate}

\item Let $\boldsymbol{\Pi}_1 \in\R^{r_1\times n}$ be an SRFT with
\[
% original in paper
%r_1 = \Omega\left(\frac{d\ln n}{\epsilon^2}\ln\left(\frac{d\ln n}{\epsilon^2}\right)\right). 
% sufficient, according to analysis in Tropp improved Hadamard paper
r_1 = \Omega(\epsilon^{-2} (\sqrt{d} + \sqrt{\ln n})^2 \ln d)
\]

\item Compute $\boldsymbol{\Pi}_1 \mat{A} \in\R^{r_1\times d}$ and its QR factorization
$\boldsymbol{\Pi}_1 \mat{A} = \mat{Q}\mat{R}$. 

\item  Let $\boldsymbol{\Pi}_2 \in \R^{d \times r_2}$ be a matrix of i.i.d. standard 
Gaussian random variables, where
\[
r_2 = \Omega\left(\epsilon^{-2}\ln n\right).
\]

\item Construct the product $\matOmega=\mat{A}\mat{R}^{-1}\boldsymbol{\Pi}_2.$
\item For $i =1, \ldots,n$ compute $\tilde{\ell}_i=\TNormS{\Omega_{(i)}}$.
\end{enumerate}

\end{framed}
\caption{Algorithm (originally Algorithm~1 in~\cite{DMMW12_JMLR}) for 
approximating the leverage scores~$\ell_i$ of an $n \times d$ matrix 
$\mat{A}$, where $n \gg d$, to within a multiplicative factor of 
$1 \pm \epsilon$. 
The running time of the algorithm is
$\mathrm{O}( nd\ln(\sqrt{d} + \sqrt{\ln n}) + nd \epsilon^{-2} \ln n + d^2 \epsilon^{-2} (\sqrt{d} + \sqrt{\ln n})^2 \ln d ).$
}
\label{alg:tall_levscore_approx}
\end{algorithm}

%\begin{algorithm}[t!]
\begin{algorithm}[t]
\begin{framed}

\textbf{Input:} $\mat{A} \in \mathbb{R}^{n \times d},$ a rank parameter
$k,$ and an error parameter $\epsilon \in (0, 1/2].$ 

\vspace{0.1in}

\textbf{Output:} $\hat{\ell_i}, i =1, \ldots, n$, approximations to the 
leverage scores of $\mat{A}$ filtered through its dominant dimension-$k$ subspace.
\begin{enumerate}
\item Construct $\boldsymbol{\Pi}\in\mathbb{R}^{d\times 2k}$ with i.i.d. standard
Gaussian entries. 
\item Compute $\mat{B}=\left(\mat{A} \mat{A}^\transp\right)^q \mat{A} \boldsymbol{\Pi}
\in \mathbb{R}^{n \times 2k}$ with 
\[
 q \geq \left\lceil \frac{\ln\left(1+\sqrt{\frac{k}{k-1}}+\e\sqrt{\frac{2}{k}}
 \sqrt{\min\left\{n,d\right\}-k}\right)}{2\ln \left(1+\epsilon/10\right)-1/2}
 \right\rceil,
\]
\item Approximate the leverage scores of $\mat{B}$ by calling 
Algorithm~\ref{alg:tall_levscore_approx} with inputs $\mat{B}$ and $\epsilon$; 
let $\hat{\ell}_i$ for $i=1,\ldots,n$ be the outputs of 
Algorithm~\ref{alg:tall_levscore_approx}.
\end{enumerate}
\end{framed}
\caption{Algorithm (originally Algorithm~4 in~\cite{DMMW12_JMLR}) for 
approximating the leverage scores (relative to the best rank-$k$ 
approximation to $\mat{A}$) of a general $n \times d$ matrix $\mat{A}$ with 
those of a matrix that is close by in the spectral norm (or the Frobenius norm if $q=0$). 
This algorithm runs in time $\mathrm{O}(ndkq) + T_1$, where $T_1$ is the 
running time of Algorithm~\ref{alg:tall_levscore_approx}.
}
\label{alg:spectral_levscore_approx}
\end{algorithm}

\begin{algorithm}[t]
\begin{framed}

\textbf{Input:} $\mat{A} \in \mathbb{R}^{n \times d}$, a rank parameter $k$, and an iteration parameter $q$.

\vspace{0.1in}

\textbf{Output:} $\hat{\ell_i}, i \in=1,\ldots,n,$ approximations to the 
leverage scores of $\mat{A}$ filtered through its dominant dimension-$k$ subspace.
\begin{enumerate}
\item Construct an SRHT matrix $\boldsymbol{\Pi}\in\mathbb{R}^{d\times r},$ where 
\[
r \geq \left\lceil 36 \epsilon^{-2} [ \sqrt{k} + 
\sqrt{8 \ln(k d)} ]^2 \ln(k) \right\rceil. 
\]
% By lemma 6 of Gittens--Boutsidis, this ensures that with probability at least 1-epsilon,
% Omega_1 = U_k^\transp*\Pi is well-conditioned when q = 0
%
\item Compute $\mat{B}=\left(\mat{A} \mat{A}^\transp\right)^q \mat{A} \boldsymbol{\Pi}
\in \mathbb{R}^{n \times r},$ where $q \geq 0$ is an integer.
\item Return the exact leverage scores of $\mat{B}.$
%
%\item Approximate the leverage scores of $\mat{B}$ by calling 
%Algorithm~\ref{alg:tall_levscore_approx} with inputs $\mat{B}$ and $\epsilon$; 
%let $\hat{\ell}_i$ for $i=1,\ldots,n$ be the outputs of 
%Algorithm~\ref{alg:tall_levscore_approx}.
\end{enumerate}
\end{framed}
\caption{Algorithm for approximating the leverage scores (relative to the 
best rank-$k$ approximation to $\mat{A}$) of a general $n \times d$ matrix 
$\mat{A}$ with those of a matrix that is close by in the spectral norm. 
This is a modified version of Algorithm~\ref{alg:spectral_levscore_approx}, 
 in which the random projection is implemented with an SRFT rather than a 
Gaussian random matrix, and where the number of ``iterations'' $q$ is 
prespecified.
This algorithm runs in time $\mathrm{O}(n d \ln r + n d r q + nr^2)$ since $\mat{A} \boldsymbol{\Pi}$ can be computed in time $\mathrm{O}(nd \ln r).$
% + T_1,$ where 
%$T_1$ is the running time of Algorithm~\ref{alg:tall_levscore_approx}, 
% since $\mat{A}\boldsymbol{\Pi}$ can be computed in time $\mathrm{O}(nd \ln r)$.
}
\label{alg:frob_levscore_approx}
\end{algorithm}

Algorithm~\ref{alg:tall_levscore_approx} (which originally appeared as 
Algorithm~1 in~\cite{DMMW12_JMLR}) takes as input an arbitrary $n \times d$ 
matrix $\mat{A}$, where $n \gg d$, and it returns as output a $1\pm\epsilon$ 
approximation to \emph{all} of the statistical leverage scores of the
input matrix. The original algorithm of~\cite{DMMW12_JMLR} uses a subsampled 
Hadamard transform and requires $r_1$ to be somewhat larger than what we 
state in Algorithm~\ref{alg:tall_levscore_approx}. That an SRFT
with a smaller value of $r_1$ can be used instead is a consequence of 
the fact that Lemma~3 in~\cite{DMMW12_JMLR} is also satisfied by an SRFT matrix 
with the given $r_1;$ this is established in~\cite{tropp2011improved,BG12_TR}.
 
The running time of this algorithm, given in the caption of the algorithm, is 
roughly $O(nd \ln d)$ when $d = \Omega(\ln n)$.
Thus Algorithm~\ref{alg:tall_levscore_approx} generates relative-error
approximations to the leverage scores of a tall and skinny matrix $\mat{A}$ 
in time $o(nd^2)$, rather than the $\mathrm{O}(nd^2)$ time that would be required to 
compute a QR decomposition or a thin SVD of the $n \times d$ matrix $\mat{A}$.
The basic idea behind Algorithm~\ref{alg:tall_levscore_approx} is
as follows.
If we had a QR decomposition of $\mat{A}$, then we could postmultiply 
$\mat{A}$ by the inverse of the ``$R$'' matrix to obtain an orthogonal 
matrix spanning the column space of $\mat{A}$; and from this $n \times d$ 
orthogonal matrix, we could read off the leverage scores from the Euclidean 
norms of the rows.
Of course, computing the QR decomposition would require $O(nd^2)$ time.
To get around this, Algorithm~\ref{alg:tall_levscore_approx} premultiplies
$\mat{A}$ by a structured random projection $\boldsymbol{\Pi}_1$, computes a 
QR decomposition of $\boldsymbol{\Pi}_1 \mat{A}$, and postmultiplies $\mat{A}$ by 
$\mat{R}^{-1}$, \emph{i.e.}, the inverse of the ``$R$'' matrix from the QR decomposition of $\boldsymbol{\Pi}_1 \mat{A}$.
Since $\boldsymbol{\Pi}_1$ is an SRFT, premultiplying by it takes roughly 
$O(nd \ln d)$ time.  In addition, note that $\boldsymbol{\Pi}_1 \mat{A}$ needs to be post multiplied 
by a second random projection in order to compute all of the leverage 
scores in the allotted time; see~\cite{DMMW12_JMLR} for details.
This algorithm is simpler than the algorithm in which we are primarily 
interested that is applicable to square SPSD matrices, but we start with 
it since it illustrates the basic ideas of how our main algorithm works and 
since our main algorithm calls it as a subroutine.
We note, however, that this algorithm is directly useful for approximating 
the leverage scores of Linear Kernel matrices $\mat{A}=\mat{X}\mat{X}^\transp$, 
when $\mat{X}$ is a tall and skinny matrix. 

Consider, next, Algorithm~\ref{alg:spectral_levscore_approx} (which 
originally appeared as Algorithm~4 in~\cite{DMMW12_JMLR}), which takes as 
input an \emph{arbitrary} $n \times d$ matrix $\mat{A}$ and a rank parameter 
$k$, and returns as output a $1\pm\epsilon$ approximation to
\emph{all} of the statistical leverage scores (relative to the best rank-$k$ 
approximation) of the input.
An important technical point is that the problem of computing the leverage
scores of a matrix relative to a low-dimensional space is ill-posed,
essentially because the spectral gap between the $k^{th}$ and the $(k+1)^{st}$ 
eigenvalues can be small, and thus
Algorithm~\ref{alg:spectral_levscore_approx} actually computes approximations
to the leverage scores of a matrix that is near to $\mat{A}$ in the spectral 
norm (or the Frobenius norm if $q=0$).
See~\cite{DMMW12_JMLR} for details.
Basically, this algorithm uses Gaussian sampling to find a matrix close 
to $\mat{A}$ in the Frobenius norm or spectral norm, and then it approximates
the leverage scores of this matrix by using 
Algorithm~\ref{alg:tall_levscore_approx} on the smaller, very rectangular
matrix $\mat{B}$.
When $\mat{A}$ is square, as in our applications,
Algorithm~\ref{alg:spectral_levscore_approx} is typically more costly than 
direct 
computation of the leverage scores, %: 
% $\mathrm{O}(n^2 k \sqrt{\frac{n-k}{k}})$ time versus $\mathrm{O}(n^2k)$ time, 
at least for dense matrices (but it does have the advantage that the 
number of iterations is bounded, independent of properties of the matrix, 
which is not true for typical iterative methods to compute low-rank 
approximations). 

Of greater practical interest is Algorithm~\ref{alg:frob_levscore_approx}, which is a 
modification of Algorithm~\ref{alg:spectral_levscore_approx} in which the 
Gaussian random projection is replaced with an SRFT.
That is, Algorithm~\ref{alg:frob_levscore_approx} uses an SRFT projection to 
find a matrix close by to $\mat{A}$ in the Frobenius norm or spectral norm 
(depending on the value of $q$), and then it exactly computes the 
leverage scores of this matrix.  
This improves the running time to 
$\mathrm{O}( n^2 \ln(\sqrt{k} + \sqrt{\ln n}) + n^2 (\sqrt{k} + \sqrt{\ln n})^2 \ln(k) q + n (\sqrt{k} + \sqrt{\ln n})^4 \ln^2(k) ),$
which is $o(n^2 k)$ when $q = 0$.
Thus an important point for Algorithm~\ref{alg:frob_levscore_approx}
(as well as for Algorithm~\ref{alg:spectral_levscore_approx}) is the 
parameter $q$ which describes the number of iterations.
For $q=0$ iterations, we get an inexpensive Frobenius norm approximation; while for 
higher $q$, we get better spectral norm approximations that are more 
expensive\footnote{Observe that since $\matA$ is rectangular in Algorithms~\ref{alg:spectral_levscore_approx}
and~\ref{alg:frob_levscore_approx}, we approximate the leverage scores of 
$\matA$ with those of $\matB = (\matA\matA^\transp)^q\matA \boldsymbol{\Pi};$ in particular
the case $q =0$ corresponds to taking $\matB = \matA \boldsymbol{\Pi}.$ By way of contrast, when we use the power method to construct 
sketches of an SPSD matrix, we take $\matC = \matA^q \matS,$ so the case $q =1$ corresponds to $\matC = \matA\matS.$ }
This flexibility is of interest, as one may want
to approximate the actual leverage scores accurately or one may simply want to find crude approximations
useful for obtaining SPSD sketches with low reconstruction error.

Finally, note that although choosing the number of iterations $q$ as we did 
in Algorithm~\ref{alg:spectral_levscore_approx} is convenient for worst-case 
analysis, as a practical implementational matter it is easier either to 
choose $q$ based on spectral gap information revealed during the running of
the algorithm or to prespecify $q$ to be a small integer, \emph{e.g.}, $2$ or 
$3$, before the algorithm runs.
Both of these have an interpretation of accelerating the rate of decay of 
the spectrum with a power iteration, but they behave somewhat differently 
due to the different stopping conditions.
Below, we consider both variants.

\subsubsection{Running Time Comparisons}
\label{sxn:emp-running}

\begin{figure}[p]
 \centering
 \subfigure[GR, $k = 20$]{\includegraphics[width=1.6in, keepaspectratio=true]{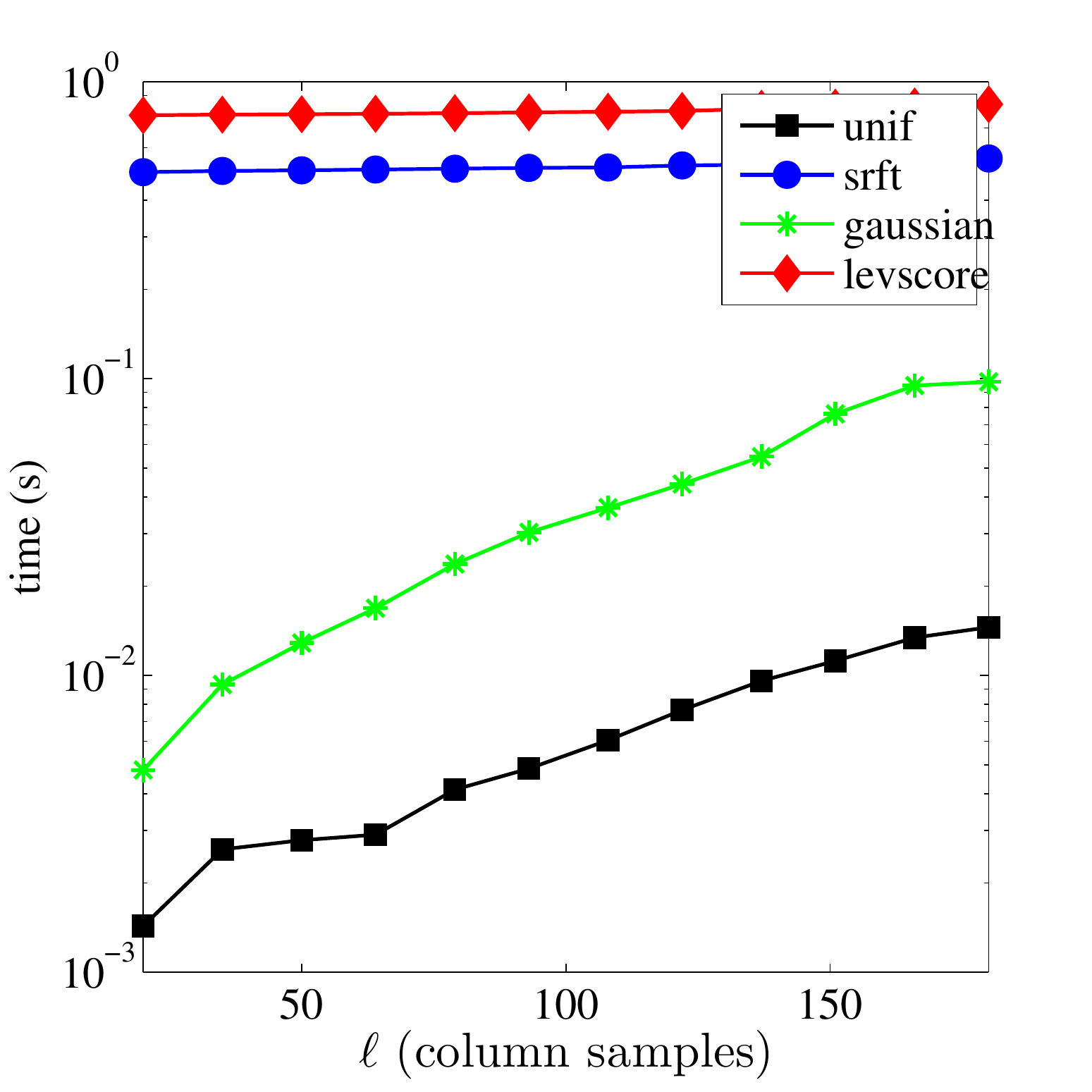}}%
 \subfigure[GR, $k = 60$]{\includegraphics[width=1.6in, keepaspectratio=true]{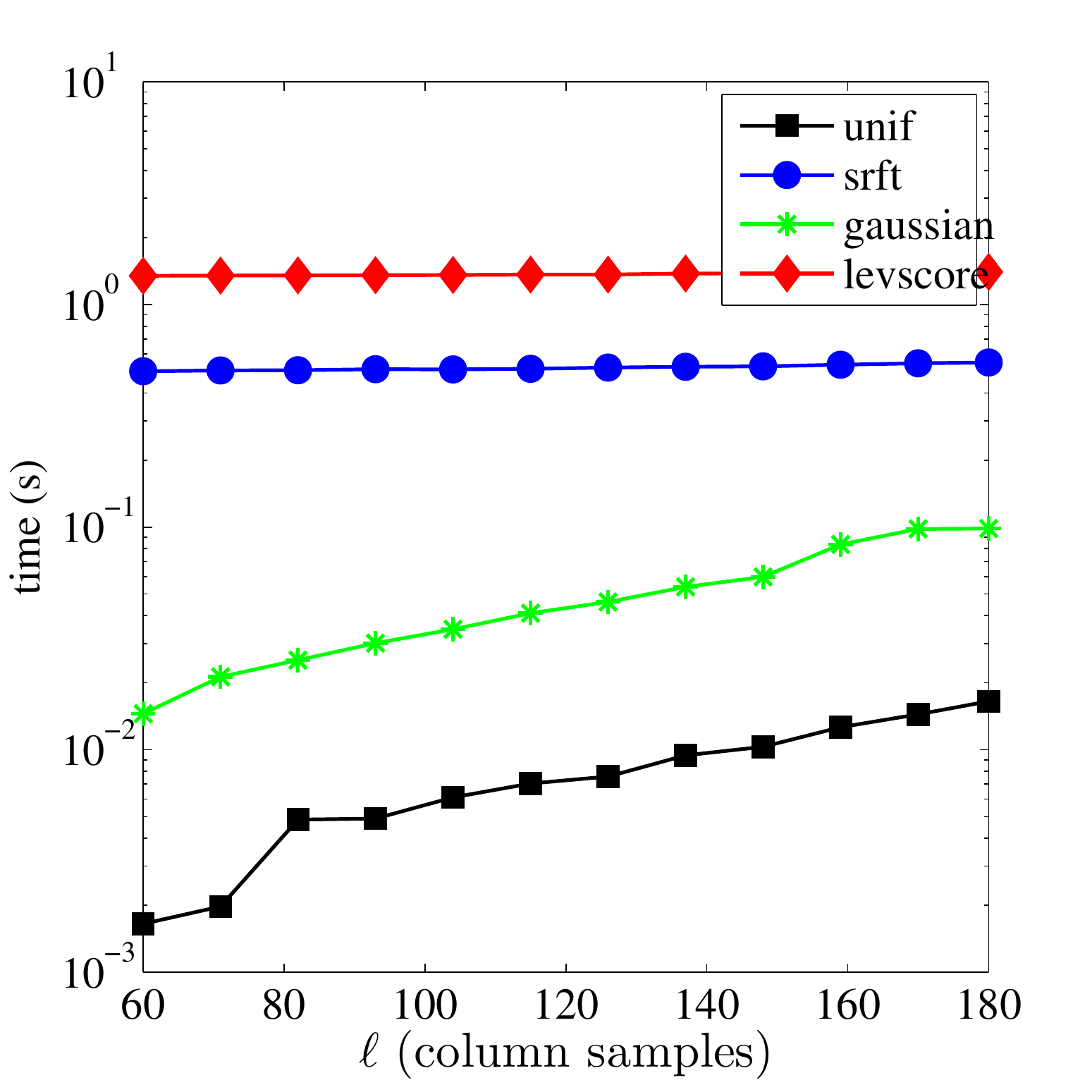}}%
 \subfigure[HEP, $k = 20$]{\includegraphics[width=1.6in, keepaspectratio=true]{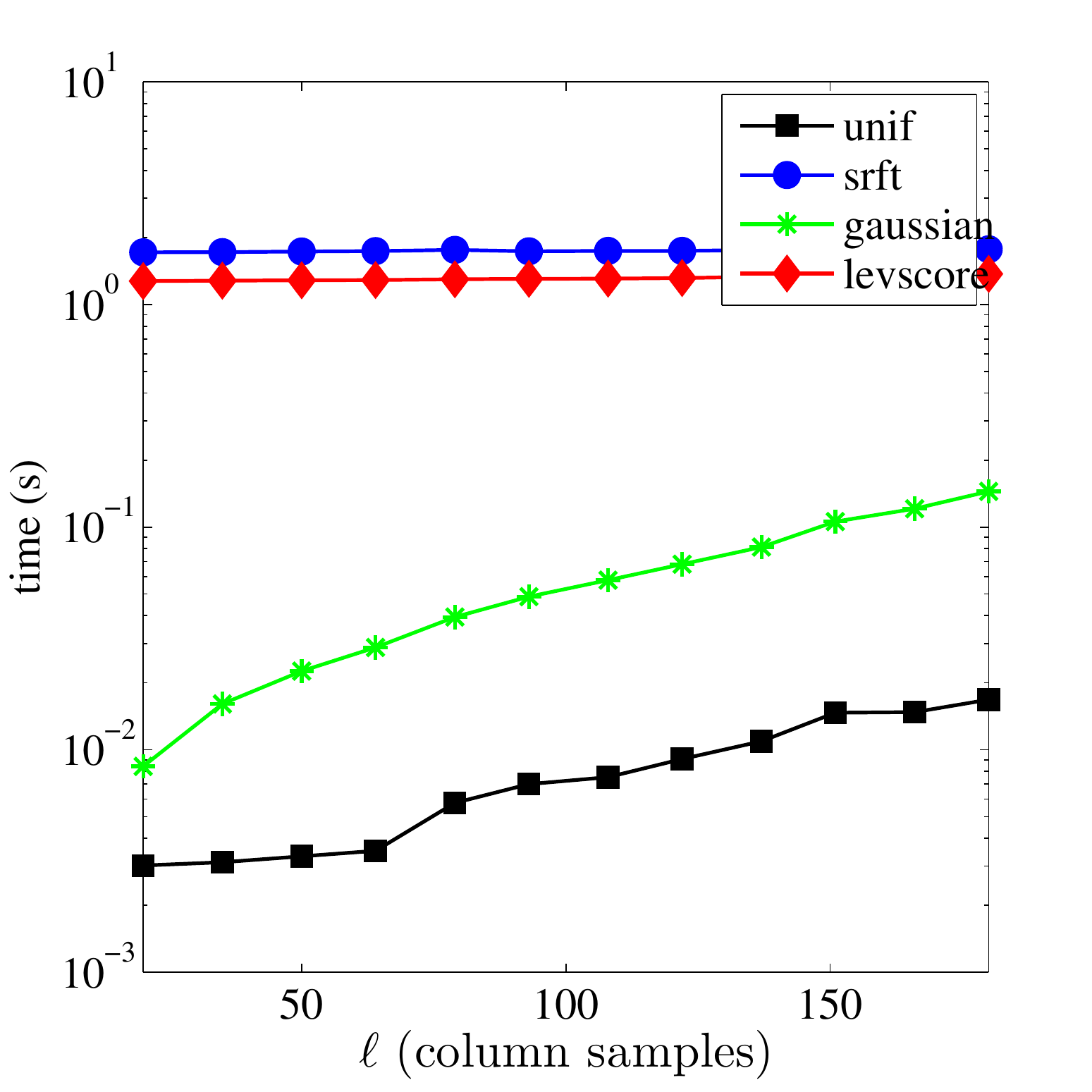}}%
 \subfigure[HEP, $k = 60$]{\includegraphics[width=1.6in, keepaspectratio=true]{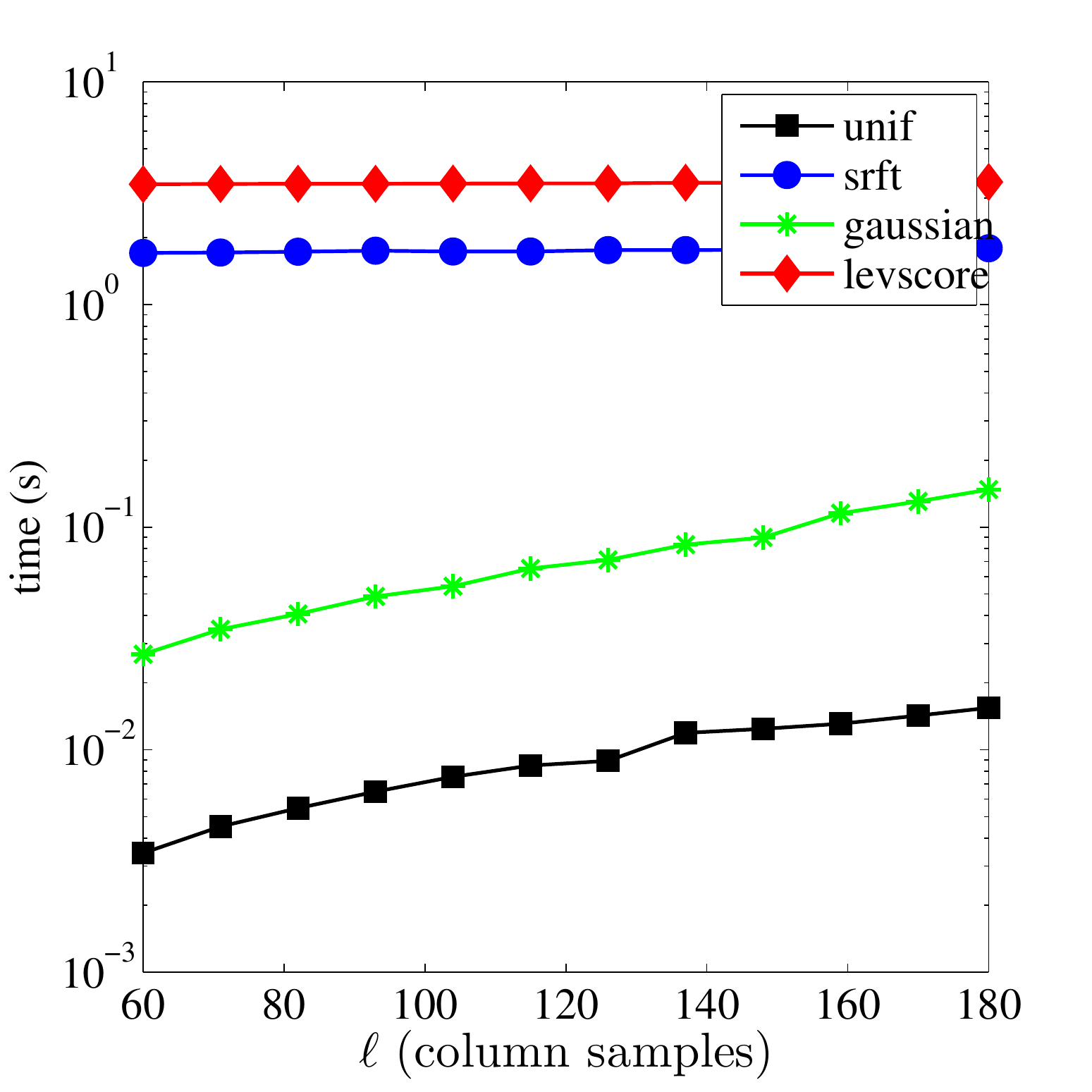}}
 \\%
 %\subfigure[Enron, $k = 20$]{\includegraphics[width=1.6in, keepaspectratio=true]{}}%
 %\subfigure[Enron, $k = 60$]{\includegraphics[width=1.6in, keepaspectratio=true]{}}%
 %\subfigure[Gnutella, $k = 20$]{\includegraphics[width=1.6in, keepaspectratio=true]{}}%
 %\subfigure[Gnutella, $k = 60$]{\includegraphics[width=1.6in, keepaspectratio=true]{}}%
 %\\%
 \subfigure[Dexter, $k = 8$]{\includegraphics[width=1.6in, keepaspectratio=true]{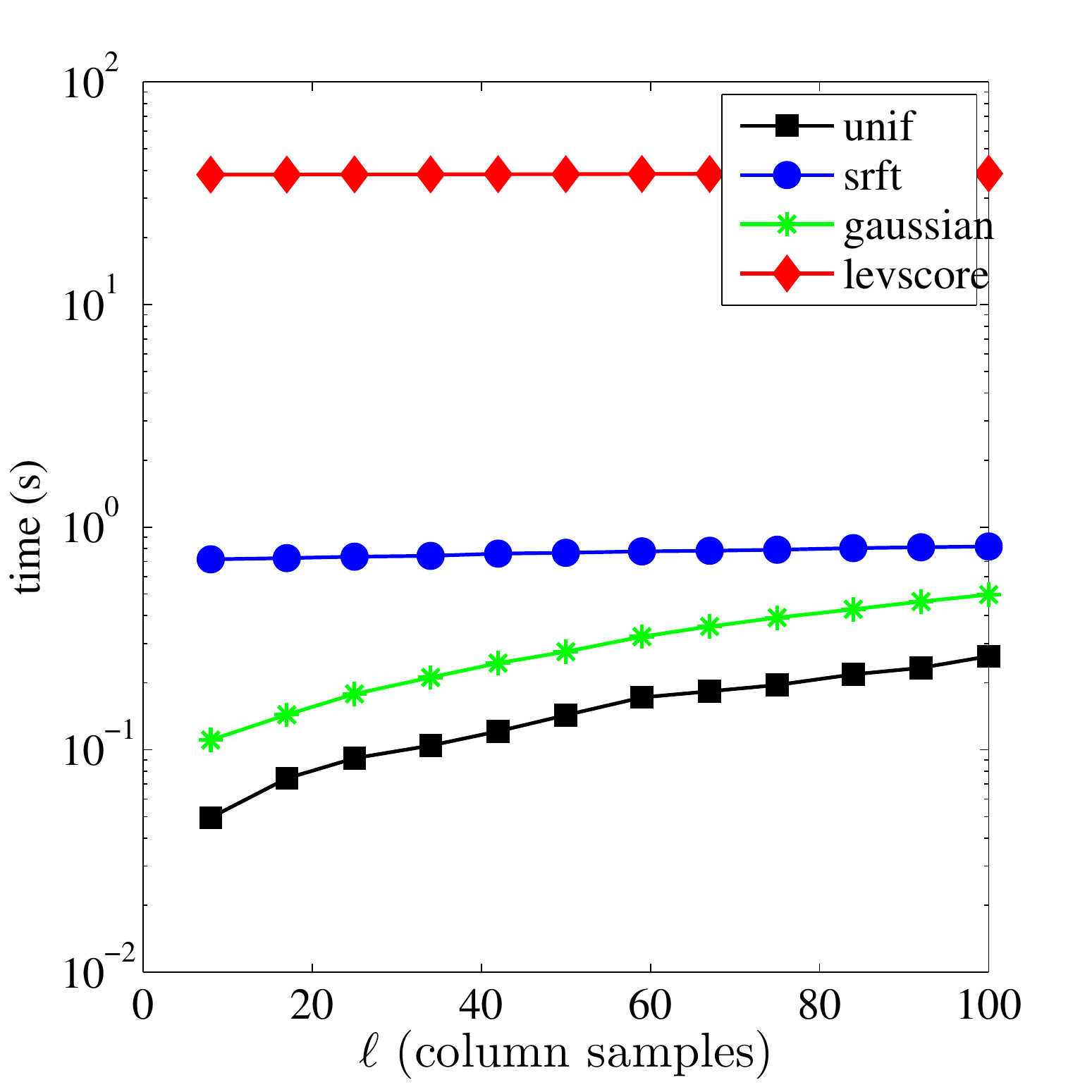}}%
 \subfigure[Protein, $k = 10$]{\includegraphics[width=1.6in, keepaspectratio=true]{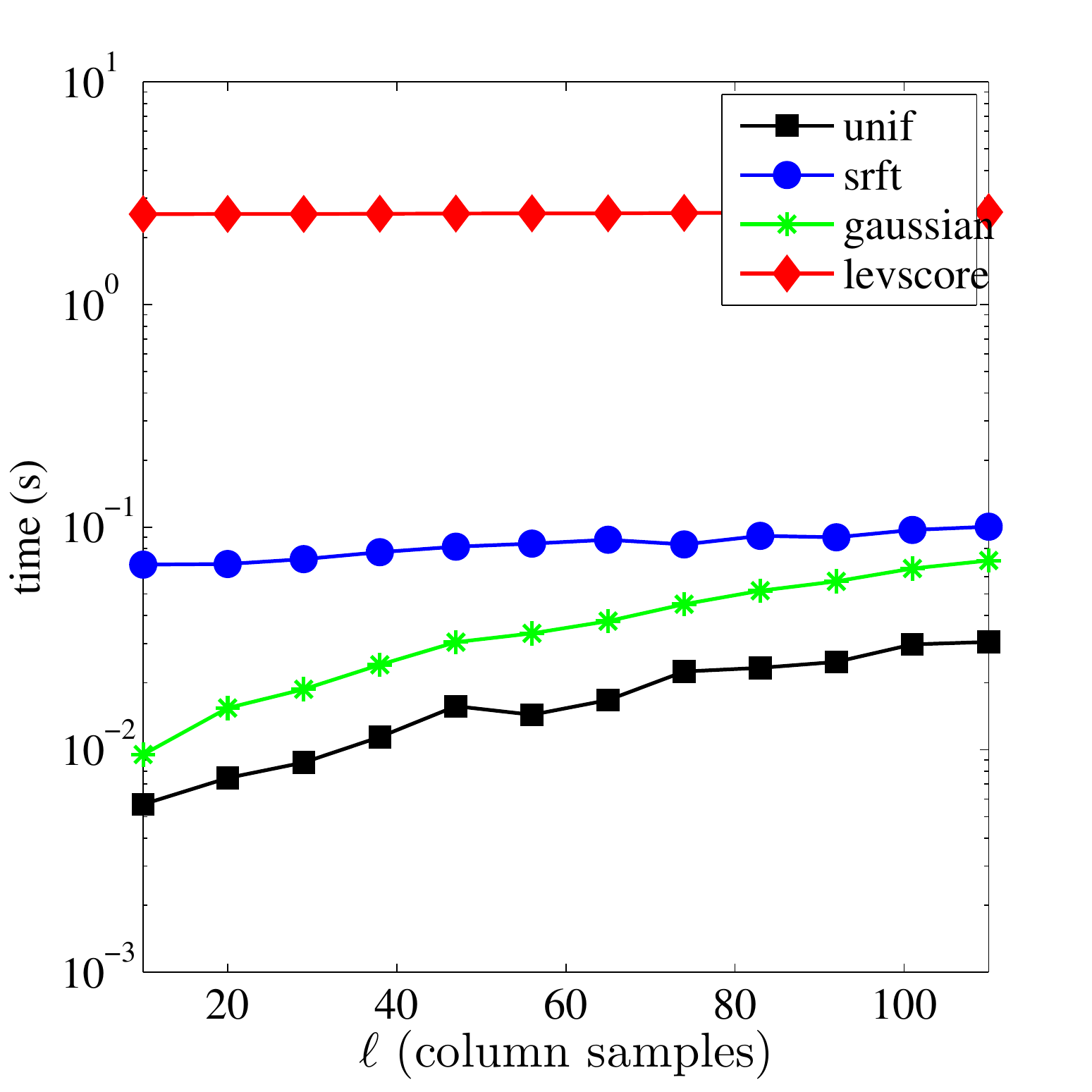}}%
 \subfigure[SNPs, $k = 5$]{\includegraphics[width=1.6in, keepaspectratio=true]{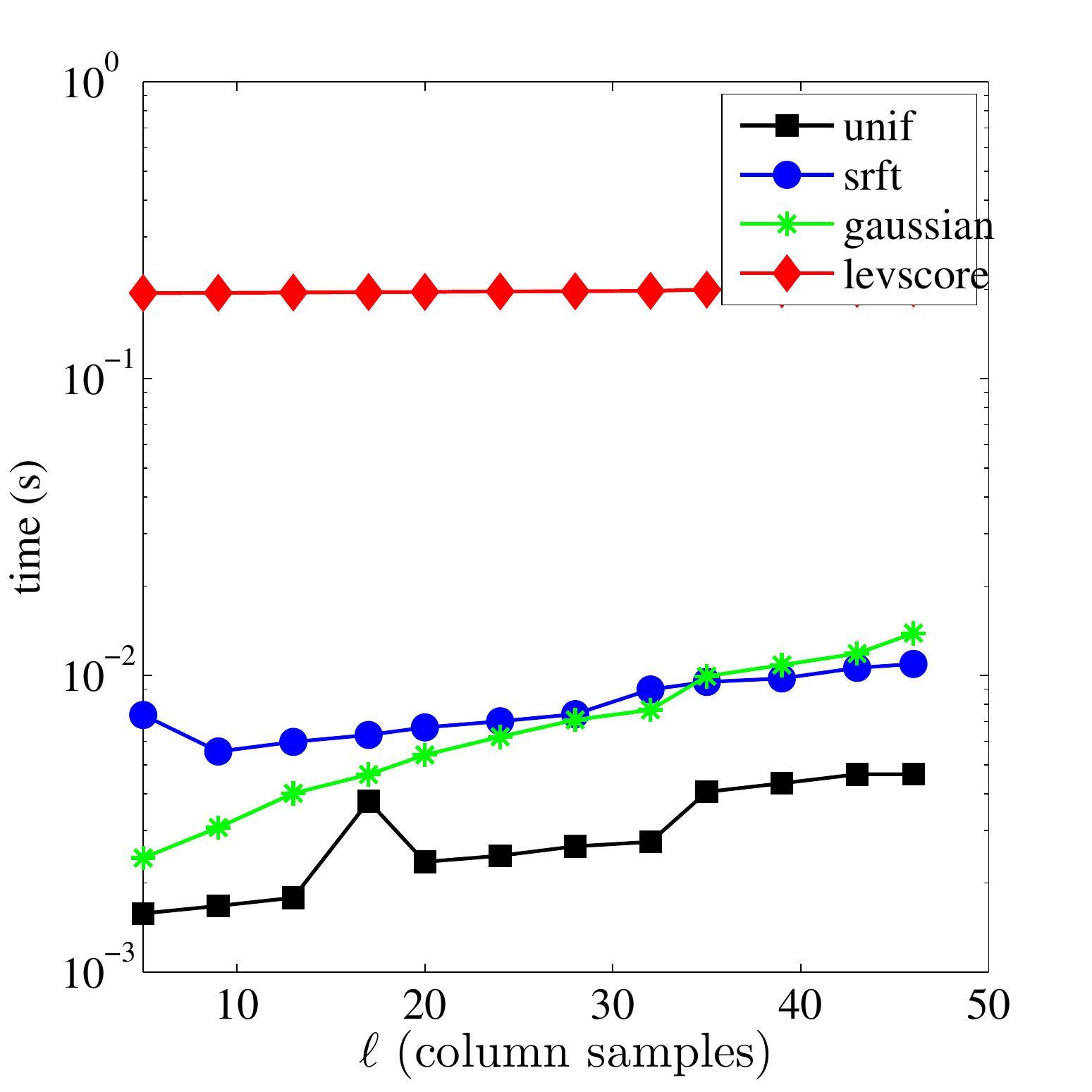}}%
 \subfigure[Gisette, $k = 12$]{\includegraphics[width=1.6in, keepaspectratio=true]{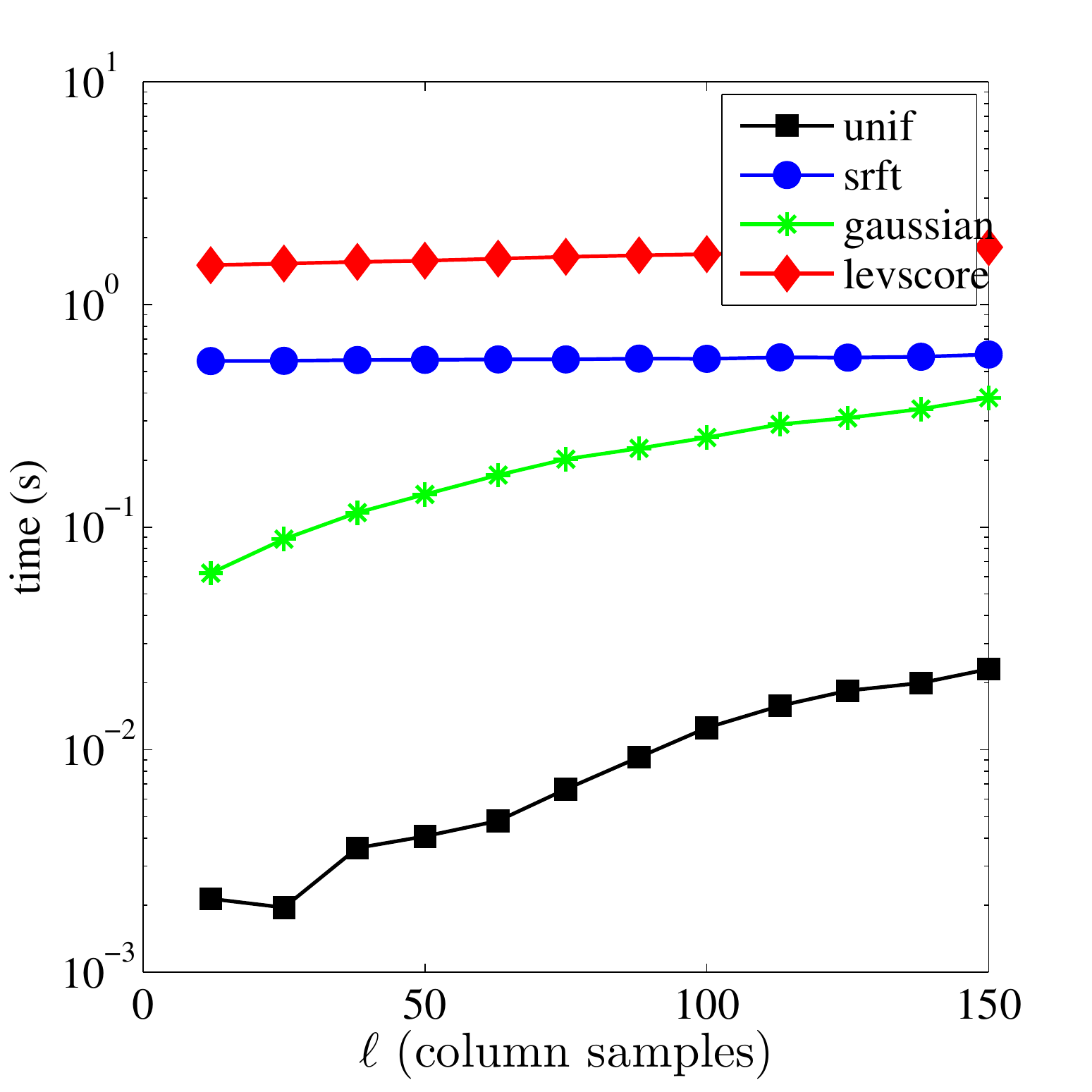}}%
 \\%
 \subfigure[AbaloneD, $\sigma = .15, k = 20$]{\includegraphics[width=1.6in, keepaspectratio=true]{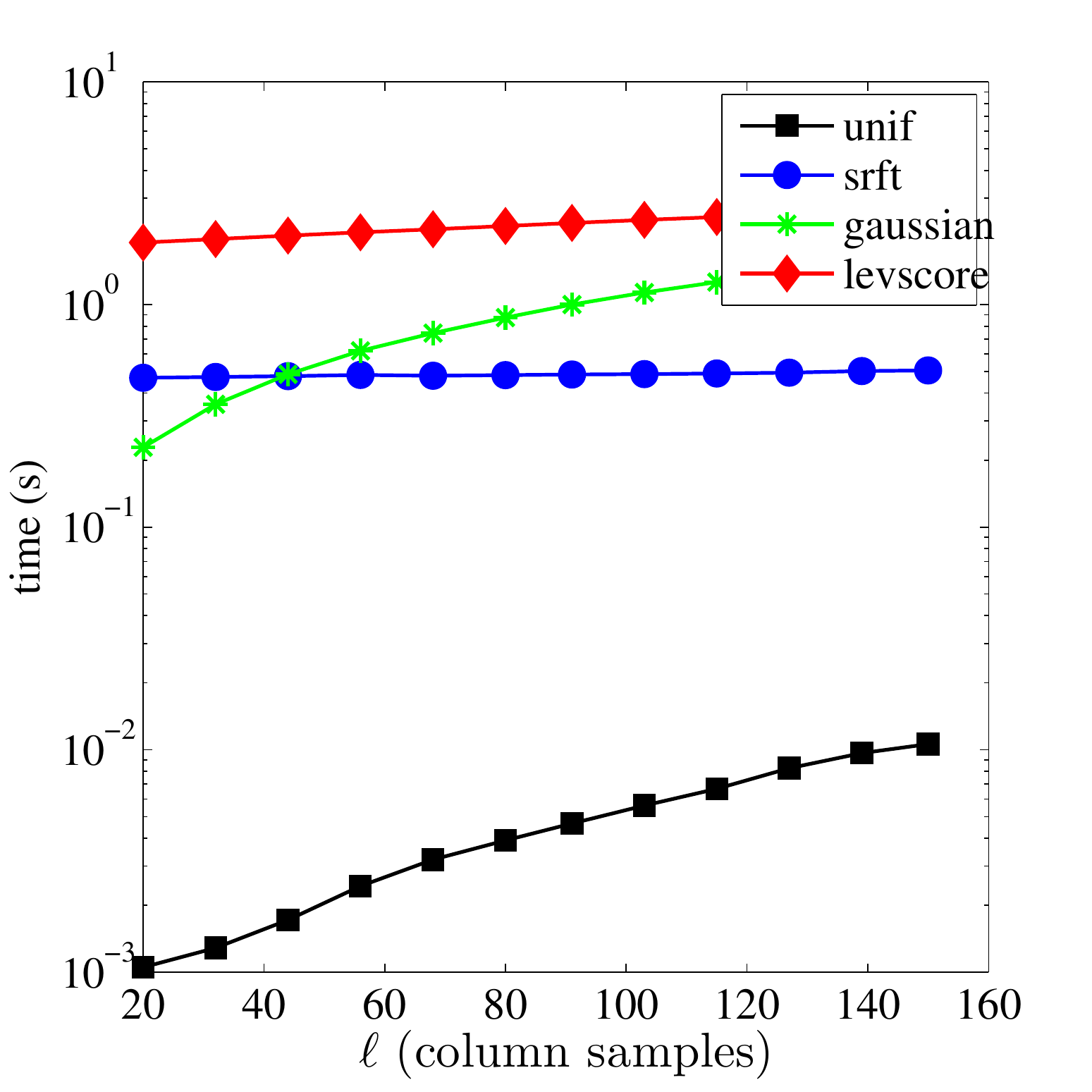}}%
 \subfigure[AbaloneD, $\sigma = 1, k = 20$]{\includegraphics[width=1.6in, keepaspectratio=true]{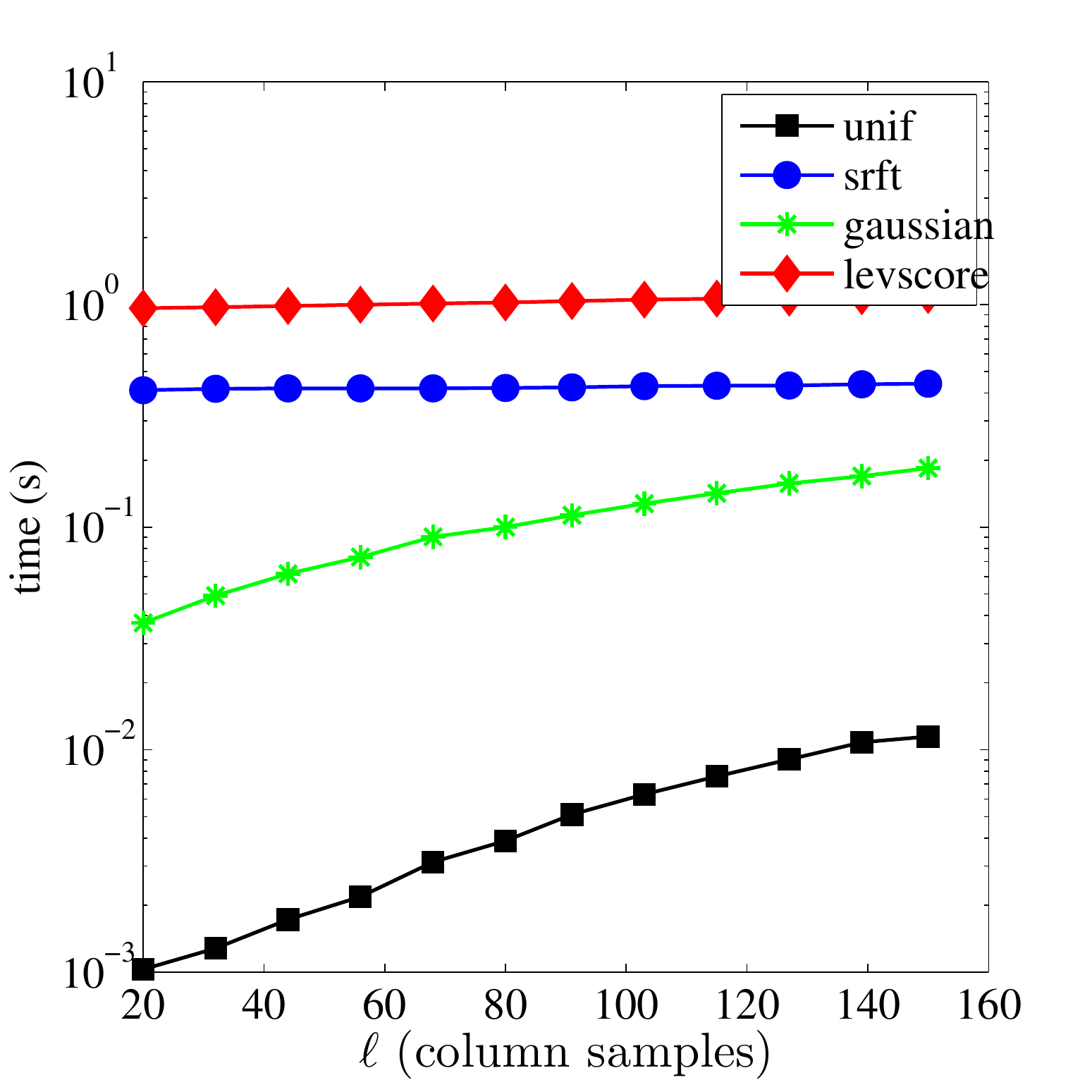}}%
 \subfigure[WineD, $\sigma = 1, k = 20$]{\includegraphics[width=1.6in, keepaspectratio=true]{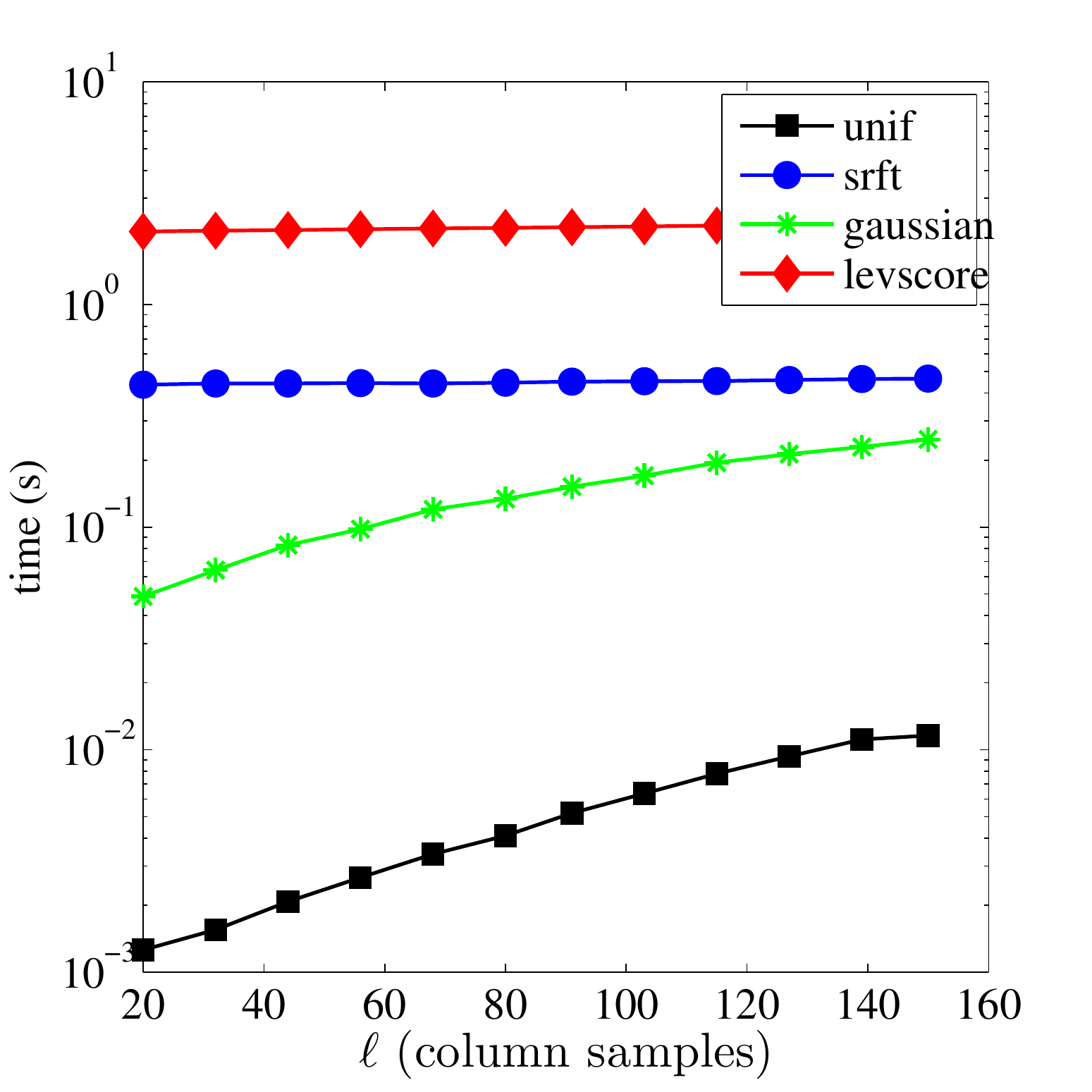}}%
 \subfigure[WineD, $\sigma = 2.1, k = 20$]{\includegraphics[width=1.6in, keepaspectratio=true]{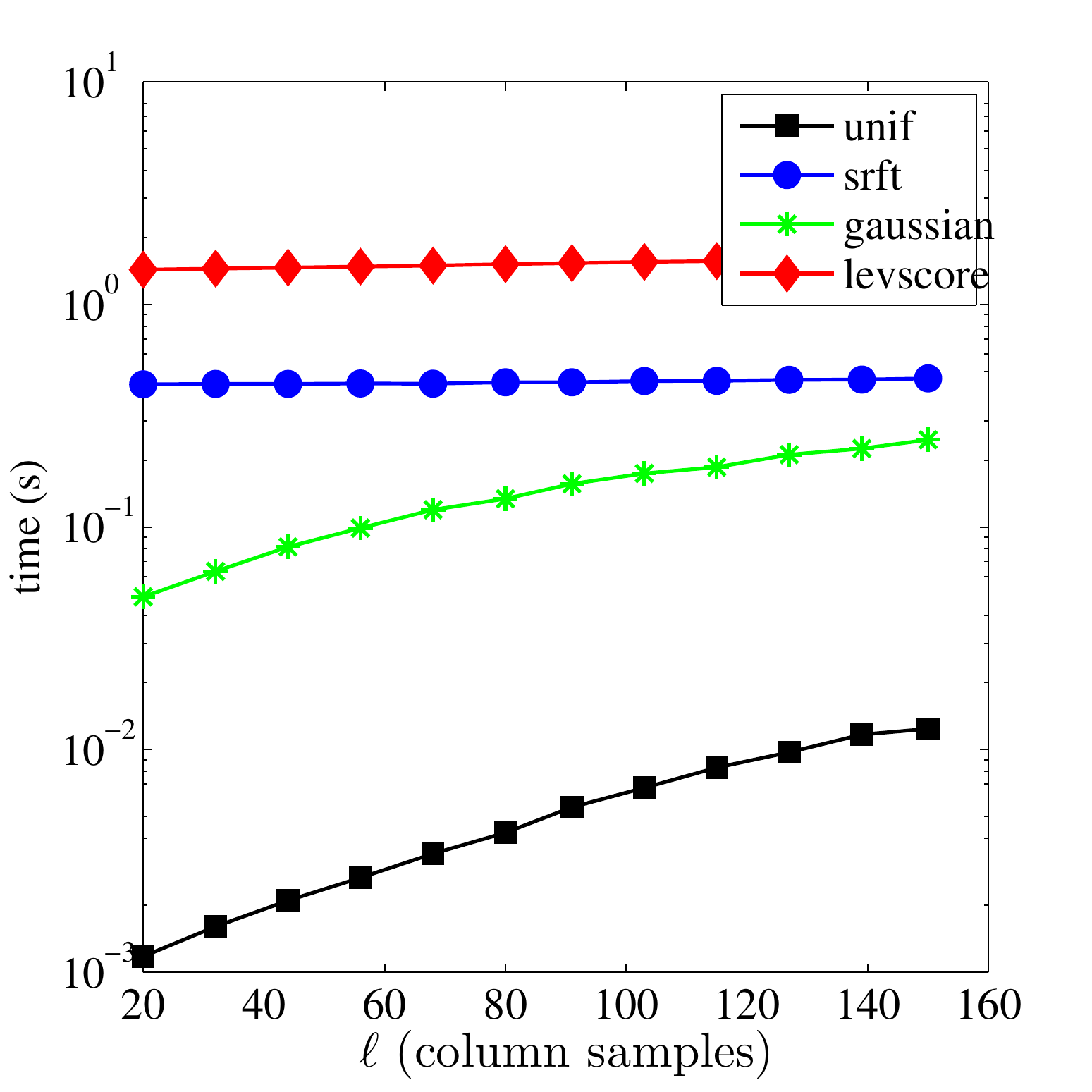}}%
 \\%
 \subfigure[AbaloneS, $\sigma = .15, k = 20$]{\includegraphics[width=1.6in, keepaspectratio=true]{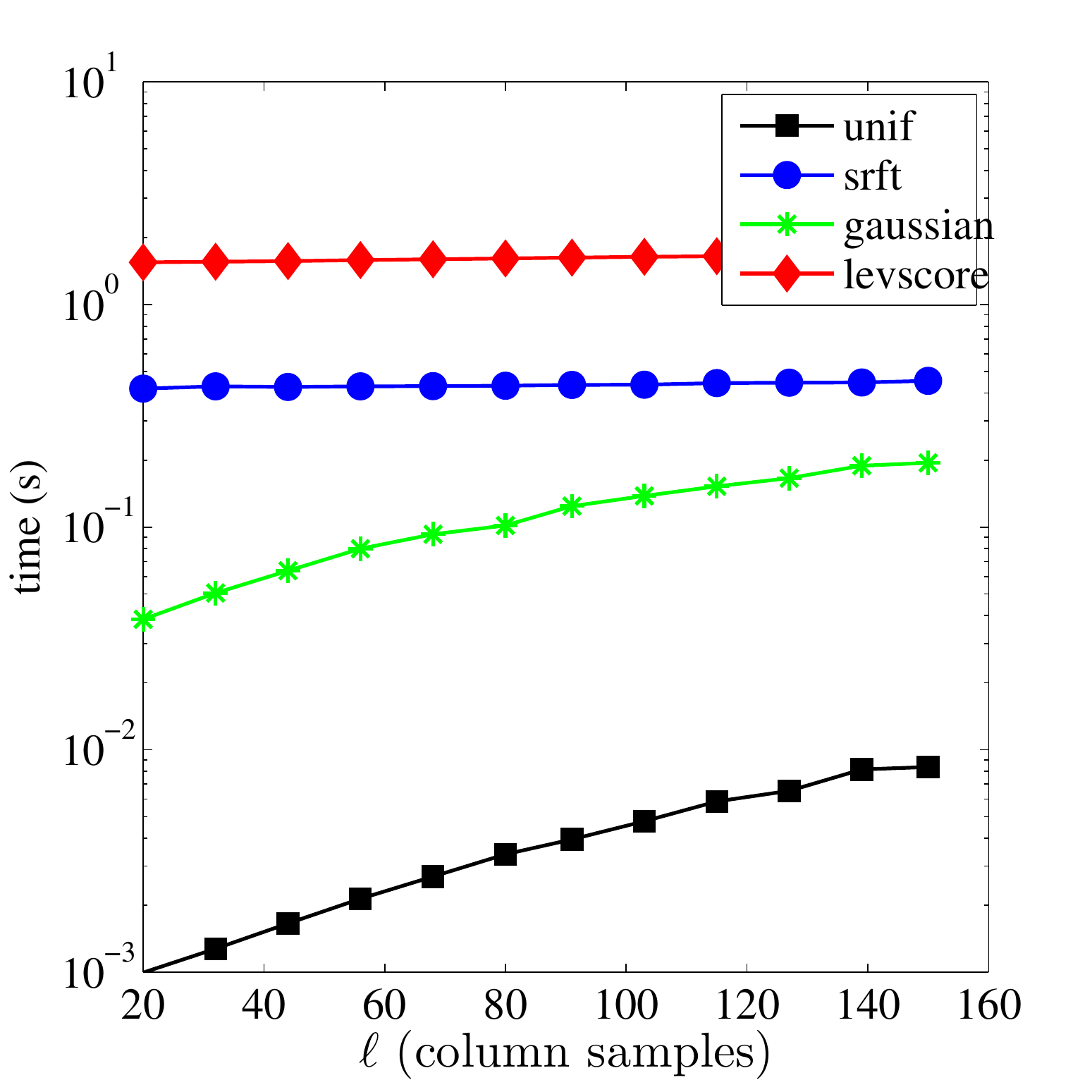}}%
 \subfigure[AbaloneS, $\sigma = 1, k = 20$]{\includegraphics[width=1.6in, keepaspectratio=true]{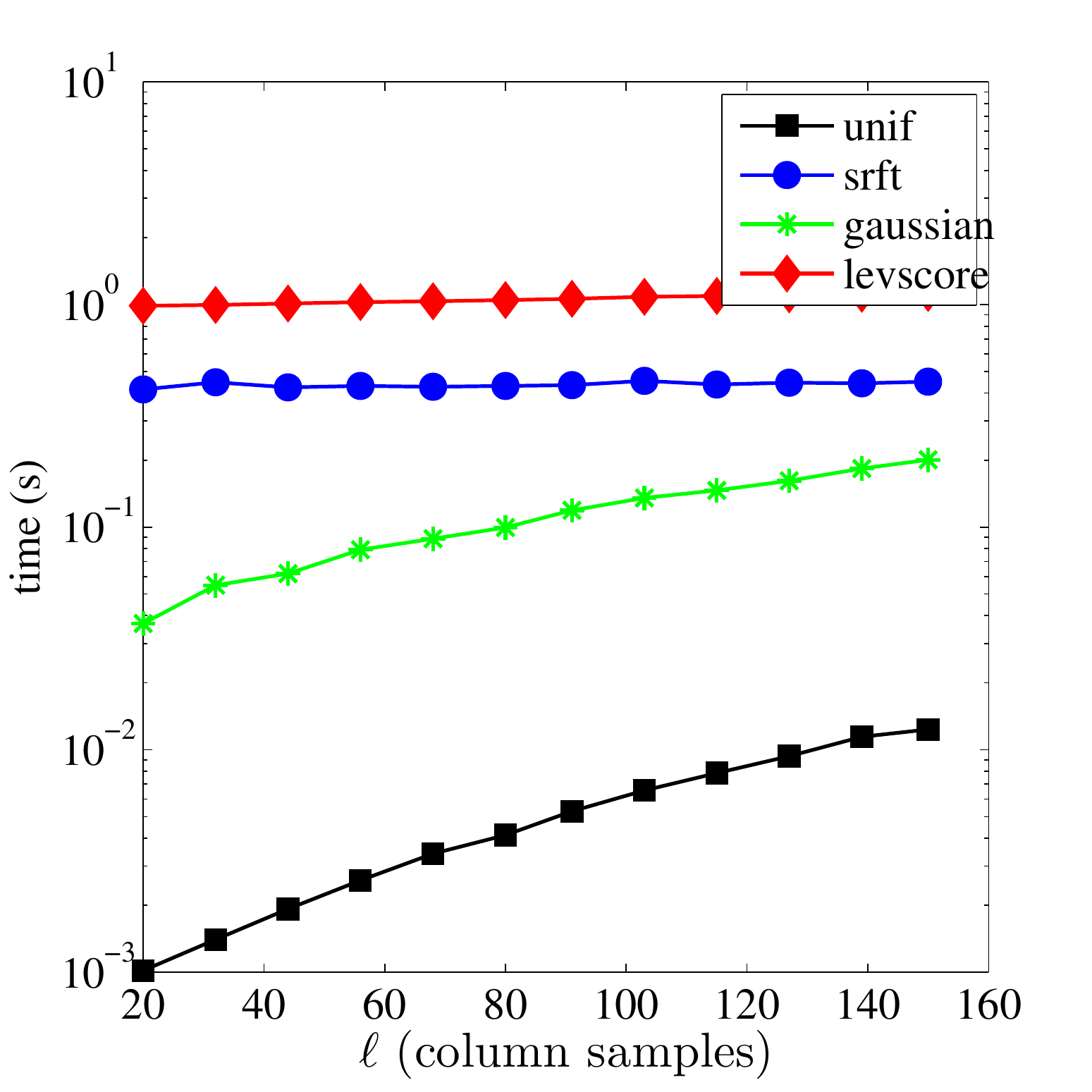}}%
 \subfigure[WineS, $\sigma = 1, k = 20$]{\includegraphics[width=1.6in, keepaspectratio=true]{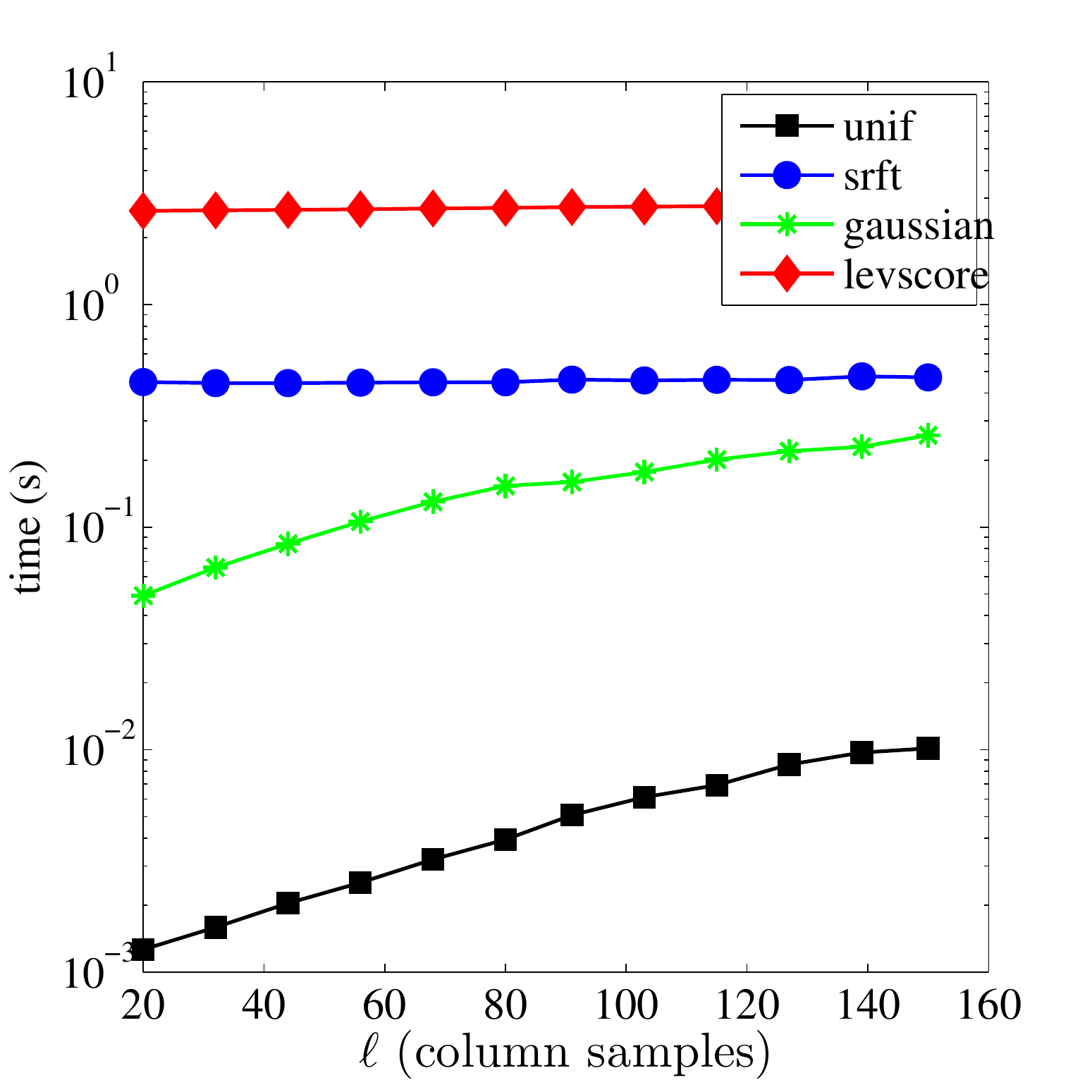}}%
 \subfigure[WineS, $\sigma = 2.1, k = 20$]{\includegraphics[width=1.6in, keepaspectratio=true]{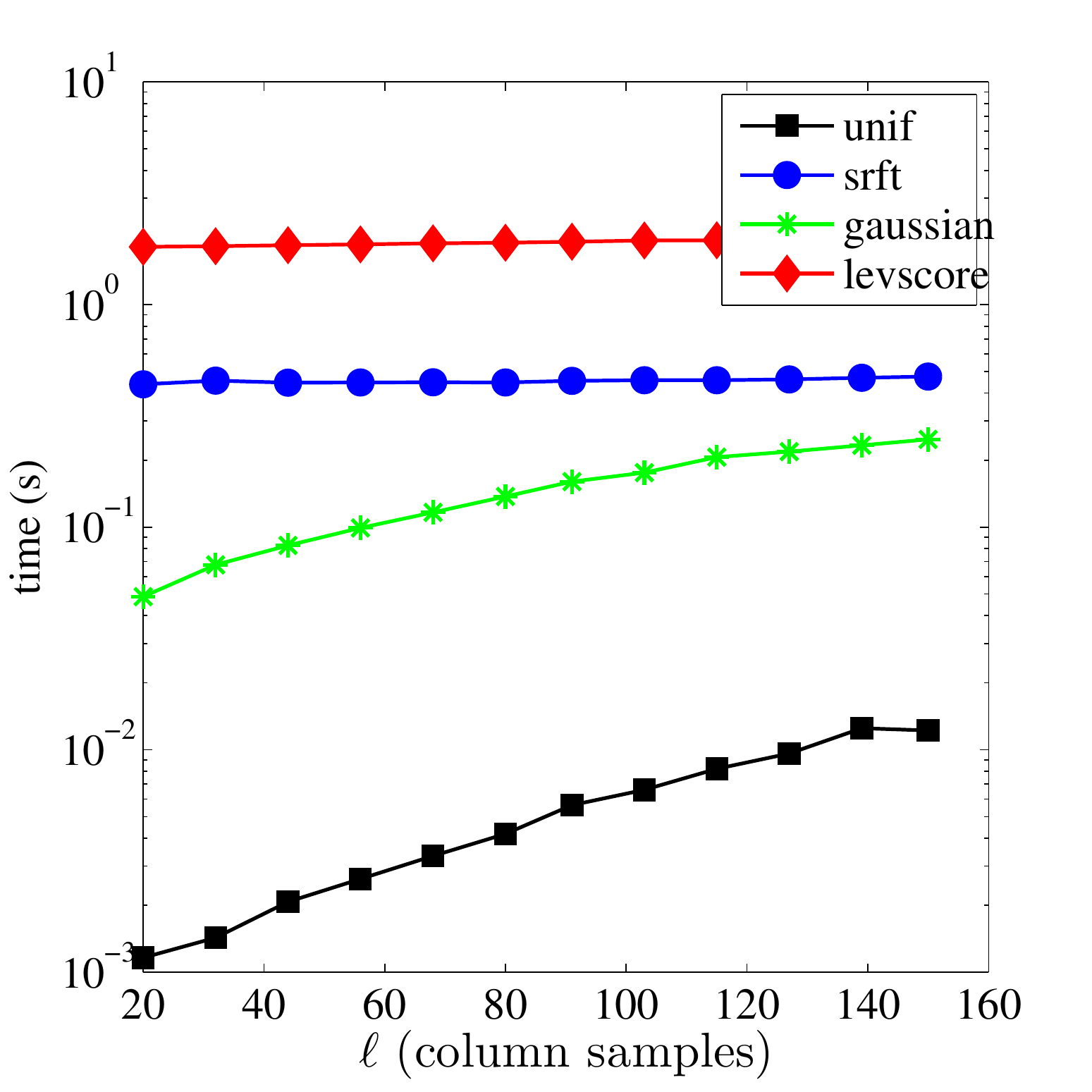}}%
 \caption{The times required to compute the (non-rank-restricted) SPSD sketches, as a function of the number of columns samples 
 $\ell$ for several data sets and two choices of the rank parameter $k$.  }%
 \label{fig:exact-computation-times}
\end{figure}

\begin{figure}[p]
 \centering
 \subfigure[GR, $k = 20$]{\includegraphics[width=1.6in, keepaspectratio=true]{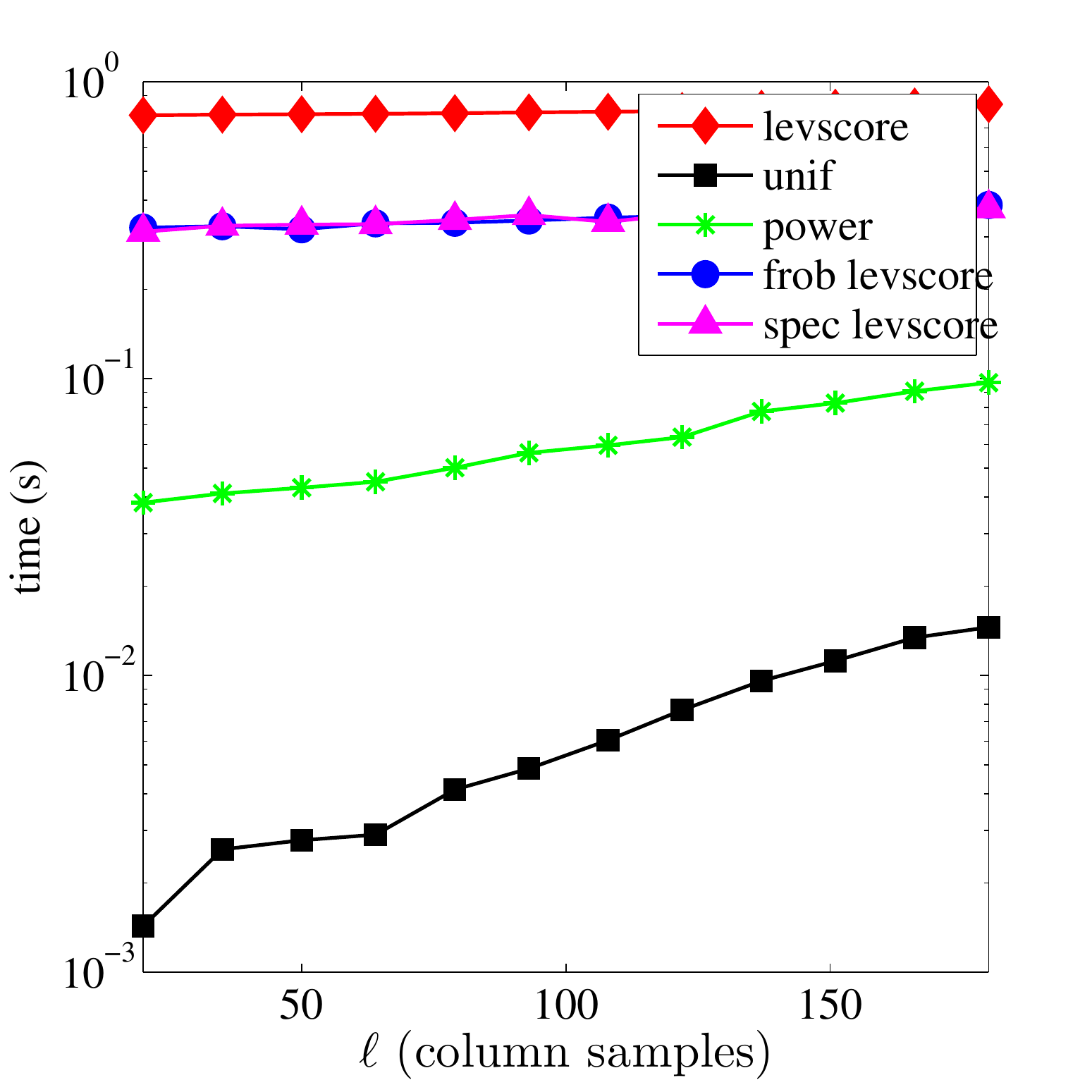}}%
 \subfigure[GR, $k = 60$]{\includegraphics[width=1.6in, keepaspectratio=true]{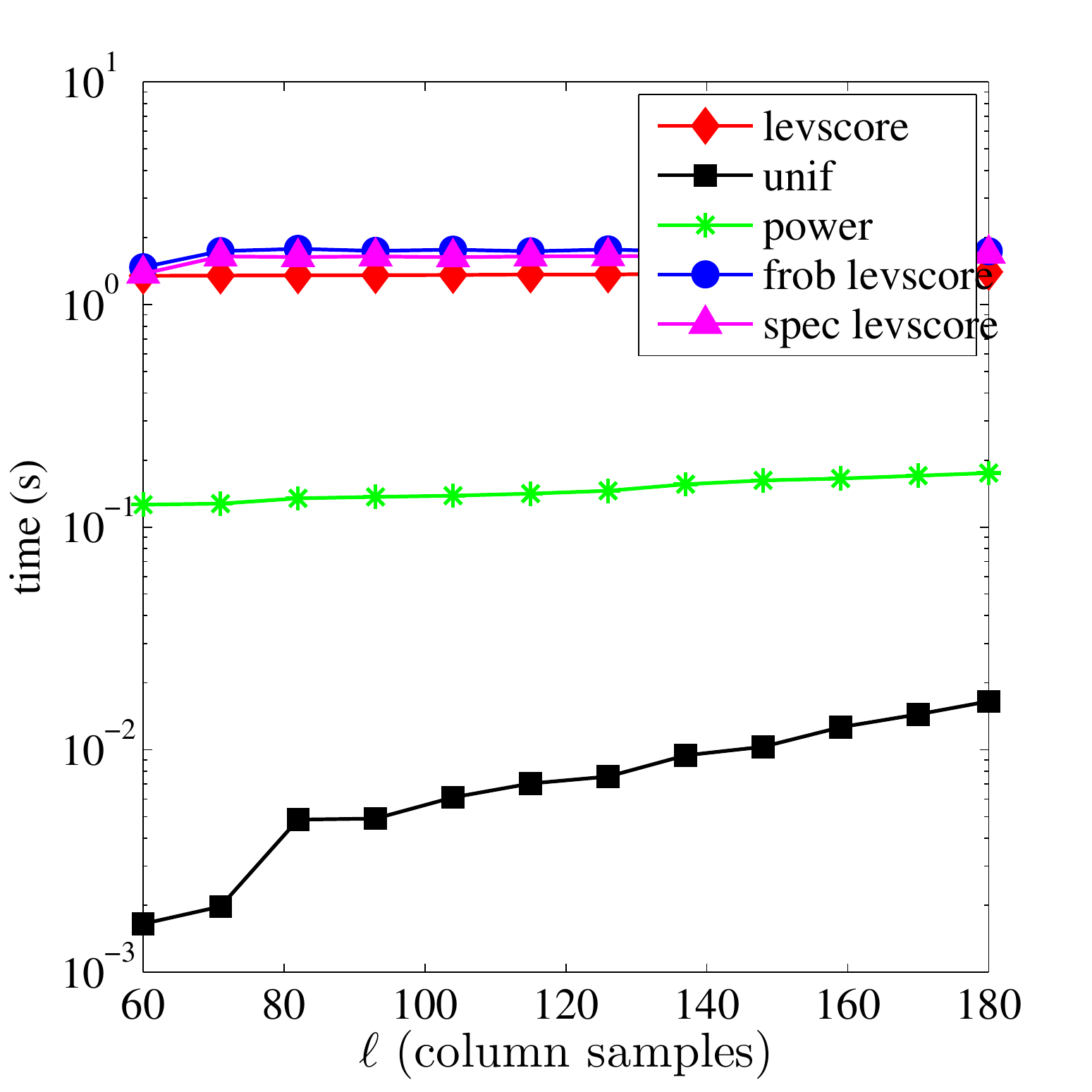}}%
 \subfigure[HEP, $k = 20$]{\includegraphics[width=1.6in, keepaspectratio=true]{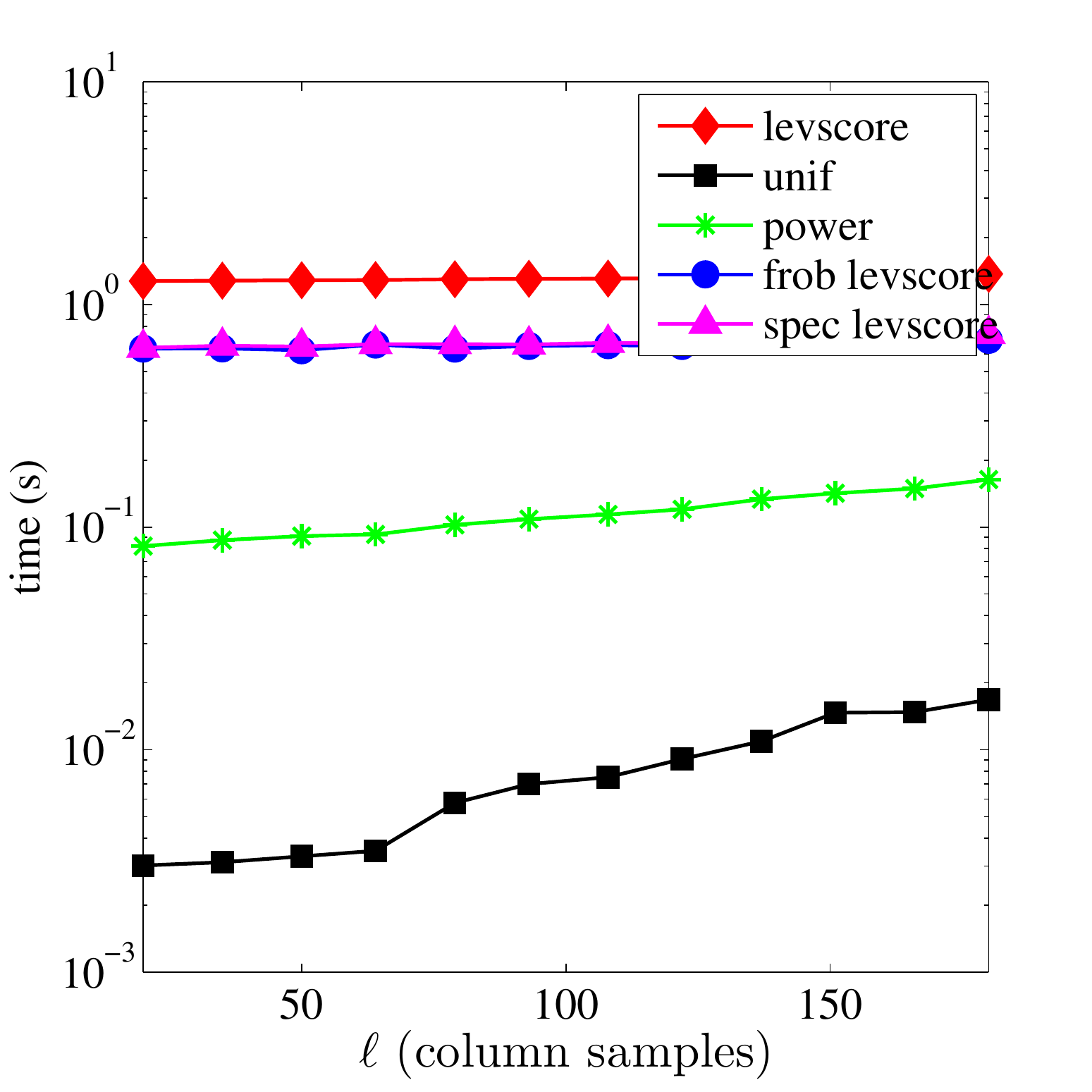}}%
 \subfigure[HEP, $k = 60$]{\includegraphics[width=1.6in, keepaspectratio=true]{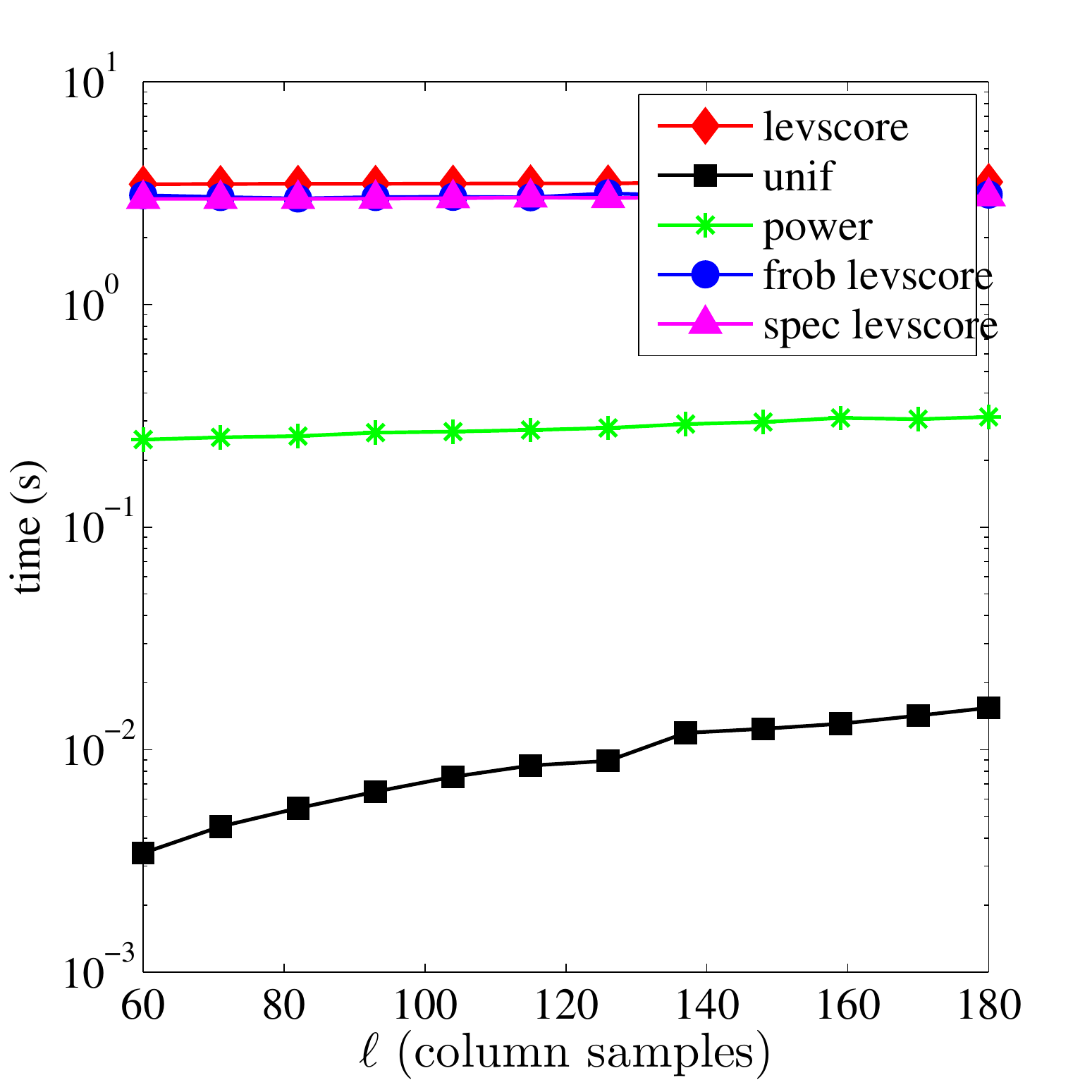}}\\%
 \subfigure[Dexter, $k = 8$]{\includegraphics[width=1.6in, keepaspectratio=true]{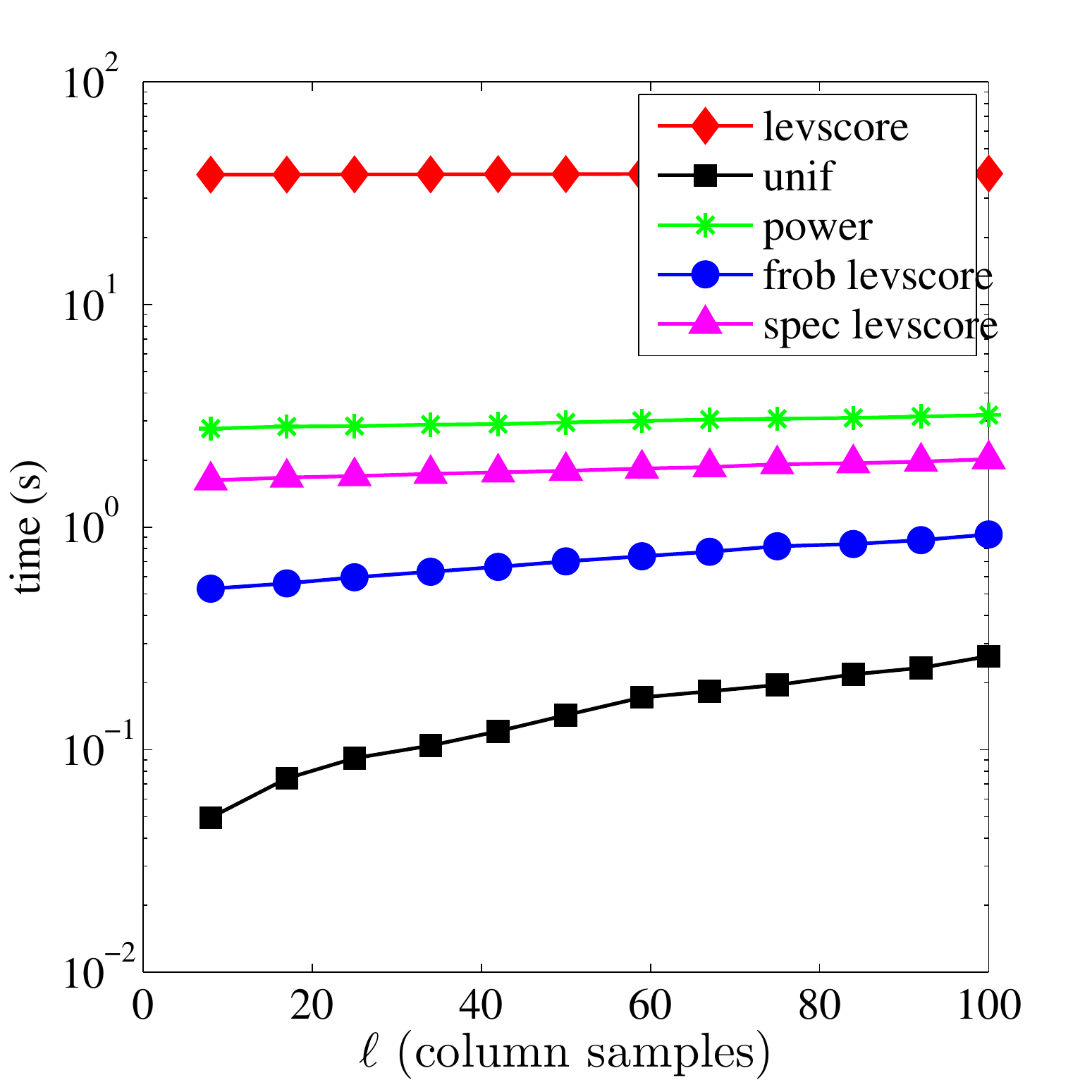}}%
 \subfigure[Protein, $k = 10$]{\includegraphics[width=1.6in, keepaspectratio=true]{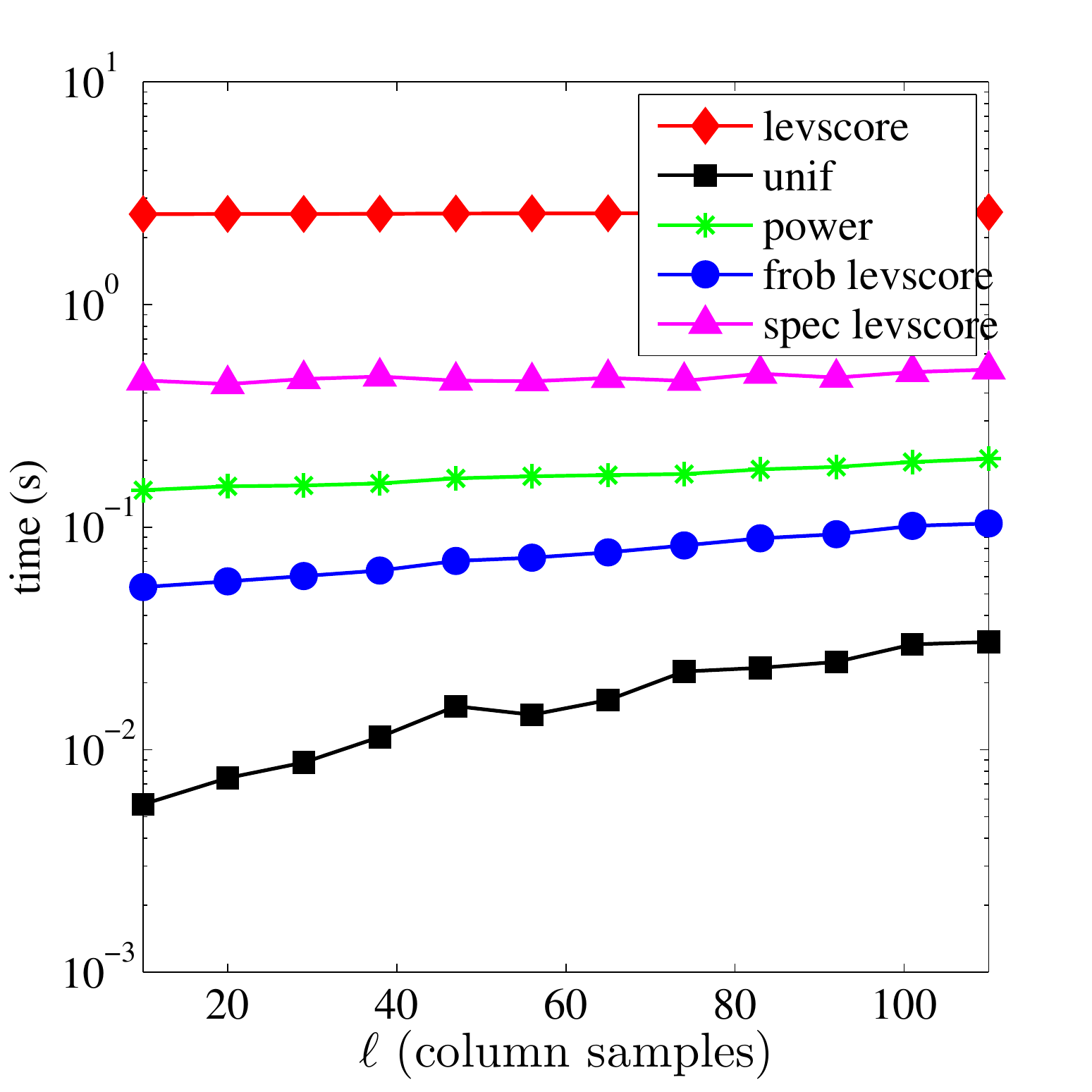}}%
 \subfigure[SNPs, $k = 5$]{\includegraphics[width=1.6in, keepaspectratio=true]{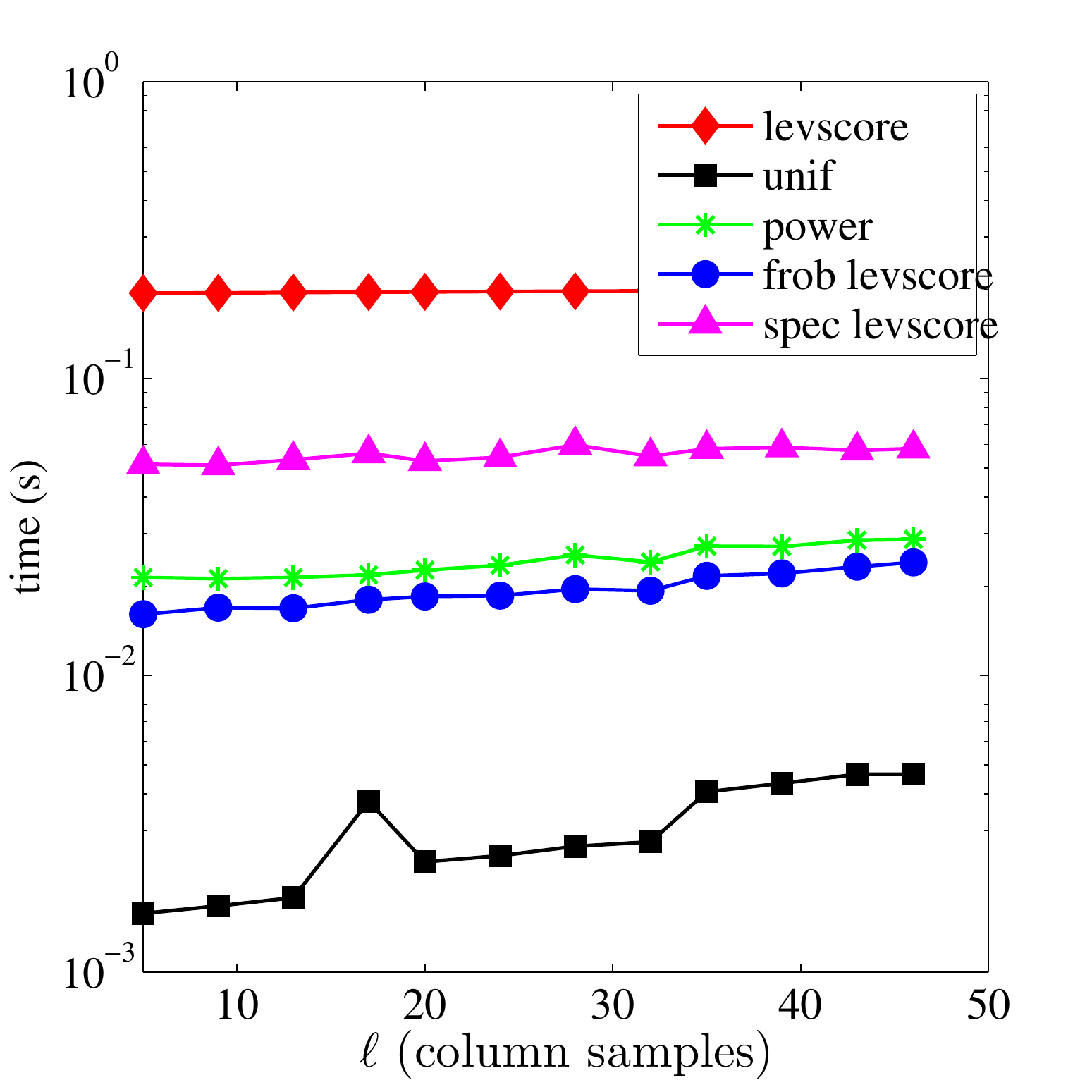}}%
 \subfigure[Gisette, $k = 12$]{\includegraphics[width=1.6in, keepaspectratio=true]{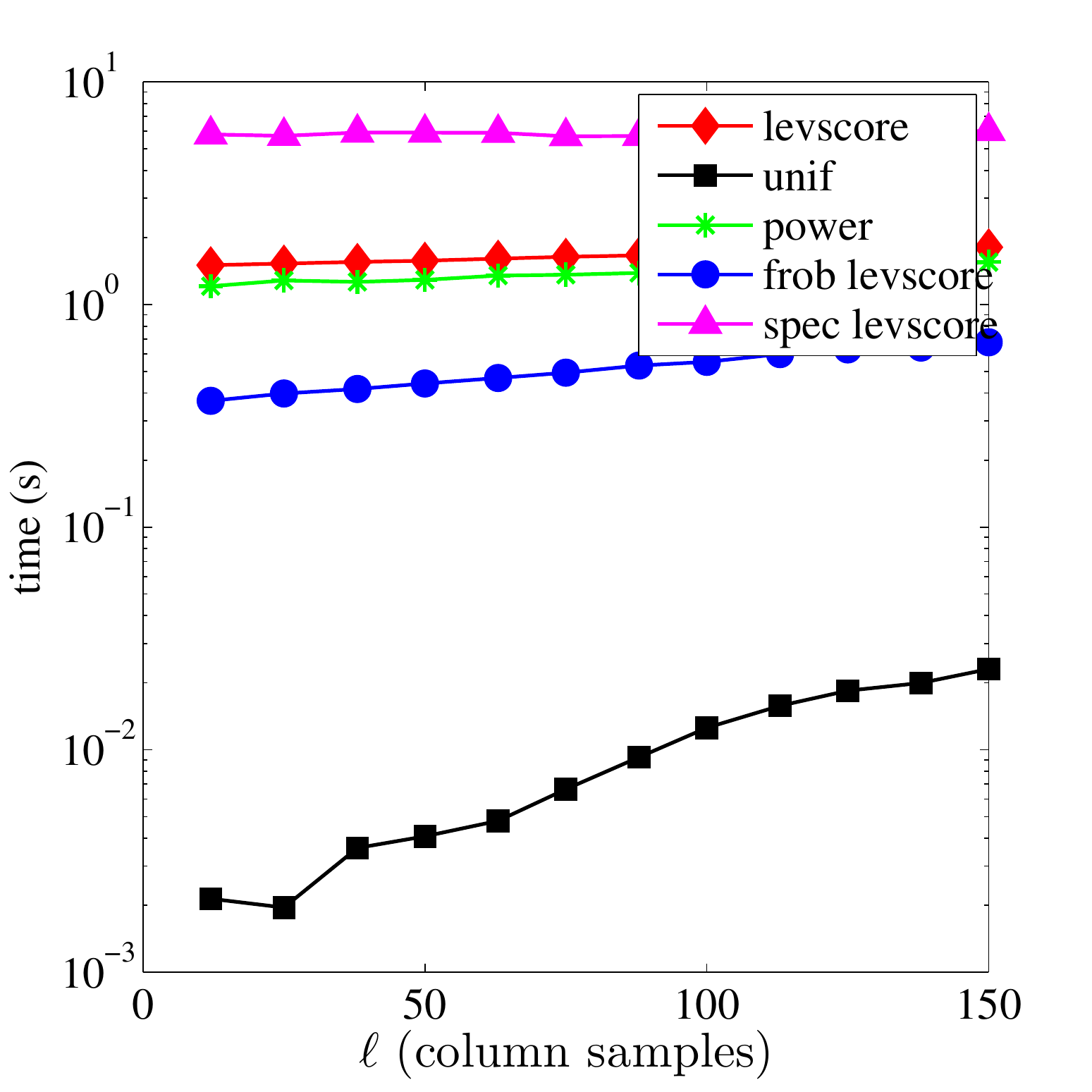}}\\%
 \subfigure[AbaloneD, $\sigma = .15, k = 20$]{\includegraphics[width=1.6in, keepaspectratio=true]{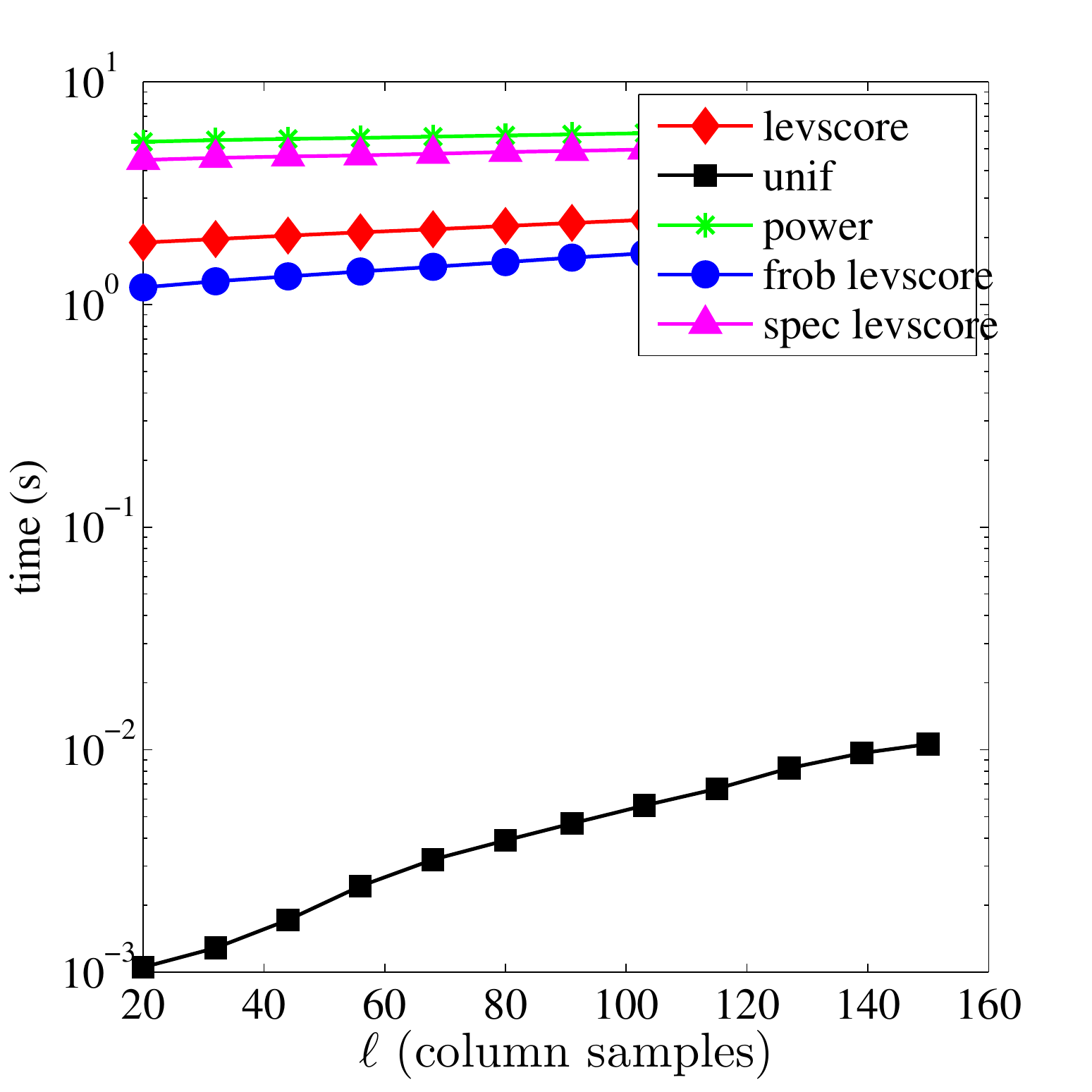}}%
 \subfigure[AbaloneD, $\sigma = 1, k = 20$]{\includegraphics[width=1.6in, keepaspectratio=true]{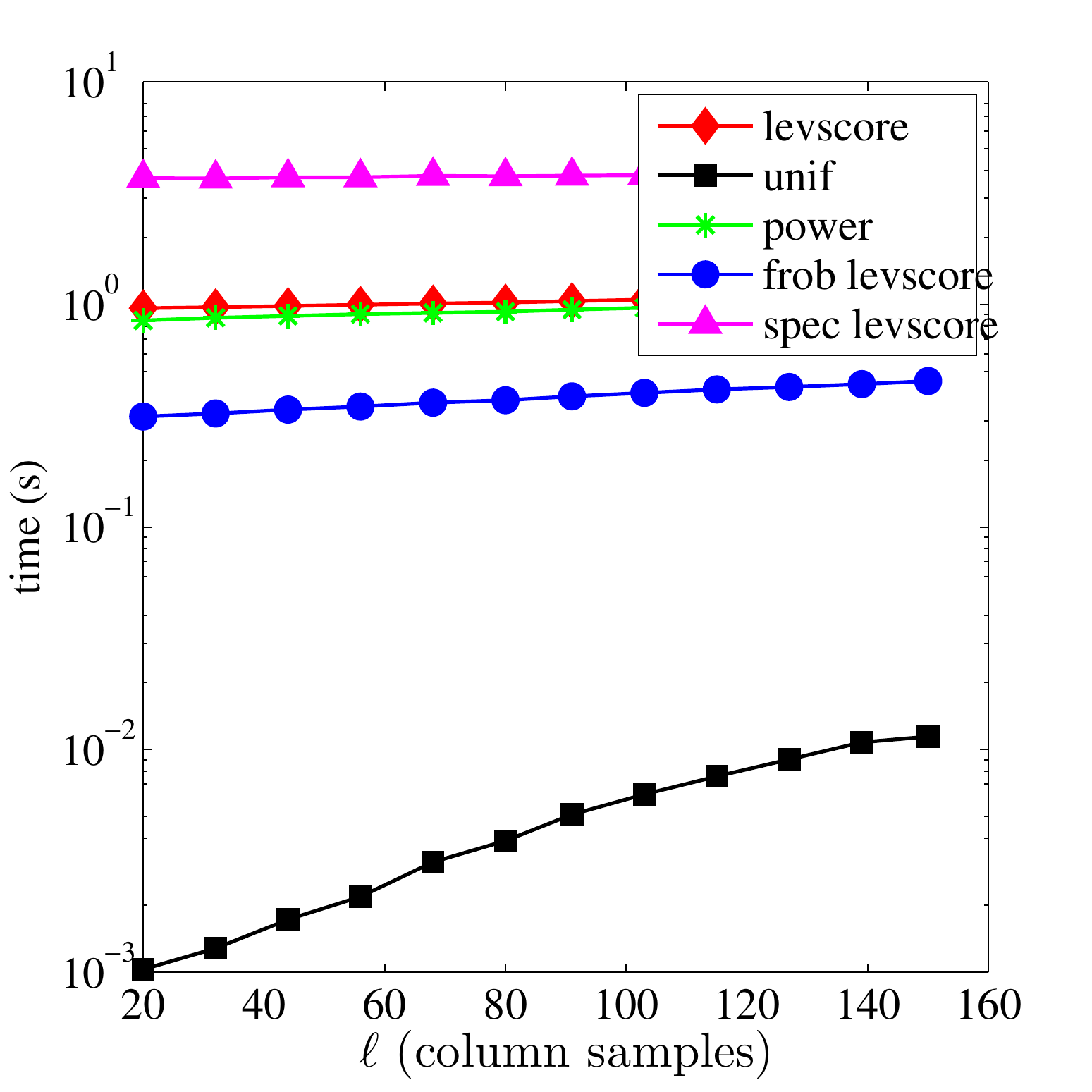}}%
 \subfigure[WineD, $\sigma = 1, k = 20$]{\includegraphics[width=1.6in, keepaspectratio=true]{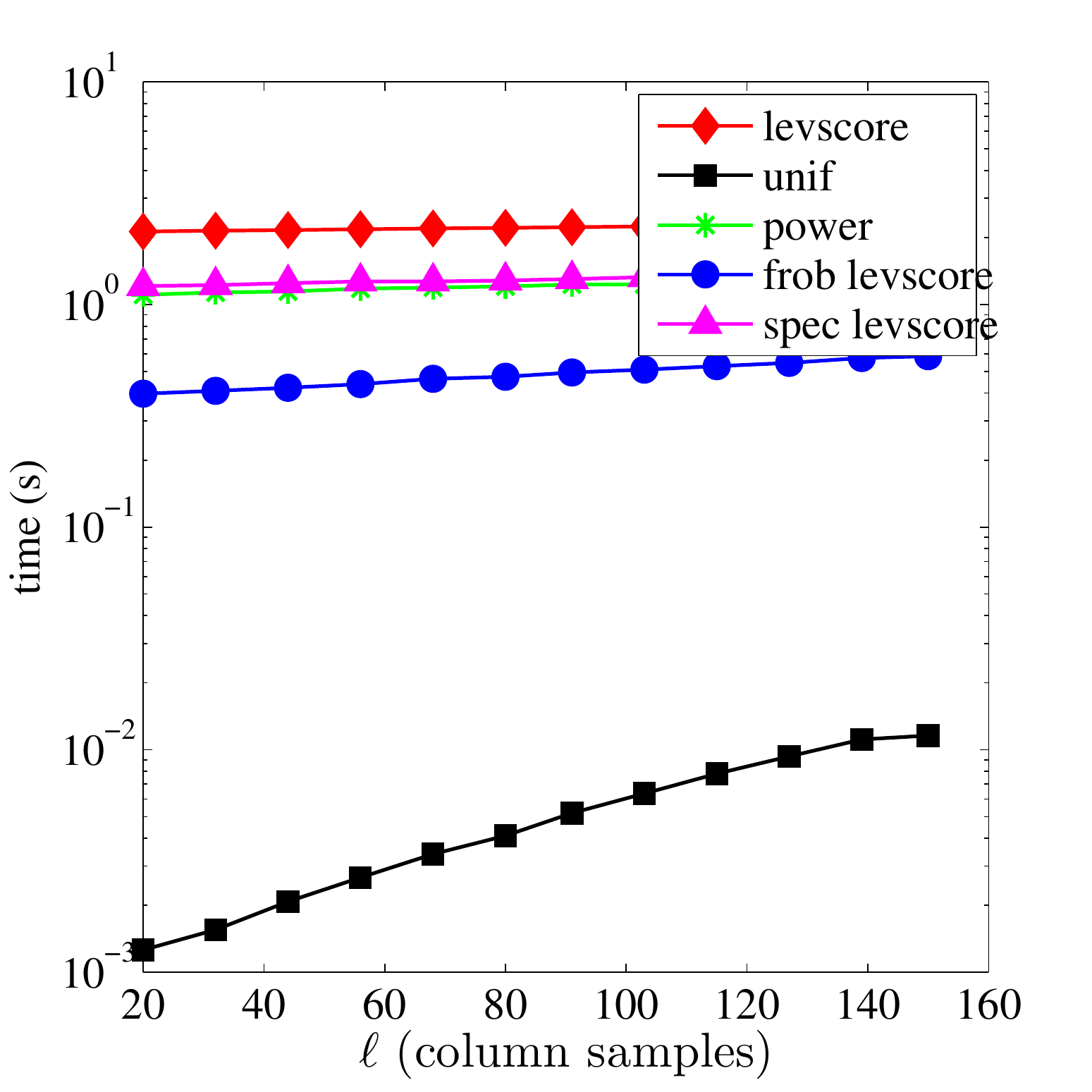}}%
 \subfigure[WineD, $\sigma = 2.1, k = 20$]{\includegraphics[width=1.6in, keepaspectratio=true]{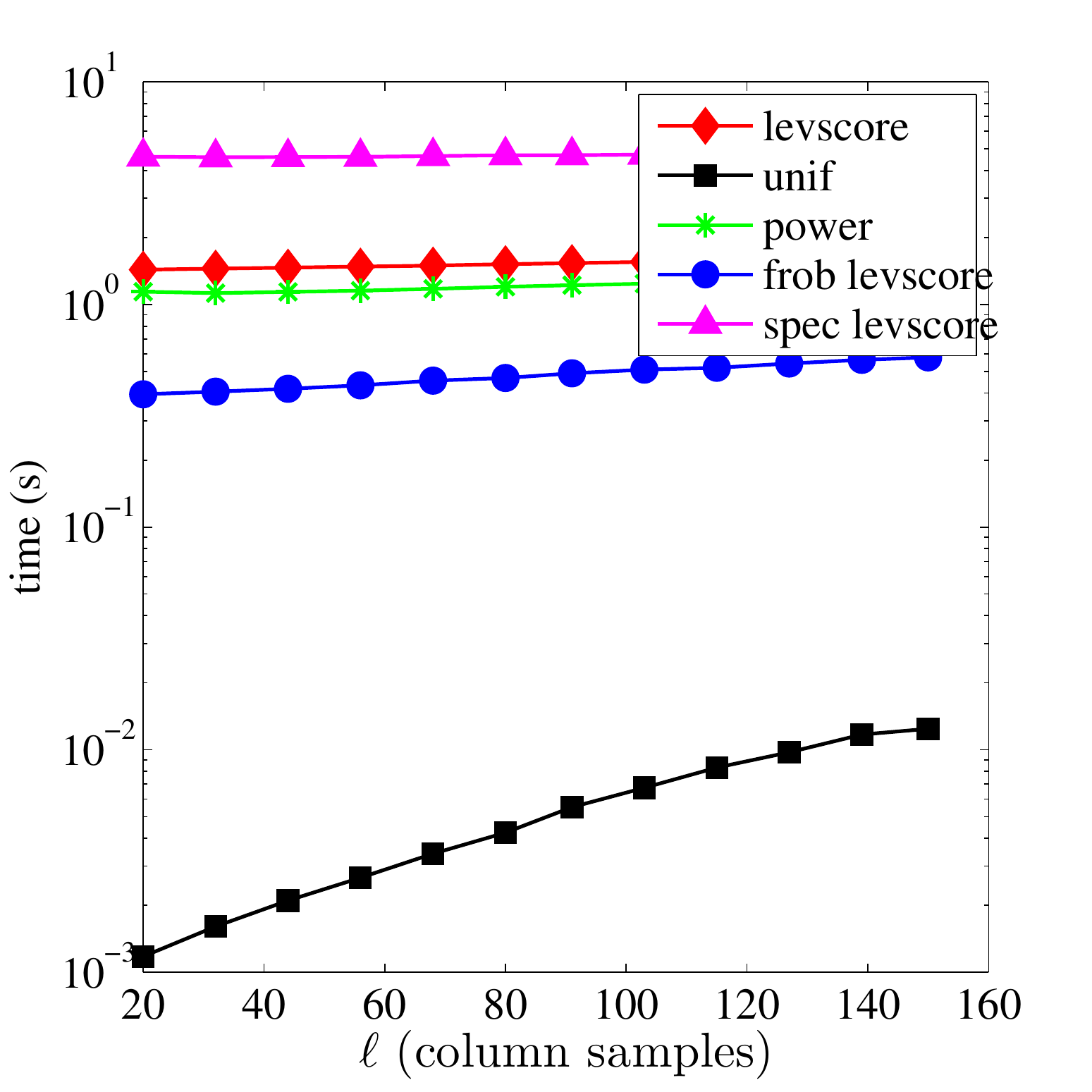}}\\%
 \subfigure[AbaloneS, $\sigma = .15, k = 20$]{\includegraphics[width=1.6in, keepaspectratio=true]{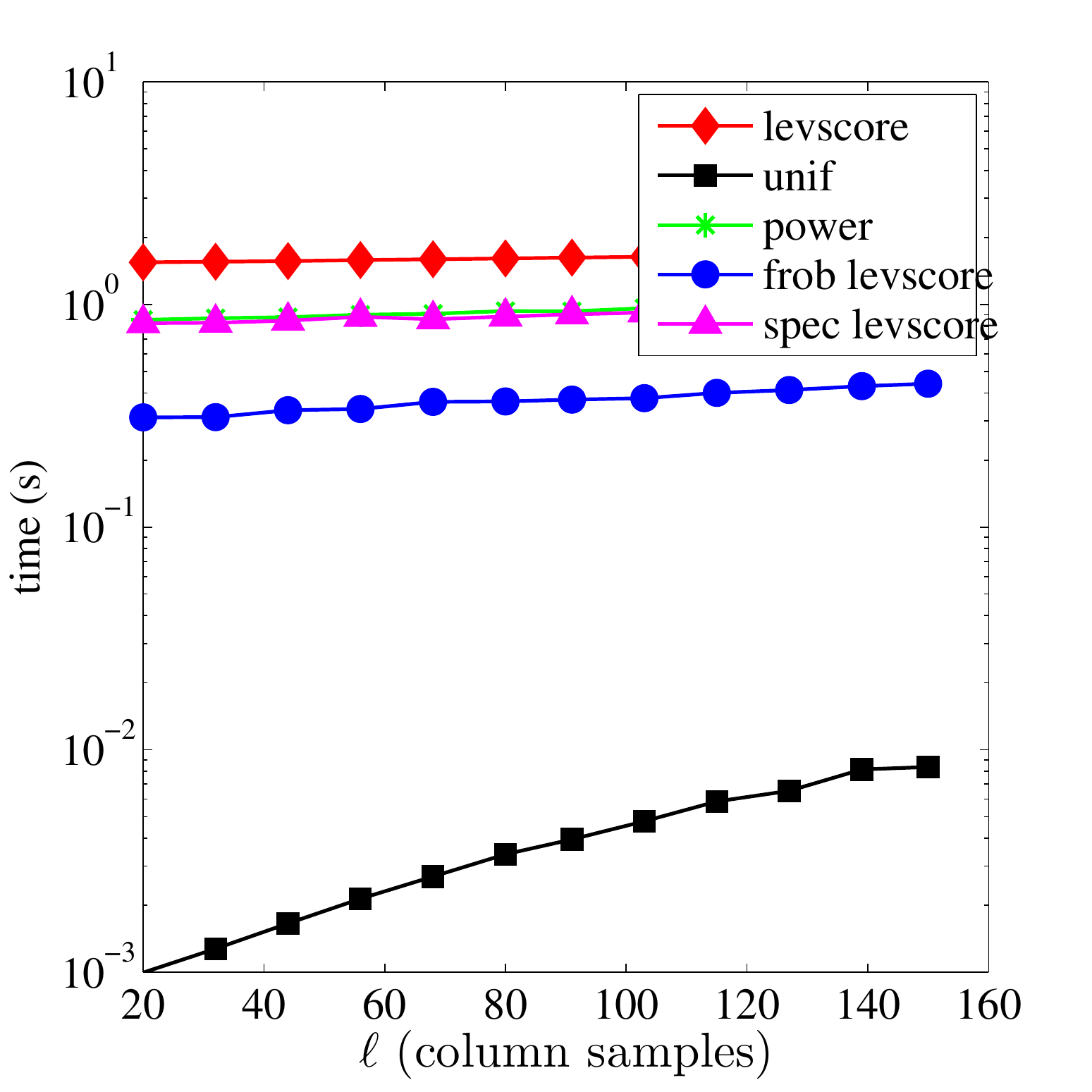}}%
 \subfigure[AbaloneS, $\sigma = 1, k = 20$]{\includegraphics[width=1.6in, keepaspectratio=true]{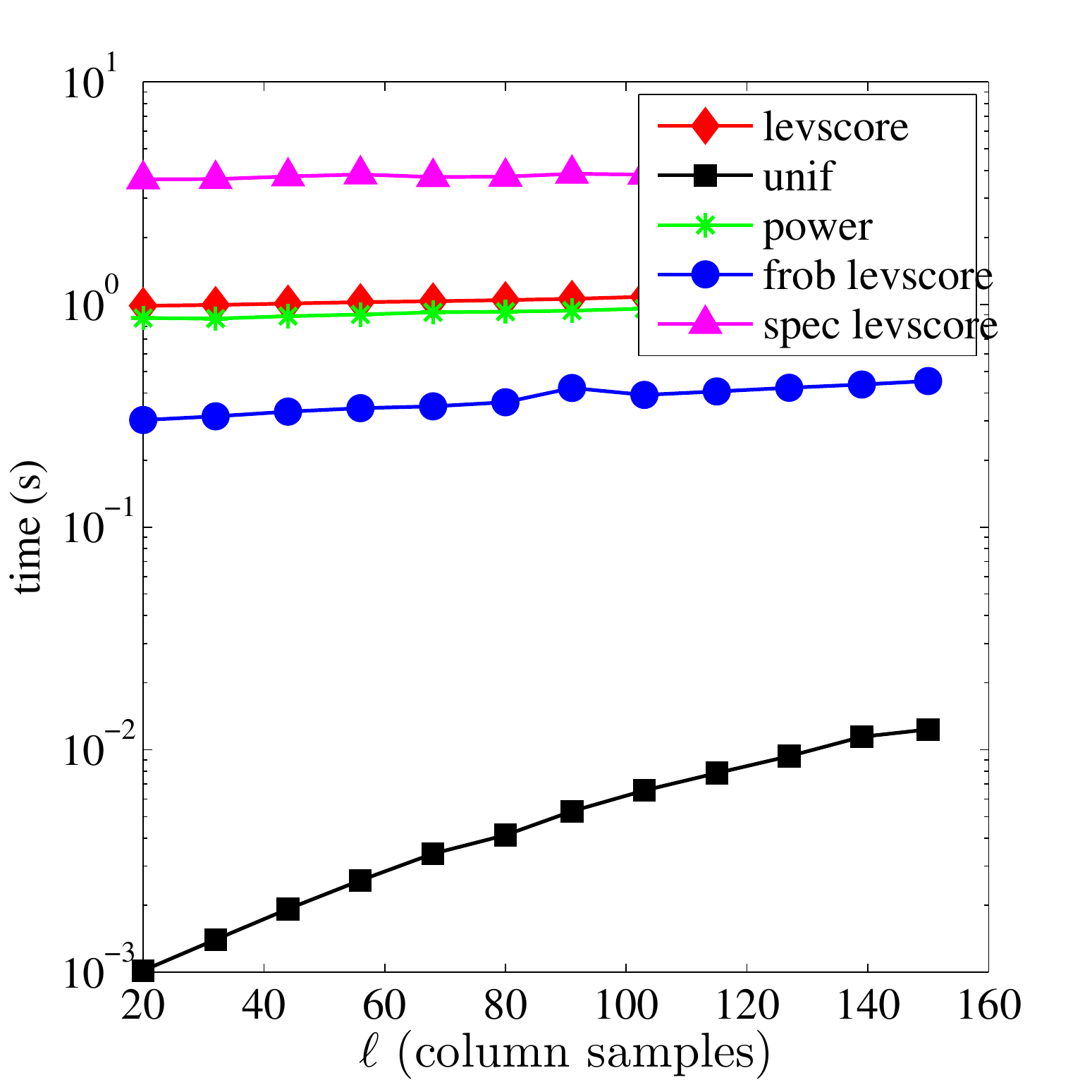}}%
 \subfigure[WineS, $\sigma = 1, k = 20$]{\includegraphics[width=1.6in, keepaspectratio=true]{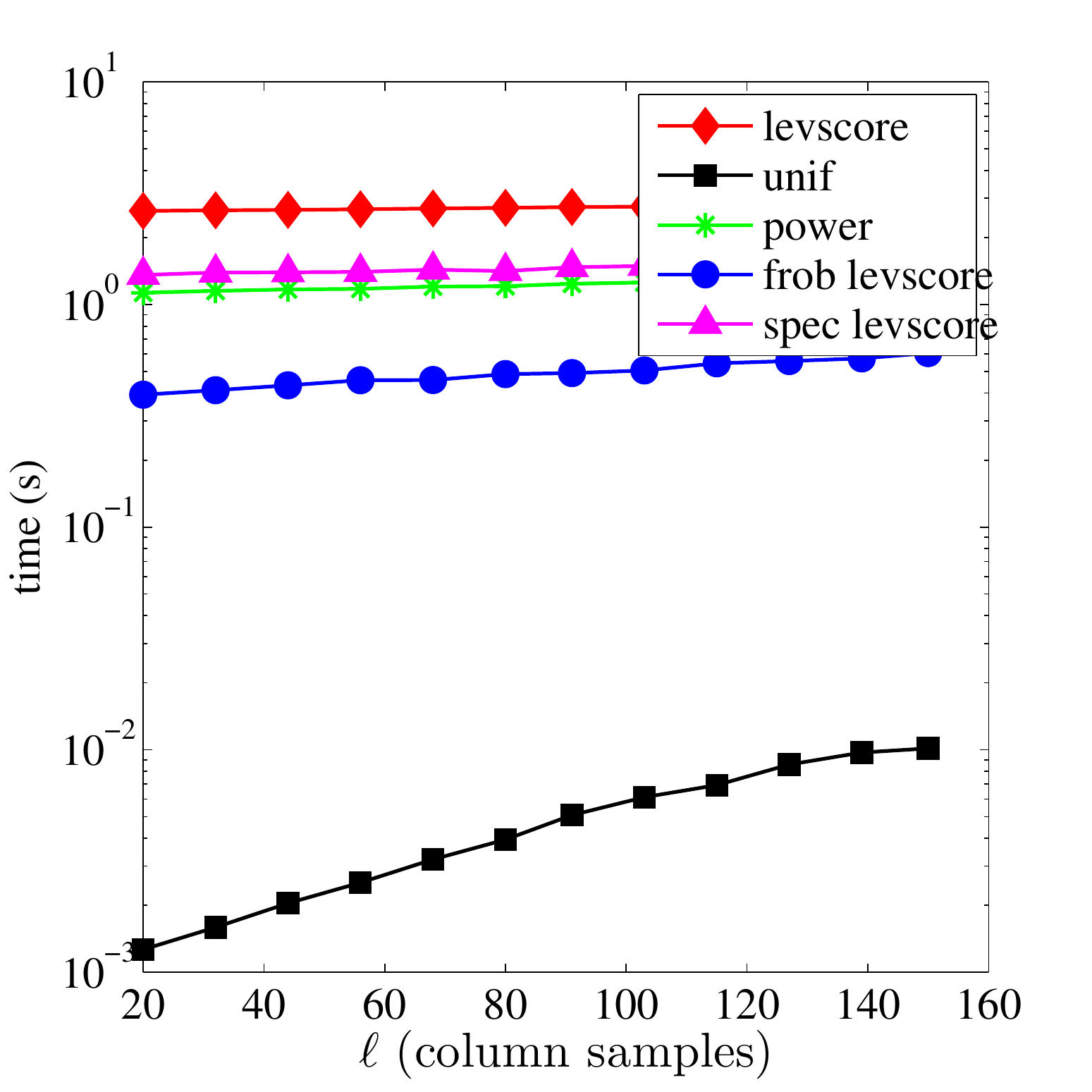}}%
 \subfigure[WineS, $\sigma = 2.1, k = 20$]{\includegraphics[width=1.6in, keepaspectratio=true]{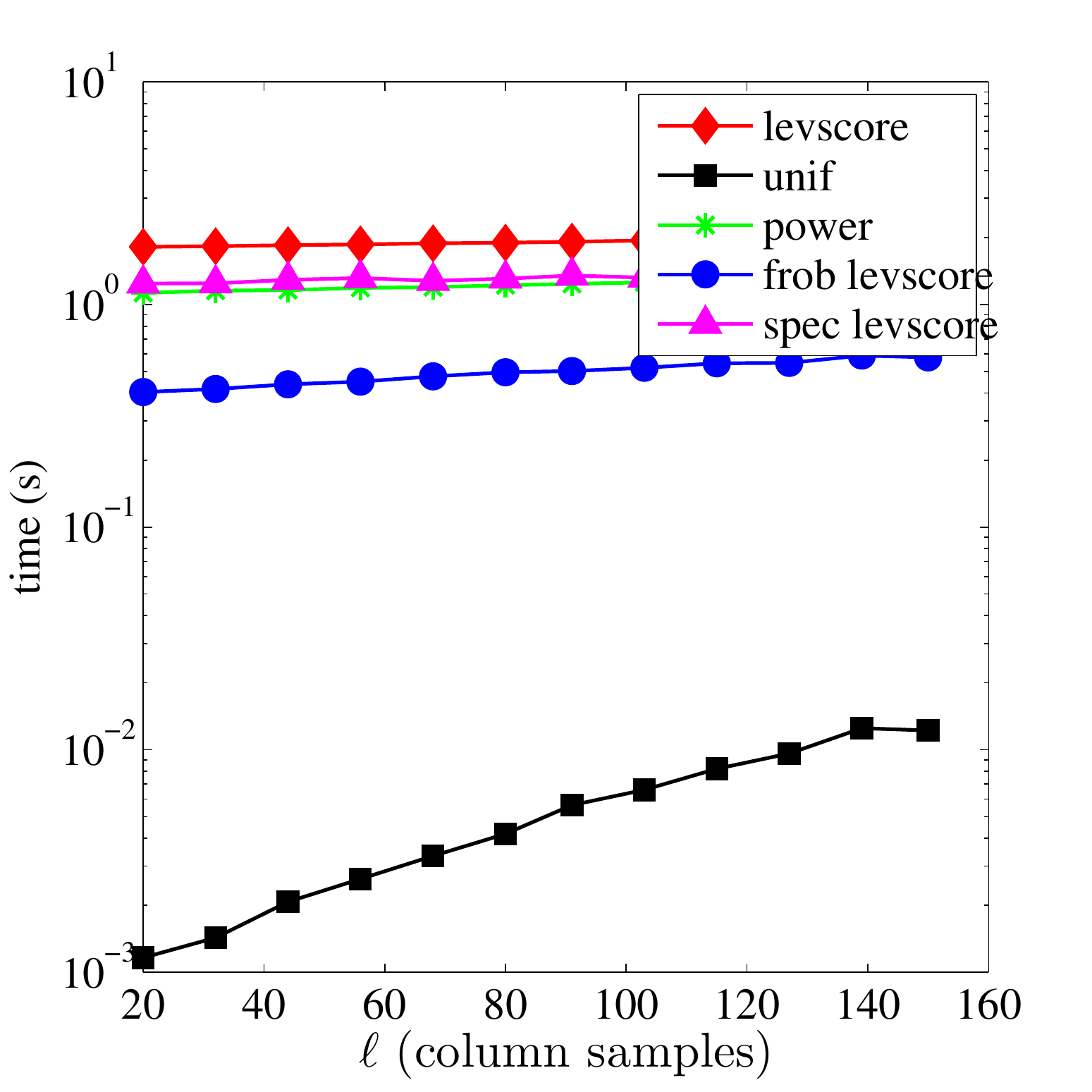}}%
 \caption{The times required to compute the (non-rank-restricted) approximate leverage score-based SPSD sketches, as a function of the number of columns samples $\ell$ for several data sets.
 }%
 \label{fig:inexact-computation-times}
\end{figure}

\begin{figure}[p]
  \centering
  \subfigure[Protein, $k=10$]{\includegraphics[width=1.6in, keepaspectratio=true]{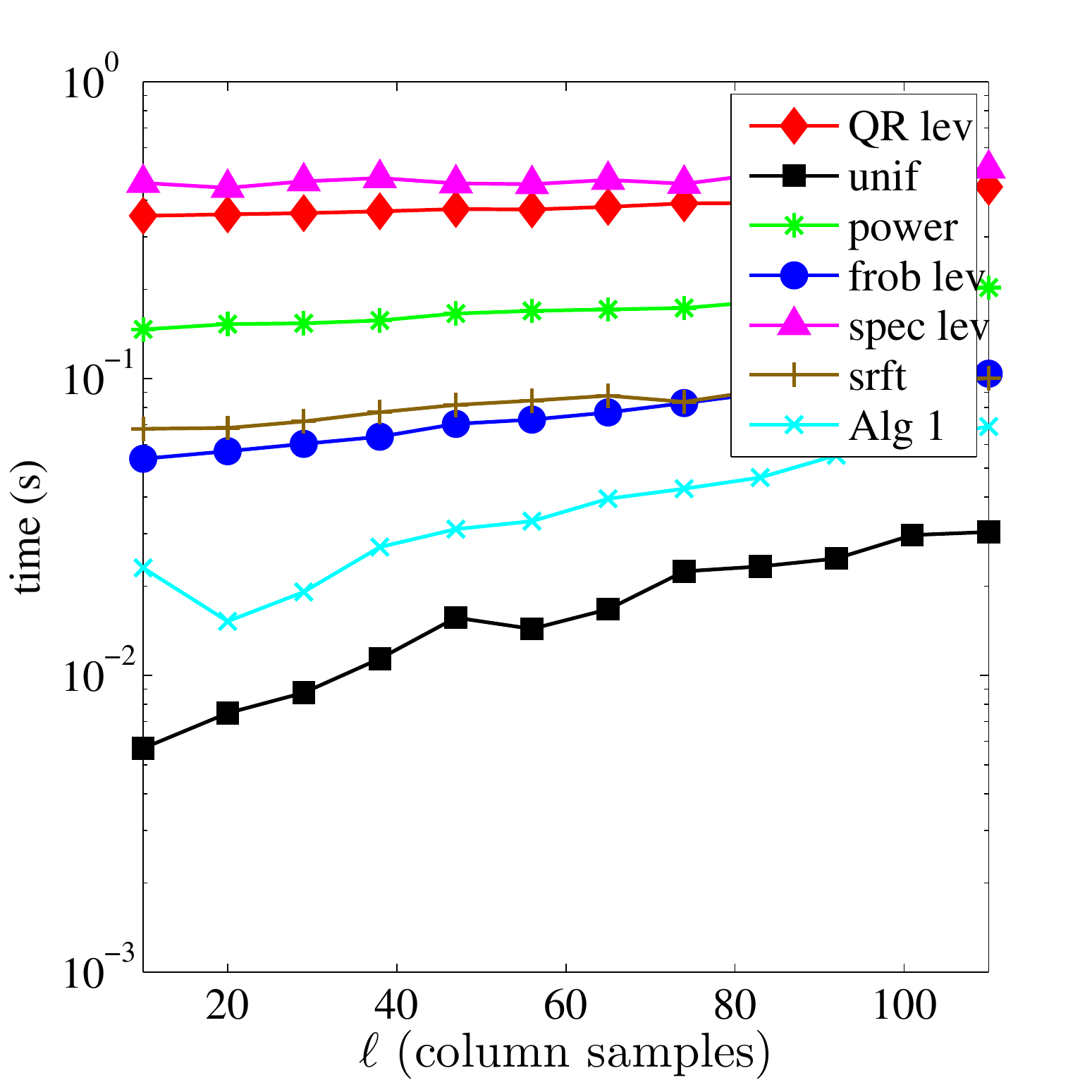}}%
  \subfigure[SNPs, $k=5$]{\includegraphics[width=1.6in, keepaspectratio=true]{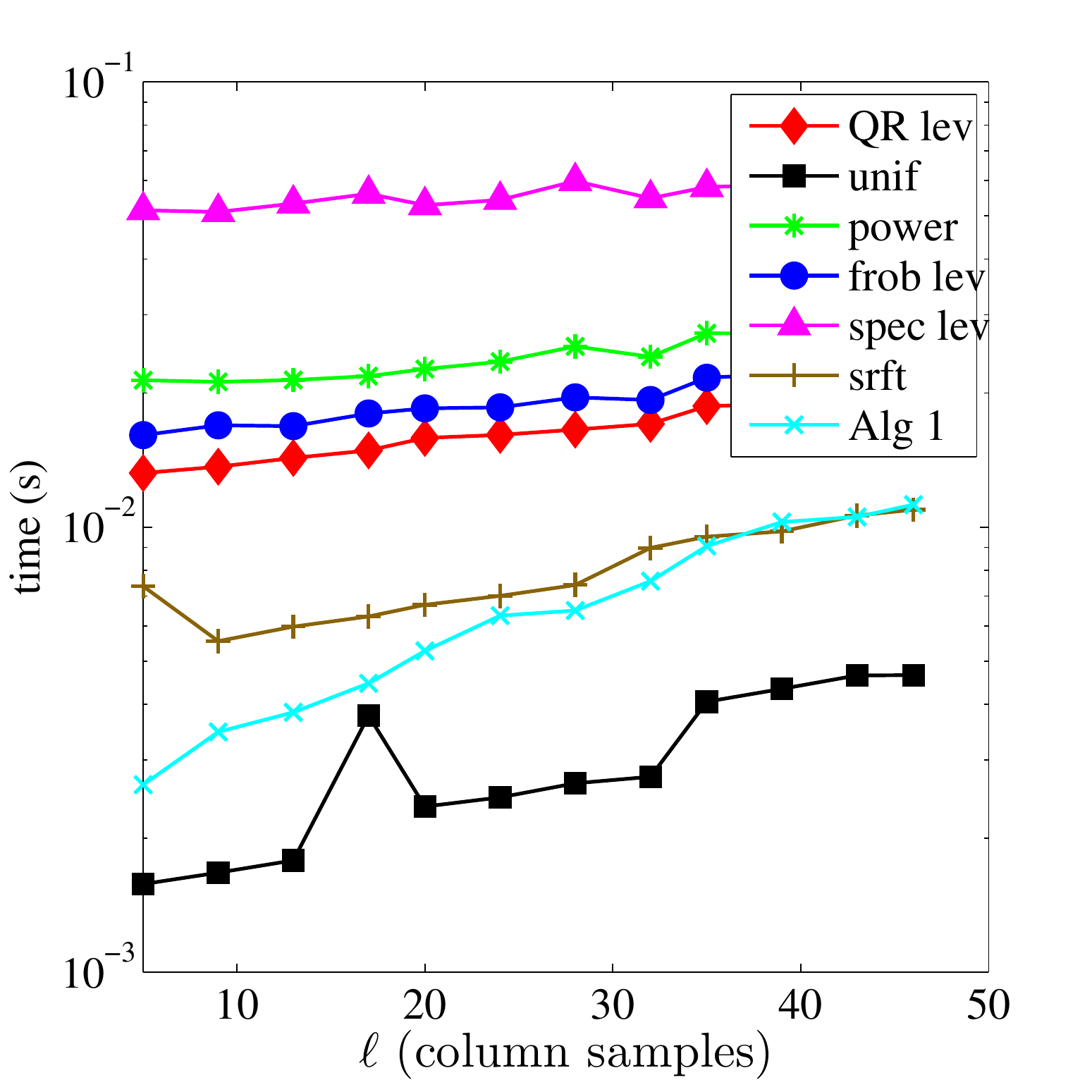}}%
  \caption{The running time of (non-rank-restricted) SPSD sketches computed using Algorithm~\ref{alg:tall_levscore_approx} 
   compared with that of other approximate leverage score-based SPSD sketches, as a function of the number of column samples $\ell$ for two
   Linear Kernel datasets. The parameters in Algorithm~\ref{alg:tall_levscore_approx} were taken to be 
   $r_1 = \epsilon^{-2} \ln(d \delta^{-1}) (\sqrt{d} + \sqrt{\ln(n \delta^{-1})})^2$ and $r_2 = \epsilon^{-2}(\ln n + \ln \delta^{-1} )$ with $\epsilon = 1$ and $\delta = 1/10.$
   }
   \label{fig:tallthin-computation-times}
\end{figure}

\begin{figure}[p]
 \centering
  \subfigure[\mbox{Protein, $k=10$,} \mbox{non-rank-restricted}]{\includegraphics[width=1.6in, keepaspectratio=true]{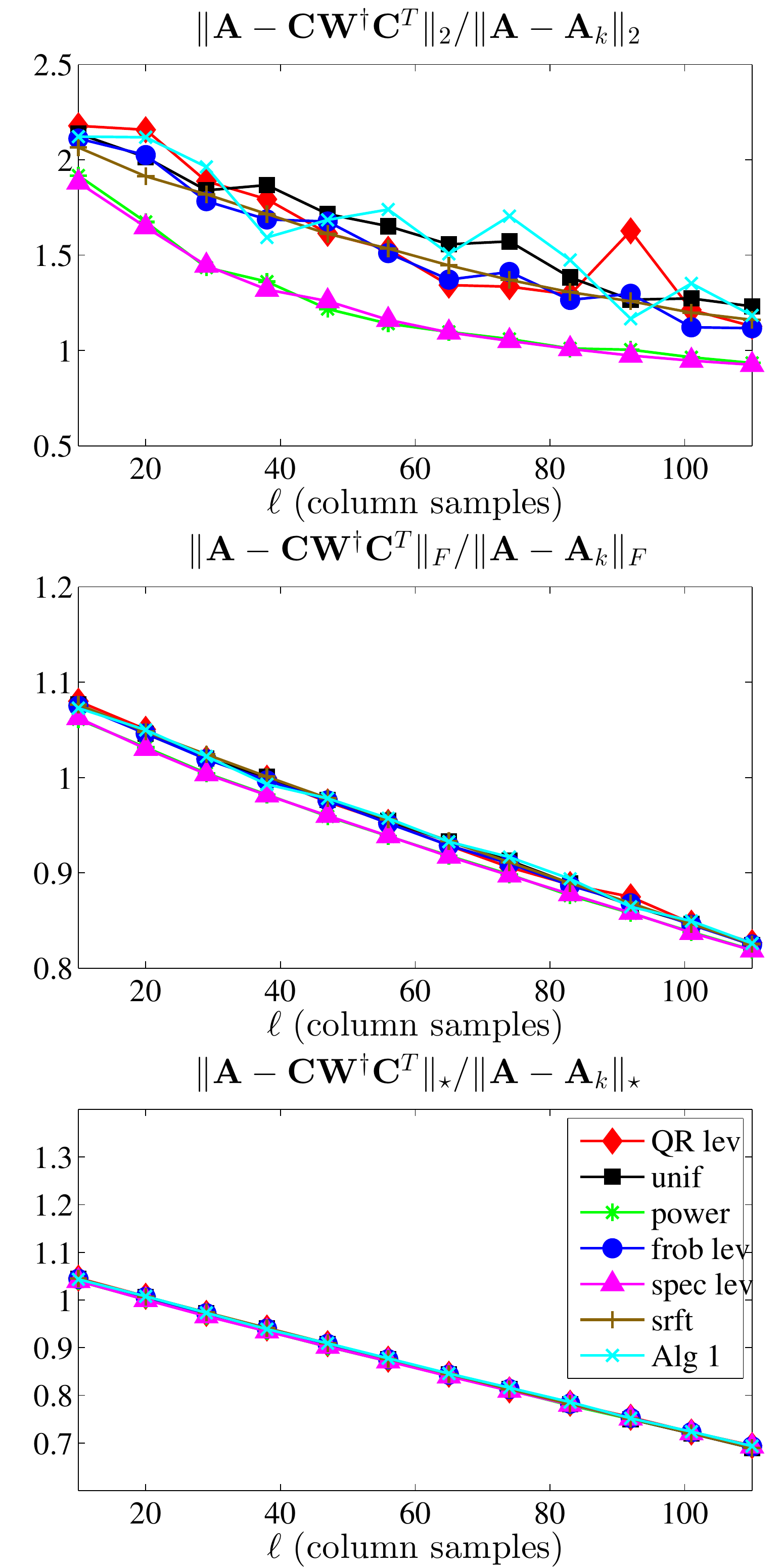}}%
  \subfigure[\mbox{Protein, $k = 10$,} \mbox{rank-restricted}]{\includegraphics[width=1.6in, keepaspectratio=true]{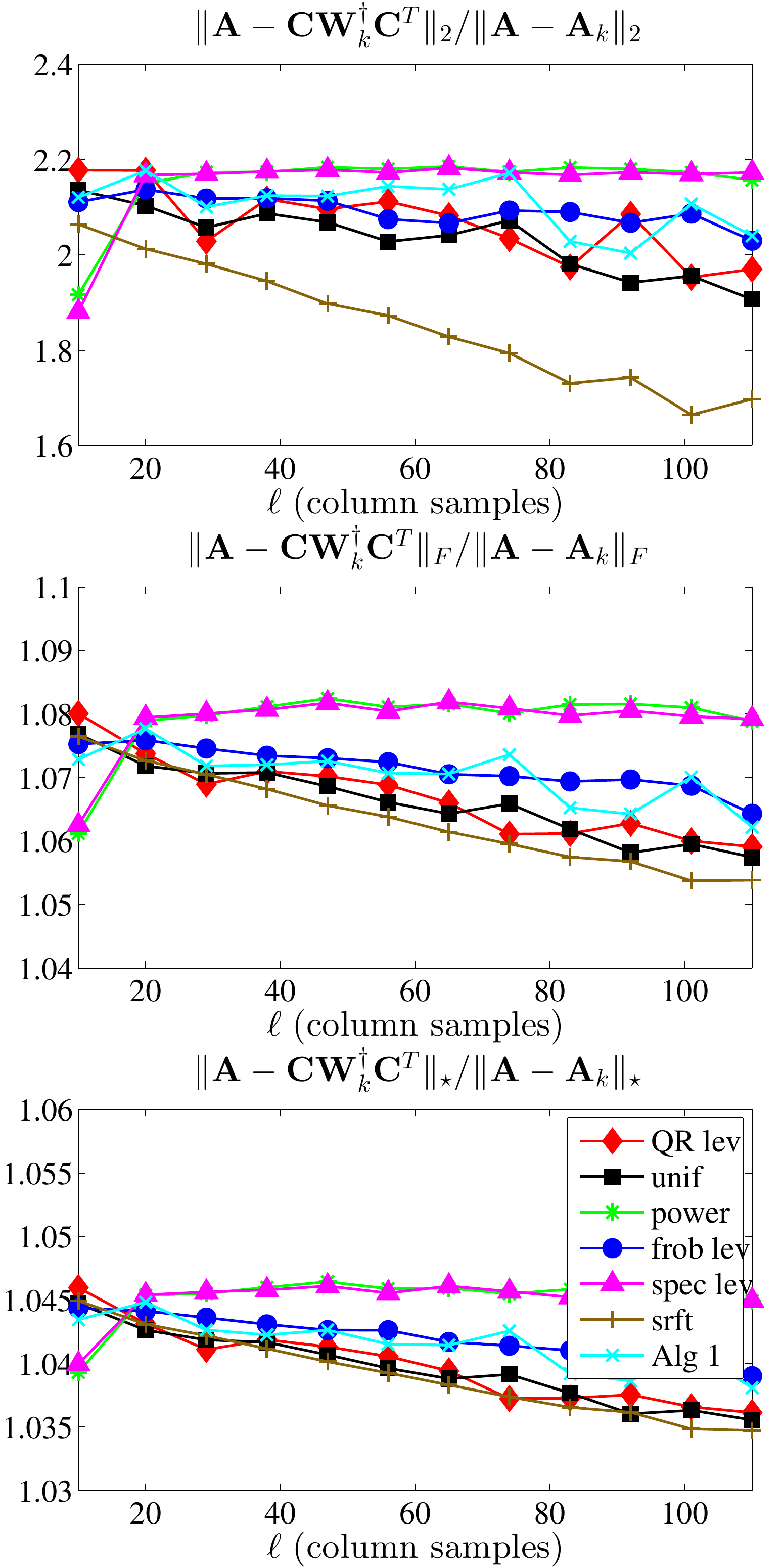}}%
  \subfigure[\mbox{SNPs, $k = 5$,} \mbox{non-rank-restricted}]{\includegraphics[width=1.6in, keepaspectratio=true]{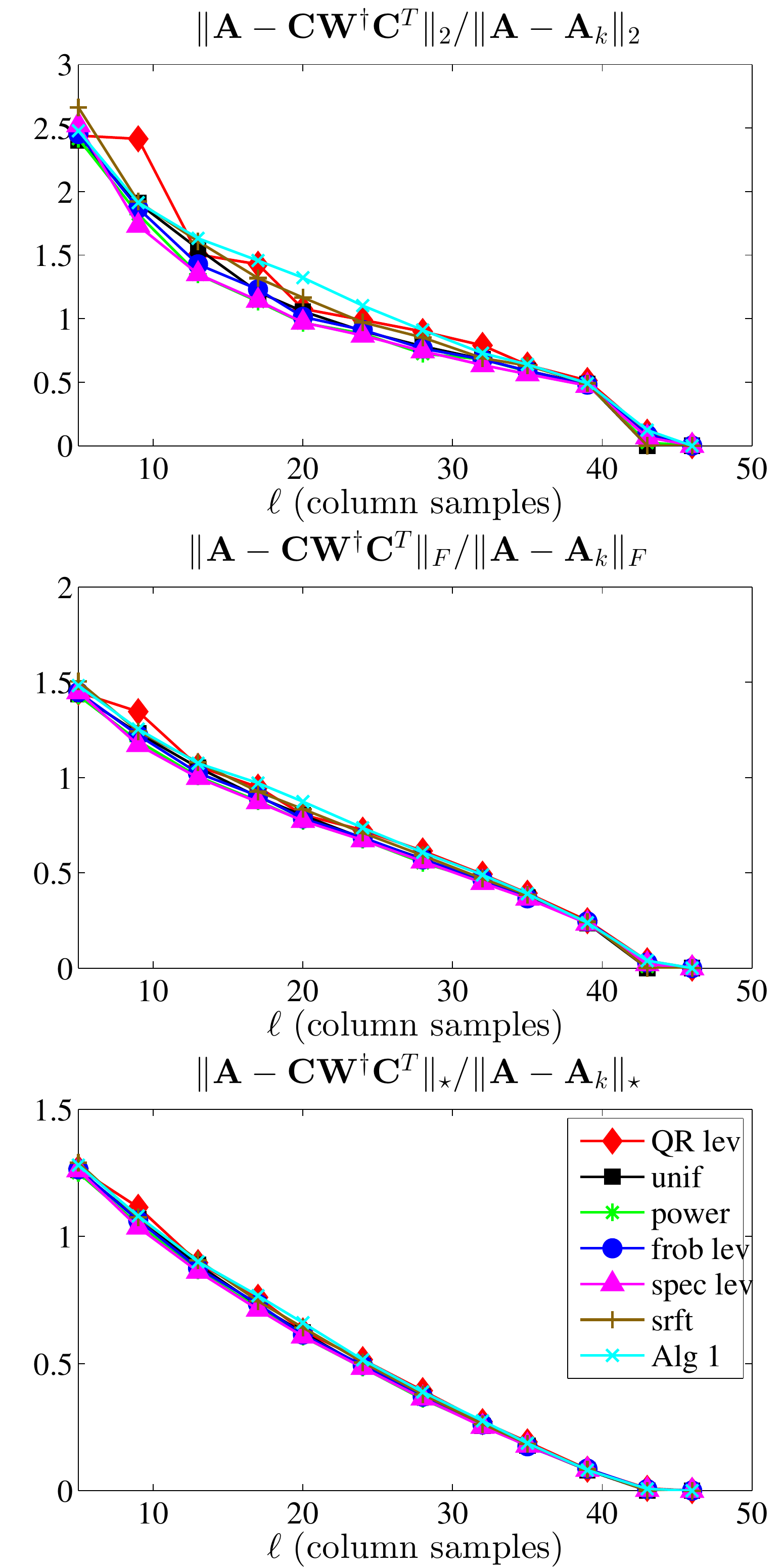}}%
  \subfigure[\mbox{SNPs, $k = 5$,} \mbox{rank-restricted}]{\includegraphics[width=1.6in, keepaspectratio=true]{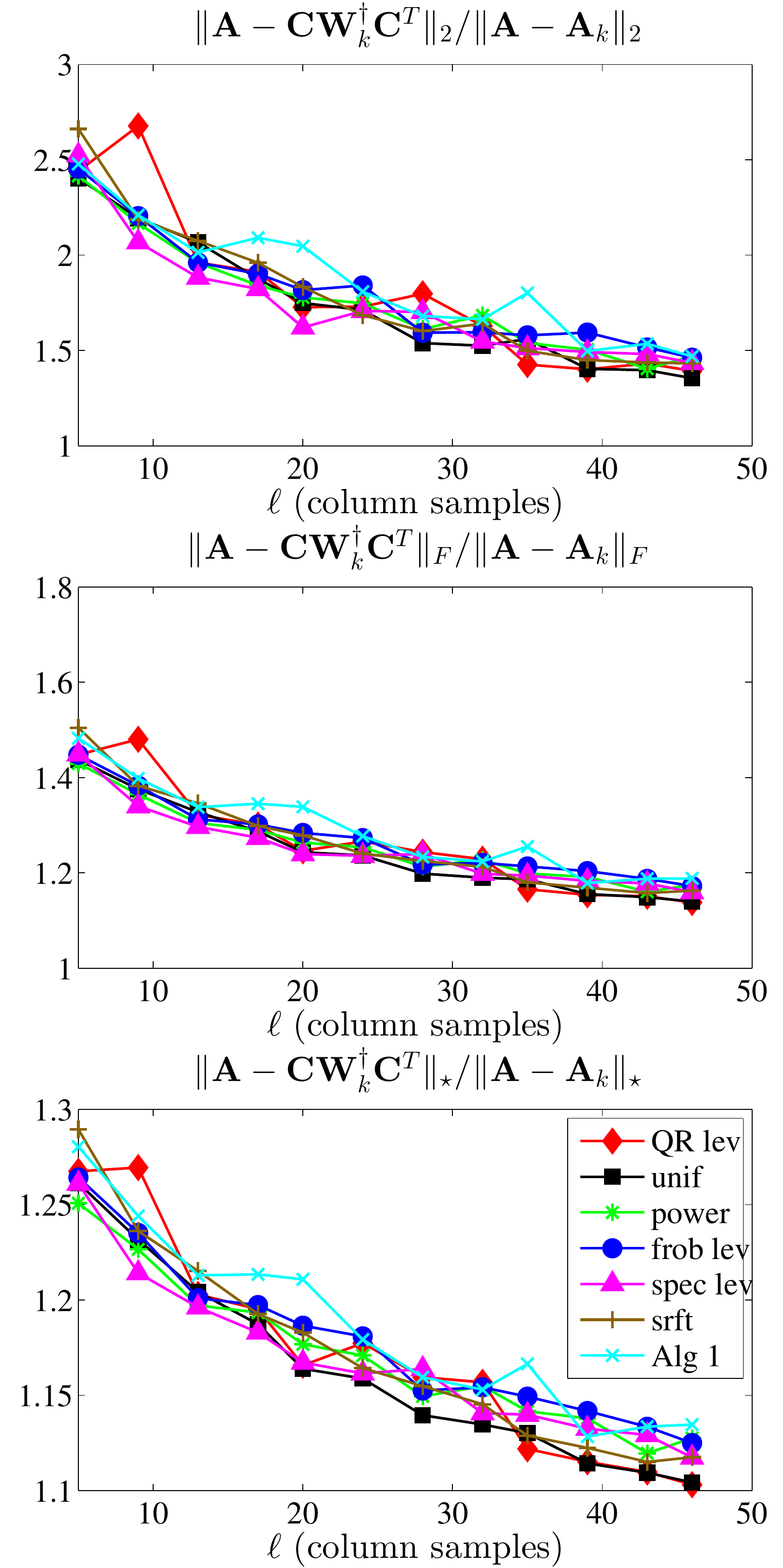}}%
  \caption{The spectral, Frobenius, and trace norm errors 
 (top to bottom, respectively, in each subfigure) of SPSD sketches computed using Algorithm~\ref{alg:tall_levscore_approx} 
 compared with those of other
 approximate leverage score-based sketching schemes, as a function of the number of columns samples $\ell$, 
 for two Linear Kernel data sets. The parameters in Algorithm~\ref{alg:tall_levscore_approx} were taken to be 
   $r_1 = \epsilon^{-2} \ln(d \delta^{-1}) (\sqrt{d} + \sqrt{\ln(n \delta^{-1})})^2$ and $r_2 = \epsilon^{-2}(\ln n + \ln \delta^{-1} )$ with $\epsilon = 1$ and $\delta = 1/10.$
 }%
 \label{fig:tallthin-errors}
\end{figure}

Here, we describe the performances of the various random sampling and 
random projection low-rank sketches considered in 
Section~\ref{sxn:emp-reconstruction} in terms of their running time, where 
the method that involves using the leverage scores to construct the 
importance sampling distribution is implemented both by computing the 
leverage scores ``exactly'' by calling a truncated SVD, as a black box, as 
well as computing them approximately by using one of several versions of 
Algorithm~\ref{alg:frob_levscore_approx}.
Our running time results are presented in 
Figure~\ref{fig:exact-computation-times} and
Figure~\ref{fig:inexact-computation-times}.

We start with the results described in
Figure~\ref{fig:exact-computation-times}, which shows the running times, as 
a function of $\ell$, for the low-rank approximations described 
in Section~\ref{sxn:emp-reconstruction}: \emph{i.e.}, for column sampling 
uniformly at random without replacement; for column sampling according to the 
exact nonuniform leverage score probabilities; and for sketching using 
Gaussian and SRFT mixtures of the columns.
Several observations are worth making about the results presented in this 
figure.
\begin{itemize}
\item
Uniform sampling is always less expensive and typically much less 
expensive than the other methods, while (with one minor exception) sampling 
according to the \emph{exact} leverage scores is always the most expensive
method.
\item
For most matrices, using the SRFT is nearly as expensive as exact leverage 
score sampling.
This is most true for the very sparse graph Laplacian Kernels, largely since 
the SRFT does not respect sparsity.
The main exception to this is for the dense and relatively well-behaved
Linear Kernels, where especially for large values of $\ell$ the SRFT is quite
fast and usually not too much more expensive than uniform sampling.
\item
The ``fast Fourier'' methods underlying the SRFT can take advantage of the
structure of the Linear Kernels to yield algorithms 
that are similar to Gaussian projections and much better than exact leverage 
score computation.
Note that the reason that SRFT is worse than Gaussians here is that the 
matrices we are considering are \emph{not} extremely large, and we are not 
considering very large values of the rank parameter. 
Extending in both those directions leads to Gaussian projections being
slower than SRFT, as the trends in the figures clearly indicate.
\item
Gaussian projections are not too much slower than uniform sampling for the
extremely sparse Laplacian Kernels---this is due to the sparsity of the 
Laplacian Kernels, since Gaussian projections can take advantage of the fast 
matrix-vector multiply, while the SRFT-based scheme cannot---but this 
advantage is lost for the (denser) Sparse RBF Kernels, to the extent that 
there is little running time improvement relative to the Dense RBF Kernels.
In addition, Gaussian projections are relatively slower, when compared to 
the SRFT and uniform sampling, for the Dense RBF Kernels than for the 
Linear Kernels, although both of those data sets are maximally~dense.
\end{itemize}

\noindent
We next turn to the results described in 
Figure~\ref{fig:inexact-computation-times}, which shows the running times, 
as a function of $\ell$, for several variants of approximate leverage-based
sampling. 
For ease of comparison, the timings for uniform sampling (``unif'') and 
exact leverage score sampling (``levscore'') are depicted in 
Figure~\ref{fig:inexact-computation-times} using the same shading as used in 
Figure~\ref{fig:exact-computation-times}. 
In addition to these two baselines, 
Figure~\ref{fig:inexact-computation-times} shows running time results for 
the following three variants of approximate leverage score sampling:
``frob levscore'' (which is Algorithm~\ref{alg:frob_levscore_approx} with $q=0$ 
and $r=2k$);
``spec levscore'' (Algorithm~\ref{alg:frob_levscore_approx} with $q=4$ and 
$r=2k$); and ``power''. 
The ``power'' scheme is a version of 
Algorithm~\ref{alg:frob_levscore_approx} where $r = k$ and $q$ is determined
by monitoring the convergence of the leverage scores of 
$\mat{A}^{2q+1} \boldsymbol{\Pi}$ and terminating when the change in the leverage 
scores between iterations, as measured in the infinity norm, is smaller than 
$10^{-2}.$ 
This is simply a version of subspace iteration with a convergence criterion 
appropriate for the task at hand.  
Since ``frob levscore'' requires one application of an SRFT, its timing results 
are depicted using the same shade as the SRFT timing results in 
Figure~\ref{fig:exact-computation-times}.
(There are no other correspondences between the shadings in the two figures.)
Several observations are worth making about the results presented in 
this~figure.

\begin{itemize}
\item
These approximate leverage score-based algorithms can be orders of magnitude 
faster than exact leverage score computation; but, especially for 
``spec levscore'' when $q$ is not prespecified to be $2$ or $3$, they can 
even be somewhat slower. 
Exactly which is the case depends upon the properties of the matrix and the 
parameters used in the approximation algorithm, including especially the 
number of power iterations.
\item
The ``frob levscore'' approximation method has running time comparable to the
running time of the SRFT, which is expected, given that the computation of 
the SRFT is the 
theoretical bottleneck for the running time of the ``frob levscore'' algorithm.
In particular, for larger values of $\ell$ for Linear Kernels, ``frob levscore'' is not much slower than uniform sampling.
\item
The ``spec levscore'' and ``power'' approximations with $q>0$ are more 
expensive than the $q=0$ ``frob lev'' approximation, which is a result of 
the relatively-expensive matrix-matrix multiplication.  
For the Linear Kernels, both are much better than the exact leverage score
computation, and for most other data at least ``power'' is somewhat less
expensive than the exact leverage score computation.
For example, this is particularly true for the Laplacian Kernels.
%\item
%Because of the stopping rule determining the number $q$ of iterations, 
%``spectral lev'' can be even \emph{more} computationally expensive than the 
%exact leverage score computation.
%This phenomenon is most severe for the Dense RBF Kernels; and note that, 
%interestingly, sparsifying these kernels tends to improve the situation.
\end{itemize}

\noindent
Recall that the cost associated with these SPSD sketches
is two-fold:
first, the cost to construct the sample---by sampling columns uniformly at 
random, by computing a nonuniform importance sampling distribution, or by 
performing a random projection to uniformize the leverage scores; and 
second, the cost to construct the low-rank approximation from the 
sample.
For uniform sampling, the latter step dominates the cost, while for more 
sophisticated methods the former step typically dominates the cost.
The approximate leverage score sampling methods
are still sufficiently expensive that the cost of computing the 
sampling probabilities still dominates the cost to construct the low-rank 
approximation.

Finally, 
Algorithm~\ref{alg:tall_levscore_approx} can be used to approximate quickly 
the leverage scores of matrices of the form $\matA = \matX\matX^\transp$, 
when $\matX \in \R^{n \times d}$ is a rectangular matrix of sufficent aspect 
ratio, and in such cases it is faster than 
Algorithm~\ref{alg:frob_levscore_approx}. 
Specifically, for the first dimensional reduction step in 
Algorithm~\ref{alg:tall_levscore_approx} to be beneficial (\emph{i.e.}, to 
ensure $r_1 < n$), the condition $n = \Omega(d \ln d)$ is necessary; 
for the second dimensional reduction step to be beneficial (\emph{i.e.}, to 
ensure $r_2 < d$), the condition $d = \Omega(\ln n)$ must be satisfied. 
Figure~\ref{fig:tallthin-computation-times} summarizes our main results for
the run time of Algorithm~\ref{alg:tall_levscore_approx} applied to 
rectangular matrices with $n \gg d$.
Among other things, 
Figure~\ref{fig:tallthin-computation-times} illustrates, using the Linear 
Kernel datasets Protein and SNPs (which satisfy these constraints), two 
points.
\begin{itemize}
\item
Most importantly, the running time of 
Algorithm~\ref{alg:tall_levscore_approx} on these rectangular matrices is
faster than performing a QR decomposition on $\matA$ and is comparable to 
applying a SRFT to $\matA$. 
This is expected, since the running time bottleneck for 
Algorithm~\ref{alg:tall_levscore_approx} is the application of the SRFT.
\item
In addition, the running time of Algorithm~\ref{alg:tall_levscore_approx} 
is significantly faster than the other approximate leverage score algorithms. 
This too is expected, since these other algorithms are applied to $\matA$
and ignore the rectangular structure of $\matX$.
\end{itemize}
Figure~\ref{fig:tallthin-errors} shows that these improved running time 
gains for Algorithm~\ref{alg:tall_levscore_approx} can come at the cost of a 
slight loss in the reconstruction accuracy (relative to the
exact computation of the leverage scores) of the low-rank approximations; 
the accuracy of the other approximate leverage score algorithms is discussed 
in the following subsection.

\subsubsection{Reconstruction Accuracy Results}

Here, we describe the performances of the various low-rank 
approximations that use approximate leverage scores in terms of reconstruction 
accuracy for the data sets described in Section~\ref{sxn:emp-datasets}.
The results are presented in 
Figure~\ref{fig:laplacian-inexact-nonfiltered-errors}
through Figure~\ref{fig:sparserbf-inexact-nonfiltered-errors}.
The setup for these results parallels that for the 
low-rank approximation results described in 
Section~\ref{sxn:emp-reconstruction}, and these figures parallel 
Figure~\ref{fig:laplacian-exact-errors-1} through
Figure~\ref{fig:sparserbf-exact-errors}.
To provide a baseline for the comparison, we also plot the previous 
reconstruction errors for sampling with the exact leverage scores as well
as the uniform column sampling sketch.
Several observations are worth making about the results presented in 
these~figures.

\begin{figure}[p]
 \centering
 \subfigure[GR, $k = 20$]{\includegraphics[width=1.6in, keepaspectratio=true]{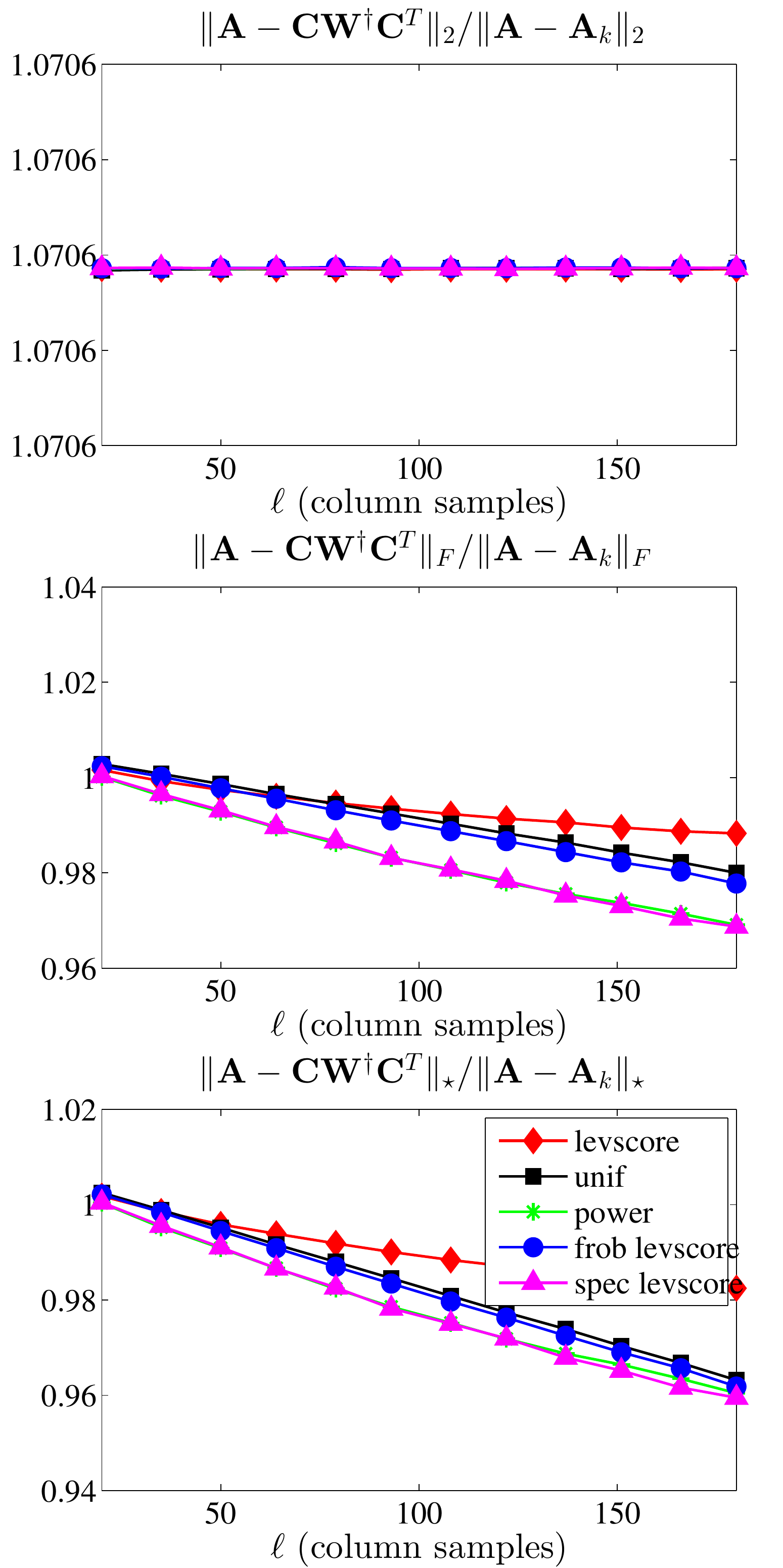}}%
 \subfigure[GR, $k = 60$]{\includegraphics[width=1.6in, keepaspectratio=true]{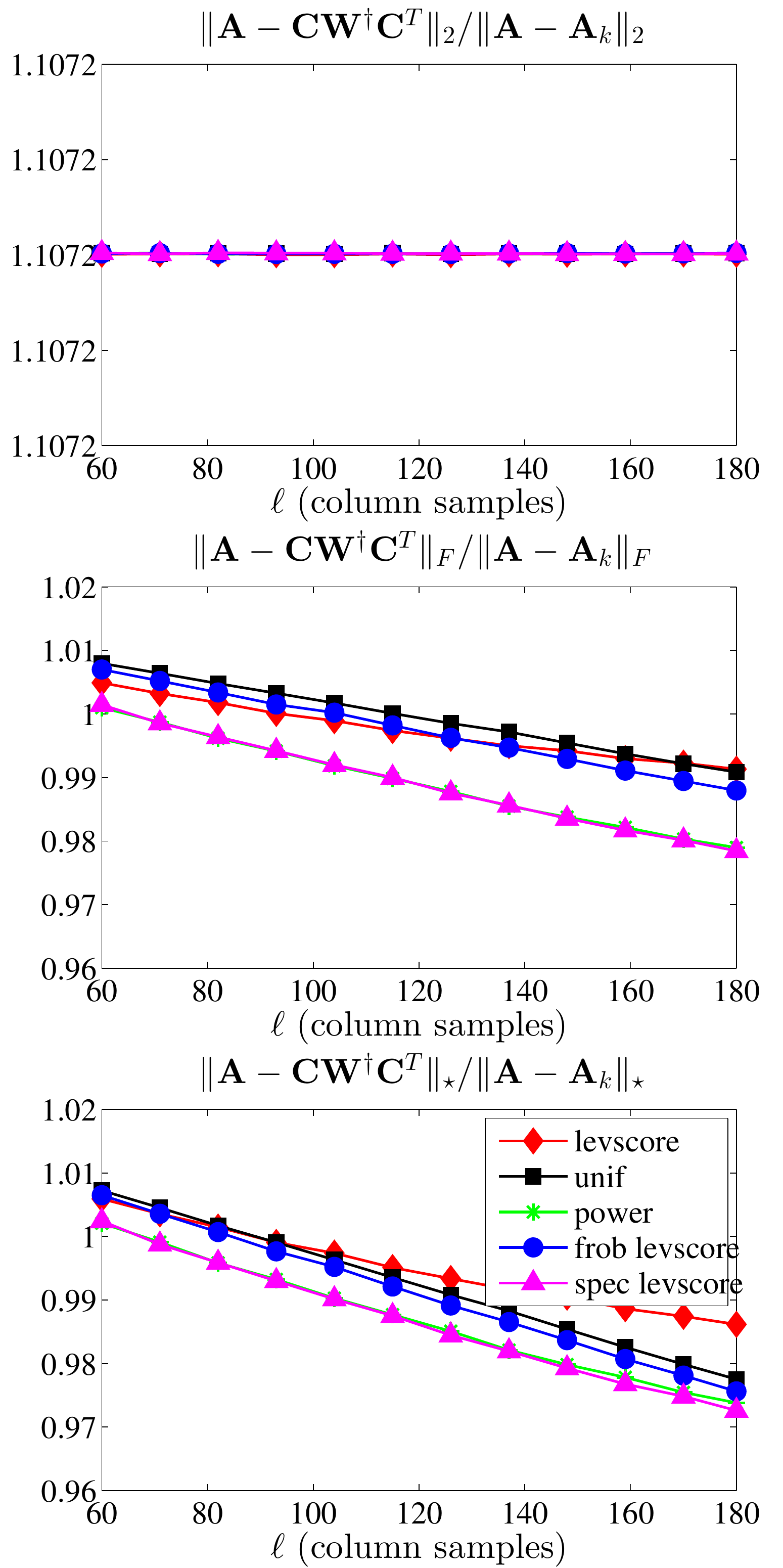}}%
 \subfigure[HEP, $k = 20$]{\includegraphics[width=1.6in, keepaspectratio=true]{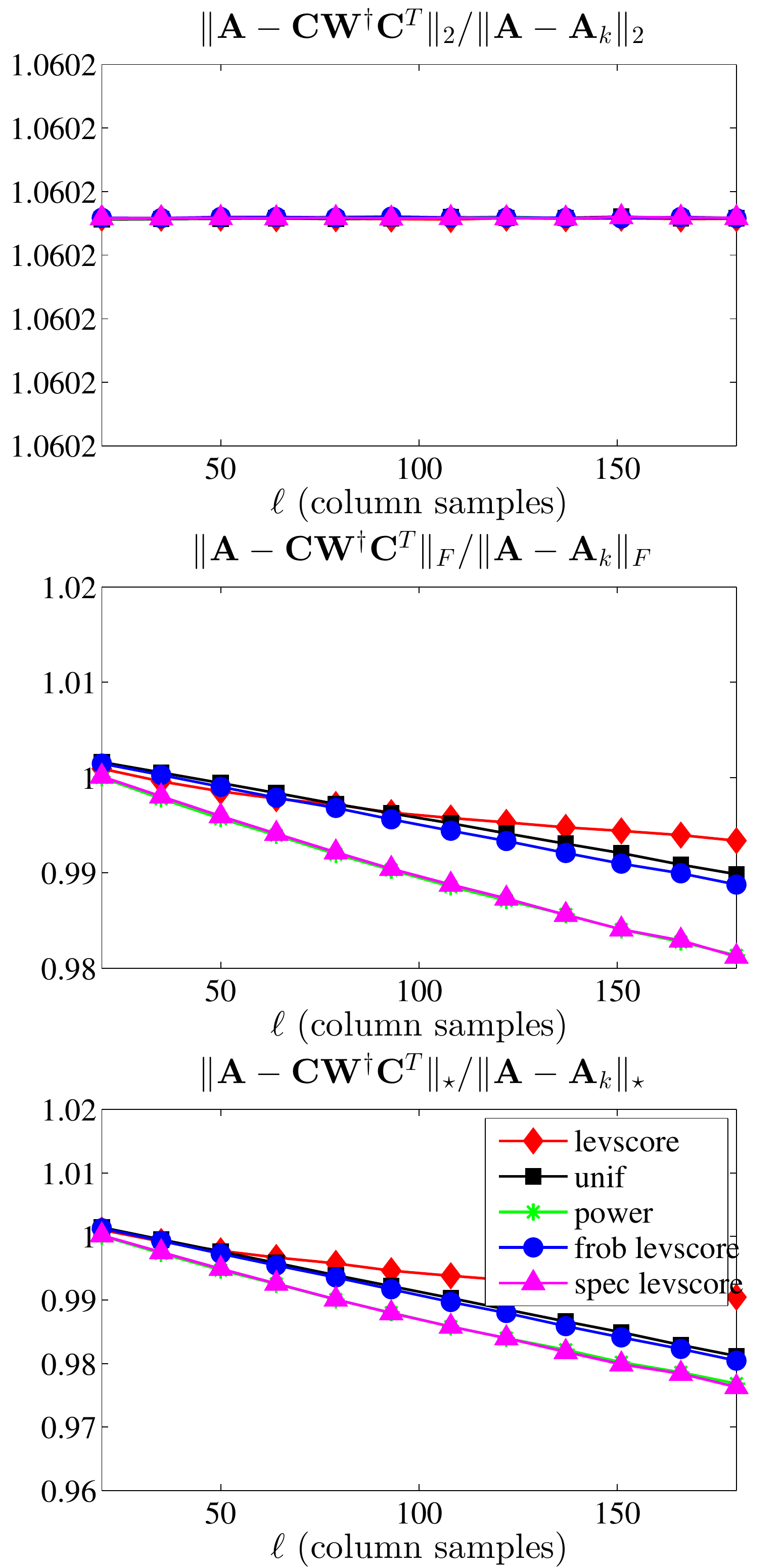}}%
 \subfigure[HEP, $k = 60$]{\includegraphics[width=1.6in, keepaspectratio=true]{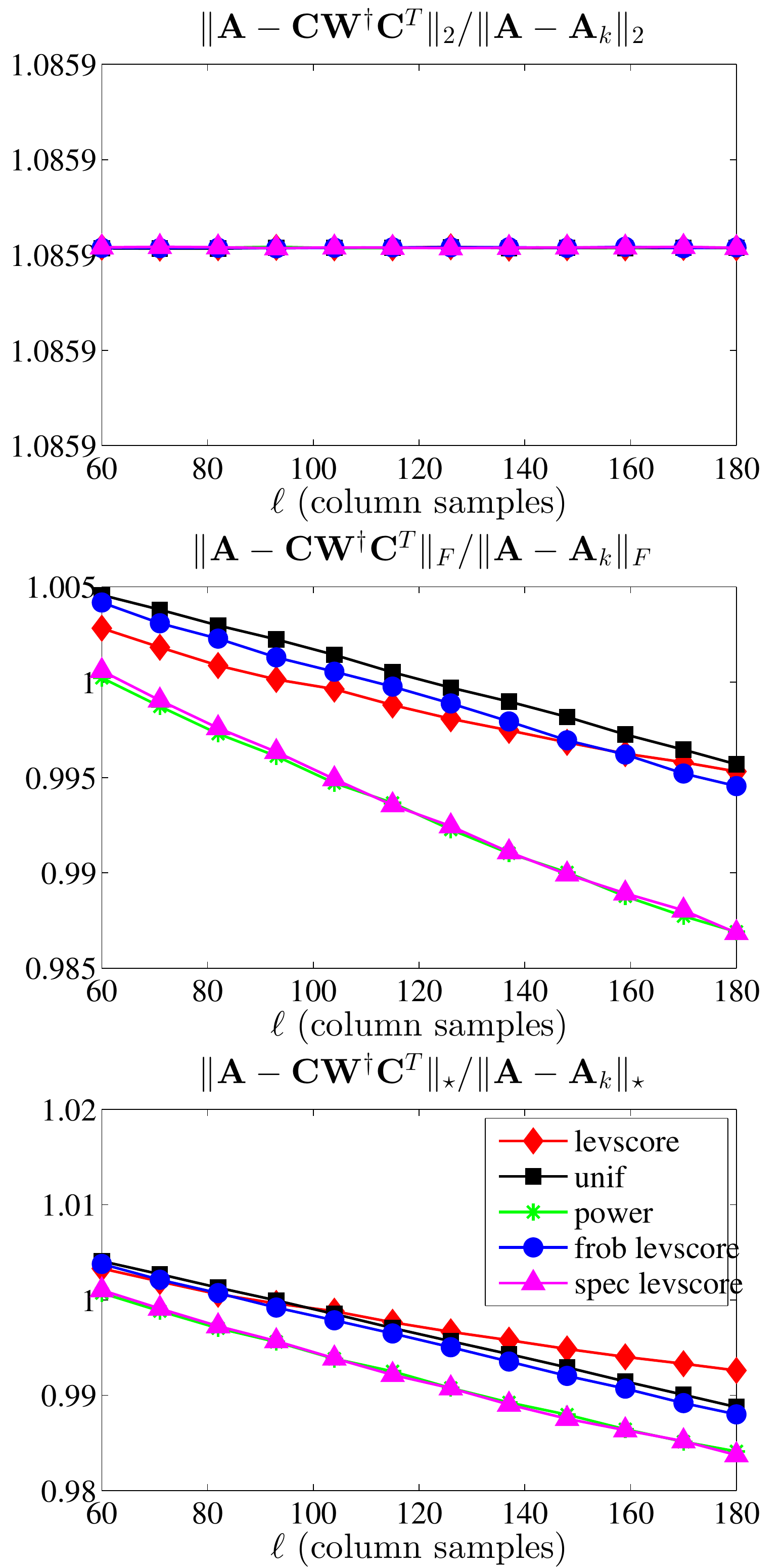}}%
 \\%
 \subfigure[GR, $k = 20$]{\includegraphics[width=1.6in, keepaspectratio=true]{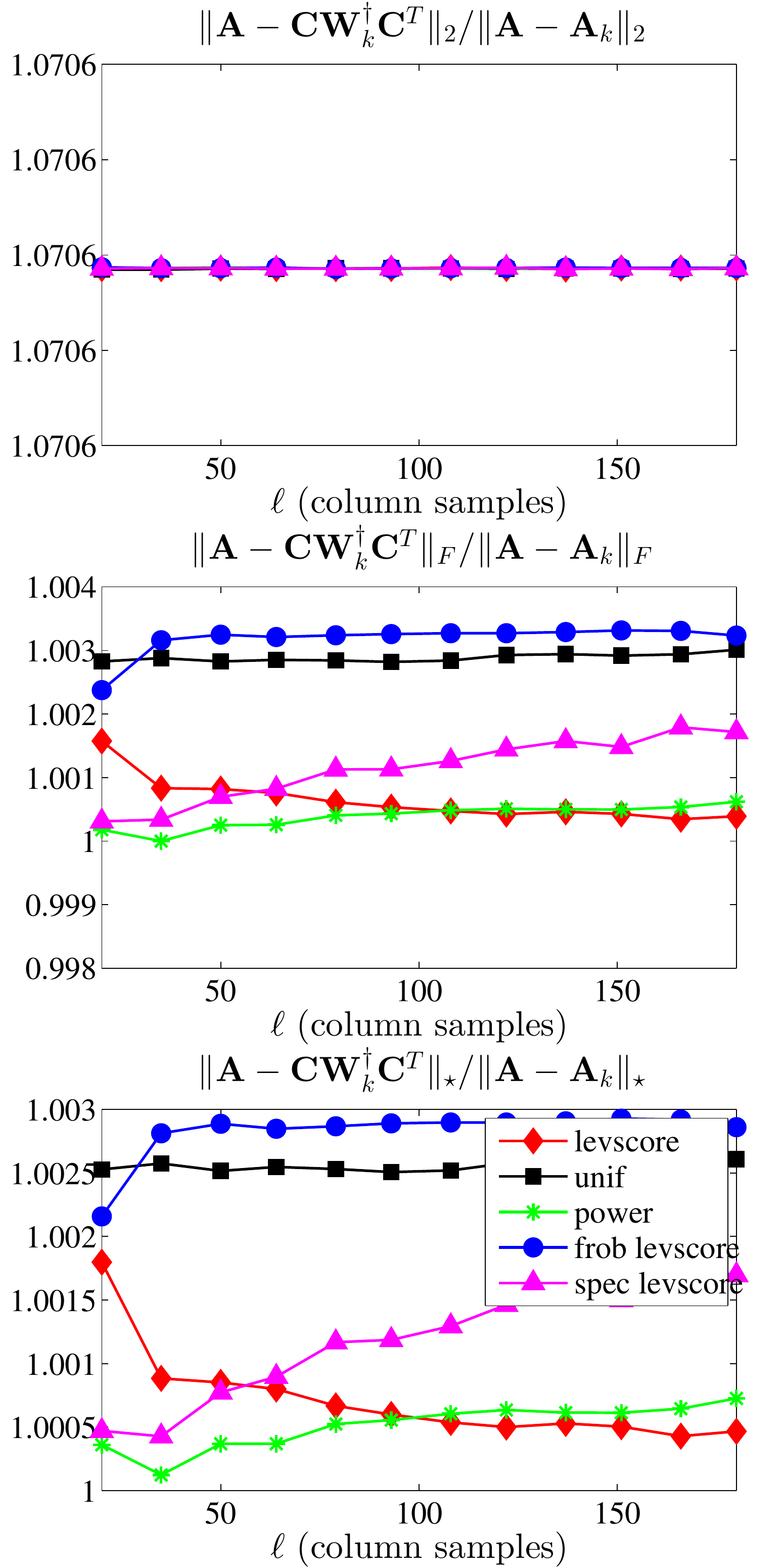}}%
 \subfigure[GR, $k = 60$]{\includegraphics[width=1.6in, keepaspectratio=true]{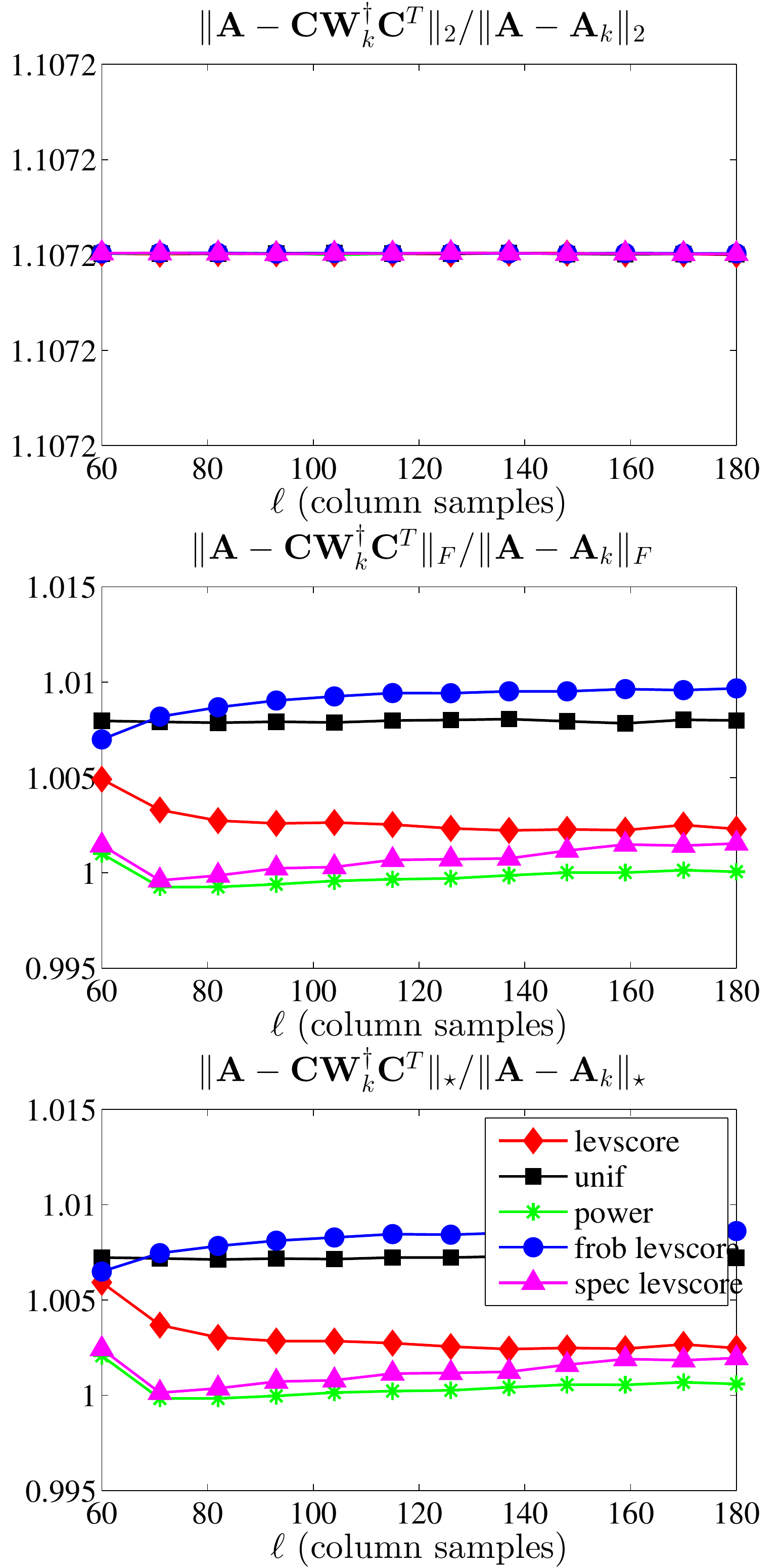}}%
 \subfigure[HEP, $k = 20$]{\includegraphics[width=1.6in, keepaspectratio=true]{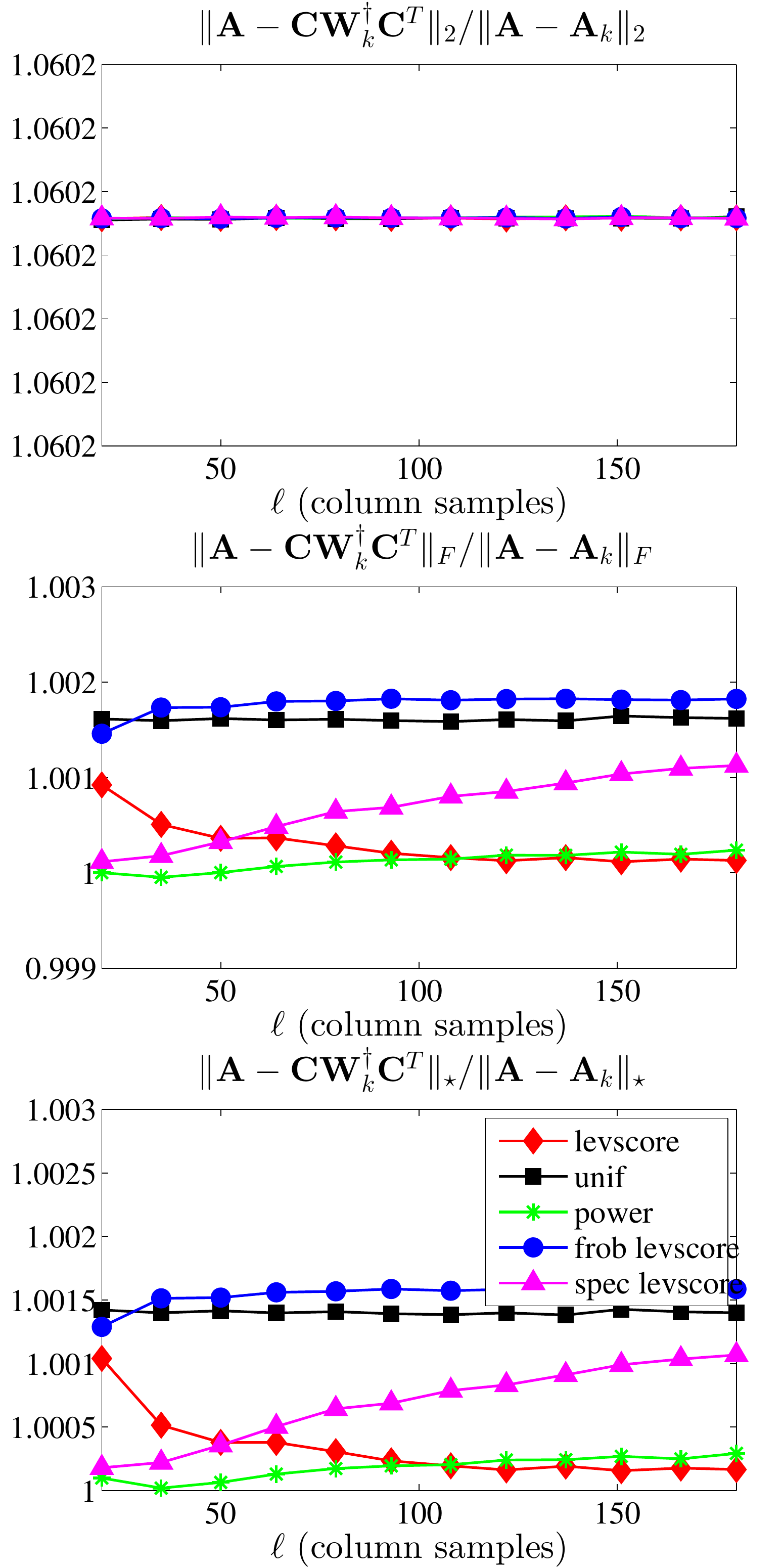}}%
 \subfigure[HEP, $k = 60$]{\includegraphics[width=1.6in, keepaspectratio=true]{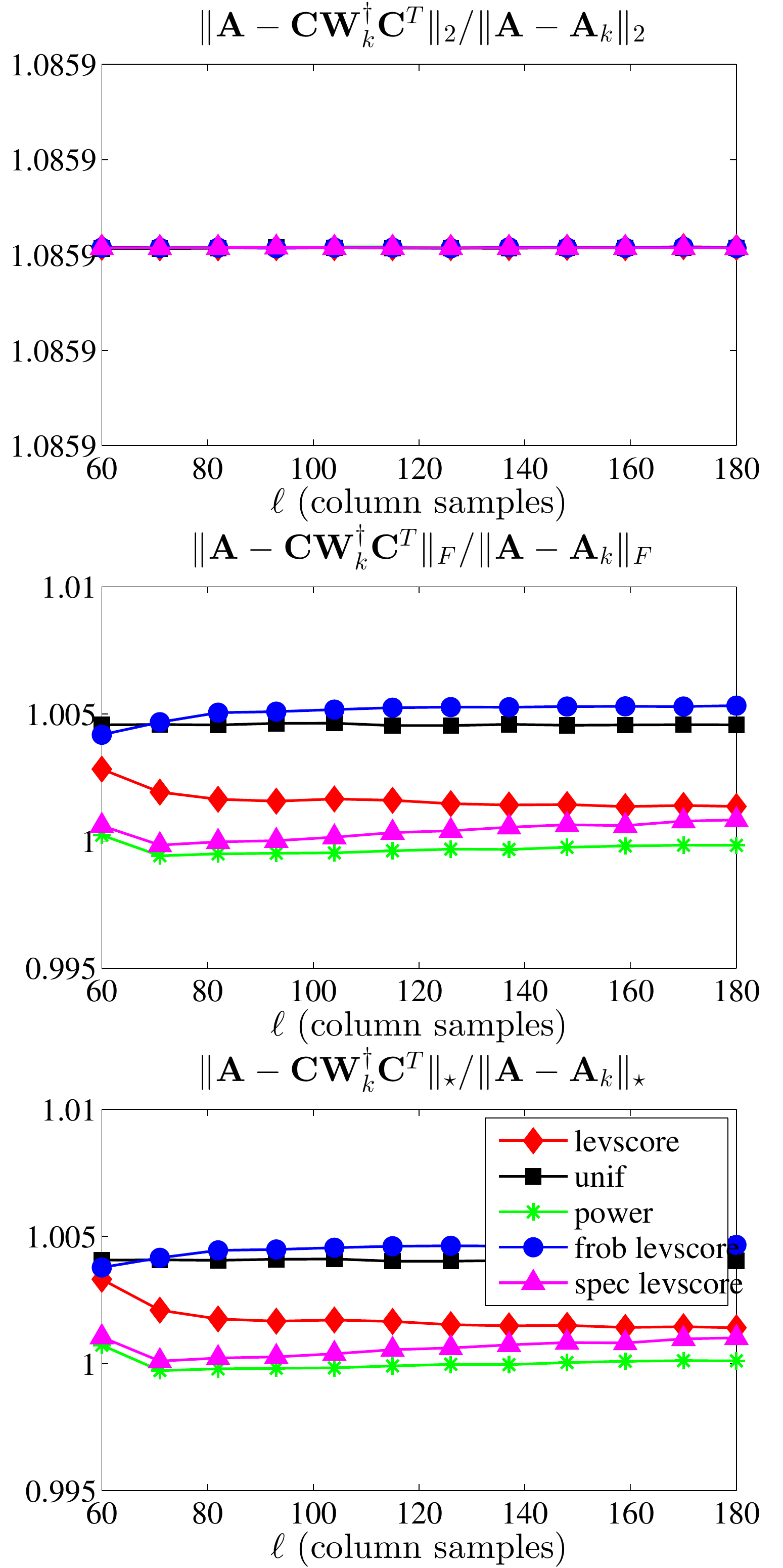}}%
 \caption{The spectral, Frobenius, and trace norm errors 
 (top to bottom, respectively, in each subfigure) of several
 (non-rank-restricted in top panels and rank-restricted in bottom panels)
 approximate leverage score-based SPSD sketches, as a function of the number of columns samples $\ell$, 
 for the GR and HEP Laplacian data sets, with two choices of the rank parameter $k$.}%
 \label{fig:laplacian-inexact-nonfiltered-errors}
\end{figure}

\begin{figure}[p]
 \centering
 \subfigure[Enron, $k = 20$]{\includegraphics[width=1.6in, keepaspectratio=true]{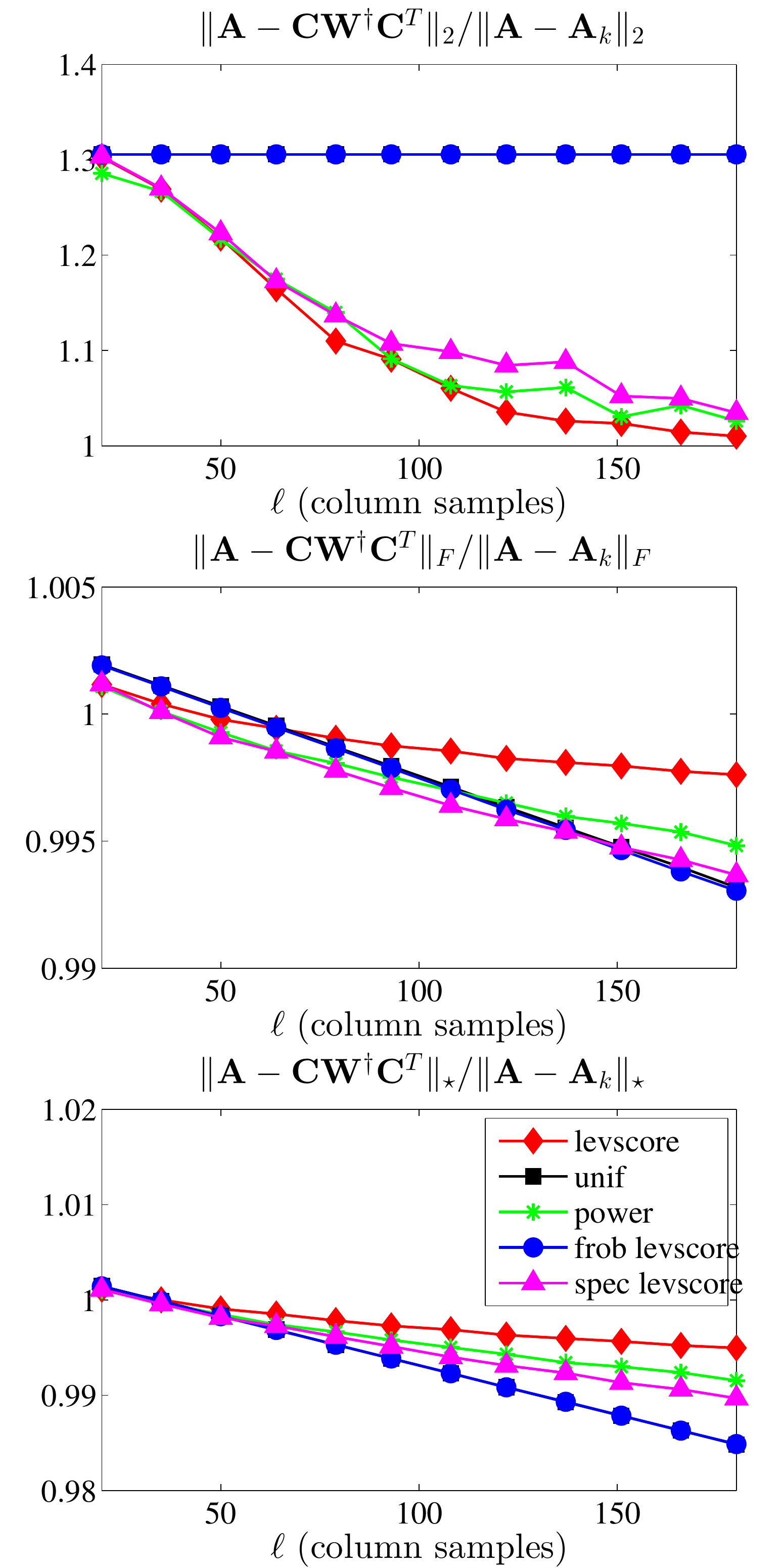}}%
 \subfigure[Enron, $k = 60$]{\includegraphics[width=1.6in, keepaspectratio=true]{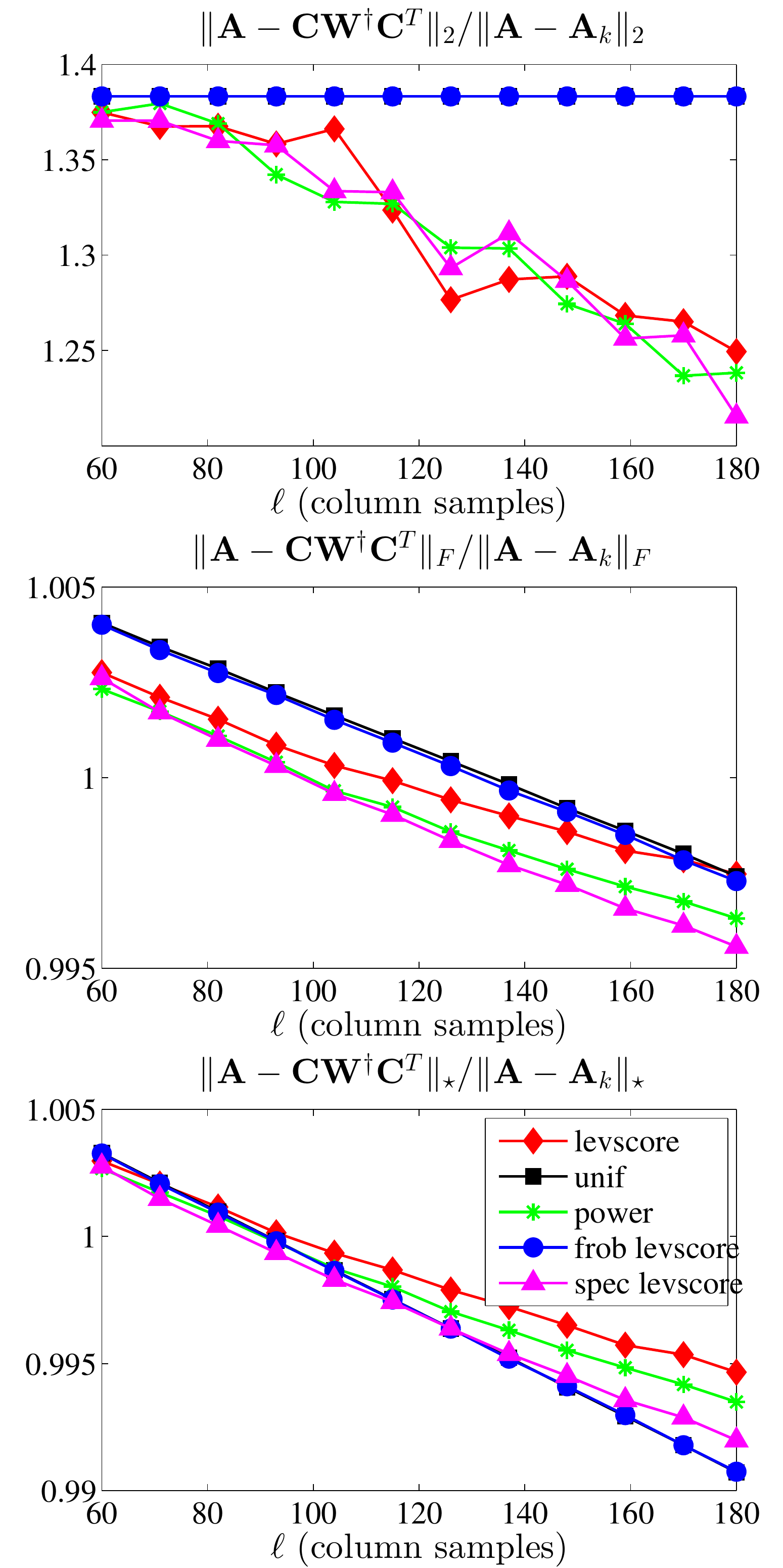}}%
 \subfigure[Gnutella, $k = 20$]{\includegraphics[width=1.6in, keepaspectratio=true]{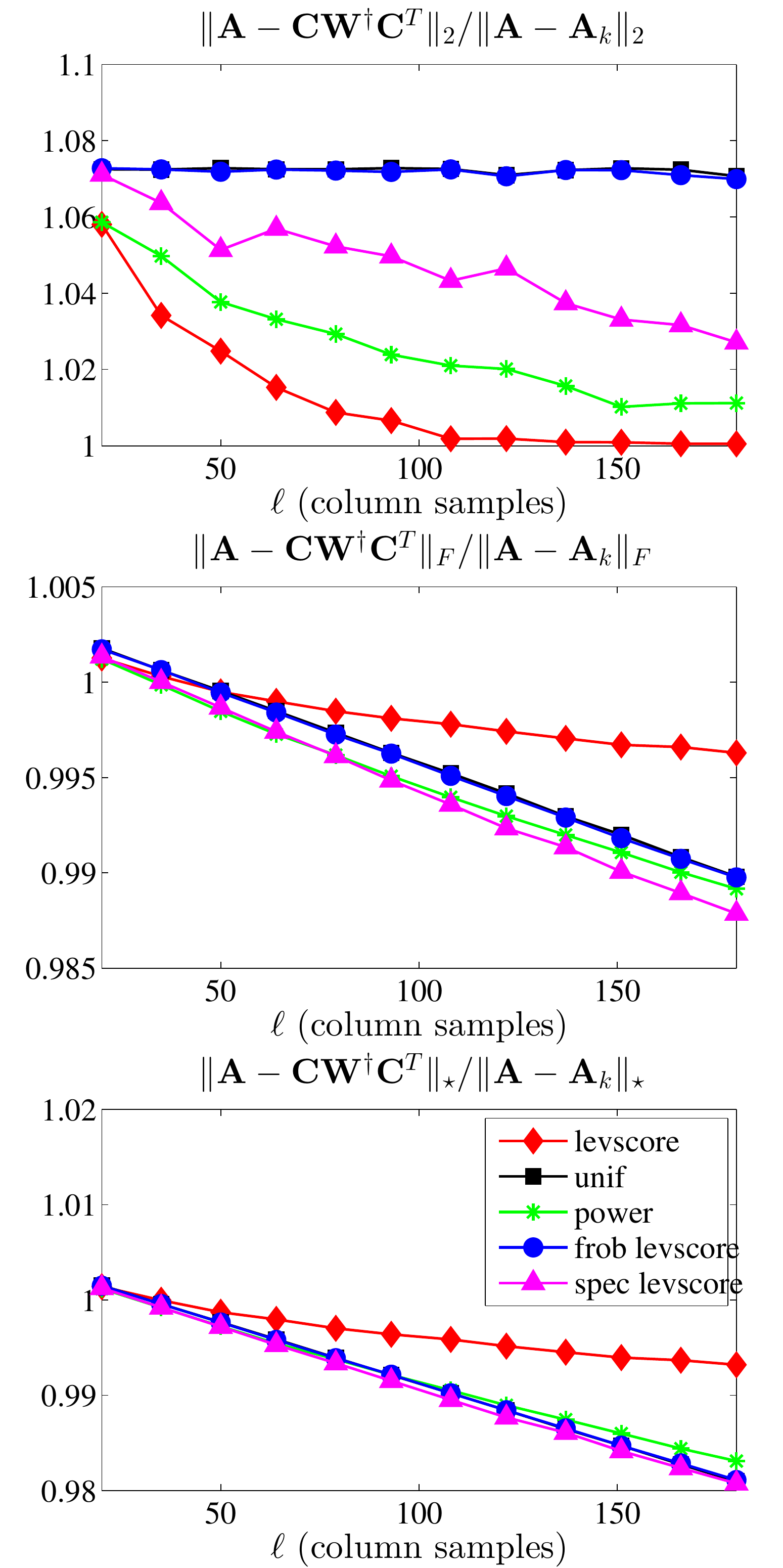}}%
 \subfigure[Gnutella, $k = 60$]{\includegraphics[width=1.6in, keepaspectratio=true]{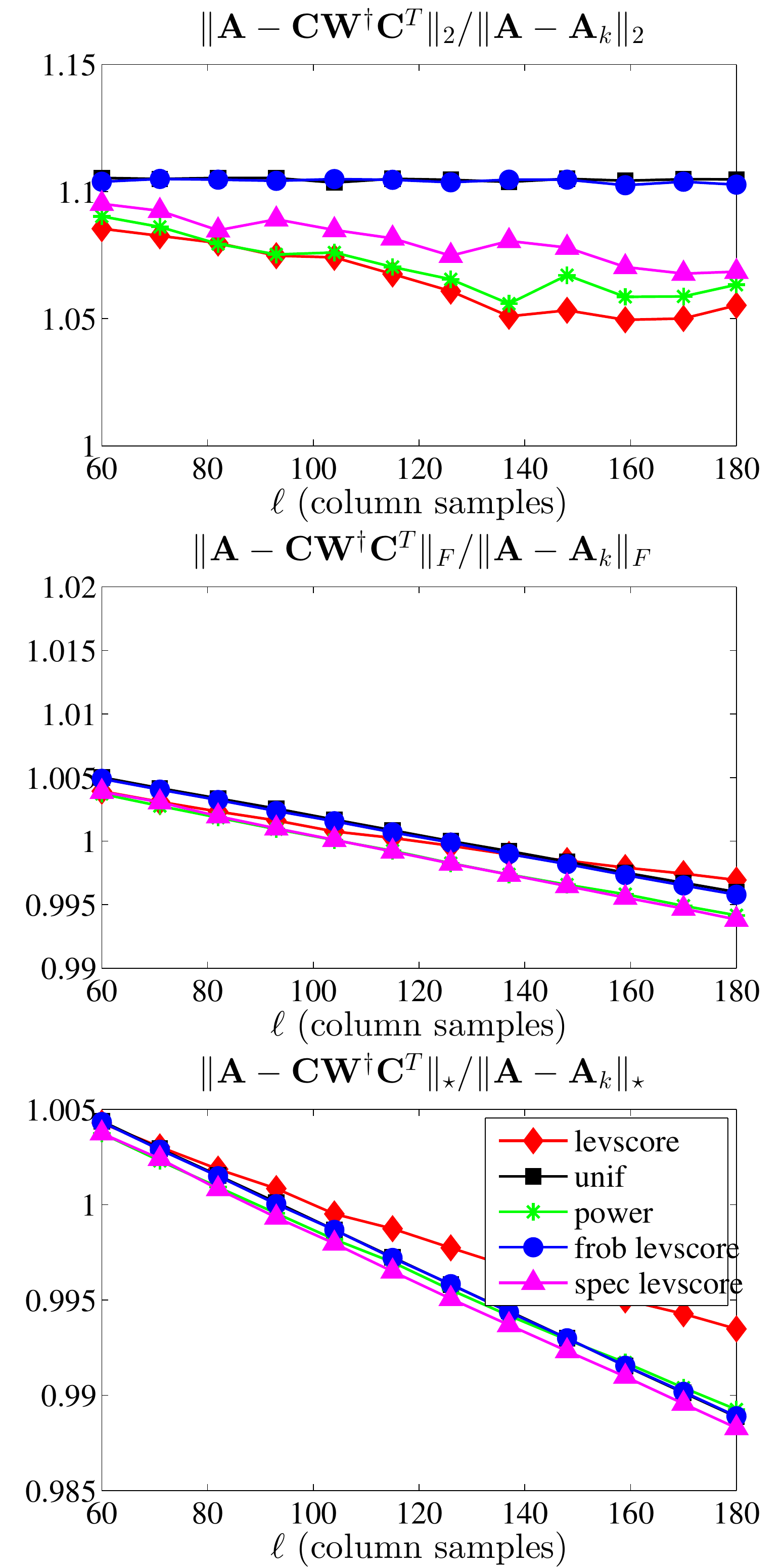}}%
 \\%
 \subfigure[Enron, $k = 20$]{\includegraphics[width=1.6in, keepaspectratio=true]{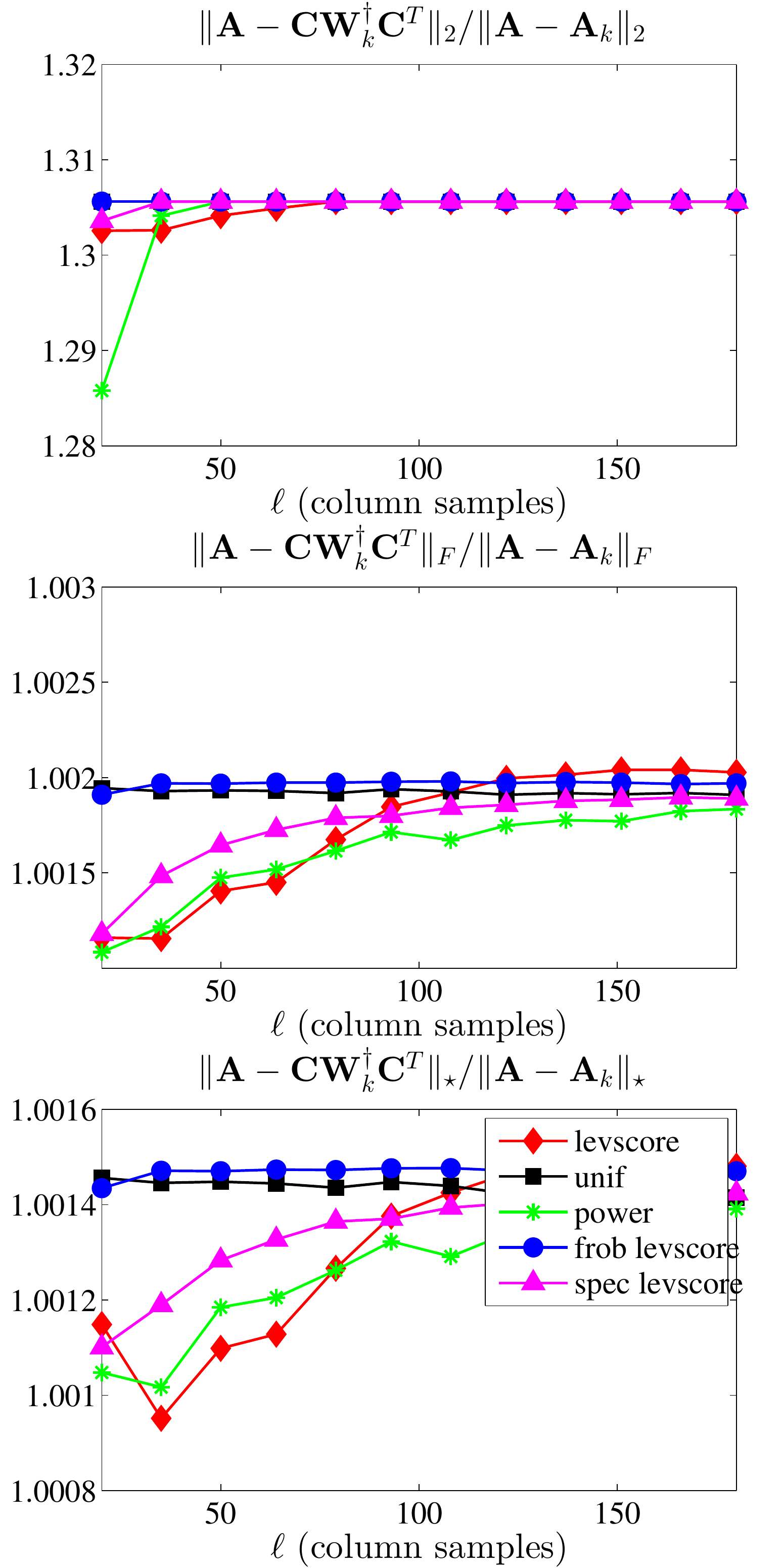}}%
 \subfigure[Enron, $k = 60$]{\includegraphics[width=1.6in, keepaspectratio=true]{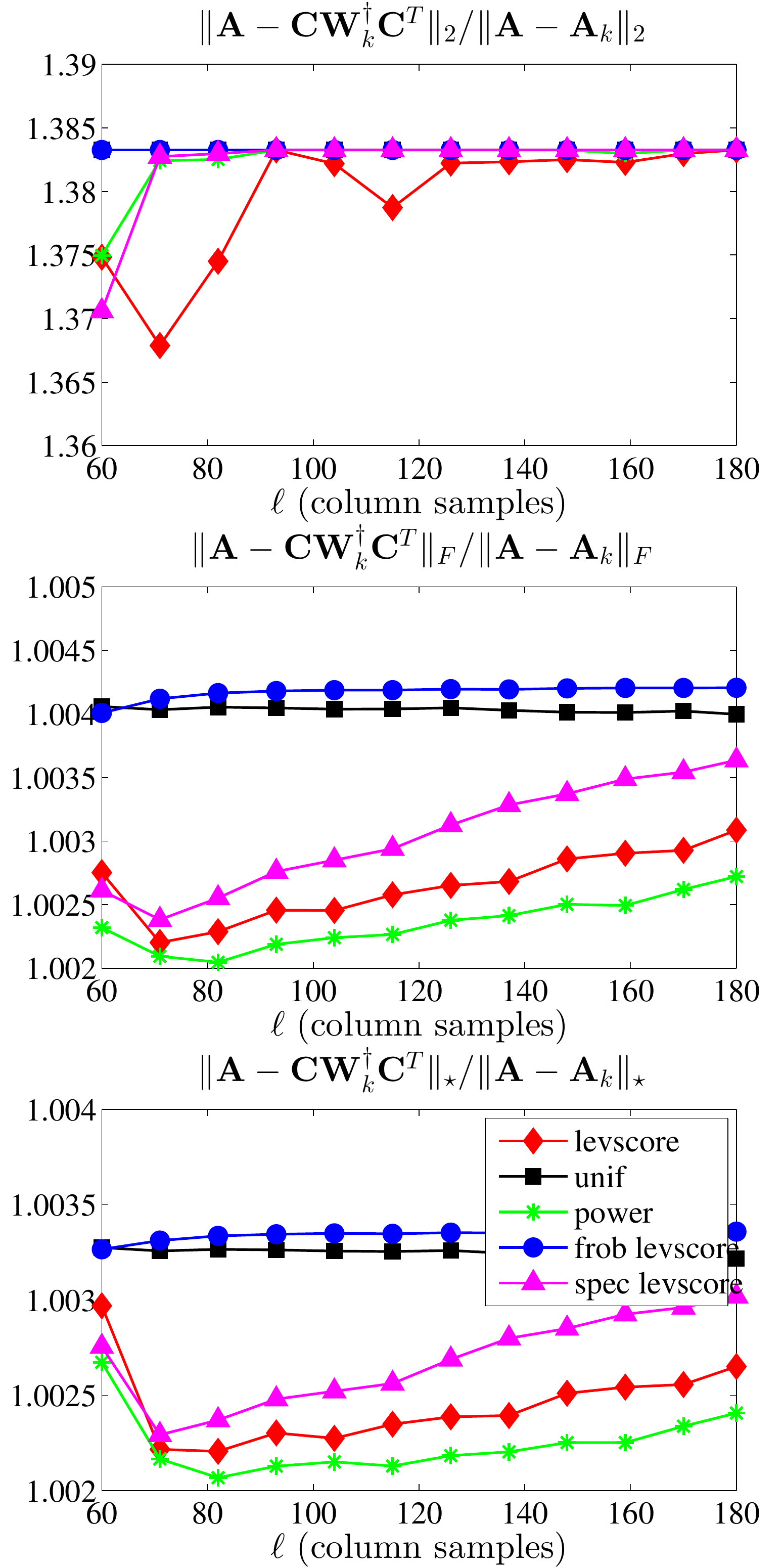}}%
 \subfigure[Gnutella, $k = 20$]{\includegraphics[width=1.6in, keepaspectratio=true]{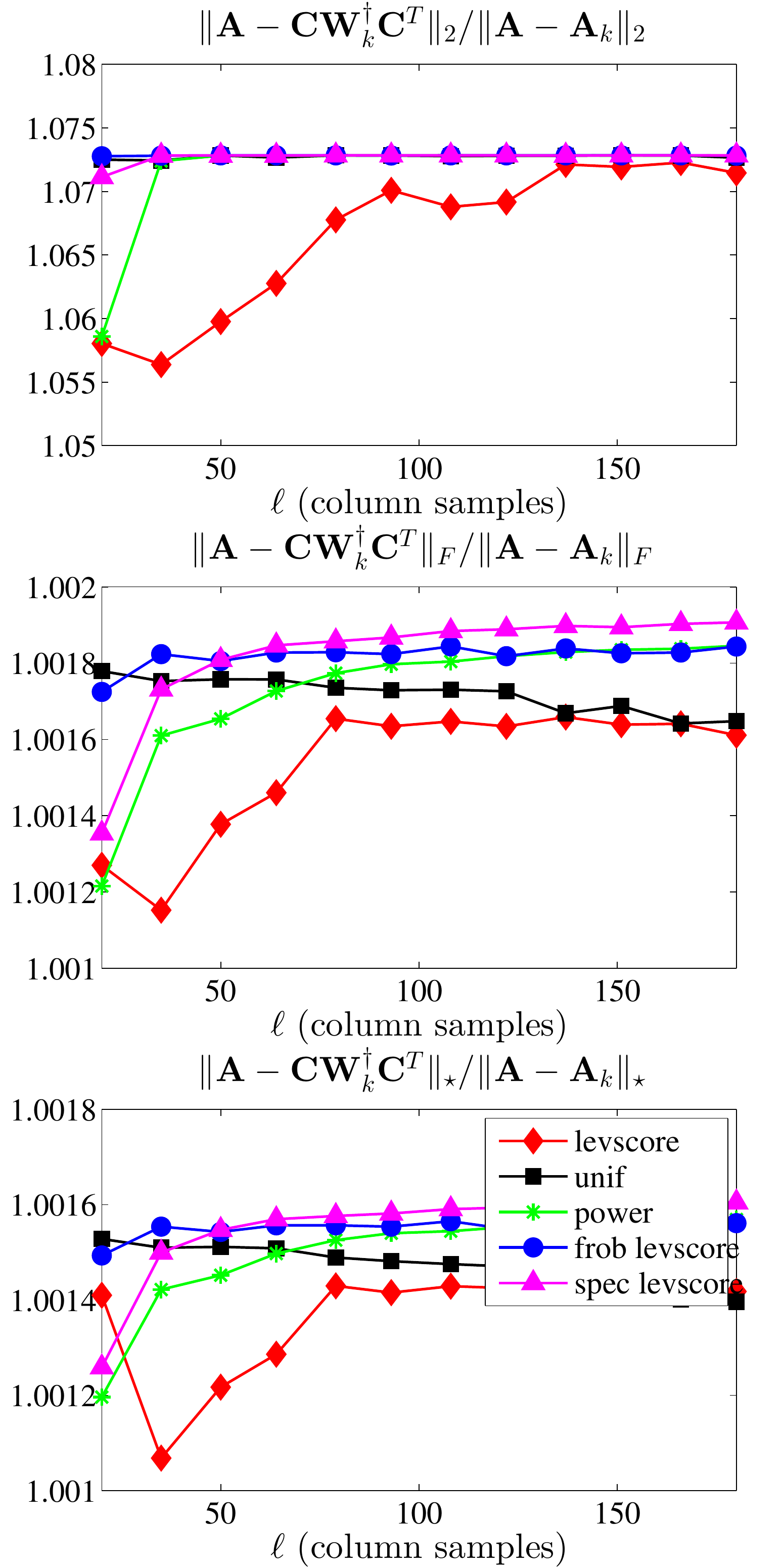}}%
 \subfigure[Gnutella, $k = 60$]{\includegraphics[width=1.6in, keepaspectratio=true]{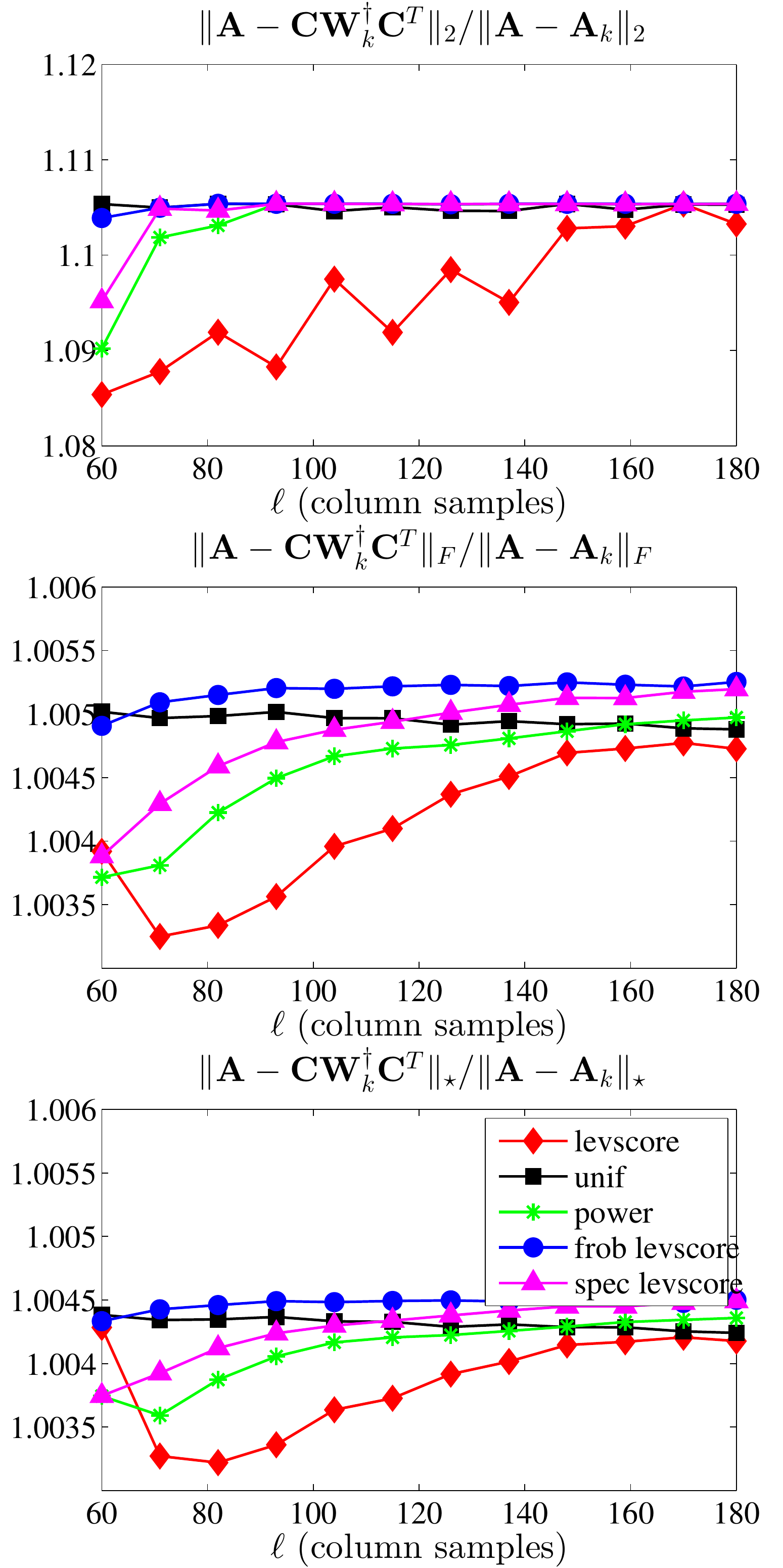}}%
 \caption{The spectral, Frobenius, and trace norm errors 
 (top to bottom, respectively, in each subfigure) of several
 (non-rank-restricted in top panels and rank-restricted in bottom panels)
 approximate leverage score-based SPSD sketches, as a function of the number of columns samples $\ell$, 
 for the Enron and Gnutella Laplacian data sets, with two choices of the rank parameter $k$.}%
 \label{fig:laplacian-inexact-nonfiltered-errors-B}
\end{figure}

\begin{figure}[p]
 \centering
 \subfigure[Dexter, $k = 8$]{\includegraphics[width=1.6in, keepaspectratio=true]{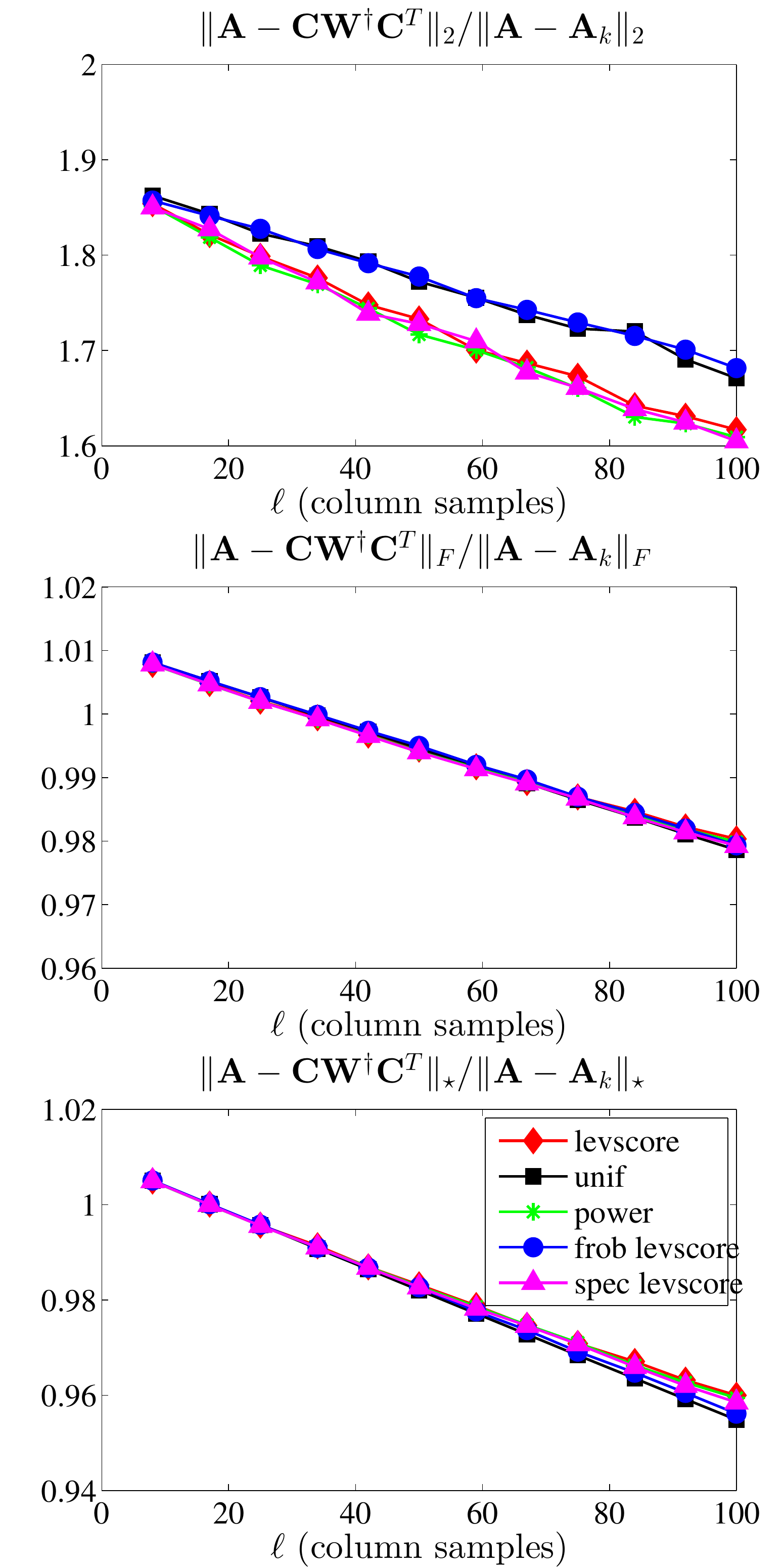}}%
 \subfigure[Protein, $k = 10$]{\includegraphics[width=1.6in, keepaspectratio=true]{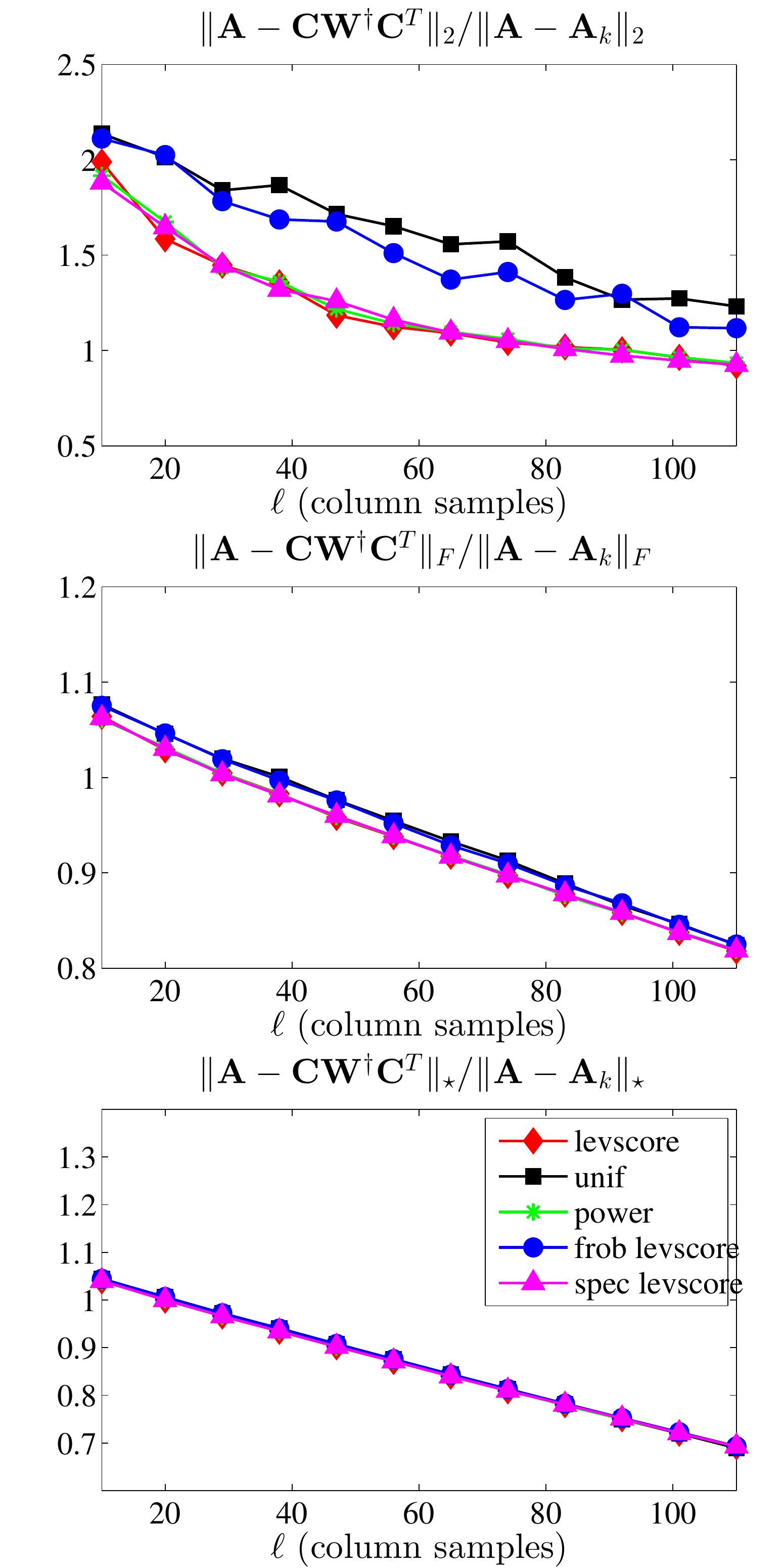}}%
 \subfigure[SNPs, $k = 5$]{\includegraphics[width=1.6in, keepaspectratio=true]{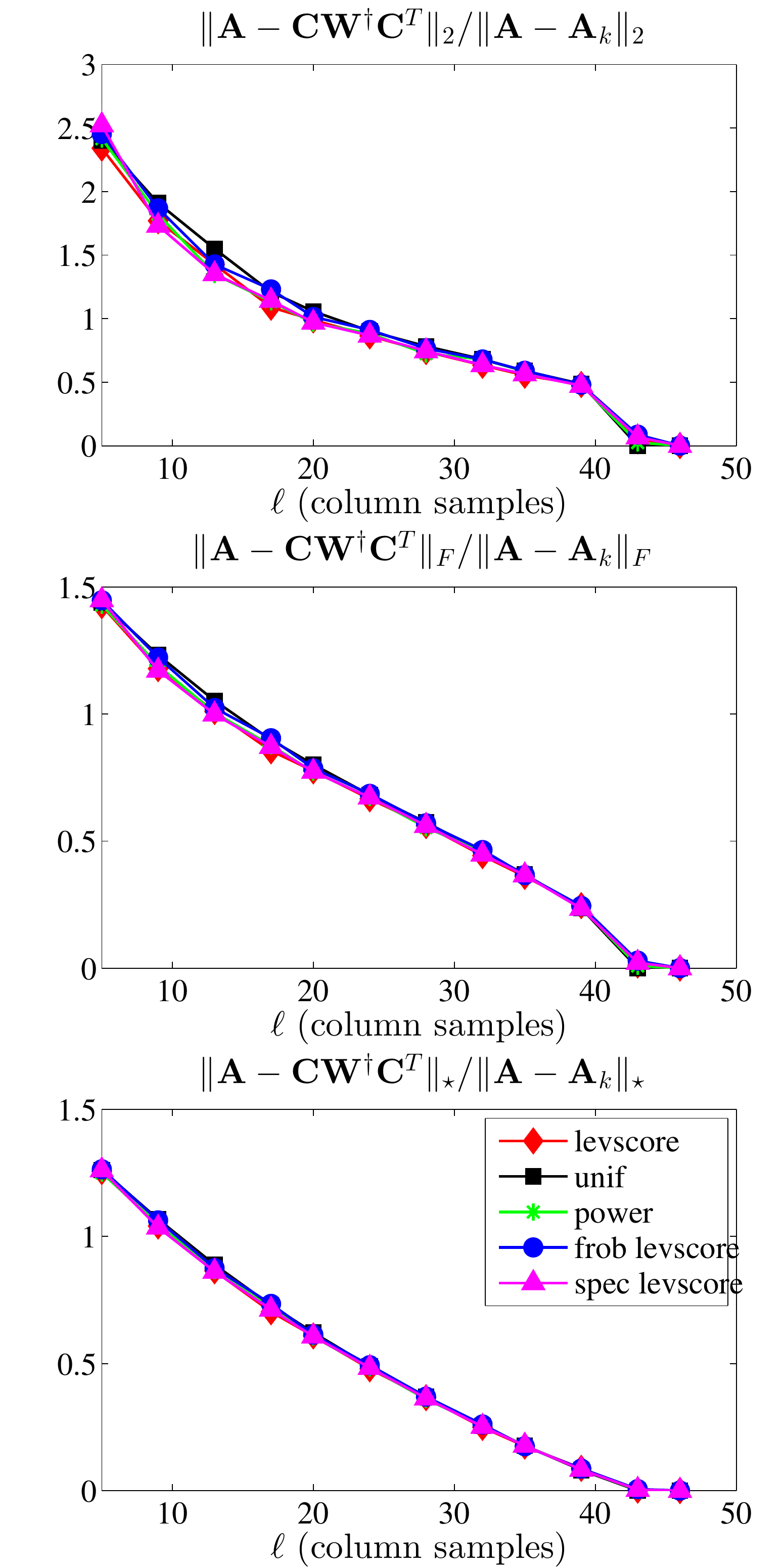}}%
 \subfigure[Gisette, $k = 12$]{\includegraphics[width=1.6in, keepaspectratio=true]{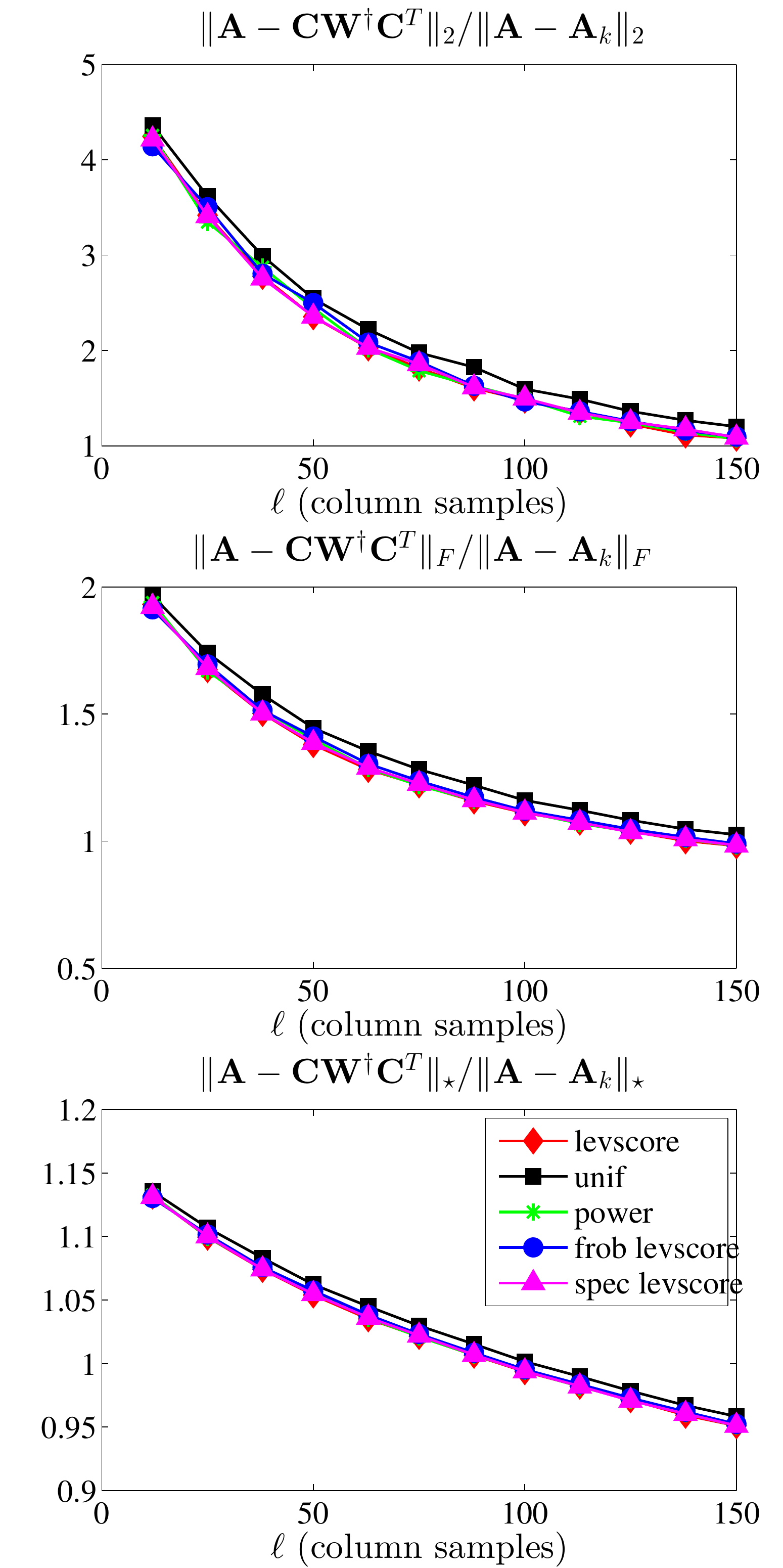}}%
 \\%
 \subfigure[Dexter, $k = 8$]{\includegraphics[width=1.6in, keepaspectratio=true]{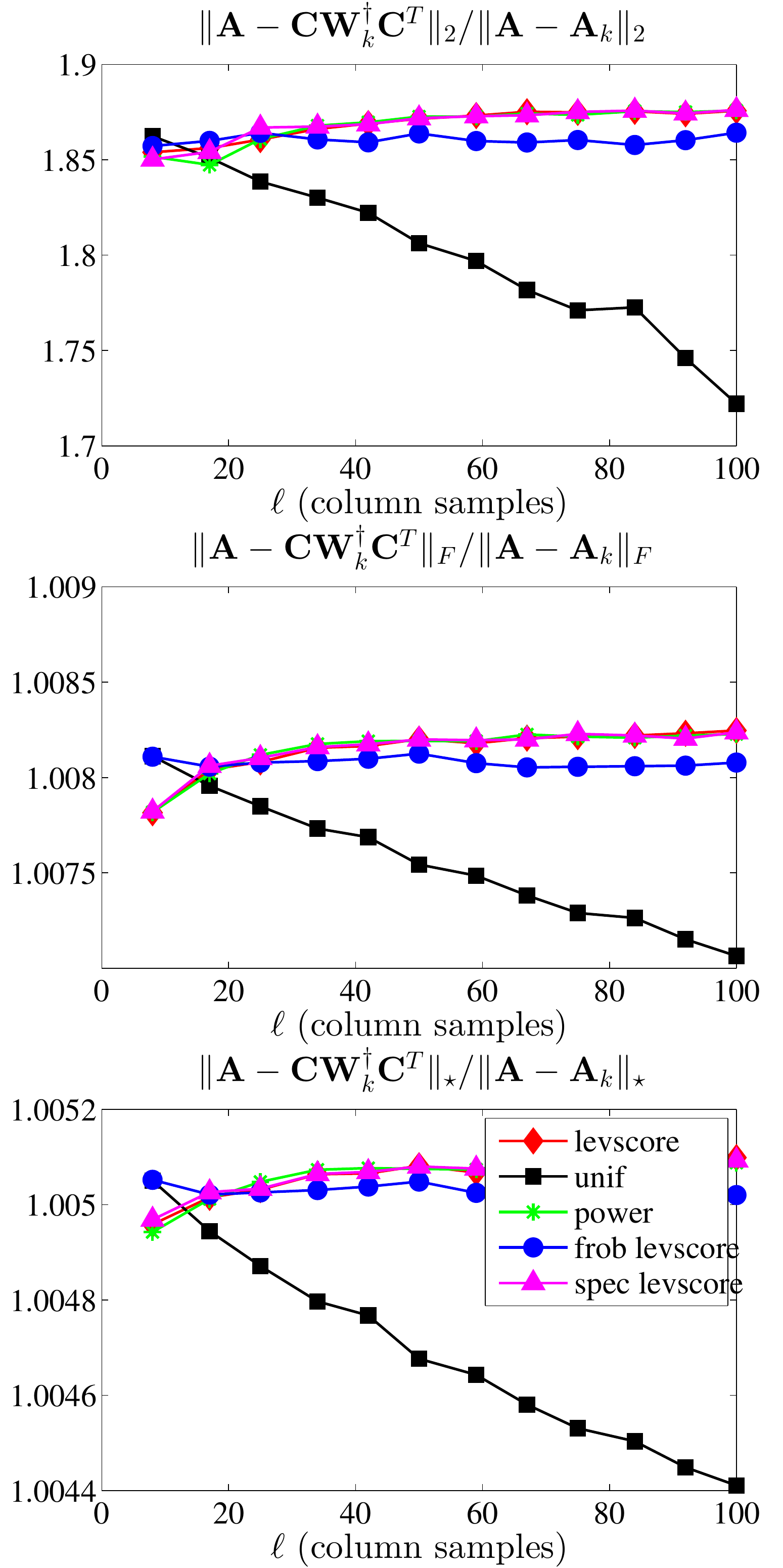}}%
 \subfigure[Protein, $k = 10$]{\includegraphics[width=1.6in, keepaspectratio=true]{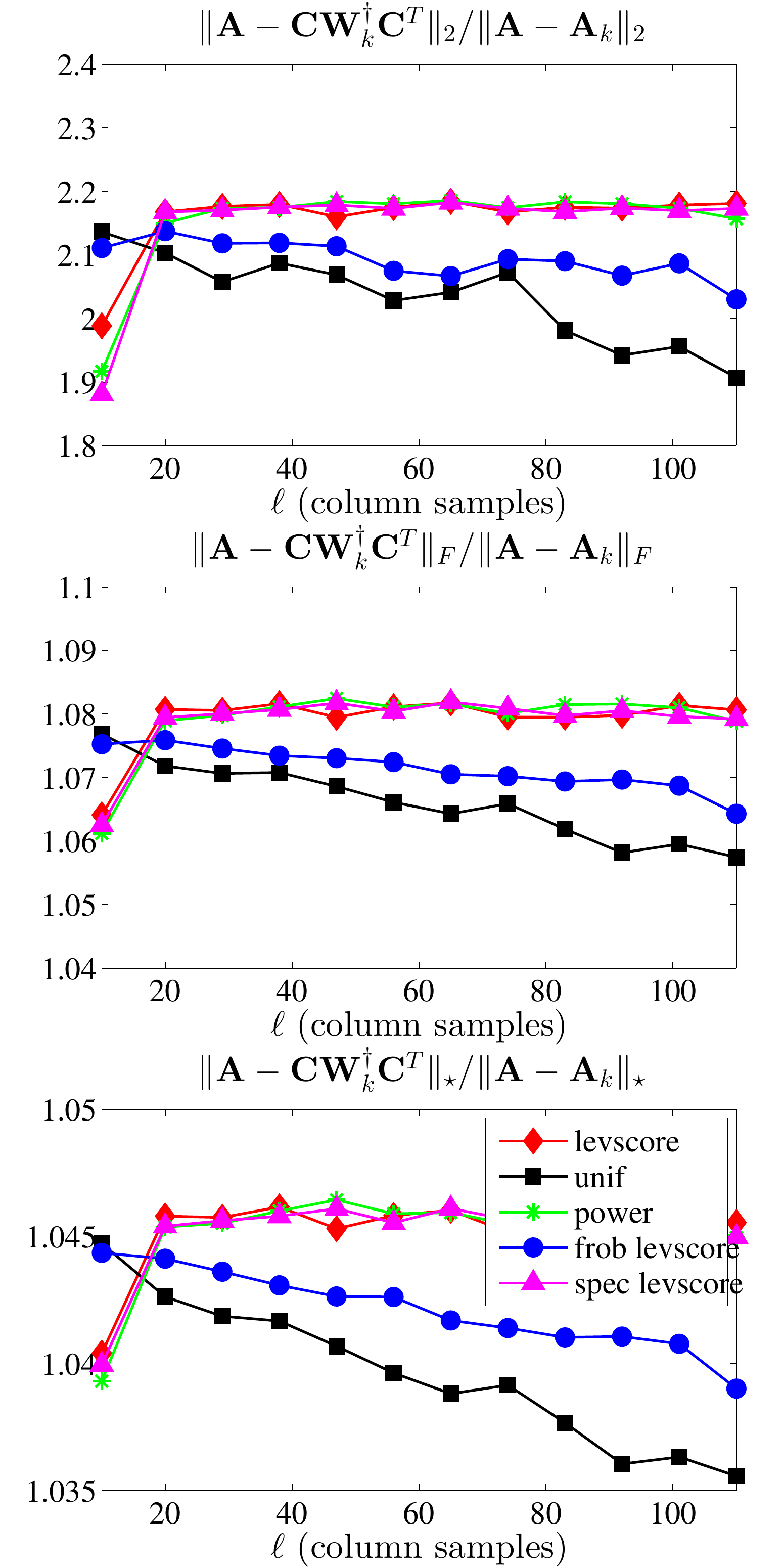}}%
 \subfigure[SNPs, $k = 5$]{\includegraphics[width=1.6in, keepaspectratio=true]{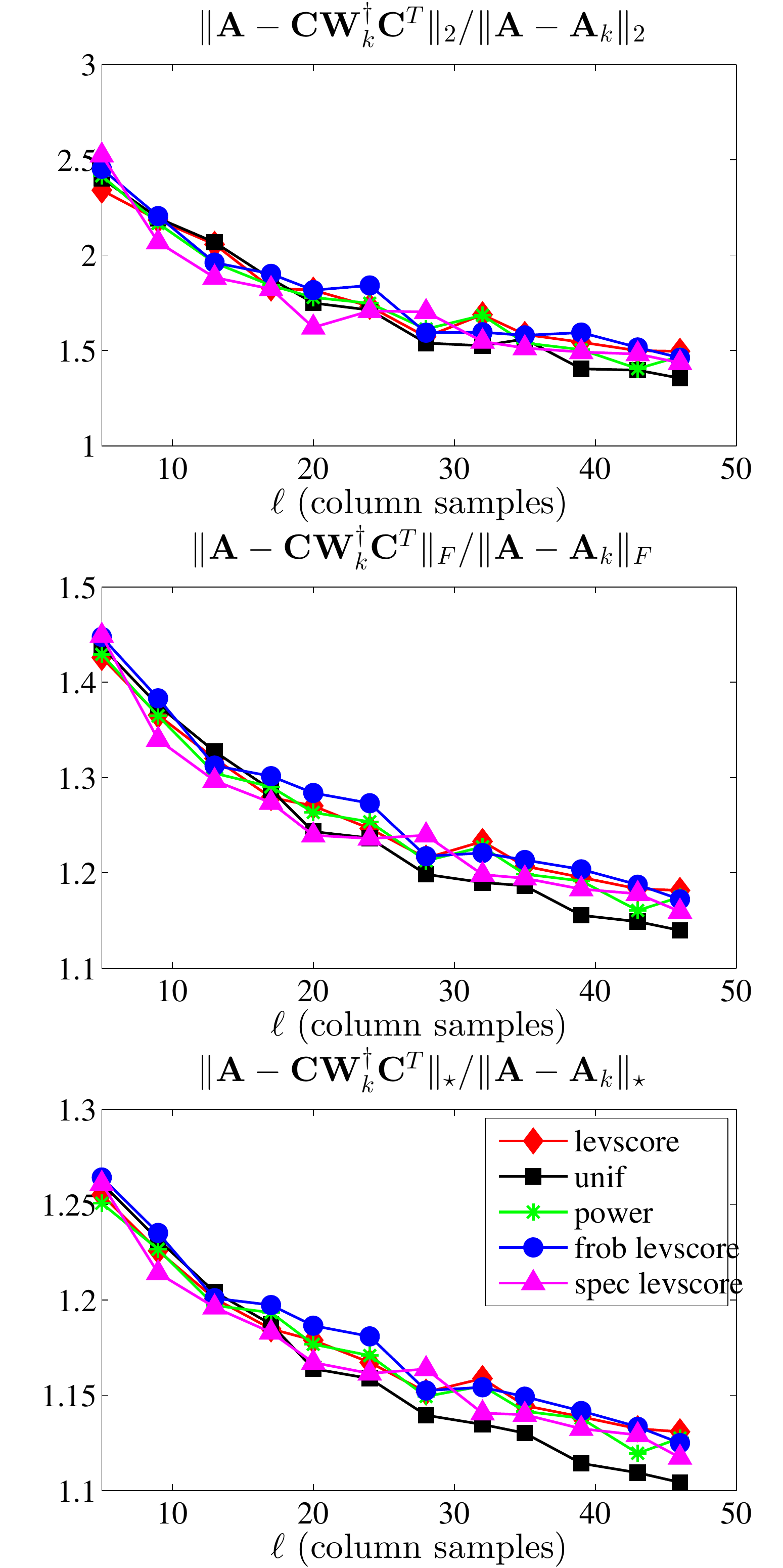}}%
 \subfigure[Gisette, $k = 12$]{\includegraphics[width=1.6in, keepaspectratio=true]{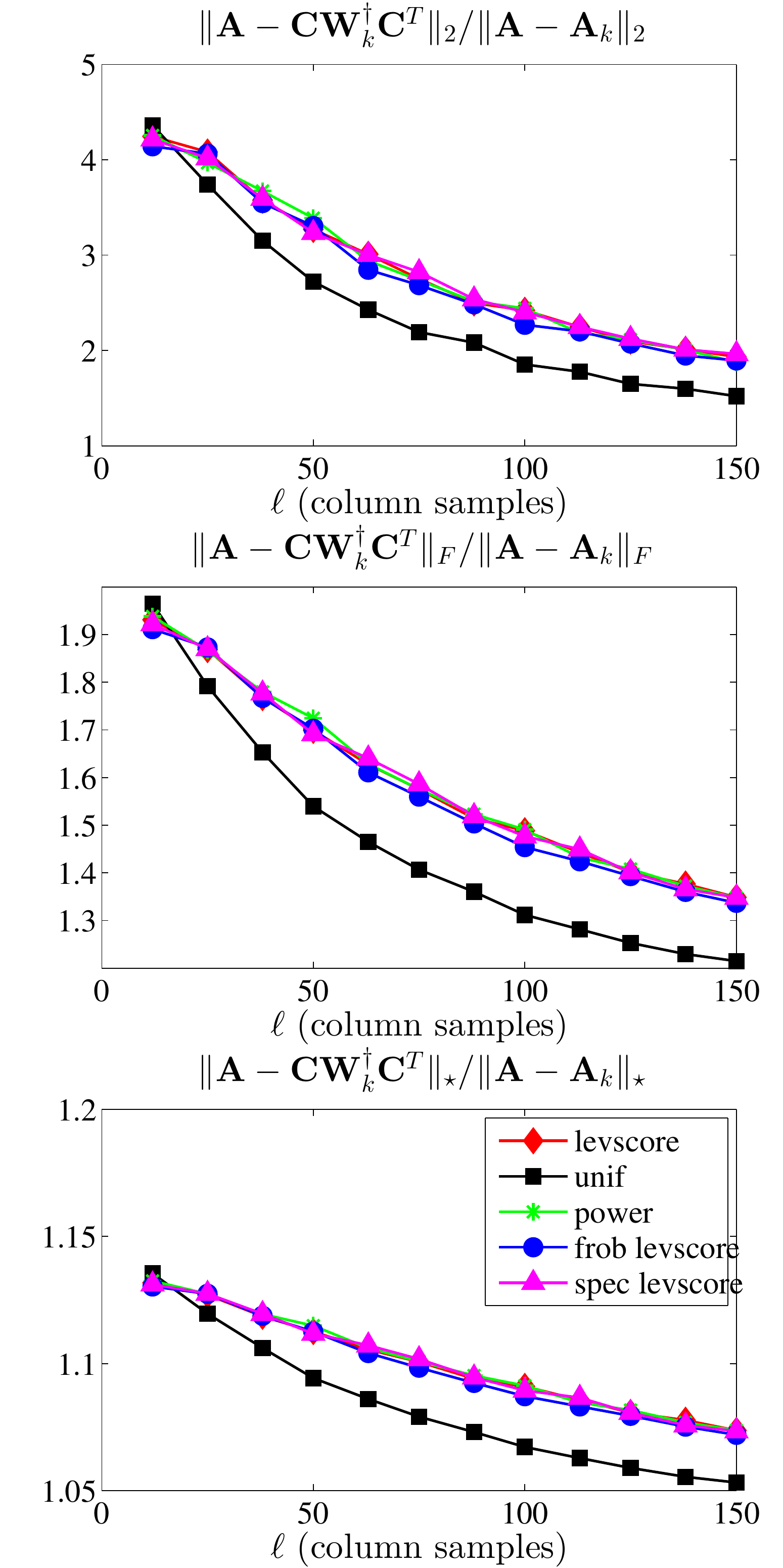}}%
 \caption{The spectral, Frobenius, and trace norm errors 
 (top to bottom, respectively, in each subfigure) of several
 (non-rank-restricted in top panels and rank-restricted in bottom panels)
 approximate leverage score-based SPSD sketches, as a function of the number of columns samples $\ell$, 
 for the Linear Kernel data sets}%
 \label{fig:linearkernel-inexact-errors}
\end{figure}

\begin{figure}[p]
 \centering
 \subfigure[AbaloneD, $\sigma = .15, k = 20$]{\includegraphics[width=1.6in, keepaspectratio=true]{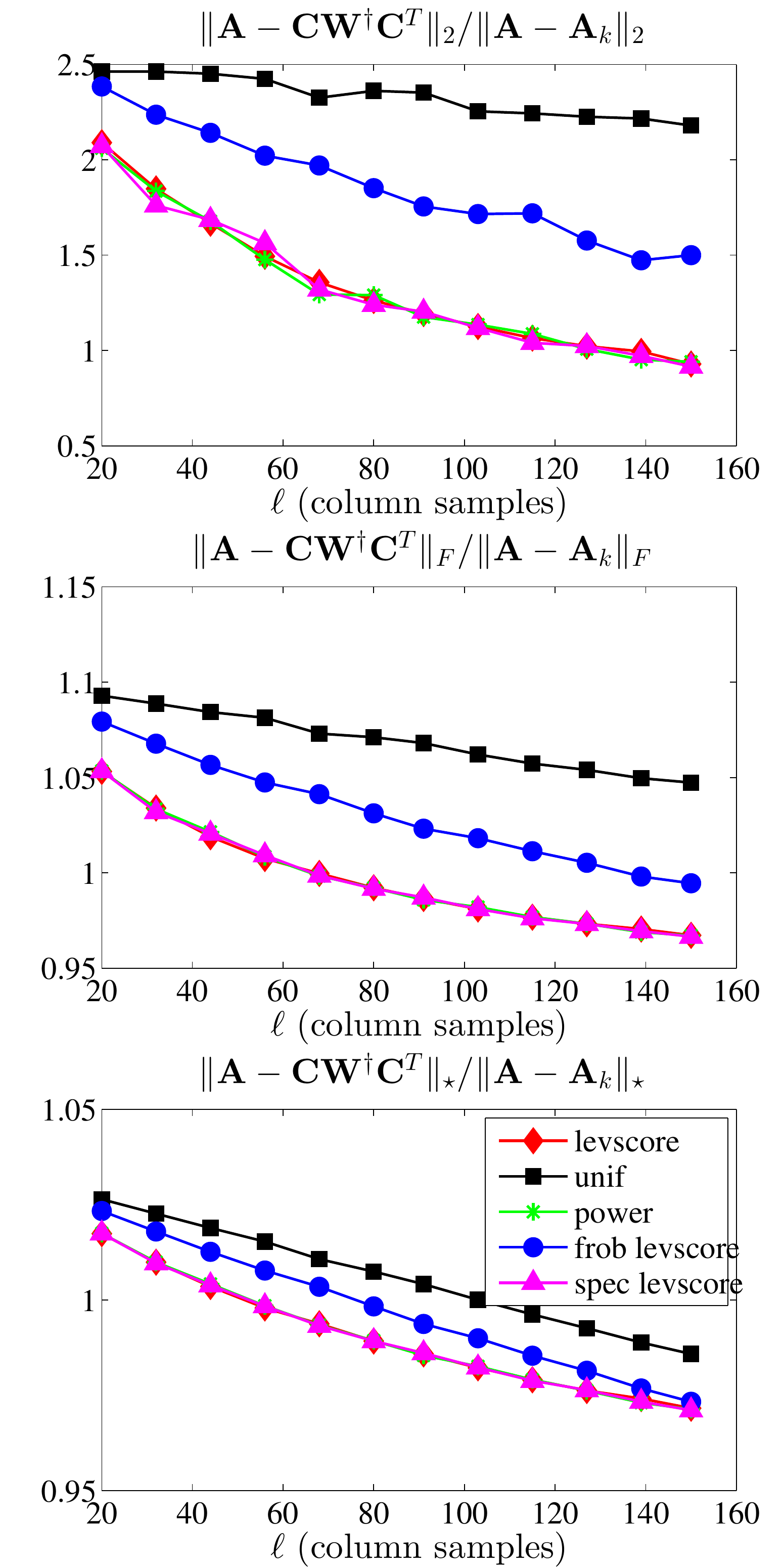}}%
 \subfigure[AbaloneD, $\sigma = 1, k = 20$]{\includegraphics[width=1.6in, keepaspectratio=true]{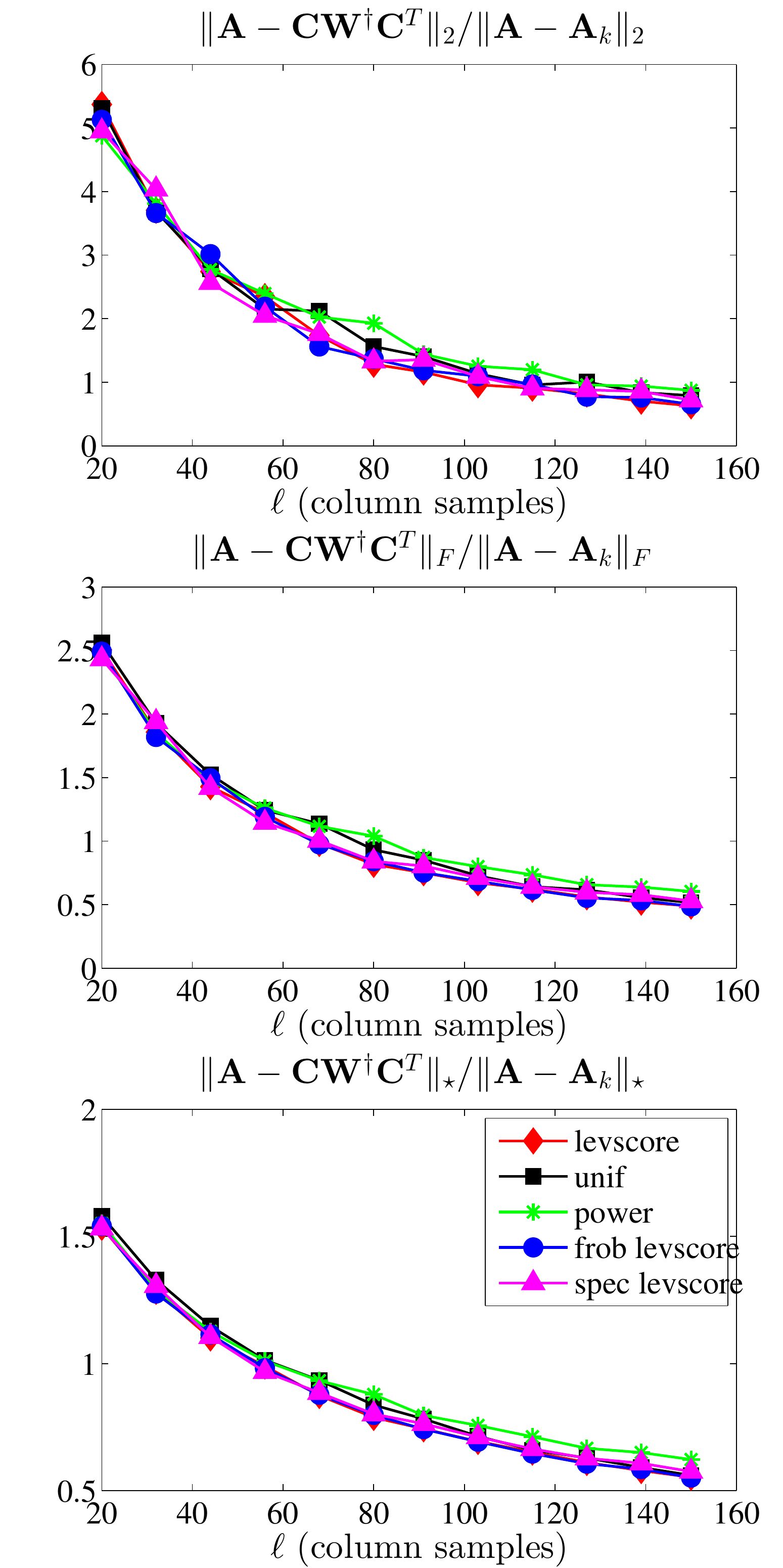}}%
 \subfigure[WineD, $\sigma = 1, k = 20$]{\includegraphics[width=1.6in, keepaspectratio=true]{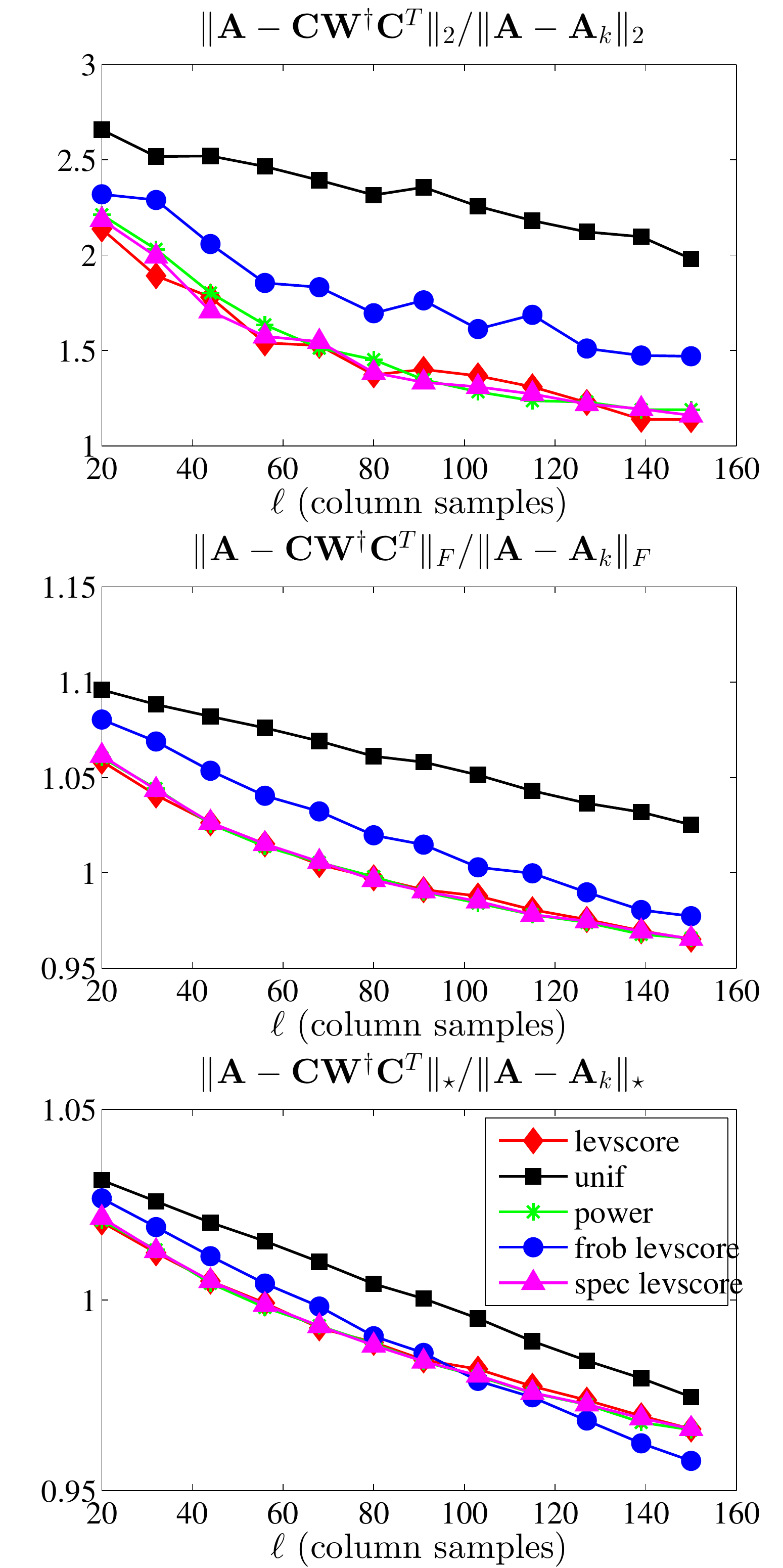}}%
 \subfigure[WineD, $\sigma = 2.1, k = 20$]{\includegraphics[width=1.6in, keepaspectratio=true]{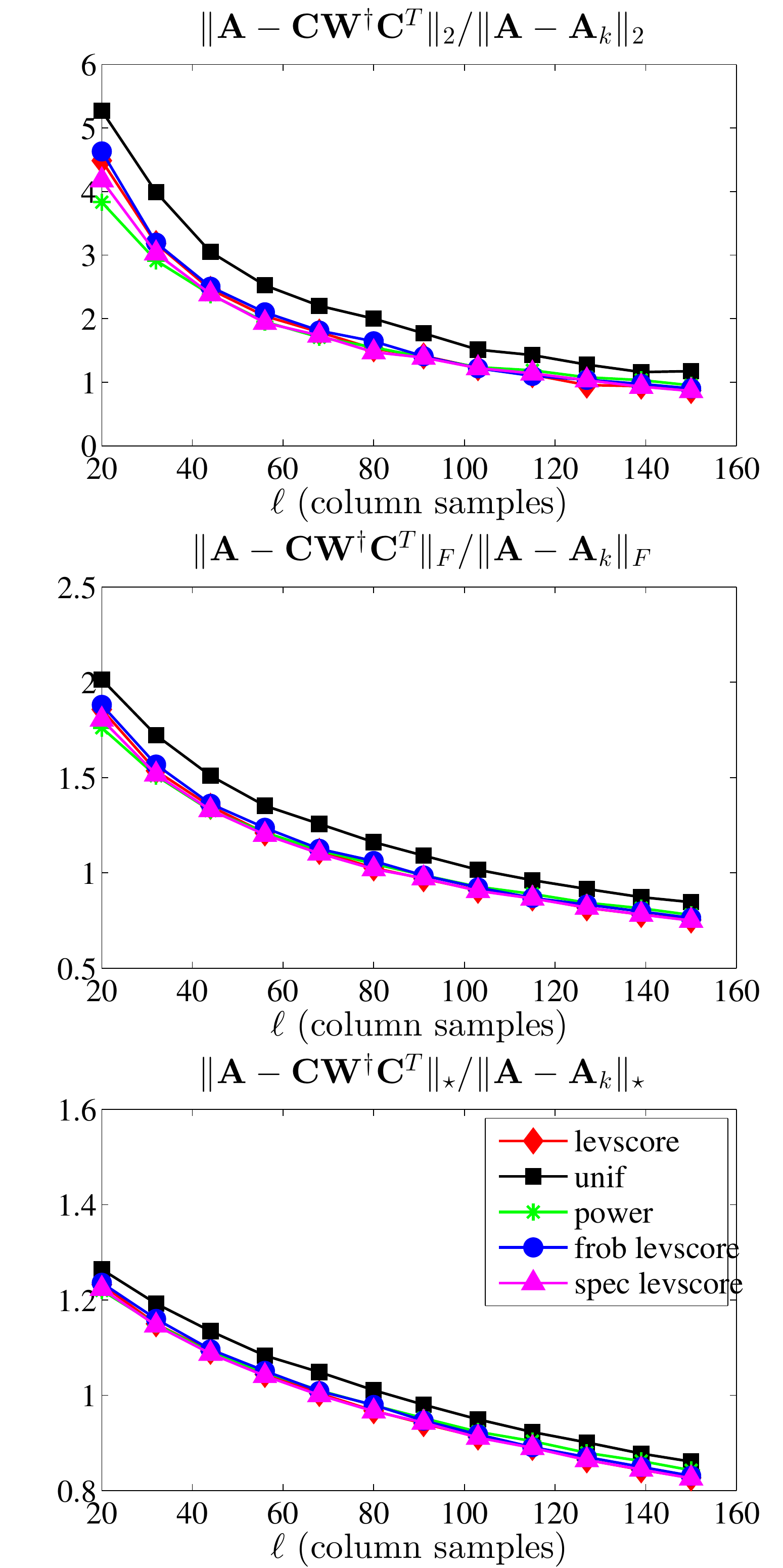}}%
 \\%
 \subfigure[AbaloneD, $\sigma = .15, k = 20$]{\includegraphics[width=1.6in, keepaspectratio=true]{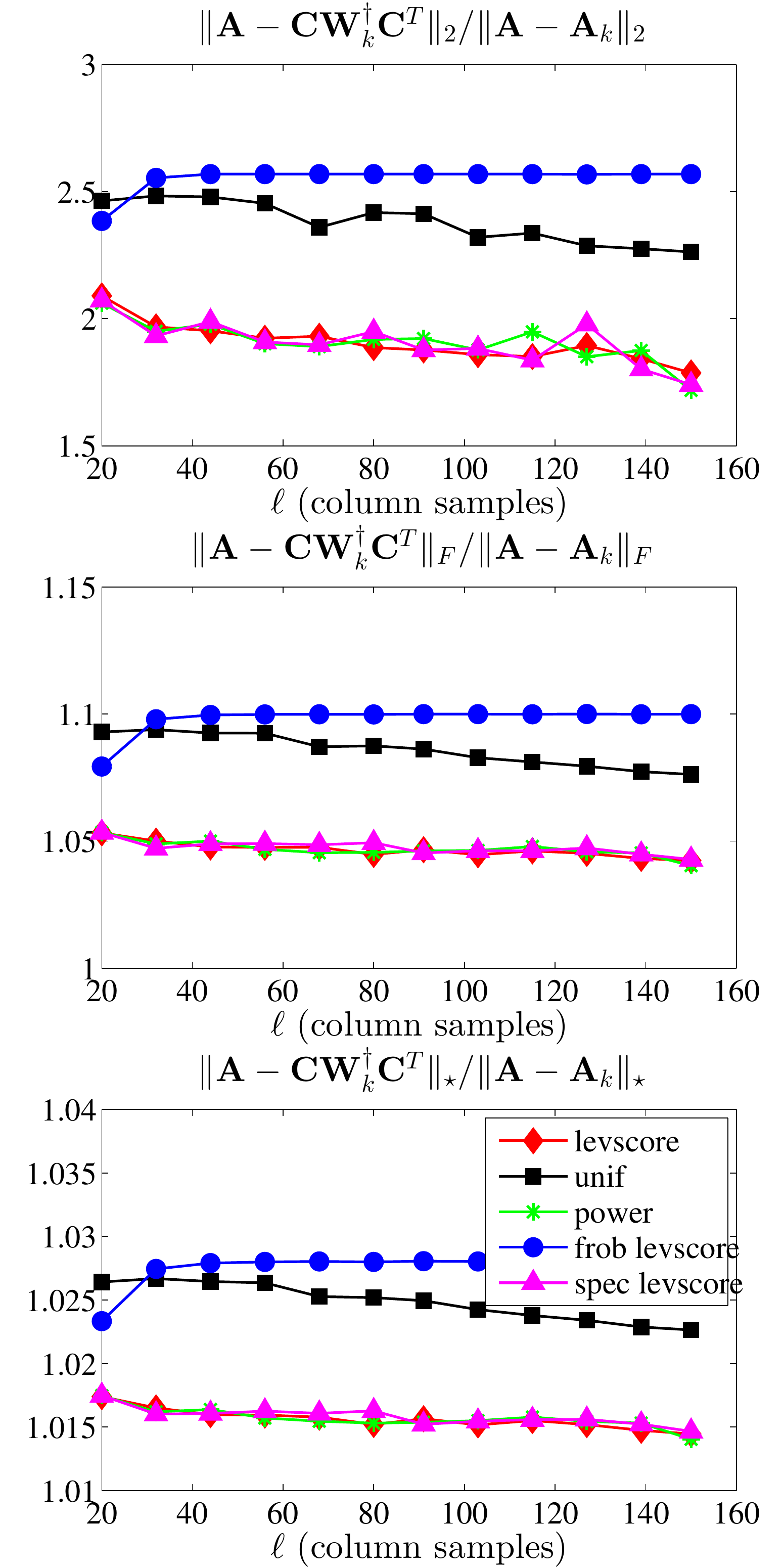}}%
 \subfigure[AbaloneD, $\sigma = 1, k = 20$]{\includegraphics[width=1.6in, keepaspectratio=true]{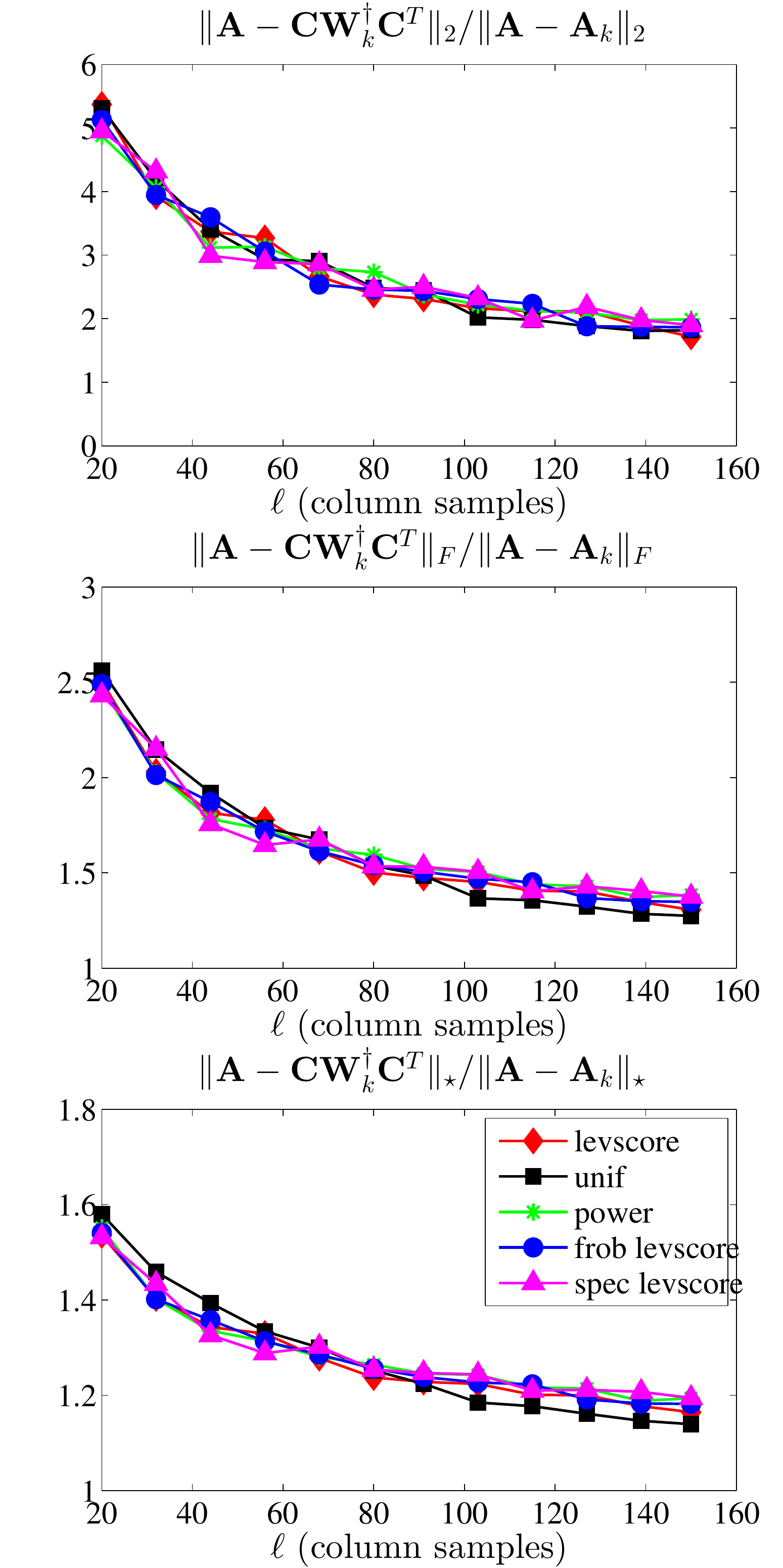}}%
 \subfigure[WineD, $\sigma = 1, k = 20$]{\includegraphics[width=1.6in, keepaspectratio=true]{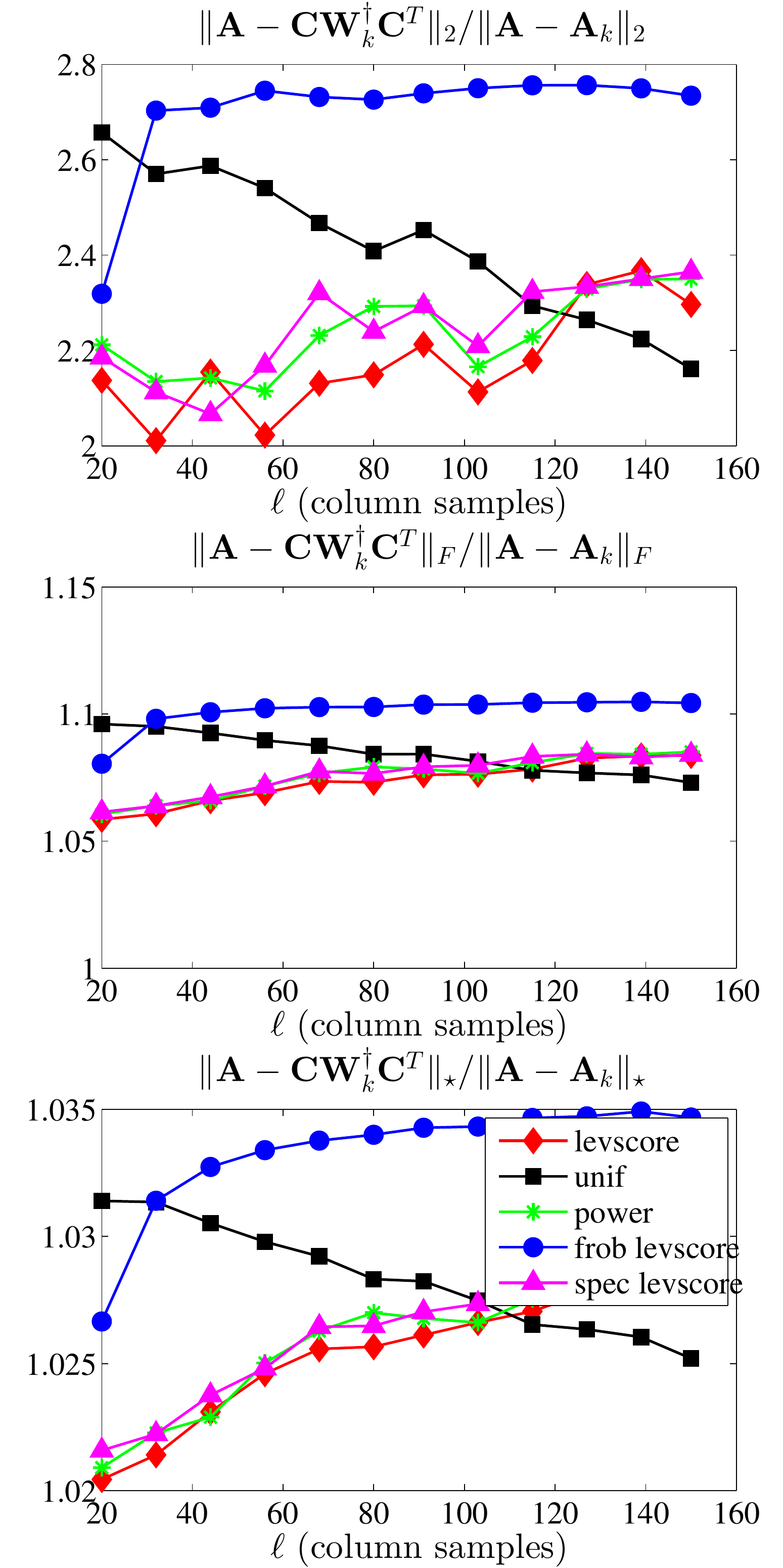}}%
 \subfigure[WineD, $\sigma = 2.1, k = 20$]{\includegraphics[width=1.6in, keepaspectratio=true]{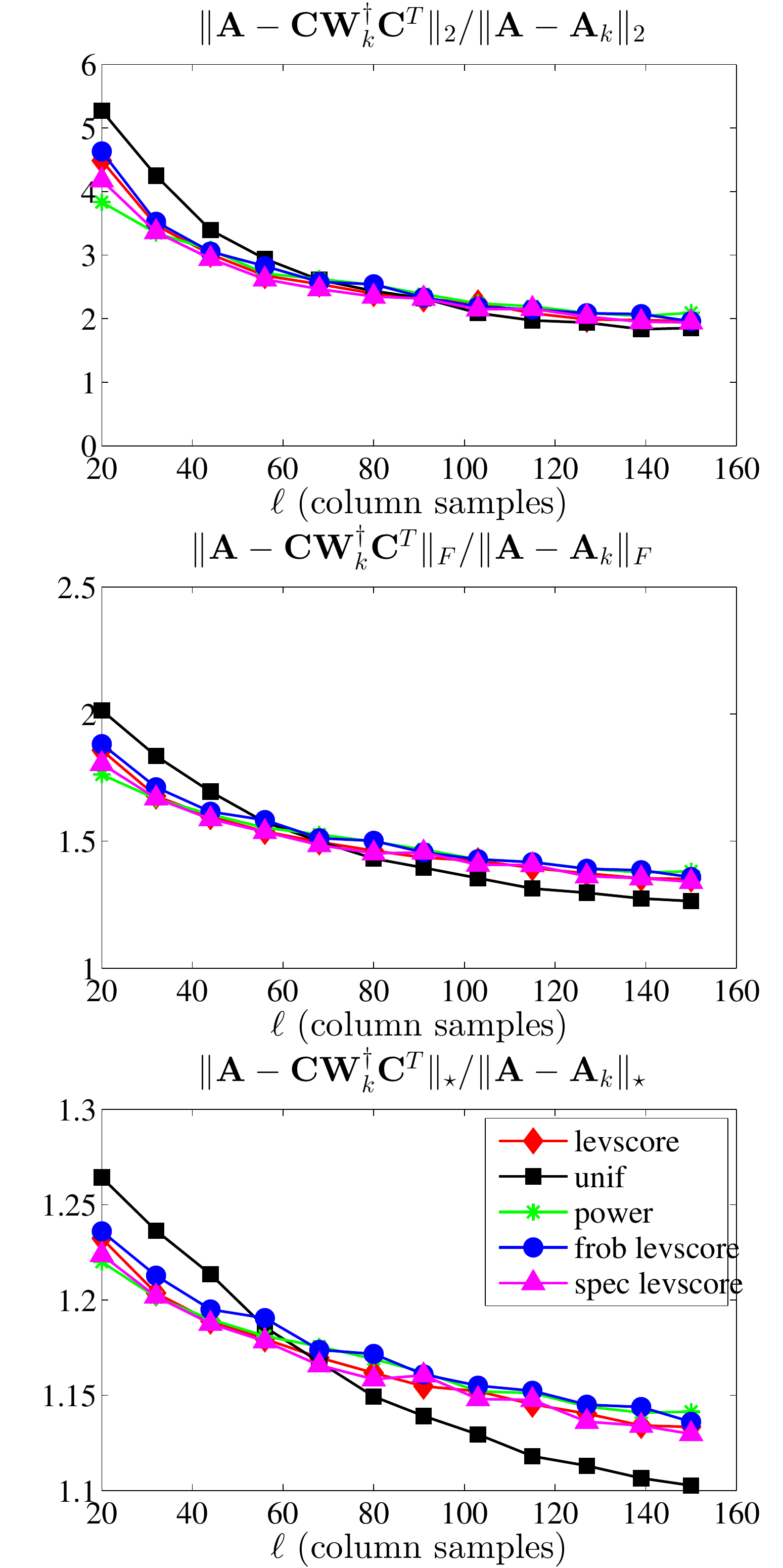}}%
 \caption{The spectral, Frobenius, and trace norm errors 
 (top to bottom, respectively, in each subfigure) of several
 (non-rank-restricted in top panels and rank-restricted in bottom panels)
 approximate leverage score-based SPSD sketches, as a function of the number of columns samples $\ell$, 
 for several dense RBF data sets.}%
 \label{fig:denserbf-inexact-nonfiltered-errors}
\end{figure}

\begin{figure}[p]
 \centering
 \subfigure[AbaloneS, $\sigma = .15, k = 20$]{\includegraphics[width=1.6in, keepaspectratio=true]{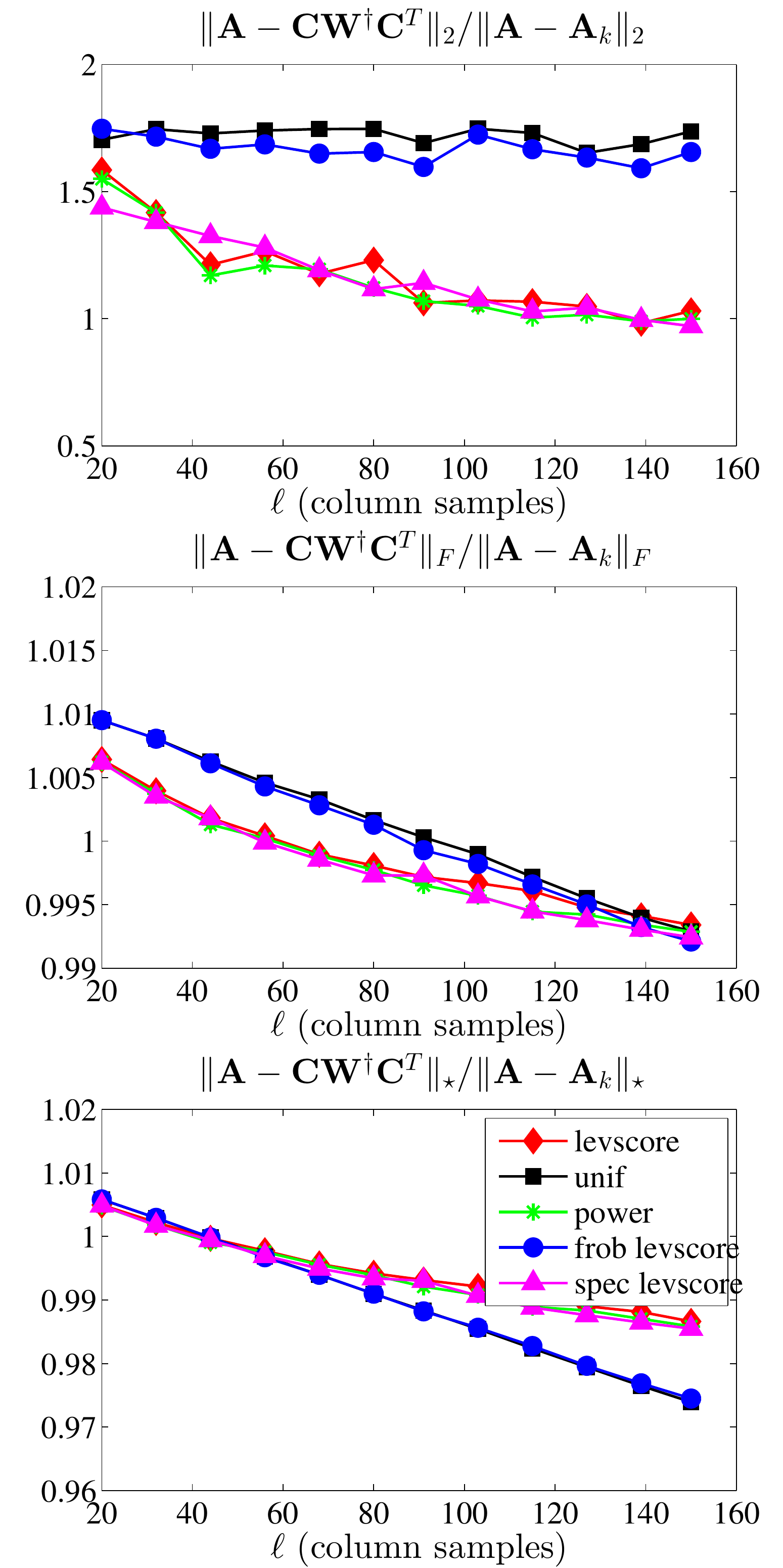}}%
 \subfigure[AbaloneS, $\sigma = 1, k = 20$]{\includegraphics[width=1.6in, keepaspectratio=true]{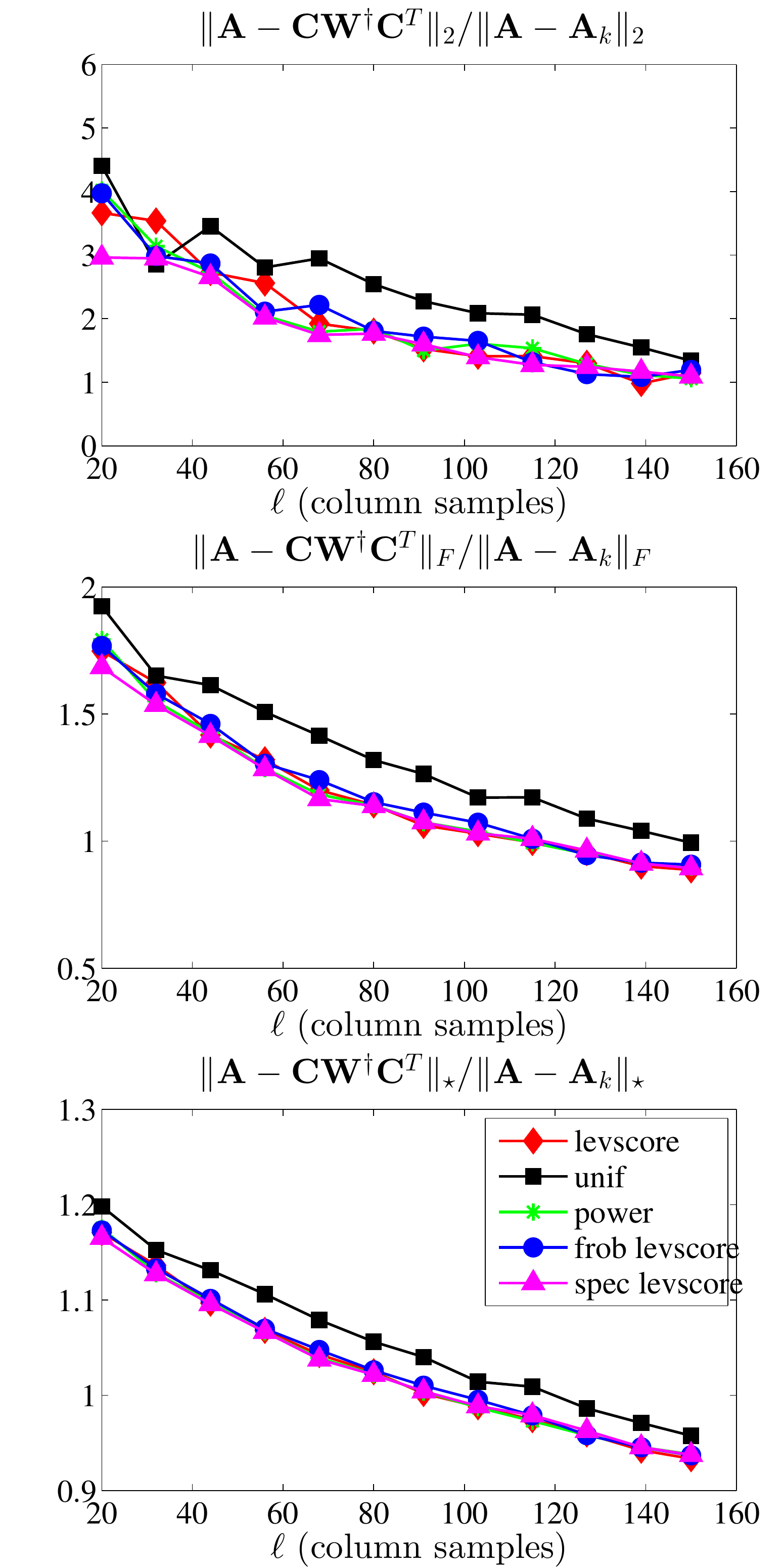}}%
 \subfigure[WineS, $\sigma = 1, k = 20$]{\includegraphics[width=1.6in, keepaspectratio=true]{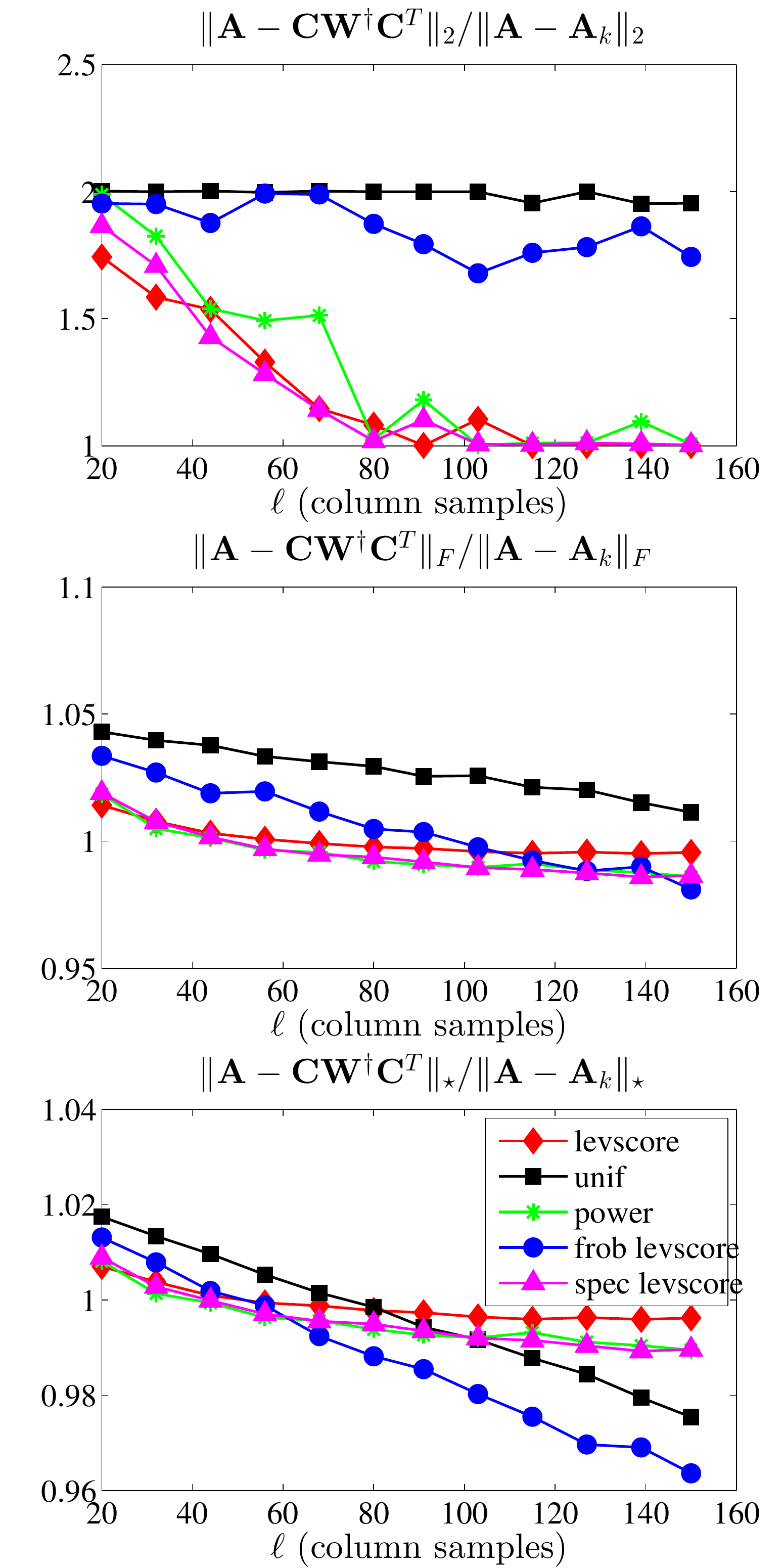}}%
 \subfigure[WineS, $\sigma = 2.1, k = 20$]{\includegraphics[width=1.6in, keepaspectratio=true]{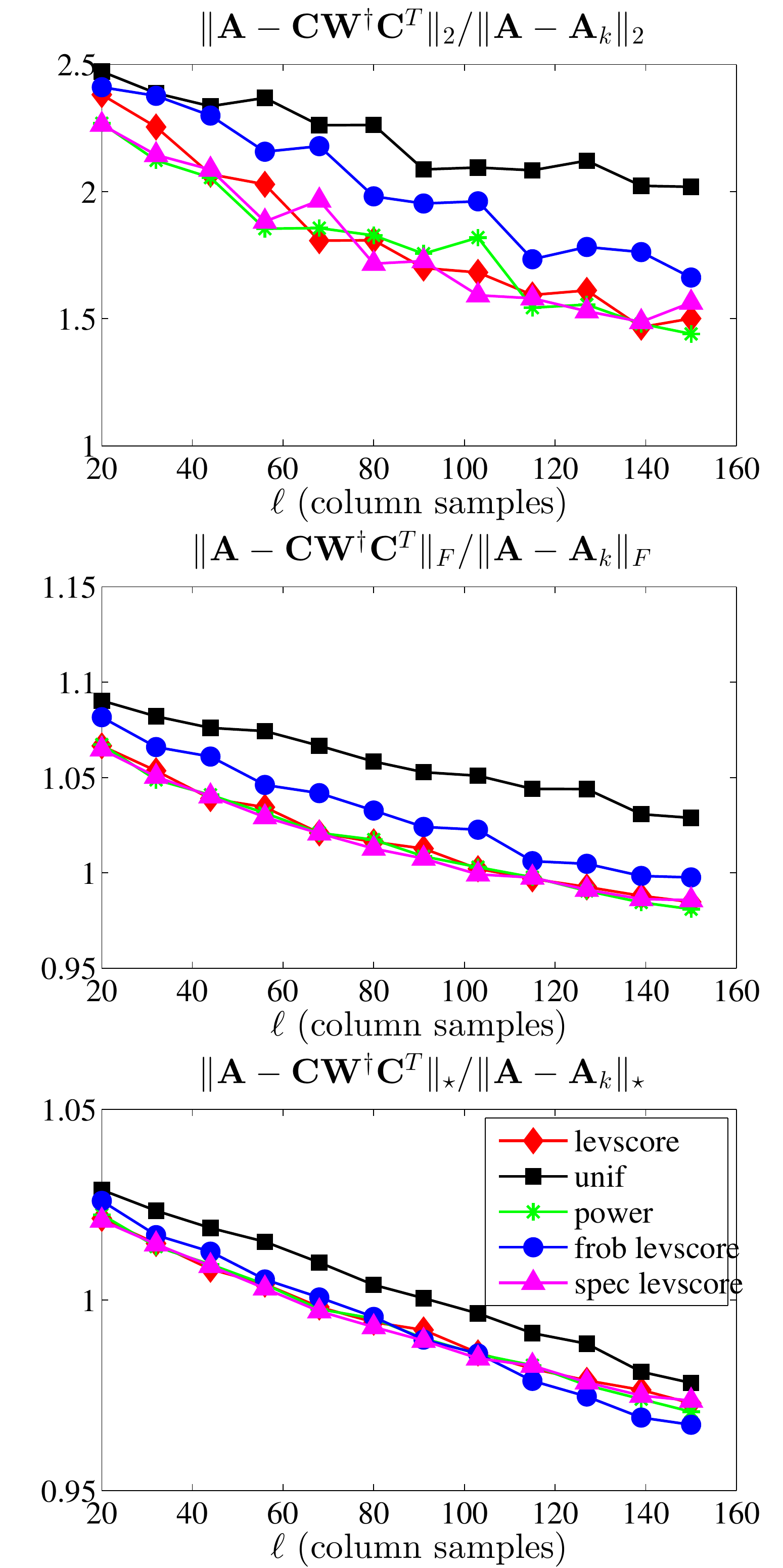}}%
 \\%
 \subfigure[AbaloneS, $\sigma = .15, k = 20$]{\includegraphics[width=1.6in, keepaspectratio=true]{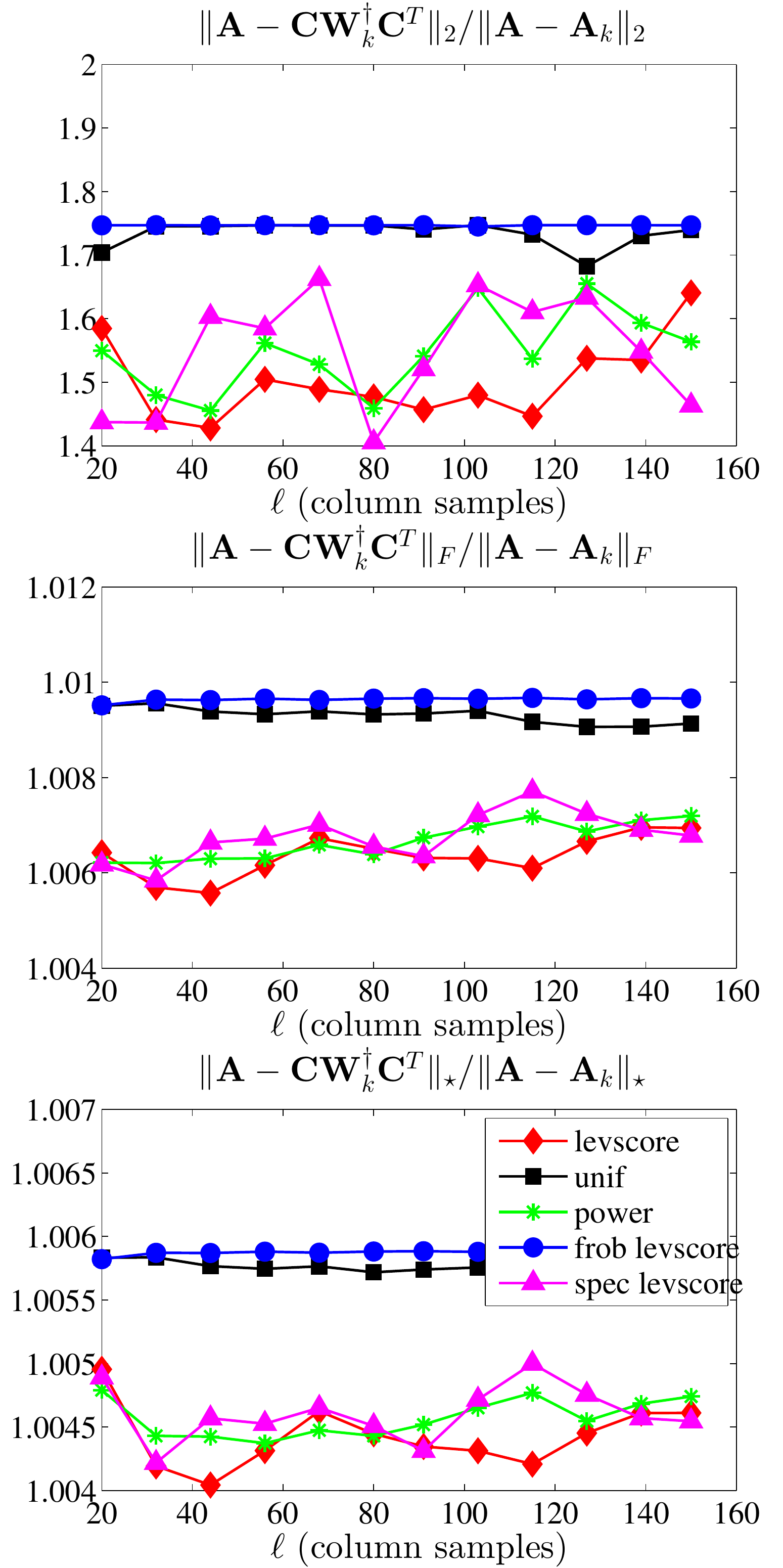}}%
 \subfigure[AbaloneS, $\sigma = 1, k = 20$]{\includegraphics[width=1.6in, keepaspectratio=true]{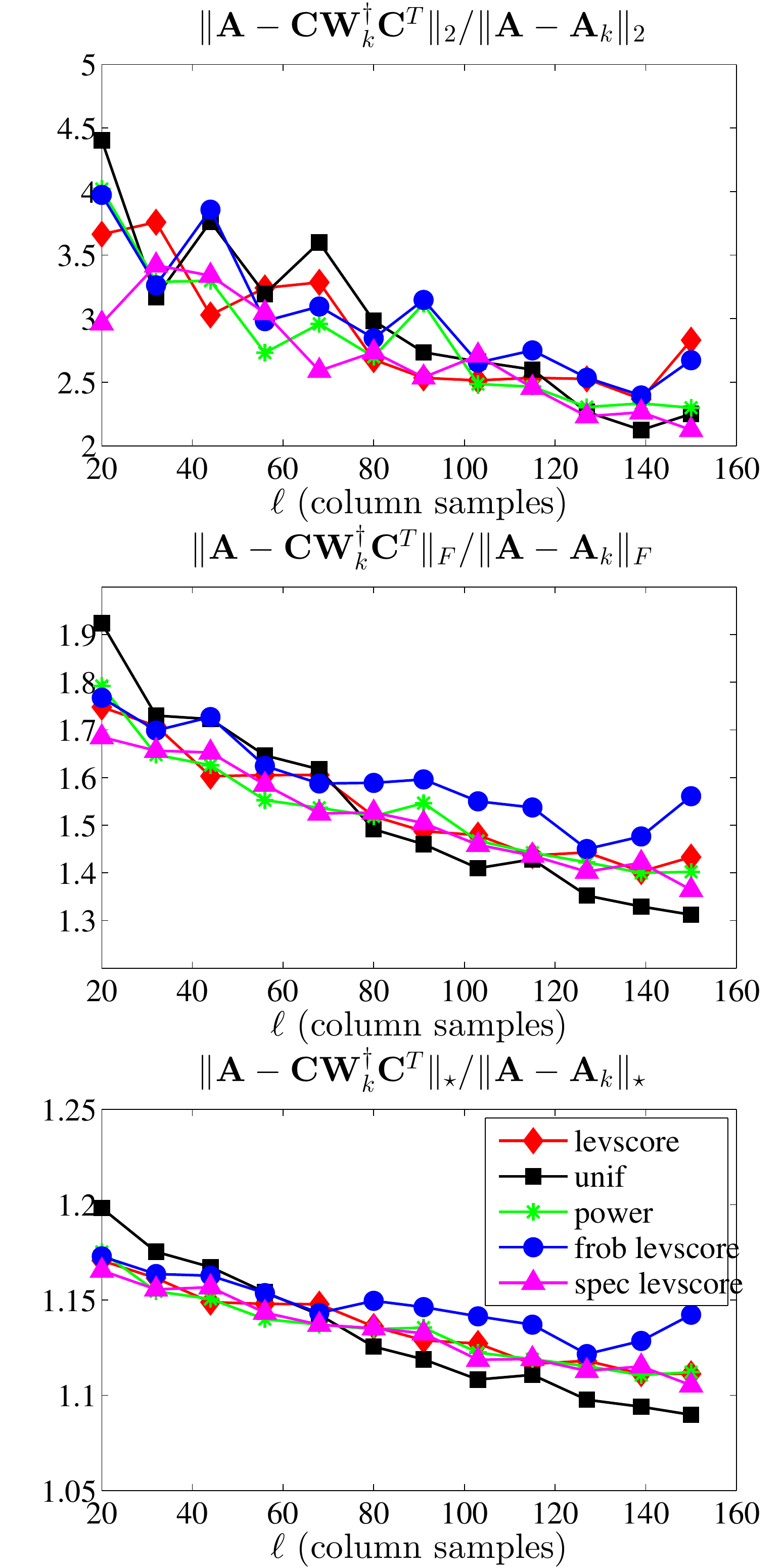}}%
 \subfigure[WineS, $\sigma = 1, k = 20$]{\includegraphics[width=1.6in, keepaspectratio=true]{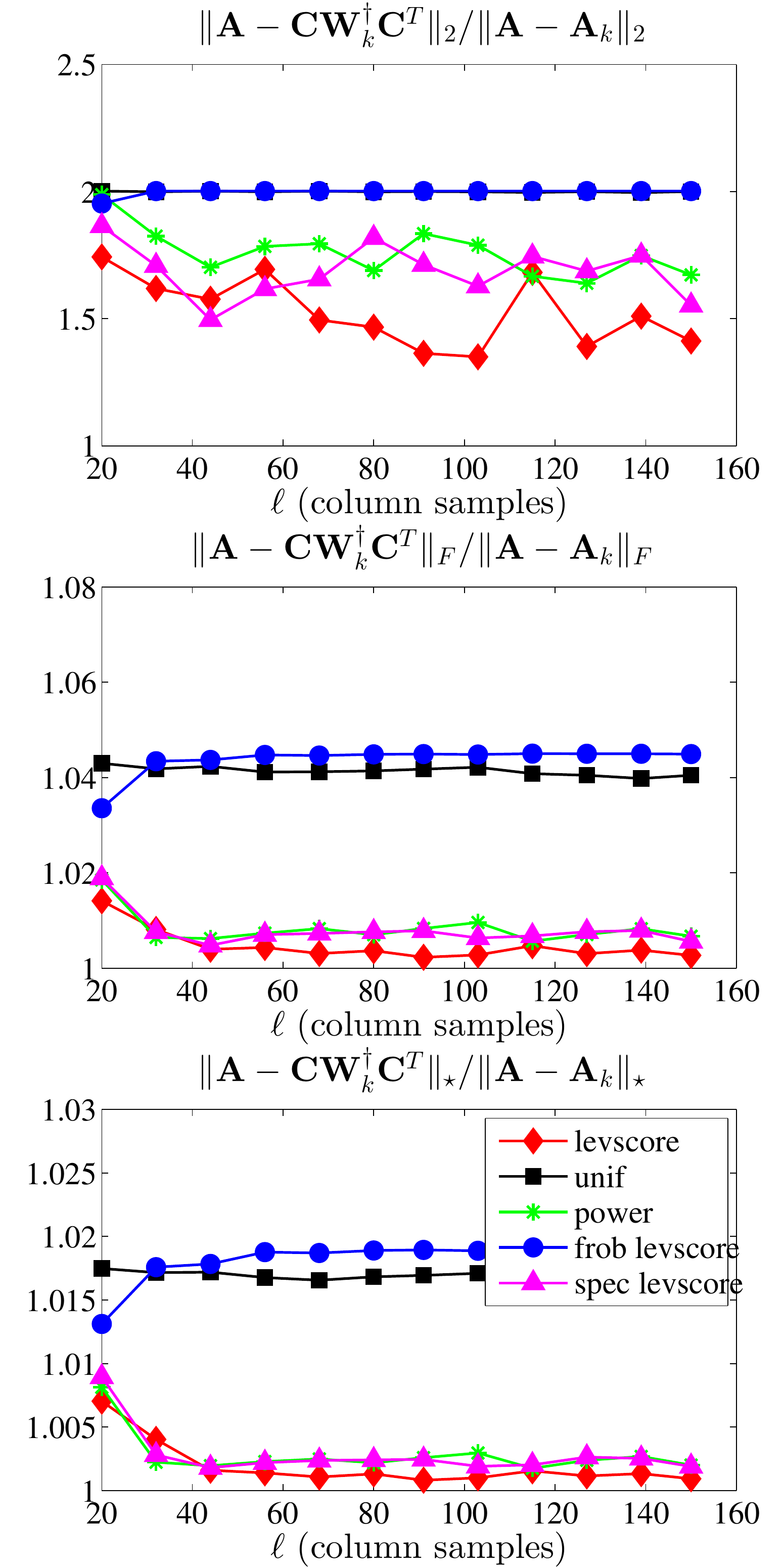}}%
 \subfigure[WineS, $\sigma = 2.1, k = 20$]{\includegraphics[width=1.6in, keepaspectratio=true]{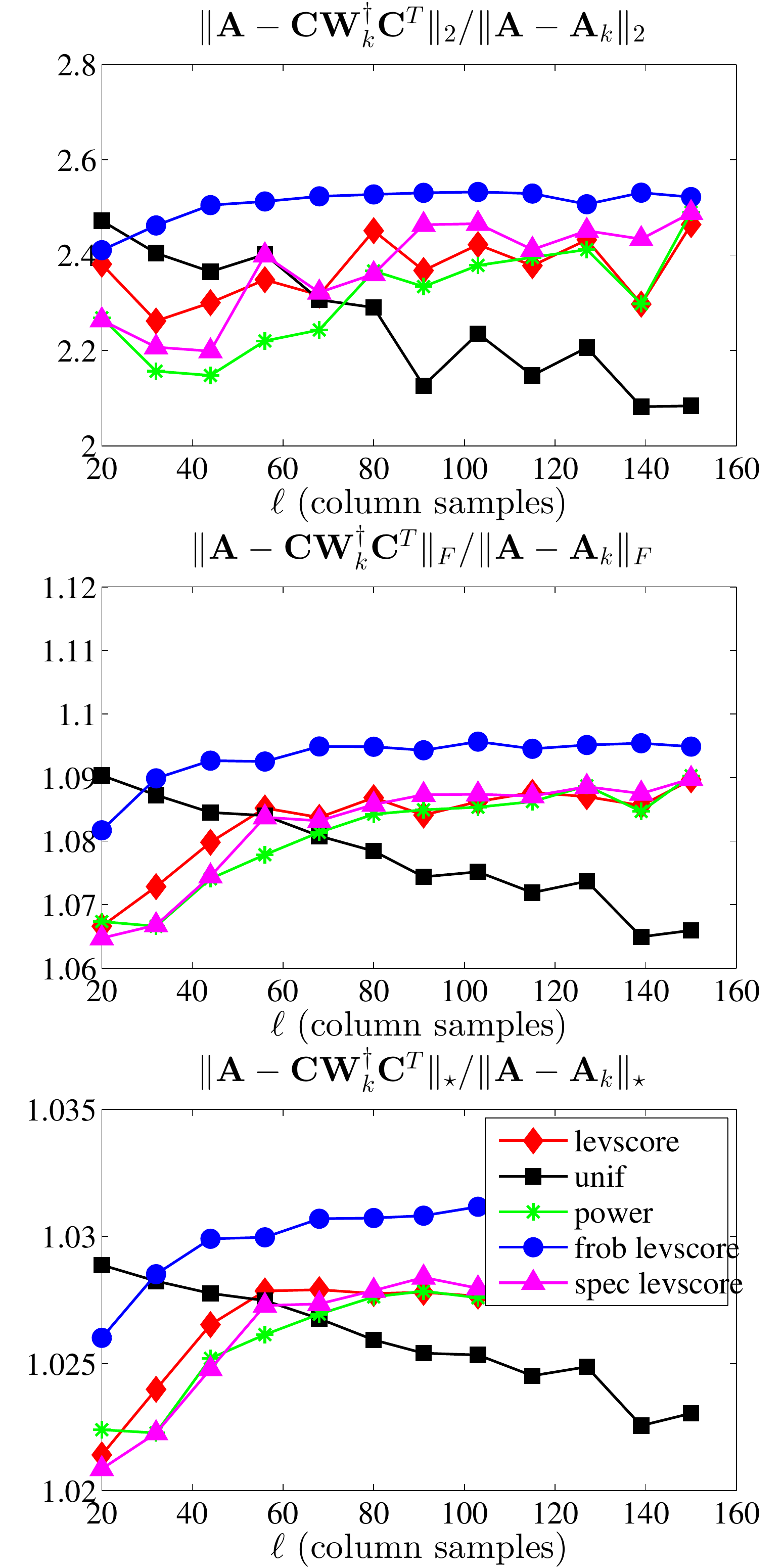}}%
 \caption{The spectral, Frobenius, and trace norm errors 
 (top to bottom, respectively, in each subfigure) of several
 (non-rank-restricted in top panels and rank-restricted in bottom panels)
 approximate leverage score-based SPSD sketches, as a function of the number of columns samples $\ell$, 
 for several sparse RBF data sets.}%
 \label{fig:sparserbf-inexact-nonfiltered-errors}
\end{figure}
\begin{itemize}
\item
For Laplacian Kernels, for the non-rank-restricted results, 
``frob levscore'' is only slightly better than uniform sampling, 
while ``power'' and ``spec levscore'' are substantially better than uniform 
sampling.
All of those methods also lead to \emph{better} reconstruction 
results even than using the exact leverage scores (suggesting
that some form of implicit regularization is taking place): the reconstruction
quality is higher for a given $\ell$ and, also, using approximate leverage scores does not lead to the 
saturation effect observed when using the exact leverage scores.
\item
For Laplacian Kernels, for the rank-restricted results, the ``frob levscore''
results are similar to the exact leverage score results for $\ell=k$, but 
the quality degrades considerably as $\ell$ increases.
On the other hand, ``power'' and ``spec levscore'' are much better than 
using the exact leverage scores when $\ell=k$, and are slightly better or 
only slightly worse as $\ell$ increases.
\item
For the Linear Kernels, all the methods perform similarly in the 
non-rank-restricted case; while in the rank-restricted case, the methods that use
approximate leverage scores tend to parallel the exact leverage score 
results, both when those get better and when those get worse with increasing 
$\ell$.
\item
For both the dense and the sparse RBF data sets, for the non-rank-restricted 
case, the approximate leverage score algorithms tend to parallel the exact 
leverage score algorithm, and they are not substantially better.
In particular, both ``power'' and ``spec levscore'' tend to saturate when 
the exact method saturates, but in those cases ``frob levscore'' tends not to 
saturate.
\item
For both the dense and sparse RBF data sets, for the rank-restricted case, 
the results depend on the value of the $\sigma$ width parameter.
When $\sigma$ is larger and the matrices are more homogeneous, all the 
methods tend to parallel each other (although WineS is an exception).
When $\sigma$ is smaller, ``frob levscore'' is generally better than uniform 
sampling but worse than the other methods for $\ell=k$, but it degrades
with increasing $\ell$; while both ``power'' and ``spec levscore'' tend to 
parallel the results for the exact leverage scores.
\end{itemize}

\noindent\todo[inline]{This paragraph was not clear to Ilse; clarify}
Note that the difference between different approximate leverage score 
algorithms often corresponds to a difference in the spectral gaps of the 
corresponding matrices.
From Table~\ref{table:datasets_stats}, if we fix $k$ and use the approximate 
leverage scores filtered through rank $k$ to form a Nystr\"om approximation 
to $\mat{A}$, the accuracy of that approximation has a strong dependence on 
the spectral gap of $\mat{A}$ at rank $k,$ as measured by 
$\frac{\lambda_k}{\lambda_{k+1}}.$ 
In general, the larger the spectral gap, the more accurate the approximation. 
This phenomena can also be understood in terms of the convergence of the 
approximate leverage scores: the approximation algorithms 
(in Algorithm~\ref{alg:spectral_levscore_approx}
and Algorithm~\ref{alg:frob_levscore_approx}) are essentially 
truncated versions of the subspace iteration method for computing the top 
$k$ eigenvectors of $\mat{A}.$ 
It is a classical result that the spectral gap determines the rate of 
convergence of the subspace iteration process to the desired eigenvectors: 
the larger it is, the fewer iterations of the process are required to get 
accurate approximations of the top eigenvectors. 
It follows immediately that the larger the spectral gap, the more accurate 
the approximate leverage scores generated by these approximation algorithms 
are.
Our empirical results illustrate the complexities and subtle consequences 
of these properties in realistic machine learning applications of even modestly-large size.

\subsubsection{Summary of Leverage Score Approximation Algorithms}

Before proceeding, there are several summary observations that we can make 
about the running time and reconstruction quality of approximate leverage 
score sampling algorithms for the data sets we have considered.
\begin{itemize}
\item
The running time of computing the exact leverage scores is generally much 
worse than that of uniform sampling and both SRFT-based and Gaussian-based 
random projection methods.
\item
The running time of computing approximations to the leverage scores can, 
with appropriate choice of parameters, be much faster than the exact 
computation of the leverage scores; and, especially for ``frob levscore,'' 
can be comparable to the running time of the random projection (SRFT or Gaussian) 
used in the leverage score approximation algorithm.
For the methods that involve $q>0$ iterations to compute stronger 
approximations to the leverage scores, the running time can vary 
considerably depending on details of the stopping condition.
\item
The leverage scores computed by the ``frob levscore'' procedure are typically very
different than the ``exact'' leverage scores, but they are leverage scores
for a low-rank space that is near the best rank-$k$ approximation to the
matrix.
This is often sufficient for good low-rank approximation, although the 
reconstruction accuracy can degrade in the rank-restricted cases as $\ell$ 
is increased.
\item
The approximate leverage scores computed from ``power'' and ``spec levscore'' 
approach those of the exact leverage scores, as $q$ is increased; and they 
obtain reconstruction accuracy that is no worse, and in many cases is 
better, than that obtained by the exact leverage scores.
This suggests that, by not fitting exactly to the empirical statistical 
leverage scores, we are observing a form of implicit regularization.
\item
The running time of Algorithm~\ref{alg:tall_levscore_approx}, when applied 
to ``tall'' matrices for which $n \gg d$, is faster than the running time of
performing a QR decomposition of the matrix $\mat{A}$; and it is comparable
to the running time of applying a random projection to $\mat{A}$ (which is
the computational bottleneck of applying 
Algorithm~\ref{alg:tall_levscore_approx}).
Thus, in particular, one could use this algorithm to compute approximations 
to the leverage scores to obtain a sketch that provides a relative-error 
approximation to a least-squares problem involving 
$\mat{A}$~\cite{DMM08_CURtheory_JRNL,DMMS07_FastL2_NM10,Mah-mat-rev_BOOK}; or
one could use the sketch thereby obtained as a preconditioner to an 
iterative method to solve the least-squares problem, in a manner analogous
to how Blendenpik or LSRN do so with a random projection~\cite{AMT10,MSM11_TR}.
\end{itemize}

\noindent
Previous work has showed that one can implement random projection 
algorithms to provide low-rank approximations with error comparable to that 
of the SVD in less time than state-of-the art Krylov solvers and other 
``exact'' numerical methods~\cite{HMT09_SIREV,Mah-mat-rev_BOOK}.
Our empirical results show that these random projection algorithms can be 
used in two complementary ways to approximate SPSD matrices of interest in 
machine learning:
first, they can be used directly to compute a projection-based low-rank 
approximation; and
second, they can be used to compute approximations to the leverage scores, 
which can be used to compute a sampling-based low-rank approximation.
With the right choice of parameters, the two complementary approaches have
roughly comparable running times, and neither one dominates the other in
terms of reconstruction accuracy.

\subsection{Projection-based Sketches}

\begin{figure}[t]
 \centering
 \subfigure[Gnutella, $k=20$]{\includegraphics[width=1.6in, keepaspectratio=true]{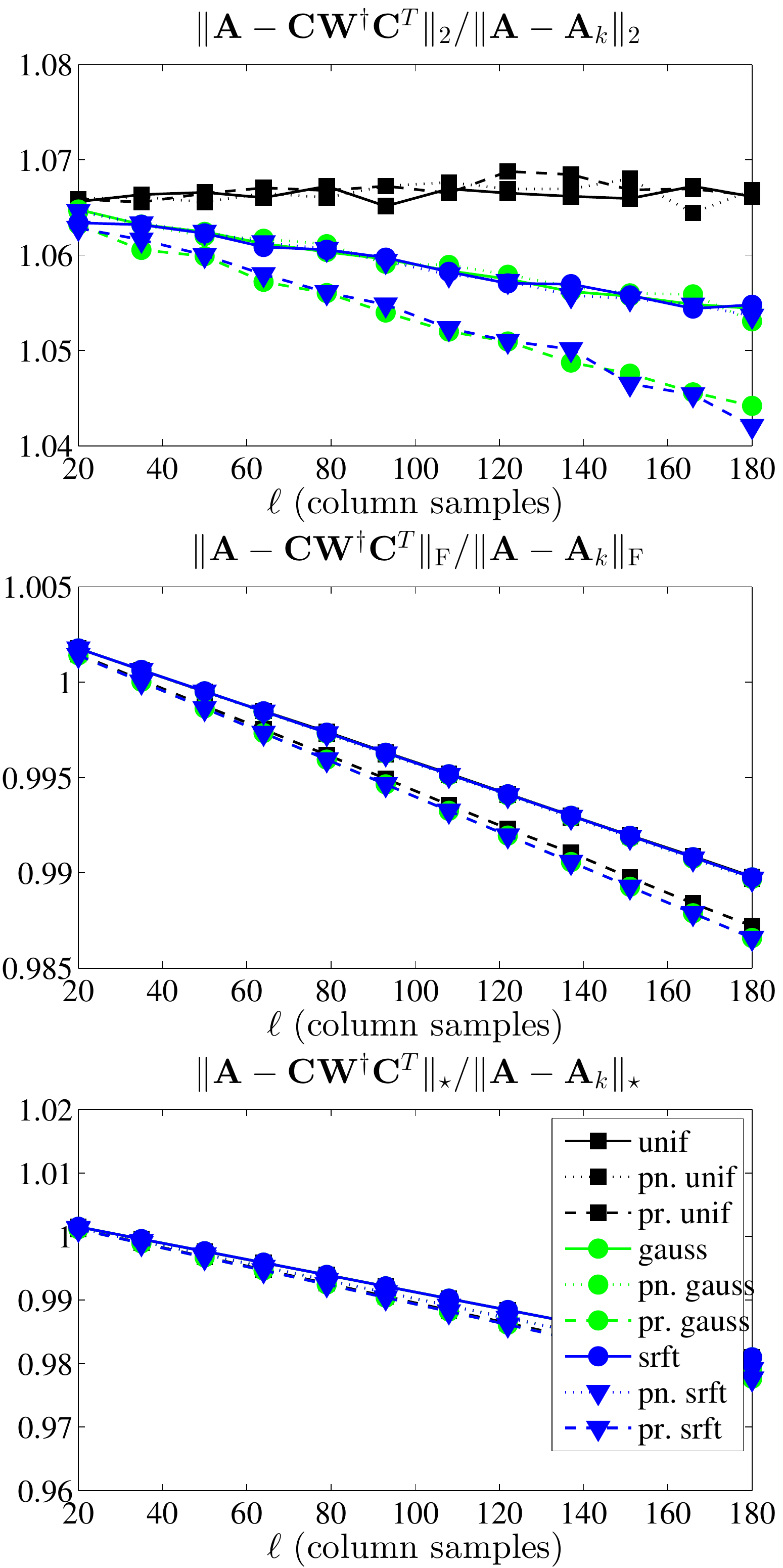}}%
 \subfigure[Dexter, $k=8$]{\includegraphics[width=1.6in, keepaspectratio=true]{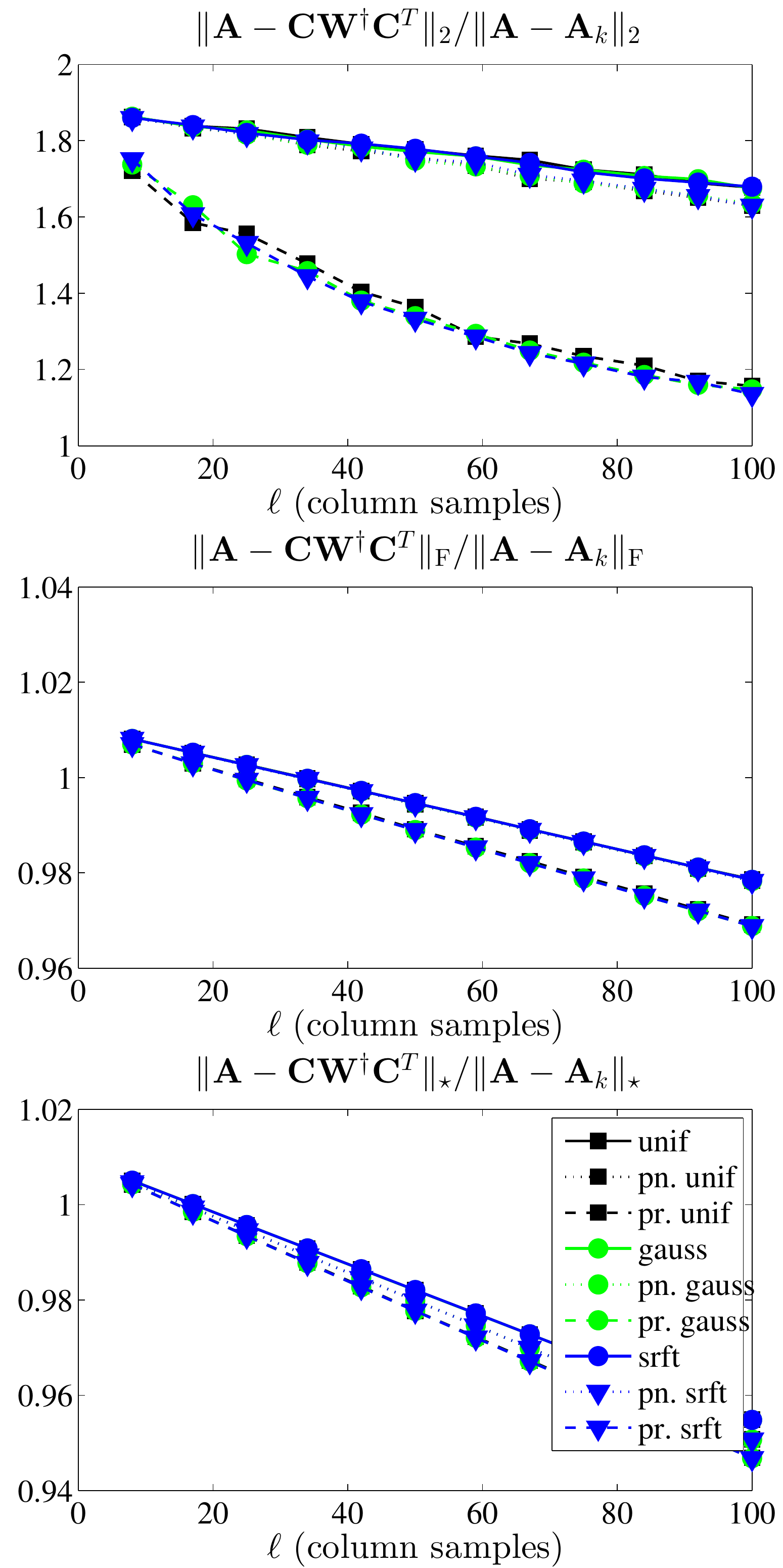}}%
 \subfigure[AbaloneD, $\sigma = .15, k = 20$]{\includegraphics[width=1.6in, keepaspectratio=true]{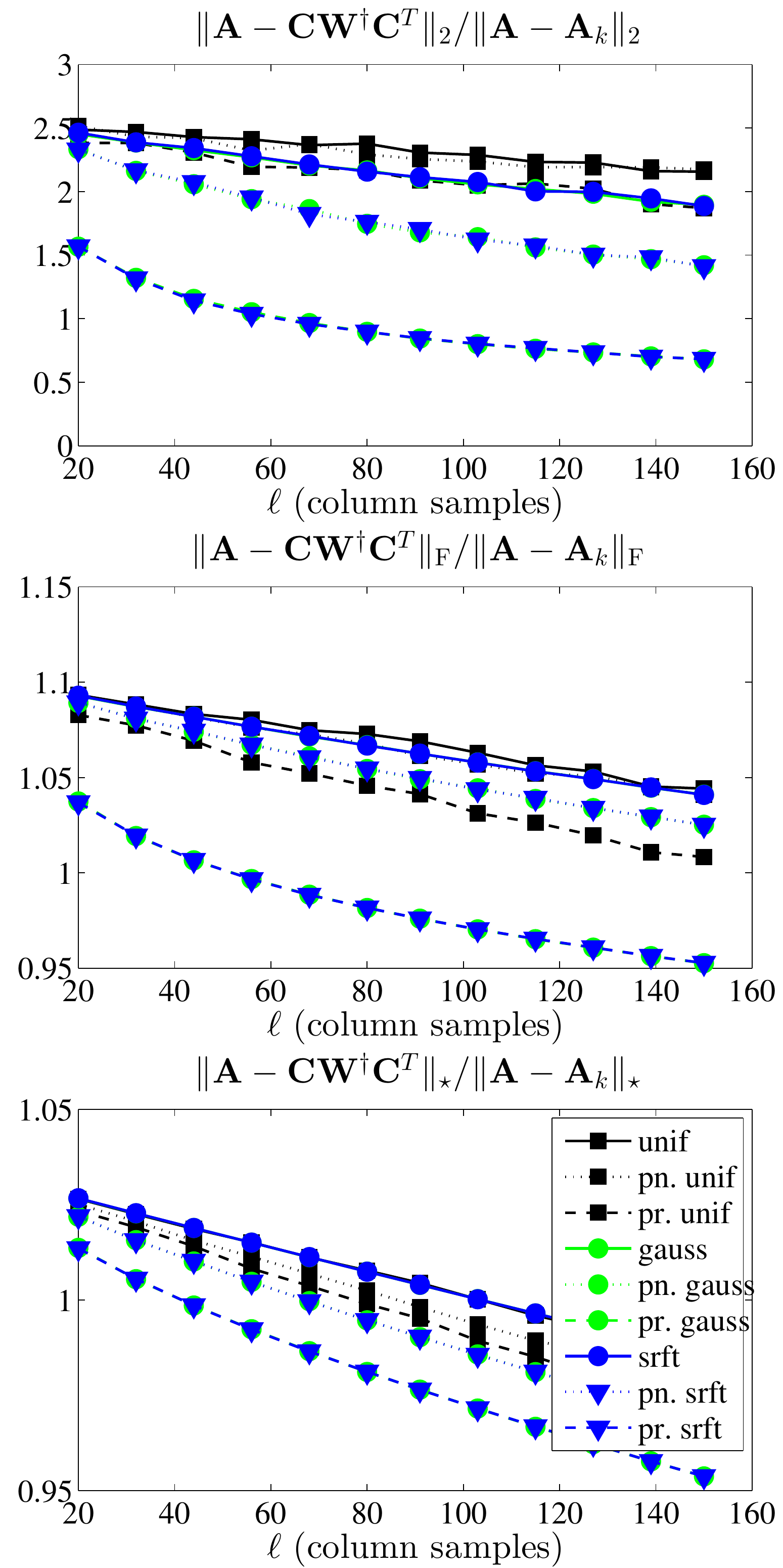}}%
 \subfigure[WineS, $\sigma=1, k=20$]{\includegraphics[width=1.6in, keepaspectratio=true]{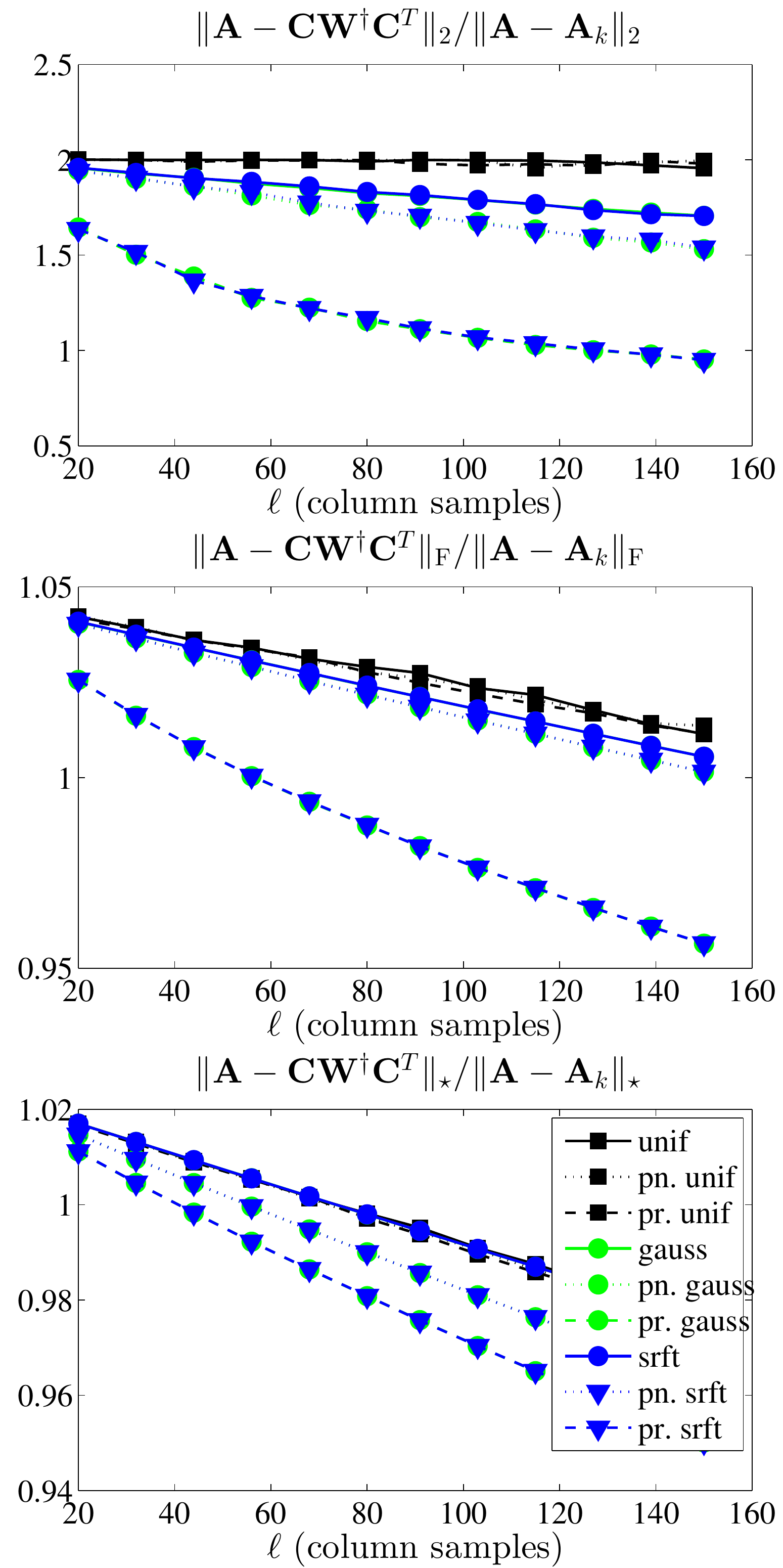}}%
 \caption{The spectral, Frobenius, and trace norm errors 
 (top to bottom, respectively, in each subfigure) of several
 non-rank-restricted SPSD sketches, including the pinched and prolonged sketches, as a function of the number of columns samples $\ell$, 
 for several datasets. Pinched and prolonged sketches, respectively indicated by ``pn.'' and ``pr.'', are defined in 
 Equations~\eqref{eqn:pinchedsketchdef} and~\eqref{eqn:prolongedsketchdef}. }%
 \label{fig:pinched-and-prolonged-sketches}
\end{figure}

Finally, for completeness, we consider the performance of the two 
projection-based SPSD sketches proposed in~\cite{HMT09_SIREV}, and we show 
how they perform when compared with the sketches we have considered. 
Recall that the idea of these sketches is to 
construct low-rank approximations by forming an approximate basis $\matQ$ for 
the top eigenspace of $\matA$ and then restricting $\matA$ to that eigenspace. 
In more detail, given a sketching matrix $\matS,$ form the matrix 
$\matY = \matA \matS$ and take the QR decomposition of $\matY$ to obtain 
$\matQ,$ a matrix with orthonormal columns. The first sketch, which we
eponymously refer to as the \emph{pinched} sketch, is simply $\matA$ pinched 
to the space spanned by $\matQ:$
\begin{equation}
\matQ (\matQ^\transp \matA \matQ) \matQ^\transp.
\label{eqn:pinchedsketchdef}
\end{equation}
The second sketch, which 
we refer to as the \emph{prolonged} sketch, is 
\begin{equation}
 \matA \matQ (\matQ^\transp \matA \matQ)^\pinv \matQ^\transp \matA. 
 \label{eqn:prolongedsketchdef}
\end{equation}

It is clear that the prolonged sketch can be constructed using our SPSD 
Sketching Model by taking $\matQ$ as the sketching matrix. In fact, a stronger
statement can be made. As shown in~\cite{Gittens12_TR}, and as stated in 
Lemma~\ref{lemma:sketchidentity} below, it 
is the case, for any sketching matrix $\matX,$ that when 
$\matC = \matA \matX$ and $\matW = \matX^\transp \matA \matX,$
\[
 \matC \matW^\pinv \matC^\transp = 
 \matA^{1/2} \matP_{\matA^{1/2} \matX} \matA^{1/2}.
\]
By considering the two choices $\matX = \matA \matS$ and $\matX = \matQ$, we 
see that in fact the prolonged sketch is exactly the sketch obtained by 
applying the power method with $q = 2:$
\begin{align*}
 \matA \matQ ( \matQ^\transp \matA \matQ)^\pinv \matQ^\transp \matA & = 
 \matA^{1/2} \matP_{\matA^{1/2} \matQ} \matA^{1/2} \\
 & = \matA^{1/2} \matP_{\matA^{1/2} (\matA \matS) } \matA^{1/2} \\
 & = \matA^2 \matS (\matS^\transp \matA^3 \matS)^\pinv\matS^\transp \matA^2.
\end{align*}
It follows that the bounds we provide in Section~\ref{sxn:theory} on the 
performance of sketches obtained using the power method pertain also to 
prolonged sketches.

In Figure~\ref{fig:pinched-and-prolonged-sketches}, we compare the empirical 
performances of several of the SPSD sketches considered earlier with their 
pinched and prolonged variants. Specifically, we plot the errors of
pinched and prolonged sketches for several choices of sketching
matrices---corresponding to uniform column sampling, gaussian column 
mixtures, and SRFT-based column mixtures---along with the errors of non-pinched, 
non-prolonged sketches constructing using the same choices of $\matS.$ In the
interest of brevity, we provide results only for several of the datasets 
listed in Table~\ref{table:datasets}, and we consider only the nonfixed-rank 
variants of the sketches.

Some trends are clear from Figure~\ref{fig:pinched-and-prolonged-sketches}.
\begin{itemize}
\item In the spectral norm, the prolonged sketches are considerably more accurate 
than the pinched and standard sketches for all the datasets considered. Without 
exception, the prolonged Gaussian and SRFT column-mixture sketches are the most accurate
in the spectral norm, of all the sketches considered. Only in the case of the 
Dexter Linear Kernel is the prolonged uniformly column-sampled sketch nearly as
accurate in the spectral norm as the prolonged Gaussian and SRFT sketches. To
 a lesser extent, the prolonged sketches are also more accurate in the Frobenius
 and trace norms than the other sketches considered. The increased Frobenius and
 trace norm accuracy is particularly notable for the two RBF Kernel datasets;
 again, the prolonged Gaussian and SRFT sketches are considerably more accurate
 than the prolonged uniformly column-sampled sketches.  
 \item After the 
 prolonged sketches, the pinched Gaussian and SRFT column-mixture sketches 
 exhibit the least  spectral, Frobenius, and trace norm 
 errors. Again, however, we see that the pinched uniformly column-sampled 
 sketches are considerably less accurate than the
 pinched Gaussian and SRFT column-mixture sketches. Particularly in the 
 spectral and Frobenius norms, the pinched uniformly column-sampled sketches
 are not any more accurate than the basic uniformly column-sampled sketches.
 \end{itemize}
From these considerations, it seems evident that the benefits of pinched and
prolonged sketches are most prominent when the spectral norm is the error metric,
or when the dataset is an RBF Kernel. In particular, pinched and prolonged
sketches are not significantly more accurate (than the sketches considered 
in the previous subsections) in the Frobenius and trace norms
for any of the datasets considered. 

It is also evident from Figure~\ref{fig:pinched-and-prolonged-sketches} that the pinched sketches
often have a much slighter increase in accuracy over the basic sketches than do the
prolonged sketches. To understand why the pinched sketches are less accurate than the
prolonged sketches, observe that the pinched sketches satisfy
\begin{align*}
 \matQ(\matQ^\transp\matA\matQ) \matQ^\transp 
 & = \matP_{\matA\matS} \matA \matP_{\matA\matS} \\
 & = (\matP_{\matA \matS} \matA^{1/2}) 
     (\matA^{1/2} \matP_{\matA \matS} ),
\end{align*}
while, as noted above, the prolonged sketches can be written in the form
\begin{align*}
 \matA \matQ(\matQ^\transp \matA \matQ)^\pinv \matQ^\transp \matA 
  = (\matA^{1/2} \matP_{\matA^{3/2} \matS}) (\matP_{\matA^{3/2} \matS} \matA^{1/2}).
\end{align*}
Thus, pinched and prolonged sketches approximate the square root of $\matA$ by projecting, respectively, onto the ranges of
$\matA \matS$ and $\matA^{3/2} \matS.$ The spectral decay present in $\matA$ is increased when $\matA$ is raised to a
power larger than one; consequently, the range of $\matA^{3/2} \matS$ is more biased towards the top $k$-dimensional
invariant subspace of $\matA$ than is the range of $\matA \matS.$ It follows that the approximate square root used to construct the
prolonged sketches more accurately captures the top $k$-dimensional subspace of $\matA$ than does that used to construct 
the pinched sketches.

% We note that, although
% this explains the fact that the prolonged sketches are more accurate than their
% nonprolonged counterparts, it does not explain why prolongation works more
% effectively for the RBF Kernel datasets than it does the Laplacian and Linear
% datasets. In particular, despite the fact that the Dexter Kernel has a spectral
% ratio of $\lambda_{k+1}(\matA)/\lambda_k(\matA) = 0.963$ while the AbaloneD
%  and AbaloneS have the higher spectral ratios of $0.992$ and $0.989$ respectively,
%  the beneficial effects of prolongation are more evident for the RBF Kernel
%  datasets than the Dexter Linear Kernel.

\section{Theoretical Aspects of SPSD Low-rank Approximation}
\label{sxn:theory}

In this section, we present our main theoretical results, which consist
of a suite of bounds on the quality of low-rank approximation under several 
different sketching methods.
As mentioned above, these were motivated by our empirical observation that
\emph{all} of the sampling and projection methods we considered perform 
\emph{much} better on the SPSD matrices we considered than previous 
worst-case bounds (\emph{e.g.},~\cite{dm_kernel_JRNL,KMT12,Gittens12_TR}) 
would suggest.
We start in Section~\ref{sxn:theory-det} with deterministic structural 
conditions for the spectral, Frobenius, and trace norms;
and then in Section~\ref{sxn:theory-rand} we use these results to 
provide our bounds for several random sampling and random projection 
procedures.

\subsection{Deterministic Error Bounds for Low-rank SPSD Approximation} 
\label{sxn:theory-det}

In this section, we present three theorems that provide error bounds for 
the spectral, Frobenius, and trace norm approximation errors under the SPSD 
Sketching Model of Section~\ref{sxn:prelim-prelim}.
These are provided in Sections~\ref{sxn:theory-det-spectral},
\ref{sxn:theory-det-frobenius}, and~\ref{sxn:theory-det-trace}, 
respectively, and they are followed by several more general remarks in 
Section~\ref{sxn:theory-det-comments}.
Note that these bounds hold for \emph{any}, \emph{e.g.}, deterministic or 
randomized, sketching matrix $\matS$.
Thus, \emph{e.g.}, one could use them to check, in an \emph{a posteriori} 
manner, the quality of a sketching method for which one cannot establish an 
\emph{a priori} bound.
Rather than doing this, we use these results (in 
Section~\ref{sxn:theory-rand} below) to derive \emph{a priori} bounds for 
when the sketching operation consists of common random sampling and random 
projection algorithms.

Our results are based on the fact, established in~\cite{Gittens12_TR}, that
approximations which satisfy our SPSD Sketching Model can be written in terms of 
a projection onto a subspace of the range of the square root of the matrix
being approximated. The following fact appears in the proof of Proposition 1 in~\cite{Gittens12_TR}.

\todo[inline]{Update this reference to unreleased version in anticipation of changing on arxiv}
\begin{lemma}[\protect{\cite{Gittens12_TR}}]
 Let $\matA$ be an SPSD matrix and $\matS$
be a conformal sketching matrix. Then when
$\matC = \matA \matS$ and $\matW = \matS^\transp \matA \matS,$
the corresponding low-rank SPSD approximation satisfies
\[
 \matC\matW^\pinv\matC^\transp = \matA^{1/2} \matP_{\matA^{1/2} \matS} \matA^{1/2}.
\]
\label{lemma:sketchidentity}
\end{lemma}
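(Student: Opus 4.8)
The plan is to reduce everything to the single matrix $\matM := \matA^{1/2}\matS$ and then invoke a standard identity relating a matrix, its Gram matrix, and the orthogonal projector onto its range. Since $\matA$ is SPSD it has a symmetric square root $\matA^{1/2}$ with $\matA^{1/2}\matA^{1/2} = \matA$. First I would rewrite the two ingredients of the sketch in terms of $\matM$: because $\matC = \matA\matS = \matA^{1/2}(\matA^{1/2}\matS) = \matA^{1/2}\matM$, and, using the symmetry of $\matA^{1/2}$, $\matW = \matS^\transp\matA\matS = (\matA^{1/2}\matS)^\transp(\matA^{1/2}\matS) = \matM^\transp\matM$. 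Substituting these, and again using $\matC^\transp = \matM^\transp\matA^{1/2}$, gives
\[
 \matC\matW^\pinv\matC^\transp = \matA^{1/2}\,\matM(\matM^\transp\matM)^\pinv\matM^\transp\,\matA^{1/2}.
\]

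The heart of the argument is then the claim that $\matM(\matM^\transp\matM)^\pinv\matM^\transp = \matP_{\matM}$, the orthogonal projector onto $\mathrm{range}(\matM)$. I would verify this via the thin SVD $\matM = \matU\matSig\matV^\transp$, where $\matU,\matV$ have orthonormal columns and $\matSig$ is diagonal with strictly positive entries: then $\matM^\transp\matM = \matV\matSig^2\matV^\transp$, so $(\matM^\transp\matM)^\pinv = \matV\matSig^{-2}\matV^\transp$, whence $(\matM^\transp\matM)^\pinv\matM^\transp = \matV\matSig^{-2}\matV^\transp\matV\matSig\matU^\transp = \matV\matSig^{-1}\matU^\transp = \matM^\pinv$, and $\matM\matM^\pinv = \matU\matU^\transp$, which is exactly $\matP_{\mathrm{range}(\matM)} = \matP_{\matA^{1/2}\matS}$. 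Combining this with the display above yields $\matC\matW^\pinv\matC^\transp = \matA^{1/2}\matP_{\matA^{1/2}\matS}\matA^{1/2}$, as claimed.

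There is essentially no hard step here; the only point requiring a little care is that one must work with a genuine pseudoinverse rather than an ordinary inverse, since $\matW = \matM^\transp\matM$ is typically rank-deficient (e.g., when $\matS$ has more columns than $\mathrm{rank}(\matA)$, or when the sampled columns become linearly dependent after multiplication by $\matA^{1/2}$). The thin-SVD bookkeeping above handles this uniformly, so no assumption on the rank or conditioning of $\matW$ is needed. Alternatively, one could cite the Moore--Penrose identities $\matM^\pinv = (\matM^\transp\matM)^\pinv\matM^\transp$ and $\matM\matM^\pinv = \matP_{\mathrm{range}(\matM)}$ directly and skip the SVD computation entirely.
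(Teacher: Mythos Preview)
Your proof is correct. The paper does not actually supply its own proof of this lemma: it is stated as a fact taken from the proof of Proposition~1 in the cited reference \cite{Gittens12_TR}, so there is nothing in the present paper to compare your argument against. Your reduction to $\matM = \matA^{1/2}\matS$ and the identity $\matM(\matM^\transp\matM)^\pinv\matM^\transp = \matP_{\matM}$ is exactly the standard route, and your care with the pseudoinverse (rather than assuming $\matW$ is invertible) is appropriate.
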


\subsubsection{Spectral Norm Bounds}
\label{sxn:theory-det-spectral}

%% From various sources we have structural results for column-based approximation of arbitrary rectangular $\matA,$ e.g. from Finding Structure,
%%  \[
%%   \TNormS{\matA - \matP_{\matA\matS} \matA} \leq \TNormS{\matSig_2} + \TNormS{\matSig_2 \matOmega_2 \matOmega_1^\pinv},
%%  \]
%%  where $\matOmega_i = \matU_i^\transp \matS,$ assuming $\matOmega_1$ has full row rank.
%%  
%%  My Nystr\"om paper shows that Nystr\"om sampling is equivalent to column sampling from the square-root of $\matA,$
%%  \[
%%   \TNorm{\matA - \matC \matW^\pinv \matC^\transp} = \TNormS{\matA^{1/2} - \matP_{\matA^{1/2}\matS}\matA^{1/2}},
%%  \]
%%  so applying the spectral norm structural result gives a structural result for SPSD matrices:
%%  \[
%%   \TNorm{\matA - \matC \matW^\pinv \matC^\transp} \leq \TNorm{\matSig_2} + \TNormS{\matSig_2^{1/2} \matOmega_2 \matOmega_1^\pinv},
%%  \]
%%  assuming $\matOmega_1$ has full row rank.

We start with a bound on the spectral norm of the residual error.
Although this result is trivial to prove, given prior work, it
highlights several properties that we use in the analysis of our 
subsequent results.

\begin{theorem}
Let $\matA$ be an $n \times n$ SPSD matrix with eigenvalue decomposition 
partitioned as in Equation~\eqref{eqn:eigendecompositionpartition}, $\matS$ 
be a sketching matrix of size $n \times \ell$, $q$ be a positive integer, and
$\matOmega_1$ and $\matOmega_2$ be as defined in Equation~\eqref{eqn:interactiontermsdefinition}. 
Then when $\matC = \matA^q \matS$ and $\matW = \matS^\transp \matA^{2q-1} \matS$, 
the corresponding low-rank SPSD approximation satisfies
\begin{equation*}
\TNorm{\matA - \matC \matW^\pinv \matC^\transp} 
   \leq \TNorm{\matSig_2} + 
   \TNorm{\matSig_2^{q-1/2} \matOmega_2 \matOmega_1^\pinv}^{2/(2q-1)},
\end{equation*}
assuming $\matOmega_1$ has full row rank. 
\label{thm:spectral-deterministic-error}
\end{theorem}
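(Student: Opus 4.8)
The plan is to use Lemma~\ref{lemma:sketchidentity} to write the sketch as a projection applied to $\matA^{1/2}$, reduce the spectral-norm residual to a projection residual for a high power of $\matA$, and then descend back to $\matA$ by a power-method estimate. First I would apply Lemma~\ref{lemma:sketchidentity} to the sketching matrix $\matX := \matA^{q-1}\matS$: since $\matA\matX = \matA^{q}\matS = \matC$ and $\matX^\transp\matA\matX = \matS^\transp\matA^{2q-1}\matS = \matW$, the lemma gives
\[
 \matC\matW^\pinv\matC^\transp = \matA^{1/2}\,\matP_{\matA^{1/2}\matX}\,\matA^{1/2} = \matA^{1/2}\,\matP\,\matA^{1/2}, \qquad \matP := \matP_{\matA^{q-1/2}\matS}.
\]
Because $\matA = \matA^{1/2}\matA^{1/2}$ and $\matI-\matP$ is a symmetric idempotent, $\matA - \matC\matW^\pinv\matC^\transp = \matA^{1/2}(\matI-\matP)\matA^{1/2} = \bigl((\matI-\matP)\matA^{1/2}\bigr)^\transp\bigl((\matI-\matP)\matA^{1/2}\bigr)$, so $\TNorm{\matA - \matC\matW^\pinv\matC^\transp} = \TNorm{(\matI-\matP)\matA^{1/2}}^2$.

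The next and only nonroutine step is the power-method estimate: for an orthoprojector $\matP$, a matrix $\matM$, and an integer $t\ge 0$, one has $\TNorm{(\matI-\matP)\matM}\le\TNorm{(\matI-\matP)(\matM\matM^\transp)^{t}\matM}^{1/(2t+1)}$. Both sides equal the square root of $\lambda_{\max}$ of a compression of a power of the positive semidefinite matrix $\matM\matM^\transp$ by $\matI-\matP$, and the inequality reduces, after noting that the relevant top eigenvector $v$ lies in the range of $\matI-\matP$, to $(v^\transp\matM\matM^\transp v)^{s}\le v^\transp(\matM\matM^\transp)^{s}v$ with $s=2t+1\ge1$, which is Jensen's inequality for $x\mapsto x^{s}$ (cf.~\cite{HMT09_SIREV}). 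Applying this with $\matM=\matA^{1/2}$ and $t=q-1$, so that $(\matM\matM^\transp)^{q-1}\matM=\matA^{q-1/2}$, gives $\TNorm{\matA - \matC\matW^\pinv\matC^\transp}\le\TNorm{(\matI-\matP)\matA^{q-1/2}}^{2/(2q-1)}$. It then remains to bound the residual of projecting $\matA^{q-1/2}$ onto $\operatorname{range}(\matA^{q-1/2}\matS)$, which is the standard structural computation: since $\matI-\matP$ annihilates every matrix whose columns lie in $\operatorname{range}(\matA^{q-1/2}\matS)$ and $\TNorm{\matI-\matP}\le1$, we have $\TNorm{(\matI-\matP)\matA^{q-1/2}}\le\TNorm{\matA^{q-1/2}-\mat{N}}$ for any such $\mat{N}$. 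Working in the eigenbasis $\matU$ of $\matA$ (the spectral norm is unitarily invariant), $\matA^{q-1/2}$ becomes $\matSig^{q-1/2}$ and $\operatorname{range}(\matA^{q-1/2}\matS)$ becomes the column span of the matrix with blocks $\matSig_1^{q-1/2}\matOmega_1$ and $\matSig_2^{q-1/2}\matOmega_2$; since $\matSig_1$ is invertible and $\matOmega_1$ has full row rank we have $\matOmega_1\matOmega_1^\pinv=\matI_k$, so right-multiplying that matrix by the block matrix with blocks $\matOmega_1^\pinv$ and $0$ yields a matrix in its column span whose top block equals $\matSig_1^{q-1/2}$, and subtracting it from $\matSig^{q-1/2}$ leaves only
\[
 \TNorm{(\matI-\matP)\matA^{q-1/2}} \le \TNorm{\begin{pmatrix} -\matSig_2^{q-1/2}\matOmega_2\matOmega_1^\pinv & \matSig_2^{q-1/2} \end{pmatrix}} \le \bigl(\TNorm{\matSig_2^{q-1/2}\matOmega_2\matOmega_1^\pinv}^2 + \TNorm{\matSig_2^{q-1/2}}^2\bigr)^{1/2},
\]
using $\TNorm{[\,\matB\ \ \matD\,]}^2 = \TNorm{\matB\matB^\transp + \matD\matD^\transp} \le \TNorm{\matB}^2 + \TNorm{\matD}^2$.

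Assembling, and using $\TNorm{\matSig_2^{q-1/2}}^2 = \TNorm{\matSig_2}^{2q-1}$,
\[
 \TNorm{\matA - \matC\matW^\pinv\matC^\transp} \le \bigl(\TNorm{\matSig_2^{q-1/2}\matOmega_2\matOmega_1^\pinv}^2 + \TNorm{\matSig_2}^{2q-1}\bigr)^{1/(2q-1)} \le \TNorm{\matSig_2^{q-1/2}\matOmega_2\matOmega_1^\pinv}^{2/(2q-1)} + \TNorm{\matSig_2},
\]
the last step by subadditivity of the concave map $x\mapsto x^{1/(2q-1)}$ on $[0,\infty)$. The main obstacle is the power-method estimate of the second paragraph; for $q=1$ it is vacuous and the argument reduces to the textbook Nystr\"om spectral-norm bound, which is presumably why the authors call the statement trivial to prove given prior work. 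The one bookkeeping point to get right is to cancel the top block using a right inverse $\matOmega_1^\pinv$ of $\matOmega_1$ (legitimate because $\matOmega_1$ has full row rank) rather than the pseudoinverse of $\matSig_1^{q-1/2}\matOmega_1$, so that the surviving off-diagonal block is exactly $\matSig_2^{q-1/2}\matOmega_2\matOmega_1^\pinv$, as in the statement.
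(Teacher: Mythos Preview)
Your proof is correct and follows essentially the same approach as the paper. The paper invokes \cite[Theorems~9.1 and~9.2]{HMT09_SIREV} as a black box to obtain the combined bound $\TNorm{\matA^{1/2}(\matI-\matP)\matA^{1/2}}\le(\TNorm{\matSig_2^{q-1/2}}^2+\TNorm{\matSig_2^{q-1/2}\matOmega_2\matOmega_1^\pinv}^2)^{1/(2q-1)}$, whereas you reprove those two ingredients inline---the power-method inequality via Jensen and the structural residual bound via the explicit $\matOmega_1^\pinv$-cancellation---but the logic is identical.
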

\begin{proof}
Apply Lemma~\ref{lemma:sketchidentity} with the sampling matrix $\matS^\prime = \matA^{q-1} \matS$ (where, recall, $q \ge 1$) to see that
\[
 \matC \matW^\pinv \matC^\transp = \matA^{1/2} \matP_{\matA^{q-1/2} \matS} \matA^{1/2}.
\]
It follows that 
\begin{equation}
  \TNorm{\matA - \matC \matW^\pinv \matC^\transp} = 
  \TNormS{\matA^{1/2}\left(\mat{I} - 
             \matP_{\left(\matA^{1/2}\right)^{2q-1}\matS}\right)\matA^{1/2}}.
\label{eqn:spectral-pf1}
\end{equation}
Next, recall that $\matOmega_i = \matU_i^\transp \matS$ and that 
$\matA^{1/2}$ has eigenvalue decomposition 
$\matA = \matU \matSig^{1/2} \matU^\transp,$ where
\[
 \matU = \begin{pmatrix} \matU_1 & \matU_2 \end{pmatrix} 
 \text{ and } 
 \matSig^{1/2} = \begin{pmatrix} \matSig_1^{1/2} & \, \\ 
                                 \,          & \matSig_2^{1/2} \end{pmatrix}.
\]
It can be shown (\cite[Theorems 9.1 and 9.2]{HMT09_SIREV}) that, because $\matOmega_1$ has full row
rank,
\begin{equation}
  \TNormS{\matA^{1/2}\left(\matI - 
   \matP_{\left(\matA^{1/2}\right)^{2q-1} \matS}\right) \matA^{1/2}} 
  \leq \left(\TNormS{\left(\matSig_2^{1/2}\right)^{2q-1}} + 
             \TNormS{\left(\matSig_2^{1/2}\right)^{2q-1} 
                     \matOmega_2 \matOmega_1^\pinv}
        \right)^{1/(2q-1)}.
\label{eqn:spectral-pf2}
\end{equation}
Equations~\eqref{eqn:spectral-pf1} and~\eqref{eqn:spectral-pf2} imply that
\begin{align*}
 \TNorm{\matA - \matC \matW^\pinv \matC^\transp} & \leq 
    \left(\TNormS{\matSig_2^{q-1/2}} + 
          \TNormS{\matSig_2^{q-1/2} \matOmega_2 \matOmega_1^\pinv}
    \right)^{1/(2q-1)} \\
    & \leq \TNorm{\matSig_2} + \TNorm{\matSig_2^{q-1/2} \matOmega_2 \matOmega_1^\pinv}^{2/(2q-1)}
\end{align*}
The latter inequality follows from the fact that the $2q-1$ radical function is subadditive when $q \geq 1$ and the identity 
$\TNormS{\matSig_2^{q-1/2}} = \TNorm{\matSig_2}^{2q-1}.$ This establishes the stated bound.
\end{proof}

\noindent
\textbf{Remark.}
The assumption that $\matOmega_1$ has full row rank is very non-trivial. 
It is, however, satisfied by our algorithms below.
See Section~\ref{sxn:theory-det-comments} for more details on this point.

\noindent
\textbf{Remark.}
The proof of Theorem~\ref{thm:spectral-deterministic-error} proceeds in two 
steps.
The first step relates low-rank approximation of an SPSD matrix $\matA$ 
under the SPSD Sketching Model of Section~\ref{sxn:prelim-prelim} to 
column sketching (\emph{e.g.}, sampling or projecting) from the square-root 
of $\matA$. 
A weaker relation of this type was used in~\cite{dm_kernel_JRNL}, but the 
stronger form that we use here in Equation~(\ref{eqn:spectral-pf1}) was 
first proved in~\cite{Gittens12_TR}.
The second step is to use a deterministic structural result that holds for 
sampling/projecting from an arbitrary matrix.
The structural bound of the form of Equation~(\ref{eqn:spectral-pf2}) was originally 
proven for $q=1$ in~\cite{BMD09_CSSP_SODA}, where it was applied to the 
Column Subset Selection Problem. The bound was subsequently improved 
in~\cite{HMT09_SIREV}, where it was applied to a random projection algorithm 
and extended to apply when $q > 1$. Although the analyses of our next two 
results are more complicated, they follow the same high-level two-step approach.

Before proceeding with the analogous Frobenius and trace norm bounds, we 
pause to describe a geometric interpretation of this result.

\todo[inline]{Explain interpretation of $\matOmega_2 \matOmega_1^\pinv$, how it
shows up in convergence of power method, and how it being small implies 
$\matS$ is an effective sketching matrix.}

\subsubsection{A geometric interpretation of the sketching interaction matrix}
It is evident from the bound in Theorem~\ref{thm:spectral-deterministic-error}
that the smaller the spectral norm of the \emph{sketching interaction matrix}
$\matOmega_2 \matOmega_1^\pinv,$ the more effective 
$\matS$ is as a sketching matrix. If, additionally, the columns of $\matS$ 
are orthonormal, we can give this norm a natural geometric
interpretation as the tangent of the largest angle between the spaces spanned
by $\matS$ and $\matU_1.$ 

To verify our claim, we first recall the definition of the sine between the range spaces
of two matrices $\matM_1$ and $\matM_2:$
\[
 \sin^2(\matM_1, \matM_2) = 
 \TNormS{(\matI - \matP_{\matM_1}) \matP_{\matM_2})}.
\]
Note that this quantity is \emph{not} symmetric: it measures how well the 
range of $\matM_1$ captures that of $\matM_2$~\cite[Chapter 12]{GL96}. Since $\matU_1$ and $\matS$ (by assumption here)
have orthonormal columns, we see that
\begin{align*}
 \sin^2(\matS, \matU_1) & = 
 \TNormS{(\matI - \matS \matS^\transp) \matU_1 \matU_1^\transp} \\
 & = \TNorm{\matU_1^\transp (\matI - \matS \matS^\transp) \matU_1} \\
 & = \TNorm{\matI - \matU_1^\transp \matS \matS^\transp \matU_1} \\
 & = 1 - \lambda_k(\matU_1^\transp \matS \matS^\transp \matU_1) \\
 & = 1 - \TNorm{\matOmega_1^\pinv}^{-2}.
\end{align*}
The second to last equality holds because $\matU_1^\transp \matS$ has $k$ 
rows and we assumed it has full row rank. Accordingly, it follows that
\[
 \tan^2(\matS, \matU_1) = 
  \frac{\sin^2(\matS, \matU_1)}{1 - \sin^2(\matS, \matU_1)} 
  = \TNormS{\matOmega_1^\pinv} - 1.
\]
Now observe that 
\begin{align*}
\TNormS{\matOmega_2 \matOmega_1^\pinv} & = 
\TNorm{(\matS^\transp \matU_1)^\pinv 
       \matS^\transp \matU_2 \matU_2^\transp \matS
       (\matU_1^\transp \matS)^\pinv} \\
& = \TNorm{(\matS^\transp \matU_1)^\pinv 
       (\matI - \matS^\transp \matU_1 \matU_1^\transp \matS)
       (\matU_1^\transp \matS)^\pinv} \\
& = \TNormS{(\matS^\transp \matU_1)^\dagger} - 1 \\
& = \tan^2(\matS, \matU_1).
\end{align*}
The second to last equality holds because of the fact that, for any matrix $\matM,$
\[
 \TNorm{\matM^\pinv( \matI - \matM \matM^\transp) (\matM^\transp)^\pinv} =
 \TNormS{\matM^\pinv} - 1;
\]
this identity can be established with a routine SVD argument.

Thus, when $\matS$ has orthonormal columns and $\matU_1^\transp \matS$ has full
row-rank, $\TNorm{\matOmega_2 \matOmega_1}$ is the tangent of the
largest angle between the range of $\matS$ and the eigenspace spanned by 
$\matU_1.$ If $\matU_1^\transp \matS$ does not have full row-rank, then our derivation
above shows that $\sin^2(\matS, \matU_1) = 1,$ meaning that there is a vector
in the eigenspace spanned by $\matU_1$ which has no component in the space
spanned by the sketching matrix $\matS.$ 

We note that $\tan(\matS, \matU_1)$ also arises in the classical 
bounds on the convergence of the orthogonal iteration algorithm for approximating
invariant subspaces of a matrix (see, e.g.~\cite[Theorem 8.2.2]{GL96}).

% \begin{align*{
%  \TNormS{\matOmega_2 \matOmega_1^\pinv} & = 
% \end{align*}

\subsubsection{Frobenius Norm Bounds}
\label{sxn:theory-det-frobenius}

Next, we state and prove the following bound on the Frobenius norm of the 
residual error.
The proof parallels that for the spectral norm bound, in that we divide it 
into two analogous parts, but the analysis is somewhat more complex. 

The multiplicative eigengap
$\gamma = \lambda_{k+1}(\matA)/\lambda_k(\matA)$ that appears in the statement of this theorem predicts the effect of
using the power method when constructing sketches. Specifically, the 
additional errors of sketches constructed using
$\matC = \matA^q \matS$ are at least a factor of $\gamma^{q-1}$ times smaller 
than those constructed using $\matC = \matA \matS.$ 

\begin{theorem}
Let $\matA$ be an $n \times n$ SPSD matrix with eigenvalue decomposition 
partitioned as in Equation~\eqref{eqn:eigendecompositionpartition}, $\matS$
be a sketching matrix of size $n \times \ell$, $q$ be a positive integer,
$\matOmega_1$ and $\matOmega_2$ be as defined in 
Equation~\eqref{eqn:interactiontermsdefinition}, and define
\[
 \gamma = \frac{\lambda_{k+1}(\matA)}{\lambda_k(\matA)}.
\]

Then when $\matC = \matA^q \matS$ and $\matW = \matS^\transp \matA^{2q-1} \matS$, 
the corresponding low-rank SPSD approximation satisfies
\begin{equation*}
\FNorm{\matA - \matC \matW^\pinv \matC^\transp } \leq \FNorm{\matSig_2} + 
\gamma^{q-1} \TNorm{\matSig_2^{1/2} \matOmega_2 \matOmega_1^\pinv}
  \cdot \left( \sqrt{ 2 \tr{\matSig_2} } + 
           \gamma^{q-1} \FNorm{\matSig_2^{1/2} \matOmega_2 \matOmega_1^\pinv} 
        \right),
\end{equation*}
assuming $\matOmega_1$ has full row rank. 
\label{thm:frobenius-deterministic-error}
\end{theorem}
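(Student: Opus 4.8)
The plan is to mirror the proof of Theorem~\ref{thm:spectral-deterministic-error} and split the argument into two steps: first reduce the SPSD approximation error to the residual of a projection-based approximation of the square root $\matA^{1/2}$ by means of Lemma~\ref{lemma:sketchidentity}, and then establish the corresponding deterministic structural inequality directly, this time in the Frobenius norm. For the reduction I would apply Lemma~\ref{lemma:sketchidentity} with the auxiliary sketching matrix $\matS' = \matA^{q-1}\matS$, for which $\matA\matS' = \matC$ and $(\matS')^\transp\matA\matS' = \matW$, obtaining $\matC\matW^\pinv\matC^\transp = \matA^{1/2}\matP_{\matA^{q-1/2}\matS}\matA^{1/2}$. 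Writing $\matB = \matA^{1/2}$ and $p = 2q-1$, so that $\matA^{q-1/2}\matS = \matB^{p}\matS$, this yields the positive semidefinite identity
\[
 \matA - \matC\matW^\pinv\matC^\transp = \matB\left(\matI - \matP_{\matB^{p}\matS}\right)\matB ,
\]
and it remains to bound $\FNorm{\matB(\matI - \matP_{\matB^{p}\matS})\matB}$.

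For the structural step I would use the eigendecomposition $\matB = \matU\matSig^{1/2}\matU^\transp$ to write $\matB^{p}\matS = \matU\left(\begin{smallmatrix}\matSig_1^{p/2}\matOmega_1\\ \matSig_2^{p/2}\matOmega_2\end{smallmatrix}\right)$, and then --- using that $\matOmega_1$ has full row rank, so $\matOmega_1\matOmega_1^\pinv = \matI$ --- choose the interpolating matrix $\matT = \matOmega_1^\pinv\matSig_1^{-p/2}\left(\begin{smallmatrix}\matSig_1^{1/2} & \mat{0}\end{smallmatrix}\right)\matU^\transp$. A short computation then shows that the columns of $\matB^{p}\matS\,\matT$ lie in the range of $\matB^{p}\matS$, while the residual has the clean block form
\[
 \matR_0 := \matB - \matB^{p}\matS\,\matT = \matU\begin{pmatrix}\mat{0} & \mat{0}\\ -\matSig_2^{q-1/2}\matOmega_2\matOmega_1^\pinv\matSig_1^{1-q} & \matSig_2^{1/2}\end{pmatrix}\matU^\transp .
\]
Since $\matI - \matP_{\matB^{p}\matS}$ annihilates the range of $\matB^{p}\matS$, we have $(\matI - \matP_{\matB^{p}\matS})\matB = (\matI - \matP_{\matB^{p}\matS})\matR_0$, and hence $\matB(\matI - \matP_{\matB^{p}\matS})\matB = \matR_0^\transp(\matI - \matP_{\matB^{p}\matS})\matR_0 \preceq \matR_0^\transp\matR_0$. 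Using that the Frobenius norm is monotone with respect to the Löwner order on positive semidefinite matrices (i.e., $0 \preceq \matM \preceq \matN$ implies $\tr{\matM^2} \le \tr{\matN^2}$), this reduces the task to bounding $\FNorm{\matR_0^\transp\matR_0}$.

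Finally I would evaluate $\FNorm{\matR_0^\transp\matR_0}$ from the block form. With $\matZ = \matSig_2^{1/2}\matOmega_2\matOmega_1^\pinv$ and $\matM = \matSig_2^{q-1/2}\matOmega_2\matOmega_1^\pinv\matSig_1^{1-q} = \matSig_2^{q-1}\matZ\matSig_1^{1-q}$, multiplying out gives $\FNormS{\matR_0^\transp\matR_0} = \FNormS{\matSig_2} + 2\FNormS{\matSig_2^{1/2}\matM} + \FNormS{\matM^\transp\matM}$. Then, using the submultiplicativity bounds $\FNorm{\matX\matY} \le \TNorm{\matX}\FNorm{\matY}$ and $\FNorm{\matX\matY} \le \FNorm{\matX}\TNorm{\matY}$, together with $\TNorm{\matSig_2^{q-1}}\TNorm{\matSig_1^{1-q}} = \left(\lambda_{k+1}(\matA)/\lambda_k(\matA)\right)^{q-1} = \gamma^{q-1}$, $\FNorm{\matSig_2^{1/2}} = \sqrt{\tr{\matSig_2}}$, and $\FNorm{\matM^\transp\matM} \le \TNorm{\matM}\FNorm{\matM}$, I obtain $\FNorm{\matSig_2^{1/2}\matM} \le \gamma^{q-1}\sqrt{\tr{\matSig_2}}\,\TNorm{\matZ}$ and $\FNorm{\matM^\transp\matM} \le \gamma^{2(q-1)}\TNorm{\matZ}\FNorm{\matZ}$; two applications of $\sqrt{a^2+b^2} \le a + b$ then collapse the bound to $\FNorm{\matSig_2} + \gamma^{q-1}\TNorm{\matZ}\left(\sqrt{2\tr{\matSig_2}} + \gamma^{q-1}\FNorm{\matZ}\right)$, which is exactly the asserted inequality.

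I expect the structural step to be the main obstacle: one has to guess the right interpolating matrix $\matT$ so that $\matR_0$ takes the block form above, in particular with off-diagonal block $\matSig_2^{q-1/2}\matOmega_2\matOmega_1^\pinv\matSig_1^{1-q}$, where the compensating factor $\matSig_1^{1-q}$ is precisely what produces the power-iteration gain $\gamma^{q-1}$ once one splits $\matSig_2^{q-1/2} = \matSig_2^{q-1}\matSig_2^{1/2}$; and then one must squeeze $\FNorm{\matR_0^\transp\matR_0}$ into the stated form while keeping the base error $\FNorm{\matSig_2}$ cleanly separated from the additional error. The reduction via Lemma~\ref{lemma:sketchidentity} and the subsequent norm bookkeeping are routine once the spectral-norm analysis of Theorem~\ref{thm:spectral-deterministic-error} is in hand.
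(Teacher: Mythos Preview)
Your proposal is correct and arrives at exactly the stated bound, but it takes a genuinely different route from the paper. The paper chooses a matrix $\matZ = \bigl(\begin{smallmatrix}\matI\\ \matF\end{smallmatrix}\bigr)$ (with $\matF = \matSig_2^{q-1/2}\matOmega_2\matOmega_1^\pinv\matSig_1^{-(q-1/2)}$) lying in the range of $\matSig^{q-1/2}\matU^\transp\matS$, bounds $E \le \FNormS{\matSig^{1/2}(\matI-\matP_\matZ)\matSig^{1/2}}$, writes out $\matI-\matP_\matZ$ explicitly via the block formula involving $(\matI+\matF^\transp\matF)^{-1}$, and then controls the three resulting terms using the matrix inequalities $\matI-(\matI+\matF^\transp\matF)^{-1}\preceq\matF^\transp\matF$, $\matI-\matF(\matI+\matF^\transp\matF)^{-1}\matF^\transp\preceq\matI$, and $(\matI+\matM)^{-1}\matM(\matI+\matM)^{-1}\preceq\matM$. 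You instead introduce an interpolating matrix $\matT$, form the residual $\matR_0=\matB-\matB^{p}\matS\,\matT$, use $(\matI-\matP_{\matB^{p}\matS})\matB=(\matI-\matP_{\matB^{p}\matS})\matR_0$ and L\"owner monotonicity of the Frobenius norm to pass directly to $\FNorm{\matR_0^\transp\matR_0}$, and then read the three terms off the block structure of $\matR_0^\transp\matR_0$. Your $\matM$ is exactly the paper's $\matF\matSig_1^{1/2}$, so the three terms you obtain coincide with the paper's final estimates for $T_1,T_2,T_3$; the difference is that you bypass the explicit projector inversion and the three separate semidefinite inequalities in favor of a single L\"owner bound plus straightforward submultiplicativity. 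Your argument is somewhat more streamlined; the paper's has the advantage of displaying $\matI-\matP_\matZ$ explicitly, which it then reuses verbatim in the trace-norm proof.
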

%%
%%The new bit is the structural result for the Frobenius norm:
%%The first part is shown in the Nystr\"om paper; the second 
%%part follows from an adaptation of the argument from Finding Structure.
%%
\begin{proof}
Apply Lemma~\ref{lemma:sketchidentity} with the sampling matrix $\matS^\prime = 
\matA^{q-1}\matS$ to see that 
\[
 \matC \matW^\pinv \matC^\transp = 
  \matA^{1/2} \matP_{\matA^{q-1/2} \matS} \matA^{1/2}.
\]
It follows that
$$
\FNorm{\matA - \matC \matW^\pinv \matC^\transp} 
   = \FNorm{\matA^{1/2}\left(\matI - 
   \matP_{\matA^{q-1/2} \matS}\right)\matA^{1/2}}  .
$$
To bound this quantity, we first use the unitary invariance of the Frobenius norm and 
the fact that 
\[
 \matP_{\matA^{q-1/2}\matS} = 
 \matU \matP_{\matSig^{q-1/2} \matU^\transp \matS} \matU^\transp
\]
to obtain
\[
 E := \FNorm{\matA^{1/2}\left(\matI - 
           \matP_{\matA^{q-1/2} \matS}\right)\matA^{1/2}} 
    = \FNormS{\matSig^{1/2} \left(\matI - 
       \matP_{\matSig^{q-1/2} \matU^\transp \matS}\right) \matSig^{1/2}}.
\]
Then we take
\begin{equation}
 \matZ = \matSig^{q-1/2} \matU^\transp \matS \matOmega_1^\pinv \matSig_1^{-(q-1/2)}
  = \begin{pmatrix} \matI \\ \matF \end{pmatrix},
 \label{eqn:defz}
\end{equation}
where $\matI \in \R^{k \times k}$ and $\matF \in \R^{n-k \times k}$ is
given by $\matF = 
\matSig_2^{q-1/2} \matOmega_2 \matOmega_1^\pinv \matSig_1^{-(q-1/2)}.$
The latter equality in Equation~\eqref{eqn:defz} holds because of our assumption that $\matOmega_1$ 
has full row rank.
Since the range of $\matZ$ is contained in the range of 
$\matSig^{q-1/2} \matU^\transp \matS,$
\[
 E \leq \FNormS{\matSig^{1/2} (\matI - \matP_{\matZ}) \matSig^{1/2}}.
\]
By construction, $\matZ$ has full column rank, thus $\matZ(\matZ^\transp\matZ)^{-1/2}$
is an orthonormal basis for the span of $\matZ$, and
\begin{align}
 \mat{I} - \matP_\matZ & = \mat{I} - \matZ(\matZ^\transp\matZ)^{-1}\matZ^\transp
   = \mat{I} - \begin{pmatrix} \matI \\ \matF \end{pmatrix}
  (\matI + \matF^\transp\matF)^{-1} \begin{pmatrix} \matI & \matF^\transp \end{pmatrix} \notag\\
  & = \begin{pmatrix} 
  \matI - (\matI + \matF^\transp \matF)^{-1} & -(\matI + \matF^\transp \matF)^{-1} \matF^\transp \\
   -\matF (\matI + \matF^\transp \matF)^{-1} & \matI - \matF (\matI + \matF^\transp \matF)^{-1} \matF^\transp
    \end{pmatrix}.
    \label{eqn:pzestimate}
\end{align}
This implies that
\begin{equation}
\label{ch4:eqn:frobenius-partitioned-estimate}
\begin{split}
 E & \leq \FNormS{\matSig^{1/2} \begin{pmatrix} 
			\matI - (\matI + \matF^\transp \matF)^{-1} & -(\matI + \matF^\transp \matF)^{-1} \matF^\transp \\
                        -\matF (\matI + \matF^\transp \matF)^{-1} & \matI - \matF (\matI + \matF^\transp \matF)^{-1} \matF^\transp
                       \end{pmatrix}
                \matSig^{1/2}} \\
 & = \FNormS{\matSig_1^{1/2} \big(\matI - (\matI + \matF^\transp \matF)^{-1}\big) \matSig_1^{1/2}} + 
     2 \FNormS{\matSig_1^{1/2} (\matI + \matF^\transp \matF)^{-1} \matF^\transp \matSig_2^{1/2}} \\
   & \quad + \FNormS{\matSig_2^{1/2} \big(\matI - \matF (\matI + \matF^\transp \matF)^{-1} \matF^\transp \big) \matSig_2^{1/2}} \\
 & := T_1 + T_2 + T_3.
\end{split}
\end{equation}
Next, we provide bounds for $T_1$, $T_2$, and $T_3$.
Using the fact that 
$\mat{0} \preceq \matI - \matF(\matI + \matF^\transp \matF)^{-1}\matF^\transp \preceq \matI,$ 
we can bound $T_3$ with
\[
 T_3 \leq \FNormS{\matSig_2}.
\]
Likewise, the fact that
$\matI - (\matI + \matF^\transp\matF)^{-1} \preceq \matF^\transp\matF$ 
(easily seen with an SVD) implies that we can bound $T_1$~as
\begin{align*}
 T_1 & \leq \FNormS{\matSig_1^{1/2} \matF^\transp \matF \matSig_1^{1/2}} 
       \leq \TNormS{\matF \matSig_1^{1/2}} \FNormS{\matF \matSig_1^{1/2}} \\
     & = \TNormS{\matSig_2^{q-1/2} 
                    \matOmega_2 \matOmega_1^\pinv \matSig_1^{-(q-1)}}
            \FNormS{\matSig_2^{q-1/2} 
                    \matOmega_2 \matOmega_1^\pinv \matSig_1^{-(q-1)}} \\
     & \leq \TNorm{\matSig_2^{q-1}}^4 \TNorm{\matSig_1^{-(q-1)}}^4
                   \TNormS{\matSig_2^{1/2} \matOmega_2 \matOmega_1^\pinv}
                   \FNormS{\matSig_2^{1/2} \matOmega_2 \matOmega_1^\pinv} \\           
     & = (\TNorm{\matSig_2} \TNorm{\matSig_1^{-1}})^{4(q-1)}
                   \TNormS{\matSig_2^{1/2} \matOmega_2 \matOmega_1^\pinv}
                   \FNormS{\matSig_2^{1/2} \matOmega_2 \matOmega_1^\pinv} \\
     & = \left(\frac{\lambda_{k+1}(\matA)}{\lambda_k(\matA)}\right)^{4(q-1)}
            \TNormS{\matSig_2^{1/2} \matOmega_2 \matOmega_1^\pinv}
            \FNormS{\matSig_2^{1/2} \matOmega_2 \matOmega_1^\pinv}.
\end{align*}
We proceed to bound $T_2$ by using the estimate
\begin{equation}
\label{eqn:t2-estimate}
 T_2 \leq 2 \TNormS{\matSig_1^{1/2} 
                    (\matI + \matF^\transp \matF)^{-1} \matF^\transp} 
            \FNormS{\matSig_2^{1/2}}.
\end{equation}
To develop the term involving a spectral norm, observe that for any SPSD 
matrix $\mat{M}$ with eigenvalue decomposition 
$\matM = \matV \matD \matV^\transp,$
\begin{align*}
 (\matI + \matM)^{-1} \matM (\matI + \matM)^{-1} & = 
 (\matV\matV^\transp + \matV \matD \matV^\transp)^{-1} \matV \matD \matV^\transp
 (\matV\matV^\transp + \matV \matD \matV^\transp)^{-1} \\
 & = \matV(\matI + \matD)^{-1} \matD (\matI + \matD)^{-1} \matV^\transp \\
 & \preceq \matV \matD \matV^\transp = \matM.
\end{align*}
It follows that
\begin{align*}
 \TNormS{\matSig_1^{1/2} 
         (\matI + \matF^\transp \matF)^{-1} \matF^\transp} & = 
 \TNorm{\matSig_1^{1/2} (\matI + \matF^\transp \matF)^{-1} 
        \matF^\transp \matF 
        (\matI + \matF^\transp \matF)^{-1} \matSig_1^{1/2}} \\
 & \leq \TNorm{\matSig_1^{1/2} \matF^\transp\matF \matSig_1^{1/2}} 
   =    \TNormS{\matF \matSig_1^{1/2}} \\
 & = \TNormS{\matSig_2^{q-1/2} 
             \matOmega_2 \matOmega_1^\pinv \matSig_1^{-(q-1)} } \\
 & \leq \TNormS{\matSig_2^{q-1}} \TNormS{\matSig_1^{-(q-1)}} 
        \TNormS{\matSig_2^{1/2} \matOmega_2 \matOmega_1^\pinv} \\
 & = \left( \frac{\lambda_{k+1}(\matA)}{\lambda_k(\matA)} \right)^{2(q-1)} 
     \TNormS{\matSig_2^{1/2} \matOmega_2 \matOmega_1^\pinv}.
\end{align*}
Using this estimate in Equation~\eqref{eqn:t2-estimate}, we conclude that
\[
 T_2 \leq 2 \left( \frac{\lambda_{k+1}(\matA)}{\lambda_k(\matA)} \right)^{2(q-1)} 
 \TNormS{\matSig_2^{1/2} \matOmega_2 \matOmega_1^\pinv} \FNormS{\matSig_2^{1/2}}.
\]

Combining our estimates for $T_1,$ $T_2,$ and $T_3$ with
Equation~\eqref{ch4:eqn:frobenius-partitioned-estimate} gives
\begin{equation*}
\begin{split}
E & = \FNorm{\matA^{1/2}\left(\matI - 
           \matP_{\matA^{q-1/2}\matS}\right)\matA^{1/2}} \\
  & \leq \FNormS{\matSig_2} + \left( \frac{\lambda_{k+1}(\matA)}{\lambda_k(\matA)} \right)^{2(q-1)} 
 \TNormS{\matSig_2^{1/2} \matOmega_2 \matOmega_1^\pinv} \cdot \left( 2\FNormS{\matSig_2^{1/2}}
 + \left(\frac{\lambda_{k+1}(\matA)}{\lambda_k(\matA)}\right)^{2(q-1)}
            \FNormS{\matSig_2^{1/2} \matOmega_2 \matOmega_1^\pinv}\right).
\end{split}
\end{equation*}
The claimed bound follows by identifying $\gamma$ and applying the subadditivity of the square-root function:
\[
 E \leq \FNorm{\matSig_2} +  \gamma^{q-1} \TNorm{\matSig_2^{1/2} \matOmega_2 \matOmega_1^\pinv}
  \cdot\left( \sqrt{ 2 \tr{\matSig_2} } + \gamma^{q-1} \FNorm{\matSig_2^{1/2} \matOmega_2 \matOmega_1^\pinv} \right).
\]
\end{proof}

\noindent
\textbf{Remark.}
The quality of approximation guarantee provided by 
Theorem~\ref{thm:frobenius-deterministic-error} depends on the
quantities $\TNorm{\matSig_2^{1/2} \matOmega_2 \matOmega_1^\pinv}$ and
$\FNorm{\matSig_2^{1/2} \matOmega_2 \matOmega_1^\pinv}$; these quantities
reflect the extent to which the sketching matrix is aligned with the 
eigenspaces of $\matA.$ As we will see in Section~\ref{sxn:theory-rand}, the degree
to which we can bound each of these for different sketching procedures is 
slightly different. The dependence on $\gamma$ captures the facts that the
power method is effective only when there is spectral decay, and that larger
gaps between the $k$ and $k+1$ eigenvalues lead to smaller
errors when the power method is used.

As before, we 
pause to describe a geometric interpretation of this result.

\subsubsection{Another geometric interpretation of the sketching interaction matrix}

Just as the spectral norm of the spectral interaction matrix is the tangent
of the largest angle between the range of the sketching matrix and the 
dominant $k$-dimensional eigenspace of $\matA$, the Frobenius norm of the
spectral interaction matrix has a geometric interpretation. 
To see this, recall that the \emph{principal angles} between the ranges of the matrices
$\matS$ and $\matU_1$ are defined recursively by
\[
 \cos(\theta_i) = \vec{u}_i^\transp \vec{v}_i
  = \max_{\substack{\VTNorm{\vec{u}} = 1\\
		    \vec{u} \in \mathcal{R}(\matS)\\
                    \vec{u}_i \perp \vec{u}_1, \ldots, \vec{u}_{i-1}}}
  \max_{\substack{\VTNorm{\vec{v}} = 1 \\
                    \vec{v} \in \mathcal{R}(\matU_1) \\
                    \vec{v}_i \perp \vec{v}_1, \ldots, \vec{v}_{i-1}}}
    \vec{v}^\transp \vec{u};
\]
satisfy $0 \leq \theta_1 \leq \ldots \leq \theta_k \leq \pi/2;$ and further, since
$\matS$ and $\matU_1$ have orthonormal columns, 
$\cos(\theta_i) = \sigma_i(\matU_1^\transp \matS)$~\cite[Chapter 12]{GL96}.
Specifically, 
when $\matS$ has orthonormal columns and $\matU_1^\transp \matS$ has full
row-rank, $\|\matOmega_2 \matOmega_1^\pinv\|_{\mathrm{F}}$ is the sum of the squared tangents
of the principal angles between the range of the sketching matrix 
and the dominant $k$-dimensional eigenspace of $\matA$. Thus, this quantity is 
a more stringent measure of how well the sketching matrix captures the 
dominant eigenspaces of $\matA.$

The proof of this claim hinges on the fact that, for any matrix $\matM,$
\[
 \tr{\matM^\pinv( \matI - \matM\matM^\transp) (\matM^\transp)^\pinv} 
 = \tr{(\matM^\transp \matM)^\pinv} - \mbox{rank}(\matM),
\]
as can be readily verified with an SVD argument. From this observation, we
have that
\begin{align*}
 \FNormS{\matOmega_2 \matOmega_1^\pinv} & = 
  \tr{(\matS^\transp \matU_1)^\pinv \matS^\transp 
      \matU_2 \matU_2^\transp \matS (\matU_1^\transp \matS)^\pinv} \\
  & = \tr{(\matS^\transp \matU_1)^\pinv ( \matI - \matS^\transp 
      \matU_1 \matU_1^\transp \matS ) (\matU_1^\transp \matS)^\pinv} \\
  & = \tr{(\matOmega_1 \matOmega_1^\transp)^\pinv} - k 
    = \FNormS{\matOmega_1^\pinv} - k \\
  & = \sum\nolimits_{i=1}^k (\sigma_i^{-2}(\matU_1^\transp \matS) - 1)
    = \sum\nolimits_{i=1}^k (\cos^{-2}(\theta_i) - 1) \\
  & = \sum\nolimits_{i=1}^k \tan^2(\theta_i).
\end{align*}

\noindent
\textbf{Remark.}
To obtain a greater understanding of the additional error term in 
Theorem~\ref{thm:frobenius-deterministic-error}, assume that $\matS$ is a 
particularly effective sketching matrix, so that 
$\TNorm{\matOmega_2 \matOmega_1^\pinv} = \const{O}(1).$ Then
\[ 
\TNorm{\matSig_2^{1/2} \matOmega_2 \matOmega_1^\pinv} =
\const{O}\left(\TNorm{\matSig_2}^{1/2}\right) \quad \text{ and } \quad
\sqrt{2 \tr{\matSig_2}} + 
\FNorm{\matSig_2^{1/2} \matOmega_2 \matOmega_1^\pinv} = 
\const{O}\left(\tracenorm{\matSig_2}^{1/2}\right),
\]
so the theorem guarantees that the additional error is on the order
of $\sqrt{\TNorm{\matSig_2} \tracenorm{\matSig_2}}.$ This
is an upper bound on the optimal Frobenius error:
\[
 \FNorm{\matSig_2} \leq \sqrt{\TNorm{\matSig_2} \tracenorm{\matSig_2}}.
\]
We see, in particular, that if the residual spectrum is flat,
i.e. $\lambda_{k+1}(\matA) = \cdots = \lambda_n(\matA),$ then equality holds 
and the additional error is on the scale of the optimal error.

\todo[inline]{It's worth asking if this bound can be replaced with the optimal 
error itself.}

\subsubsection{Trace Norm Bounds}
\label{sxn:theory-det-trace}

Finally, we state and prove the following bound on the trace norm of the 
residual error.
The proof method is analogous to that for the spectral and Frobenius norm 
bounds.

As in the case of the Frobenius norm error, we see that the multiplicative
eigengap $\gamma = \lambda_{k+1}(\matA)/ \lambda_k(\matA)$ predicts the 
effect of using the power method when constructing sketches: the additional
errors of sketches constructed using $\matC = \matA^q \matS$ are a factor
of $\gamma^{2(q-1)}$ times smaller than the additional errrors of those 
constructed using $\matC = \matA \matS.$

\begin{theorem}
Let $\matA$ be an $n \times n$ SPSD matrix with eigenvalue decomposition 
partitioned as in Equation~\eqref{eqn:eigendecompositionpartition}, $\matS$ 
be a sketching matrix of size $n \times \ell$, $q$ be a positive integer,
$\matOmega_1$ and
$\matOmega_2$ be as defined in 
Equation~\eqref{eqn:interactiontermsdefinition}, and define
\[
 \gamma = \frac{\lambda_{k+1}(\matA)}{\lambda_k(\matA)}.
\]
Then when $\matC = \matA^q \matS$ and $\matW = \matS^\transp \matA^{2q-1} \matS$, 
the corresponding low-rank SPSD approximation satisfies
\begin{align*}
 \tracenorm{\matA - \matC \matW^\pinv \matC^\transp} \leq \tr{\matSig_2} + 
       \gamma^{2(q-1)} \FNormS{\matSig_2^{1/2} \matOmega_2 \matOmega_1^\pinv} ,
\end{align*}
assuming $\matOmega_1$ has full row rank. 
\label{thm:trace-deterministic-error}
\end{theorem}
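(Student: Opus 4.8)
The plan is to follow the same two-step template as the proofs of Theorems~\ref{thm:spectral-deterministic-error} and~\ref{thm:frobenius-deterministic-error}, but to exploit the fact that the trace norm of a positive semidefinite matrix is simply its trace; this makes the accounting purely additive and actually \emph{simpler} than the Frobenius case, since no subadditivity of a radical function will be needed. First I would invoke Lemma~\ref{lemma:sketchidentity} with the sketching matrix $\matS^\prime = \matA^{q-1}\matS$ to write $\matC\matW^\pinv\matC^\transp = \matA^{1/2}\matP_{\matA^{q-1/2}\matS}\matA^{1/2}$, so that $\matA - \matC\matW^\pinv\matC^\transp = \matA^{1/2}(\matI - \matP_{\matA^{q-1/2}\matS})\matA^{1/2}$ is positive semidefinite and hence $\tracenorm{\matA - \matC\matW^\pinv\matC^\transp} = \tr{\matA^{1/2}(\matI - \matP_{\matA^{q-1/2}\matS})\matA^{1/2}}$. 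Using the eigendecomposition $\matA = \matU\matSig\matU^\transp$, the identity $\matP_{\matA^{q-1/2}\matS} = \matU\matP_{\matSig^{q-1/2}\matU^\transp\matS}\matU^\transp$, and cyclicity of the trace, this reduces to controlling $\tr{\matSig^{1/2}(\matI - \matP_{\matSig^{q-1/2}\matU^\transp\matS})\matSig^{1/2}}$.

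Next I would reuse the matrix $\matZ = \bigl(\begin{smallmatrix}\matI\\\matF\end{smallmatrix}\bigr)$ from Equation~\eqref{eqn:defz}, where $\matF = \matSig_2^{q-1/2}\matOmega_2\matOmega_1^\pinv\matSig_1^{-(q-1/2)}$ and the full-row-rank assumption on $\matOmega_1$ is what makes the two representations in \eqref{eqn:defz} agree. Since the column space of $\matZ$ is contained in that of $\matSig^{q-1/2}\matU^\transp\matS$, enlarging a subspace only enlarges its orthoprojector, so $\matI - \matP_{\matSig^{q-1/2}\matU^\transp\matS}\preceq \matI - \matP_\matZ$; conjugating by $\matSig^{1/2}$ preserves this PSD ordering, and the trace is monotone, giving the upper bound $\tr{\matSig^{1/2}(\matI-\matP_\matZ)\matSig^{1/2}}$. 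Substituting the block form of $\matI - \matP_\matZ$ from Equation~\eqref{eqn:pzestimate}, only the two diagonal blocks contribute to the trace, so this splits exactly as $\tr{\matSig_1(\matI - (\matI+\matF^\transp\matF)^{-1})} + \tr{\matSig_2(\matI - \matF(\matI+\matF^\transp\matF)^{-1}\matF^\transp)}$.

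It then remains to bound the two pieces. For the second, the push-through identity $\matI - \matF(\matI+\matF^\transp\matF)^{-1}\matF^\transp = (\matI+\matF\matF^\transp)^{-1}\preceq\matI$ gives $\tr{\matSig_2(\matI+\matF\matF^\transp)^{-1}}\le\tr{\matSig_2}$. For the first, the SVD inequality $\matI - (\matI+\matF^\transp\matF)^{-1}\preceq\matF^\transp\matF$ already used in the Frobenius proof yields $\tr{\matSig_1(\matI - (\matI+\matF^\transp\matF)^{-1})}\le\tr{\matSig_1\matF^\transp\matF} = \FNormS{\matF\matSig_1^{1/2}}$; writing $\matF\matSig_1^{1/2} = \matSig_2^{q-1}\bigl(\matSig_2^{1/2}\matOmega_2\matOmega_1^\pinv\bigr)\matSig_1^{-(q-1)}$ and applying $\FNorm{\matX\matM\matY}\le\TNorm{\matX}\FNorm{\matM}\TNorm{\matY}$ together with $\TNorm{\matSig_2^{q-1}}\TNorm{\matSig_1^{-(q-1)}} = (\lambda_{k+1}(\matA)/\lambda_k(\matA))^{q-1} = \gamma^{q-1}$ bounds it by $\gamma^{2(q-1)}\FNormS{\matSig_2^{1/2}\matOmega_2\matOmega_1^\pinv}$. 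Adding the two contributions gives $\tr{\matSig_2} + \gamma^{2(q-1)}\FNormS{\matSig_2^{1/2}\matOmega_2\matOmega_1^\pinv}$, which is the claim.

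The only step requiring a little care is the monotonicity reduction to $\matP_\matZ$: one must check that replacing the true projector by $\matP_\matZ$ goes in the correct direction, which it does precisely because $\matI - \matP$ reverses the ordering of nested subspaces, and that the subsequent $\matSig^{1/2}$-weighting and trace both preserve the inequality. Everything else is routine linear algebra, and in particular the absence of any radical-subadditivity step (unlike in Theorems~\ref{thm:spectral-deterministic-error} and~\ref{thm:frobenius-deterministic-error}) is what makes the constant in the trace bound clean.
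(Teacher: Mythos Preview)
Your proof is correct and follows essentially the same approach as the paper's: invoke Lemma~\ref{lemma:sketchidentity} to reduce the trace norm to a trace, pass to $\matP_\matZ$ via the range containment, split into the two diagonal blocks of $\matI-\matP_\matZ$, and bound them respectively with $\matI - (\matI+\matF^\transp\matF)^{-1}\preceq \matF^\transp\matF$ and $\matI - \matF(\matI+\matF^\transp\matF)^{-1}\matF^\transp\preceq\matI$, then extract the $\gamma^{2(q-1)}$ factor from $\FNormS{\matF\matSig_1^{1/2}}$. The only cosmetic difference is that you name the push-through identity explicitly where the paper uses the resulting $\preceq\matI$ bound directly.
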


\begin{proof}
Since $\matA - \matC \matW^\pinv \matC^\transp = 
\matA^{1/2} (\matI - 
\matP_{\matA^{q-1/2}\matS}) \matA^{1/2} \succeq \mat{0},$ 
its trace norm simplifies to its trace. Thus
\begin{align*}
\tracenorm{\matA - \matC \matW^\pinv \matC^\transp} & = 
  \tr{\matA - \matC \matW^\pinv \matC^\transp} 
     = \tr{\matSig^{1/2}
       \left(\matI - \matP_{\matSig^{q-1/2}\matS}\right) \matSig^{1/2} } \\
   & \leq \tr{\matSig^{1/2}(\matI - \matP_{\mat{Z}}) \matSig^{1/2}},
\end{align*}
where $\matZ = \begin{pmatrix} \matI \\ \matF \end{pmatrix}$ is defined in 
Equation~\eqref{eqn:defz}. The expression for $\matI - \matP_{\matZ}$ given in
Equation~\eqref{eqn:pzestimate} implies~that 
\[
   \tr{\matSig^{1/2}(\matI - \matP_{\mat{Z}}) \matSig^{1/2}} =
   \tr{\matSig_1^{1/2} 
        (\matI - (\matI + \matF^\transp \matF)^{-1})
       \matSig_1^{1/2}} + 
   \tr{\matSig_2^{1/2} 
        (\matI - \matF (\matI + \matF^\transp \matF)^{-1} \matF^\transp) 
       \matSig_2^{1/2}}.
\]
Recall the estimate 
$\matI - (\matI + \matF^\transp \matF)^{-1} \preceq \matF^\transp \matF$
and the basic estimate 
$\matI - 
 \matF (\matI + \matF^\transp \matF)^{-1} \matF^\transp \preceq \matI.$ 
Together these imply that 
\begin{align*}
\tr{\matSig^{1/2}(\matI - \matP_{\mat{Z}}) \matSig^{1/2}} 
%% ON TWO LINES
%%   & = \tr{\matSig_1^{1/2} (\matI - (\matI + \matF^\transp \matF)^{-1})\matSig_1^{1/2}} \\
%%   & \quad + \tr{\matSig_2^{1/2} (\matI - \matF (\matI + \matF^\transp \matF)^{-1} \matF^\transp) \matSig_2^{1/2}} \\
   & \leq \tr{\matSig_1^{1/2} \mat{F}^\transp \matF \matSig_1^{1/2} } + 
     \tr{\matSig_2} \\
   & = \tr{\matSig_2} + 
       \FNormS{\matSig_2^{q-1/2} 
               \matOmega_2 \matOmega_1^\pinv \matSig_1^{-(q-1)}} \\
   & \leq \tr{\matSig_2} + 
          \TNormS{\matSig_2^{q-1}} \TNormS{\matSig_1^{-(q-1)}} 
          \FNormS{\matSig_2^{1/2} \matOmega_2 \matOmega_1^\pinv} \\
   & = \tr{\matSig_2} + 
       \gamma^{2(q-1)} \FNormS{\matSig_2^{1/2} \matOmega_2 \matOmega_1^\pinv}.
\end{align*}
The first equality follows from substituting the definition of $\mat{F}$ and 
identifying the squared Frobenius norm. The last equality follows from 
identifying $\gamma.$ We have established the claimed bound.

\end{proof}

\noindent
\textbf{Remark.}
Since the identity $\FNormS{\mat{X}} = \tracenorm{\mat{X}\mat{X}^\transp}$ 
holds for any matrix $\mat{X},$ the squared Frobenius norm term present in 
the deterministic error bound for the trace norm error is on the scale of 
$\tracenorm{\matSig_2}$ when $\big\|\matOmega_2 \matOmega_1^\pinv\big\|_2$ is
$\const{O}(1).$

\subsubsection{Additional Remarks on Our Deterministic Structural Results.}
\label{sxn:theory-det-comments}

Before applying these deterministic structural results in particular 
randomized algorithmic settings, we pause to make several additional remarks 
about these three theorems.

First, for some randomized sampling schemes, it may be difficult to obtain a sharp 
bound on $\big\|\matOmega_2 \matOmega_1^\pinv\big\|_\xi$ for $\xi = 2, F$. 
In these situations, the bounds on the excess error supplied by 
Theorems~\ref{thm:spectral-deterministic-error},~\ref{thm:frobenius-deterministic-error}, 
and~\ref{thm:trace-deterministic-error} may be quite pessimistic. 
On the other hand, since 
$\matA - \matC \matW^\pinv \matC^\transp 
   = \matA^{1/2}(\matI - \matP_{\left(\matA^{1/2}\right)^{2q-1} \matS})\matA^{1/2}$, 
it follows that 
$ \mat{0} \preceq \matA - \matC \matW^\pinv \matC^\transp \preceq \matA$. 
This implies that the errors of \emph{any} approximation generated used the 
SPSD Sketching Model, deterministic or randomized, satisfy at least the crude 
bound $\xiNorm{\matA - \matC \matW^\pinv \matC^\transp} \leq \xiNorm{\matA}$.

Second, we emphasize that these theorems are deterministic structural results that 
bound the additional error (beyond that of the optimal rank-$k$ approximation) 
of low-rank approximations which follow our SPSD sketching model. 
That is, there is no randomness in their statement or analysis.
In particular, these bounds hold for deterministic as well as randomized 
sketching matrices $\matS$. In the latter case, the randomness enters only through $\matS$, and one 
needs to show that the condition that $\matOmega_1$ has full row rank is 
satisfied with high probability; conditioned on this, the quality of the 
bound is determined by terms that depend on how the sketching matrix 
interacts with the subspace structure of the matrix~$\matA$.

In particular, we remind the reader that (although it is beyond the scope of 
this paper to explore this point in detail) these deterministic structural results 
could be used to check, in an \emph{a posteriori} manner, the quality of a 
sketching method for which one cannot establish an \emph{a priori} bound.

Third, we also emphasize that the assumption that $\matOmega_1$ has full row rank (equivalently, 
that $\tan(\matS, \matU_1) < \infty$) is very non-trivial; and that it is false, in worst-case at least and for 
non-trivial parameter values, for common sketching methods such as uniform 
sampling.
To see that some version of leverage-based sampling is needed to ensure 
this condition, recall that $\matU_1^T\matU_1 = \matI$ and thus that 
$\matOmega_1\matOmega_1^T = \matU_1^T\matS\matS^T\matU_1$ can be viewed 
as approximating $\matI$ with a small number of rank-$1$ components of 
$\matU_1^T\matU_1$.
The condition that $\matOmega_1$ has full row rank is equivalent to 
$\TNorm{ \matU_1^T\matU_1 - \matU_1^T\matS\matS^T\matU_1 } < 1 $.
Work on approximating the product of matrices by random sampling shows that 
to obtain non-trivial bounds one must sample with respect to the norm of the 
rank-$1$ components~\cite{dkm_matrix1}, which here (since we are approximating 
the product of two orthogonal matrices) equal the statistical leverage 
scores.
From this perspective, random projections satisfy this condition 
since (informally) they rotate to a random basis where the leverage scores 
of the rotated matrix are approximately uniform and thus where uniform 
sampling is appropriate~\cite{DMMS07_FastL2_NM10,Mah-mat-rev_BOOK}.

Finally, as observed recently~\cite{Bac12_TR}, methods that use knowledge of a 
matrix square root $\Phi$ (\emph{i.e.}, a $\Phi$ such that 
$\matA=\Phi\Phi^T$) typically lead to $\Omega(n^2)$ complexity.
An important feature of our approach is that we only use the matrix square 
root implicitly---that is, inside the analysis, and not in the statement
of the algorithm---and thus we do \emph{not} incur any such cost.

% (((
% WHAT IS THIS SAYING.
% The error bounds for column sampling in both the spectral and Frobenius norms naturally include
% a Frobenius term that reflects the decay in the spectrum of $\mat{A}$. 
% (This may be overstating the case: in situations where $\TNorm{\matOmega_2 \matOmega_1^\pinv} = \const{O}(1)$, we don't need this term in the spectral norm.)
% Thus, truly $(1 +\epsilon)$ relative-error results are only possible for column sampling in the Frobenius norm. 
% WHAT, FROBENIUS IS SMALLER THAN TRACE.
% For Nystr\"om sampling, the term that reflects the decay in both norms is a trace. Since the trace is larger than the Frobenius norm, truly $(1+\epsilon)$ error bounds are not possible using Nystr\"om sampling in either norm.
% )))
% 
% (((
% CLEAN THIS UP TO INCLUDE AS COMMENT.
% To see why there must be a trace term in the Nystr\"om error bounds for either norm, note that
% \[
%  \xiNorm{\matA - \matC \matW^\pinv \matC^\transp} = \xiNorm{\matA^{1/2}(\matI - \matP_{\matA^{1/2} \matS})\matA^{1/2}}
%  \geq \TNorm{(\matI - \matP_{\matA^{1/2} \matS}) \matA^{1/2}}^2.
% \]
% The spectral error of column sampling contains a term on the order of $\FNorm{\matSig_2^{1/2}}$ that reflects the decay in the spectrum of $\mat{A}^{1/2}.$ Hence its square contains a term on the order of $\FNormS{\matSig_2^{1/2}} = \tr{\matSig_2}$. 
% )))

\subsection{Stochastic Error Bounds for Low-rank SPSD Approximation} 
\label{sxn:theory-rand}

In this section, we apply 
%Theorems~\ref{thm:spectral-deterministic-error},
%\ref{thm:frobenius-deterministic-error}, 
%and~\ref{thm:trace-deterministic-error} 
the three theorems from Section~\ref{sxn:theory-det}
to bound the reconstruction errors for several random sampling and random 
projection methods that conform to our SPSD Sketching Model. 
In particular, we consider two variants of random sampling and two variants
of random projections:
sampling columns according to an importance sampling distribution that 
depends on the statistical leverage scores
(in Section~\ref{sxn:theory-rand-levscore});
randomly projecting by using subsampled randomized Fourier transformations
(in Section~\ref{sxn:theory-rand-fourier});
randomly projecting by uniformly sampling from Gaussian mixtures of the 
columns 
(in Section~\ref{sxn:theory-rand-gaussian}); and, 
finally, sampling columns uniformly at random 
(in Section~\ref{sxn:theory-rand-unif}).

The results are presented for the general case of SPSD sketches constructed using
the power method, {\emph i.e.,}\  sketches constructed using $\matC = \matA^q \matS$ for
a positive integer $q > 1.$ The additive errors of these sketches decrease proportionally to the number
of iterations $q,$ where the constant of proportionality is given by the 
multiplicative eigengap $\gamma = \lambda_{k+1}(\matA)/\lambda_k(\matA).$ 
Accordingly, the bounds involve the terms $\gamma^{q-1}$ and $\gamma^{2(q-1)}.$ 
The bounds simplify considerably when $q = 1$ (\emph{i.e.}, when there are no 
additional iterations) or $\gamma = 1$ (\emph{i.e.}, when there is no eigengap).
In either of these cases, the terms $\gamma^{q-1}$
and $\gamma^{2(q-1)}$ all become the constant 1.

Before establishing these results, we pause here to provide a brief 
review of running time issues, some of which were addressed empirically in 
Section~\ref{sxn:emp}.
The computational bottleneck for random sampling algorithms (except for 
uniform sampling that we address in Section~\ref{sxn:theory-rand-unif}, which 
is trivial to implement) is often the exact or approximate computation of 
the importance sampling distribution with respect to which one samples; 
and the computational bottleneck for random projection methods is often the 
implementation of the random projection.
For example, if the sketching matrix $\matS$ is a random projection 
constructed as an $n \times \ell$ matrix of i.i.d. Gaussian random 
variables, as we use in Section~\ref{sxn:theory-rand-gaussian}, then 
the running time of dense data in RAM is not substantially faster than 
computing $\matU_1$, while the running time can be much faster for certain 
sparse 
matrices or for computation in parallel or distributed environments.
Alternately, if the sketching matrix $\matS$ is a Fourier-based projection, 
as we use in Section~\ref{sxn:theory-rand-fourier}, then the running 
time for data stored in RAM is typically $O(n^2 \ln k)$, as opposed to the
$O(n^2 k)$ time that would be needed to compute $\matU_1$.
These running times depend sensitively on the size of the data and the 
model of data access; see~\cite{Mah-mat-rev_BOOK,HMT09_SIREV} for detailed 
discussions of these~issues.

In particular, for random sampling algorithms that use a leverage-based 
importance sampling distribution, as we use in 
Section~\ref{sxn:theory-rand-levscore}, it is often said that the running 
time is no faster than that of computing $\matU_1$.
(This $O(n^2 k)$ running time claim is simply the running time of the na\"{i}ve 
algorithm that computes $\matU_1$ ``exactly,'' \emph{e.g.}, with a variant 
of the QR decomposition, and then reads off the Euclidean norms of the rows.)
However, the randomized algorithm of~\cite{DMMW12_JMLR} that computes 
relative-error approximations to \emph{all} of the statistical leverage in 
a time that is qualitatively faster---in worst-case theory and, by using 
existing high-quality randomized numerical 
code~\cite{AMT10,MSM11_TR,HMT09_SIREV}, in practice---gets around this 
bottleneck, as was shown in Section~\ref{sxn:emp}.
The computational bottleneck for the algorithms of~\cite{DMMW12_JMLR} is 
that of applying a random projection, and thus the running time for 
leverage-based Nystr\"{o}m extension is that of applying a (``fast''
Fourier-based or ``slow'' Gaussian-based, as appropriate) random projection 
to $\matA$~\cite{DMMW12_JMLR}.  
See Section~\ref{sxn:emp} or~\cite{AMT10,MSM11_TR,HMT09_SIREV} for 
additional details.

\subsubsection{Sampling with Leverage-based Importance Sampling Probabilities} 
\label{sxn:theory-rand-levscore}

Here, the columns of $\matA$ are sampled with replacement according to a 
nonuniform probability distribution determined by the (exact or approximate) 
statistical leverage scores of $\matA$ relative to the best rank-$k$ 
approximation to $\matA$, which in turn depend on nonuniformity properties 
of the top $k$-dimensional eigenspace of $\matA$. 
To add flexibility (\emph{e.g.}, in case the scores are computed only 
approximately with the fast algorithm of~\cite{DMMW12_JMLR}), we 
formulate the following lemma in terms of any probability distribution that 
is $\beta$-close to the leverage score distribution.
In particular, consider any probability distribution satisfying
\[
  p_j \geq \frac{\beta}{k}\TNormS{(\mat{U}_1)_j} \quad \text{and} \quad \sum\nolimits_{j=1}^n p_j = 1,
\]
where $\beta \in (0,1]$. 
Given these ($\beta$-approximate) leverage-based probabilities, the sketching
matrix is $\matS = \matR \matD$ where $\matR \in \R^{n \times \ell}$ 
is a column selection matrix that samples columns of $\matA$ from the given 
distribution---\emph{i.e.}, $\matR_{ij} = 1$ iff the $i$th column of $\matA$ 
is the $j$th column selected---and $\matD$ is a diagonal rescaling matrix 
satisfying $\matD_{jj} = \frac{1}{\sqrt{\ell p_i}}$ iff $\matR_{ij} = 1$. 
For this case, we can prove the following.

\begin{lemma}
Let $\matA$ be an $n \times n$ SPSD matrix, $q$ be a positive integer, and 
$\matS$ be a sampling matrix of size $n \times \ell$ corresponding to a 
leverage-based probability distribution derived from the top $k$-dimensional 
eigenspace of $\matA$, satisfying
\[
  p_j \geq \frac{\beta}{k}\TNormS{(\mat{U}_1)_j} \quad 
  \text{and} \quad \sum\nolimits_{j=1}^n p_j = 1
\]
for some $\beta \in (0,1].$ 
Fix a failure probability $\delta \in (0,1]$ and approximation factor 
$\epsilon \in (0,1],$ and let 
\[
 \gamma = \frac{\lambda_{k+1}(\matA)}{\lambda_k(\matA)}.
\]

If $\ell \geq 3200 (\beta\epsilon^2)^{-1} k \ln(4 k/(\beta \delta)),$ 
then, when $\matC = \matA^q \matS$ and 
$\matW = \matS^\transp \matA^{2q-1} \matS,$ the corresponding low-rank SPSD 
approximation satisfies
 \begin{align}
\label{eqn:additive1}
  \TNorm{\matA - \matC \matW^\pinv \matC^\transp} & \leq 
    \TNorm{\matA - \matA_k} + 
    \left( \epsilon^2 \tracenorm{(\matA - \matA_k)^{2q-1}} \right)^{1/(2q-1)}  , \\
\label{eqn:additive2}
  \FNorm{\matA - \matC \matW^\pinv \matC^\transp} & 
  \leq \FNorm{\matA - \matA_k} + 
  \left( \sqrt{2} \epsilon \gamma^{q-1}  + 
  \epsilon^2 \gamma^{2(q-1)} \right) \tracenorm{\matA - \matA_k} ,\text{ and} \\
\label{eqn:relative1}
  \tracenorm{\matA - \matC \matW^\pinv \matC^\transp} & \leq 
  (1 + \gamma^{2(q-1)}\epsilon^2) \tracenorm{\matA - \matA_k},
\end{align}
simultaneously with probability at least $1 - 6\delta-0.6.$
\label{lem:sample-lev}
\end{lemma}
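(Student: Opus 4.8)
The plan is to run the same two-step argument used for Theorem~\ref{thm:spectral-deterministic-error}, now supplying the stochastic input. First, invoke the deterministic structural results---Theorems~\ref{thm:spectral-deterministic-error},~\ref{thm:frobenius-deterministic-error}, and~\ref{thm:trace-deterministic-error}---with $\matC = \matA^q \matS$ and $\matW = \matS^\transp \matA^{2q-1} \matS$. These reduce the three claimed bounds~\eqref{eqn:additive1}--\eqref{eqn:relative1} to the following events holding simultaneously: (i) $\matOmega_1 = \matU_1^\transp \matS$ has full row rank and $\TNorm{(\matOmega_1 \matOmega_1^\transp)^{-1}} \le 2$; (ii) $\TNorm{\matSig_2^{q-1/2} \matOmega_2 \matOmega_1^\pinv} \le \epsilon\, \FNorm{\matSig_2^{q-1/2}}$; and (iii) $\FNorm{\matSig_2^{1/2} \matOmega_2 \matOmega_1^\pinv} \le \epsilon\, \FNorm{\matSig_2^{1/2}}$ (which, since $\TNorm{\cdot}\le\FNorm{\cdot}$, also controls the spectral-norm term $\TNorm{\matSig_2^{1/2}\matOmega_2\matOmega_1^\pinv}$ that appears in Theorem~\ref{thm:frobenius-deterministic-error}). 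Substituting (ii) into Theorem~\ref{thm:spectral-deterministic-error} and using $\FNormS{\matSig_2^{q-1/2}} = \tracenorm{(\matA - \matA_k)^{2q-1}}$ yields~\eqref{eqn:additive1}; substituting (iii) into Theorems~\ref{thm:frobenius-deterministic-error} and~\ref{thm:trace-deterministic-error} and using $\FNormS{\matSig_2^{1/2}} = \tr{\matSig_2} = \tracenorm{\matA - \matA_k}$ yields~\eqref{eqn:additive2} and~\eqref{eqn:relative1}, with the factors $\gamma^{q-1}$ and $\gamma^{2(q-1)}$ carried through verbatim from the structural theorems.

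It then remains to verify that (i)--(iii) hold with probability at least $1 - 6\delta - 0.6$. For (i), write $\matOmega_1 \matOmega_1^\transp = \matU_1^\transp \matS \matS^\transp \matU_1$ as a sum of $\ell$ independent rank-one SPSD matrices with expectation $\matI$, each of spectral norm at most $\max_j \TNormS{(\matU_1)_j}/(\ell p_j) \le k/(\beta \ell)$, and apply a matrix Chernoff bound: this gives $\TNorm{\matI - \matOmega_1\matOmega_1^\transp} \le 1/2$---hence full row rank and $\TNorm{(\matOmega_1\matOmega_1^\transp)^{-1}} \le 2$---with failure probability $O(\delta)$ once $\ell = \Omega(\beta^{-1} k \ln(k/(\beta\delta)))$, which is the source of the logarithm and of the constant $3200$ in the hypothesis. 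For (ii) and (iii), use $\matOmega_1^\pinv = \matOmega_1^\transp(\matOmega_1\matOmega_1^\transp)^{-1}$ together with (i) to bound $\TNorm{\matSig_2^{q-1/2}\matOmega_2\matOmega_1^\pinv} \le 2\FNorm{\matSig_2^{q-1/2}\matU_2^\transp \matS\matS^\transp \matU_1}$ and $\FNorm{\matSig_2^{1/2}\matOmega_2\matOmega_1^\pinv} \le 2\FNorm{\matSig_2^{1/2}\matU_2^\transp \matS\matS^\transp \matU_1}$. The crucial observation is that $\matSig_2^{q-1/2}\matU_2^\transp \matU_1 = \mat{0}$ and $\matSig_2^{1/2}\matU_2^\transp \matU_1 = \mat{0}$ by orthogonality of the eigenspaces, so $\matSig_2^{q-1/2}\matU_2^\transp \matS\matS^\transp \matU_1$ and $\matSig_2^{1/2}\matU_2^\transp \matS\matS^\transp\matU_1$ are exactly the errors of randomized approximate matrix products whose exact values vanish. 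The standard second-moment bound for leverage-score sampling (using $\TNormS{(\matU_1)_j}/p_j \le k/\beta$ and that $\matU_2$ has orthonormal columns) gives $\E\FNormS{\matSig_2^{q-1/2}\matU_2^\transp\matS\matS^\transp\matU_1} \le \tfrac{k}{\beta\ell}\FNormS{\matSig_2^{q-1/2}}$ and $\E\FNormS{\matSig_2^{1/2}\matU_2^\transp\matS\matS^\transp\matU_1} \le \tfrac{k}{\beta\ell}\FNormS{\matSig_2^{1/2}}$; Markov's inequality then delivers (ii) and (iii) as soon as $\ell = \Omega(\beta^{-1}\epsilon^{-2} k)$, with the constant $0.6$ accounting for these applications of Markov. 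A union bound over the concentration event and the second-moment events produces the stated simultaneous success probability.

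The main obstacle is the concentration step, and in particular recognizing the right structure before applying any concentration inequality. Event (i) is genuinely delicate: the deterministic theorems \emph{require} $\matOmega_1$ to have full row rank, which fails for uniform sampling but holds for leverage-based sampling precisely because the sampling probabilities dominate $\TNormS{(\matU_1)_j}$, and extracting the sample complexity with the correct summand bound $k/(\beta\ell)$ is what forces the $\ln(k/(\beta\delta))$ factor. It is equally essential \emph{not} to bound $\matSig_2^{q-1/2}\matOmega_2$ or $\matSig_2^{1/2}\matOmega_2$ directly---their second moments are only constant multiples of (not $\epsilon^2$ times) the optimal errors---but rather to first peel off $(\matOmega_1\matOmega_1^\transp)^{-1}$ and recognize $\matSig_2^{1/2}\matU_2^\transp\matS\matS^\transp\matU_1$ and its power-method analogue as approximate-multiplication errors against the zero matrix, so that the genuine $\epsilon^2$ scaling appears. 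Finally, since the Frobenius and trace structural theorems presuppose full row rank of $\matOmega_1$, all of the approximate-multiplication estimates must be taken conditional on (or simultaneously with) event (i).
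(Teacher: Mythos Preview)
Your proposal is correct and follows the same overall architecture as the paper: invoke the deterministic structural Theorems~\ref{thm:spectral-deterministic-error}--\ref{thm:trace-deterministic-error}, then supply stochastic bounds on $\TNorm{\matSig_2^{q-1/2}\matOmega_2\matOmega_1^\pinv}$ and $\FNorm{\matSig_2^{1/2}\matOmega_2\matOmega_1^\pinv}$ for leverage-score sampling. The only difference is one of packaging. The paper invokes a single black-box result (from the proof of Proposition~22 in~\cite{MTJ11}) which states directly that $\FNorm{\matD\matOmega_2\matOmega_1^\pinv}\le\epsilon\FNorm{\matD}$ for any SPSD diagonal $\matD$ with probability at least $1-2\delta-0.2$, and then instantiates this with $\matD=\matSig_2^{1/2}$ and $\matD=\matSig_2^{q-1/2}$. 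You instead unpack that result into its two constituents: a matrix Chernoff bound giving $\TNorm{\matI-\matOmega_1\matOmega_1^\transp}\le 1/2$ (hence full row rank and $\TNorm{(\matOmega_1\matOmega_1^\transp)^{-1}}\le 2$), and an approximate-matrix-multiplication second-moment bound on $\matSig_2^{1/2}\matU_2^\transp\matS\matS^\transp\matU_1$ exploiting $\matU_2^\transp\matU_1=\mat{0}$, combined via $\matOmega_1^\pinv=\matOmega_1^\transp(\matOmega_1\matOmega_1^\transp)^{-1}$. This is exactly how the cited result is proved, so your route is the same one level deeper; it has the virtue of making transparent why the logarithm and the factor $\beta^{-1}$ enter (through the Chernoff step) and why the constant $0.6$ and the $\epsilon^{-2}$ enter (through the Markov step), whereas the paper simply inherits the constant $3200$ and the probability $1-2\delta-0.2$ per instantiation from the citation.
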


% In this case, $\FNorm{\matSig_2^{1/2} \matOmega_2 \matOmega_1^\pinv} \approx \epsilon \FNorm{\matSig_2^{1/2}}.$ In the Relative-error CUR paper, Drineas et al. showed this requires $\const{O}(k^2/\epsilon^2)$ samples to occur with constant probability. 
\begin{proof}
In \cite[proof of Proposition 22]{MTJ11} it is shown that if $\ell$ satisfies 
the given bound and the samples are drawn from an approximate subspace 
probability distribution, then for any SPSD diagonal matrix~$\matD,$
\[
 \FNorm{\matD \matOmega_2 \matOmega_1^\dagger} \leq \epsilon \FNorm{\matD}
\]
with probability at least $1-2\delta-0.2.$ Thus, the estimates
\[
 \FNorm{\matSig_2^{1/2} \matOmega_2 \matOmega_1^\pinv} \leq 
 \epsilon \FNorm{\matSig_2^{1/2}} = \epsilon \sqrt{\tr{\matSig_2}}
 = \epsilon \sqrt{\tracenorm{\matA - \matA_k}},
\]
and 
\begin{align*}
 \left( \TNorm{\matSig_2^{q-1/2} \matOmega_2 \matOmega_1^\dagger} 
  \right)^{2/(2q-1)} & \leq 
  \left( \FNorm{\matSig_2^{p-1/2} \matOmega_2 \matOmega_1^\dagger}  
  \right)^{2/(2q-1)} \\
  & \leq \left( \epsilon^2 \FNormS{\matSig_2^{q-1/2}} \right)^{1/(2q-1)} \\
  & = \left(\epsilon^2 \tr{\matSig_2^{2q-1}}\right)^{1/(2q-1)} \\
  & = \left(\epsilon^2 \tracenorm{(\matA - \matA_k)^{2q-1}}\right)^{1/(2q-1)} \\
\end{align*}
each hold, individually, with probability at least $1-2\delta-0.2.$ In particular, taking 
$q=1$, we see that
\[
  \TNorm{\matSig_2^{1/2} \matOmega_2 \matOmega_1^\dagger} \leq 
   \epsilon \sqrt{\tracenorm{\matA - \matA_k}}
\]
with the same probability.

These three estimates used in 
Theorems~\ref{thm:spectral-deterministic-error},~\ref{thm:frobenius-deterministic-error},
and~\ref{thm:trace-deterministic-error} yield the bounds given in the 
statement of the~theorem.
\end{proof}
%It follows from Theorems~\ref{thm:frobenius-deterministic-error} and~\ref{thm:trace-deterministic-error} that with probability at most $1 - 2\delta$,
%\[
% \FNorm{\matA - \matC \matW^\pinv \matC^\transp} \leq (1 + \sqrt{2} \epsilon) \FNorm{\matSig_2} + \epsilon^2 \tr{\matSig_2}.
%\]

%What can be said about the spectral norm bound? Problem is we don't have nice bounds on $\TNorm{\matSig_2 \matOmega_2 \matOmega_1^\pinv}.$ I'm looking at this\threedotstightright

\noindent
\todo[inline]{update this comment: is it still true?}
\textbf{Remark.}
The additive scale factors for the spectral and Frobenius norm bounds are 
much improved relative to the prior results of~\cite{dm_kernel_JRNL}.
At root, this is since the leverage score importance sampling probabilities 
highlight structural properties of the data 
(\emph{e.g.}, how to satisfy the condition in 
Theorems~\ref{thm:spectral-deterministic-error},
\ref{thm:frobenius-deterministic-error}, 
and~\ref{thm:trace-deterministic-error} 
that $\matOmega_1$ has full row rank)
in a more refined way than the importance 
sampling probabilities of~\cite{dm_kernel_JRNL}.

\noindent
\textbf{Remark.}
These improvements come at additional computational expense, but we remind 
the reader that leverage-based sampling probabilities of the form used by 
Lemma~\ref{lem:sample-lev} can be computed faster than the time needed to 
compute the basis $\matU_1$~\cite{DMMW12_JMLR}.
The computational bottleneck of the algorithm of~\cite{DMMW12_JMLR} is the 
time required to perform a random projection on the input matrix.

\noindent
\textbf{Remark.}
Not surprisingly, constant factors such as $3200$ (as well as other 
similarly large factors below) and a failure probability bounded away from zero are artifacts of the analysis; the 
empirical behavior of this sampling method is much better.
This has been observed previously~\cite{DMM08_CURtheory_JRNL,CUR_PNAS}.

\subsubsection{Random Projections with Subsampled Randomized Fourier Transforms}
\label{sxn:theory-rand-fourier}

Here, the columns of $\matA$ are randomly mixed using a unitary matrix 
before the columns are sampled. 
In particular, $\matS = \sqrt{\frac{n}{\ell}} \matD\matT\matR$, where 
$\matD$ is a diagonal matrix of Rademacher random variables, $\matT$ is a 
highly incoherent unitary matrix, and $\matR$ restricts to $\ell$ columns. 
For concreteness, and because it has an associated fast transform, we 
consider the case where $\matT$ is the normalized Fourier transform of 
size $n \times n$. 
For this case, we can prove the following.

\begin{lemma}
Let $\matA$ be an $n \times n$ SPSD matrix, $q$ be a positive integer, and 
$\matS = \sqrt{\frac{n}{\ell}} \matD \matF \matR$ be a sampling matrix of 
size $n \times \ell$, where $\matD$ is a diagonal matrix of Rademacher 
random variables, $\matF$ is a normalized Fourier matrix of size 
$n \times n$, and $\matR$ restricts to $\ell$ columns. 
Fix a failure probability $\delta \in (0,1),$ approximation factor $\epsilon \in (0,1),$ and assume that
$k \geq 4.$ Define
\[
 \gamma = \frac{\lambda_{k+1}(\matA)}{\lambda_k(\matA)}.
\]

If $\ell \geq 24 \epsilon^{-1} [\sqrt{k} + \sqrt{8 \ln( 8 n/\delta)}]^2 \ln(8 k/\delta),$ 
then, when $\matC = \matA^q \matS$ and 
$\matW = \matS^\transp \matA^{2q-1} \matS,$ the corresponding low-rank SPSD 
approximation satisfies
\begin{equation}
 \begin{aligned}
  \TNorm{\matA - \matC \matW^\pinv \matC^\transp} & \leq  
  \left[ 1 + \left(\frac{1}{1 - \sqrt{\epsilon}} \cdot 
  \left(5 + \frac{16 \ln(n/\delta)^2}{\ell}\right) \right)^{1/(2q-1)} \right] 
  \cdot
    \TNorm{\matA - \matA_k} \\
   & \quad +
     \left(\frac{2 \ln(n/\delta)}{(1 - \sqrt{\epsilon})\ell} 
      \right)^{1/(2q-1)} 
     \tracenorm{(\matA - \matA_k)^{2q-1}}^{1/(2q-1)}, \\
 \FNorm{\matA - \matC \matW^\pinv \matC^\transp} & \leq 
  \FNorm{\matA - \matA_k}
  + \left( 7 \gamma^{q-1} \sqrt{\epsilon} 
           + 22 \gamma^{2q-2} \epsilon \right) \tracenorm{\matA - \matA_k},
   \text{ and} \\
\label{eqn:relative2}
 \tracenorm{\matA - \matC \matW^\pinv \matC^\transp} & \leq 
 (1 + 22 \epsilon \gamma^{2(q-1)}) \tracenorm{\matA - \matA_k}
\end{aligned}
\end{equation}
simultaneously with probability at least $1 - 2 \delta.$
\label{lem:proj-fourier}
\end{lemma}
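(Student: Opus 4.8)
The plan is to reuse the two-stage strategy from the proof of Lemma~\ref{lem:sample-lev}: first establish, for the SRFT sketching matrix $\matS = \sqrt{n/\ell}\,\matD\matF\matR$, high-probability estimates for $\TNorm{\matOmega_1^\pinv}$, for $\TNorm{\matSig_2^{q-1/2}\matOmega_2\matOmega_1^\pinv}$, and for $\FNorm{\matSig_2^{1/2}\matOmega_2\matOmega_1^\pinv}$ together with its spectral companion $\TNorm{\matSig_2^{1/2}\matOmega_2\matOmega_1^\pinv}$; then substitute these into Theorems~\ref{thm:spectral-deterministic-error}, \ref{thm:frobenius-deterministic-error}, and~\ref{thm:trace-deterministic-error}, simplifying with the subadditivity of the $(2q-1)$-st root and collecting the $\gamma$-dependent factors those theorems produce (for $q=1$, or $\gamma=1$, these factors are all $1$).

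First I would control the conditioning of $\matOmega_1 = \matU_1^\transp\matS$. Since $\matU_1$ has $k$ orthonormal columns and $\matS$ is an SRFT, the standard subspace-embedding analysis for subsampled randomized Fourier transforms---see~\cite{HMT09_SIREV} for the argument and~\cite{tropp2011improved,BG12_TR} for the sharp constants that justify the stated value of $\ell$ (this is where the hypothesis $k \geq 4$ enters)---shows that, with probability at least $1-\delta$, every singular value of $\matOmega_1^\transp$ lies in $[\sqrt{1-\sqrt{\epsilon}},\sqrt{1+\sqrt{\epsilon}}]$; in particular $\matOmega_1$ has full row rank, so Theorems~\ref{thm:spectral-deterministic-error}--\ref{thm:trace-deterministic-error} apply, and $\TNorm{\matOmega_1^\pinv}^2 \leq (1-\sqrt{\epsilon})^{-1}$. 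Via the crude splitting $\|\matSig_2^{a}\matOmega_2\matOmega_1^\pinv\|_\xi \leq \TNorm{\matOmega_1^\pinv}\,\|\matSig_2^{a}\matU_2^\transp\matS\|_\xi$ for $\xi\in\{2,\mathrm{F}\}$ and $a\in\{1/2,\,q-1/2\}$, it then remains to bound the ``residual interaction'' norms $\|\matSig_2^{a}\matU_2^\transp\matS\|_\xi$.

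Write $\matB = \matSig_2^{a}\matU_2^\transp$, so that $\matB\matB^\transp = \matSig_2^{2a}$ and $\matB\matS\matS^\transp\matB^\transp = \tfrac{n}{\ell}\matB\matD\matF\matR\matR^\transp\matF^\transp\matD\matB^\transp$ is a sum of $\ell$ independent rank-one terms formed from the columns of $\matB\matD\matF$. I would use two properties of the SRFT. (i) $\E[\matS\matS^\transp] = \matI$, so $\E\,\FNormS{\matB\matS} = \tr{\matB\matB^\transp}$ and $\E[\matB\matS\matS^\transp\matB^\transp] = \matSig_2^{2a}$; the Frobenius estimate $\FNormS{\matSig_2^{1/2}\matOmega_2\matOmega_1^\pinv} \leq (1-\sqrt{\epsilon})^{-1} c\,\tr{\matSig_2} = (1-\sqrt{\epsilon})^{-1} c\,\tracenorm{\matA-\matA_k}$ then follows from a first-moment (Markov) bound. (ii) Premultiplication by $\matD\matF$ flattens the rows of $\matB$, so that with probability at least $1-\delta$ each column of $\matB\matD\matF$ has squared Euclidean norm at most $c'\,\ln(n/\delta)/n\cdot\FNormS{\matB}$ (a Khintchine/Hoeffding bound, union-bounded over the columns). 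Feeding this incoherence estimate into a matrix Bernstein inequality for the sum of rank-one terms yields $\matB\matS\matS^\transp\matB^\transp \preceq \matSig_2^{2a} + \left(c_1\,\TNorm{\matSig_2^{2a}} + c_2\,\tfrac{\ln(n/\delta)}{\ell}\,\tr{\matSig_2^{2a}}\right)\matI$ for absolute constants $c_1,c_2$. Specializing to $a = q-1/2$ (and using $\TNorm{\matSig_2^{2q-1}} = \TNorm{\matA-\matA_k}^{2q-1}$, $\tr{\matSig_2^{2q-1}} = \tracenorm{(\matA-\matA_k)^{2q-1}}$) produces the $5 + 16\ln(n/\delta)^2/\ell$ and $2\ln(n/\delta)/\ell$ structure in the statement, and the same bound at $a = 1/2$ supplies the spectral companion needed for the Frobenius conclusion.

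Finally I would substitute: the $a=q-1/2$ spectral estimate into Theorem~\ref{thm:spectral-deterministic-error}, using $(X^2)^{1/(2q-1)} \leq (c_1 M^{2q-1})^{1/(2q-1)} + (c_2 T)^{1/(2q-1)}$ to pass the $(2q-1)$-st root through the sum; the pair $(\TNorm{\matSig_2^{1/2}\matOmega_2\matOmega_1^\pinv},\FNorm{\matSig_2^{1/2}\matOmega_2\matOmega_1^\pinv})$ into Theorem~\ref{thm:frobenius-deterministic-error}, which supplies the $\gamma^{q-1}$ and $\gamma^{2q-2}$ factors; and $\FNormS{\matSig_2^{1/2}\matOmega_2\matOmega_1^\pinv}$ into Theorem~\ref{thm:trace-deterministic-error}, which supplies $\gamma^{2(q-1)}$. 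After identifying $\tr{\matSig_2}$, $\TNorm{\matSig_2}$, and $\tr{\matSig_2^{2q-1}}$ with the corresponding norms of $\matA-\matA_k$, absorbing the AM--GM cross terms $\sqrt{\TNorm{\cdot}\,\tracenorm{\cdot}}$ into the stated additive scales, and taking a union bound over the two failure events (the conditioning of $\matOmega_1$ and the incoherence/Bernstein step, each at level $\delta$), one obtains the claimed bounds with probability at least $1-2\delta$. I expect step (ii) to be the main obstacle: obtaining a spectral-norm deviation bound for $\matB\matS\matS^\transp\matB^\transp$ whose additive part scales with the intrinsic parameter $\tracenorm{\cdot}/\ell$ rather than with the ambient dimension requires carefully pairing the incoherence gained from the random sign/Fourier step with a matrix concentration inequality that is sensitive to the effective dimension; by contrast, the Frobenius and trace pieces reduce to the first-moment computation in (i) once $\matOmega_1$ is known to be well conditioned.
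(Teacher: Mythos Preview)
Your overall architecture---control $\matOmega_1$, control the interaction terms, plug into Theorems~\ref{thm:spectral-deterministic-error}--\ref{thm:trace-deterministic-error}---matches the paper. The spectral-norm piece is close in spirit to what is actually done (the paper quotes from~\cite{BG12_TR} a bound of exactly the form you describe for $\TNormS{\matSig_2^{q-1/2}\matOmega_2\matOmega_1^\pinv}$, with the $(1-\sqrt{\epsilon})^{-1}$ factor coming from conditioning of $\matOmega_1$ and a matrix-Bernstein-type bound supplying the $5+\cdots$ and $\ln(n/\delta)/\ell$ structure). However, there is a genuine gap in your treatment of the Frobenius and trace norm bounds.

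The problem is the ``crude splitting'' $\FNorm{\matSig_2^{1/2}\matOmega_2\matOmega_1^\pinv}\leq \TNorm{\matOmega_1^\pinv}\,\FNorm{\matSig_2^{1/2}\matOmega_2}$ followed by a first-moment argument. Since $\E\,\FNormS{\matSig_2^{1/2}\matU_2^\transp\matS}=\FNormS{\matSig_2^{1/2}}=\tr{\matSig_2}$, Markov gives only $\FNormS{\matSig_2^{1/2}\matOmega_2}\leq c\,\tr{\matSig_2}$ with $c$ an \emph{absolute} constant (depending on the failure probability), and hence $\FNormS{\matSig_2^{1/2}\matOmega_2\matOmega_1^\pinv}\leq (1-\sqrt{\epsilon})^{-1}c\,\tr{\matSig_2}$. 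Substituting into Theorem~\ref{thm:trace-deterministic-error} yields $\bigl(1+O(1)\cdot\gamma^{2(q-1)}\bigr)\tracenorm{\matA-\matA_k}$, not the claimed $\bigl(1+22\epsilon\,\gamma^{2(q-1)}\bigr)\tracenorm{\matA-\matA_k}$; the $\epsilon$ has disappeared. The same defect propagates to the Frobenius bound, where the $7\gamma^{q-1}\sqrt{\epsilon}$ and $22\gamma^{2q-2}\epsilon$ coefficients require $\FNorm{\matSig_2^{1/2}\matOmega_2\matOmega_1^\pinv}=O(\sqrt{\epsilon})\sqrt{\tr{\matSig_2}}$, not $O(1)\sqrt{\tr{\matSig_2}}$.

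The missing ingredient is that the SRFT, at this sample complexity, satisfies an \emph{approximate matrix multiplication} guarantee and not merely $\E[\matS\matS^\transp]=\matI$: for any $\matD$, one has directly $\FNorm{\matD\,\matOmega_2\matOmega_1^\pinv}\leq O(\sqrt{\epsilon})\,\FNorm{\matD}$ with high probability (this is the SRFT analogue of the leverage-score estimate you already invoked in Lemma~\ref{lem:sample-lev}, and is exactly what the paper imports from~\cite{BG12_TR}). The point is that $\matU_2^\transp\matU_1=\mat{0}$, so $\matSig_2^{1/2}\matOmega_2\matOmega_1^\transp=\matSig_2^{1/2}\matU_2^\transp\matS\matS^\transp\matU_1-\matSig_2^{1/2}\matU_2^\transp\matU_1$ is a pure approximation-of-a-zero-product error, which is $O(\sqrt{\epsilon})$ small; decoupling into $\TNorm{\matOmega_1^\pinv}\cdot\FNorm{\matSig_2^{1/2}\matOmega_2}$ throws away precisely this cancellation. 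Once you use that estimate (and, for the Frobenius conclusion, bound $\TNorm{\matSig_2^{1/2}\matOmega_2\matOmega_1^\pinv}\leq\FNorm{\matSig_2^{1/2}\matOmega_2\matOmega_1^\pinv}$ as the paper does, rather than invoking a separate spectral companion), the substitution into Theorems~\ref{thm:frobenius-deterministic-error} and~\ref{thm:trace-deterministic-error} goes through exactly as you outline.
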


% Then the results of Boutsidis and Gittens 2012 on random projection low-rank approximation algorithms which use $\mat{S}$ of this form apply, yielding
%\[
% \TNorm{\matA - \matC \matW^\pinv \matC^\transp} \leq \left(6 + 15 \sqrt{\epsilon} + \frac{2 \epsilon \ln(n/\delta)}{\const{C} \ln(k/\delta)} \right) \TNorm{\matSig_2} + \frac{\epsilon}{4 \const{C} \ln(k/\delta)} \tr{\matSig_2}
%\]
% with probability at least $1 - 5\delta$ when the number of samples is as stated.
% In particular, relative-error recovery in the spectral norm occurs if you have sufficient spectral decay and take $\const{O}(k \ln(k/\delta) \ln(n/\delta) \epsilon^{-1})$ measurements. It takes $\const{O}(n \ell \ln r)$ operations to compute $\matC.$

\begin{proof}
In \cite[proof of Theorem 4]{BG12_TR}, it is shown that for this choice of $\mat{S}$ and number of samples $\ell,$ 
\begin{align*}
 \TNormS{\matSig_2^{q-1/2} \matOmega_2 \matOmega_1^\pinv} & 
         \leq \frac{1}{1 - \sqrt{\epsilon}} \cdot 
         \left( 5 \TNormS{\matSig_2^{q-1/2}} + 
        \frac{\ln(n/\delta)}{\ell} \left(\FNorm{\matSig_2^{q-1/2}} + 
        \sqrt{8 \ln(n/\delta)}\TNorm{\matSig_2^{q-1/2}} \right)^2 \right) \\
  & = \frac{1}{1 - \sqrt{\epsilon}} \cdot 
       \left( 5 \TNorm{\matSig}^{2q-1} + 
       \frac{\ln(n/\delta)}{\ell} \left(\tracenorm{\matSig_2^{2q-1}}^{1/2} + 
          \sqrt{8 \ln(n/\delta)}\TNorm{\matSig_2}^{q-1/2} \right)^2 \right) \\
  & \leq \frac{1}{1 - \sqrt{\epsilon}} \cdot 
    \left( \left(5 + \frac{16 \ln(n/\delta)^2}{\ell}\right) 
    \TNorm{\matSig_2}^{2q-1} + 
    \frac{2 \ln(n/\delta)}{\ell} \tracenorm{\matSig_2^{2q-1}} \right)
  \end{align*}
and
\[
 \FNorm{\matSig_2^{1/2} \matOmega_2 \matOmega_1^\pinv} \leq 
 \sqrt{22 \epsilon} \FNorm{\matSig_2^{1/2}} 
 = \sqrt{22 \epsilon \tracenorm{\matSig_2}}
\]
each hold, individually, with probability at least $1 - \delta.$ 
These estimates used in 
Theorems~\ref{thm:spectral-deterministic-error},
and~\ref{thm:trace-deterministic-error}
yield the stated bounds for the spectral and trace norm errors. 

The Frobenius norm bound follows from the same estimates and a
simplification of the bound stated in
 Theorem~\ref{thm:frobenius-deterministic-error}:
\begin{align*}
\FNorm{\matA - \matC \matW^\pinv \matC^\transp} & \leq \FNorm{\matSig_2} 
  +  \gamma^{q-1} \TNorm{\matSig_2^{1/2} \matOmega_2 \matOmega_1^\pinv}
                  \left( 
                       \sqrt{ 2 \tr{\matSig_2} } 
                       + \gamma^{q-1} 
                       \FNorm{\matSig_2^{1/2} \matOmega_2 \matOmega_1^\pinv} 
                   \right) \\
  & \leq   \FNorm{\matSig_2} 
  +  \gamma^{q-1} \FNorm{\matSig_2^{1/2} \matOmega_2 \matOmega_1^\pinv}
     \sqrt{ 2 \tr{\matSig_2} } 
  + \gamma^{2(q-1)} \FNormS{\matSig_2^{1/2} \matOmega_2 \matOmega_1^\pinv} \\
  & \leq \FNorm{\matSig_2}
  + \left( \gamma^{q-1} \sqrt{44 \epsilon} 
           + 22 \gamma^{2q-2} \epsilon \right) \tracenorm{\matSig_2}. 
\end{align*}
We note that a direct application of Theorem~\ref{thm:frobenius-deterministic-error}
gives a potentially tighter, but more unwieldy, bound.
\end{proof}

\noindent
\textbf{Remark.}
Suppressing the dependence on $\delta$ and $\epsilon,$ the spectral norm 
bound ensures that when $p=1,$ $k = \Omega(\ln n)$ and $\ell = \Omega(k \ln k),$ then
\[
\TNorm{\matA - \matC \matW^\pinv \matC^\transp} = 
\mbox{O}\left( \frac{\ln n }{\ln k} \TNorm{\matA - \matA_k} + 
\frac{1}{\ln k} \tracenorm{\matA - \matA_k}\right). 
\]
This should be compared to the guarantee established in Lemma~\ref{lem:proj-gaussian} 
below for Gaussian-based SPSD sketches constructed using the same number of measurements:
\[
\TNorm{\matA - \matC \matW^\pinv \matC^\transp} = 
\mbox{O}\left( \TNorm{\matA - \matA_k} + 
\frac{1}{k\ln k} \tracenorm{\matA - \matA_k}\right). 
\]
Lemma~\ref{lem:proj-fourier} guarantees that errors on this order can be achieved if one
increases the number of samples by a logarithm factor in the dimension: specifically, such
a bound is achieved when $k = \Omega(\ln n)$ and $\ell = \Omega(k \ln k \ln n).$
The difference between the number of samples necessary for Fourier-based sketches and
Gaussian-based sketches is reflective of the differing natures 
of the random projections: the geometry of any $k$-dimensional 
subspace is preserved under projection onto the span of $\ell = \mbox{O}(k)$ Gaussian
random vectors~\cite{HMT09_SIREV}, but the sharpest analysis available suggests that to preserve
the geometry of such a subspace under projection onto the span of $\ell$
SRFT vectors, $\ell$ must satisfy $\ell = \Omega(\max\{k, \ln n\} \ln k)$~\cite{tropp2011improved}. 
% When $k = \Omega(\ln n)$ and the number of samples is increased by a factor of $k$ to 
% $\ell = \Omega(k\ln k \ln n),$ then the spectral norm guarantee given in Lemma~\ref{lem:proj-fourier}
% becomes comparable with that of the Gaussian-based SPSD sketches:
% \[
% \TNorm{\matA - \matC \matW^\pinv \matC^\transp} = 
% \mbox{O}\left( \frac{1}{\ln k} \TNorm{\matA - \matA_k} + 
% \frac{1}{k\ln k} \tracenorm{\matA - \matA_k}\right). 
% \]
We note, however, that in
practice the Fourier-based and Gaussian-based SPSD sketches have similar 
reconstruction errors.

\noindent
\textbf{Remark.}
The structure of the Frobenius and trace norm bounds for the Fourier-based projection are 
identical to the structure of the corresponding bounds from 
Lemma~\ref{lem:sample-lev} for leverage-based sampling (and the bounds 
could be made identical with appropriate choice of parameters). 
This is not surprising since (informally) Fourier-based (and other)
random projections rotate to a random basis where the leverage scores are
approximately uniform and thus where uniform sampling is 
appropriate~\cite{Mah-mat-rev_BOOK}. The disparity of the spectral norm bounds
suggests that leverage-based SPSD sketches should be expected to be more
accurate in the spectral norm than Fourier-based sketches; the empirical results of Section~\ref{sxn:emp-reconstruction}
support this interpretation.
The running times of the Fourier-based and the leverage-based 
algorithms are the same, to leading order, if the algorithm 
of~\cite{DMMW12_JMLR} (which uses the same transform 
$\matS = \sqrt{\frac{n}{\ell}} \matD \matH \matR$) is used to approximate
the leverage scores.

\subsubsection{Random Projections with i.i.d. Gaussian Random Matrices}
\label{sxn:theory-rand-gaussian}

Here, the columns of $\matA$ are randomly mixed using Gaussian random 
variables before sampling. 
Thus, the entries of the sampling matrix $\matS \in \R^{n \times \ell}$ are 
i.i.d.~standard Gaussian random variables. 
% We consider two cases: first, when the number of samples is comparable 
% to and only slightly larger than the desired rank, \emph{i.e.}, $\ell=k+p$ 
% for some parameter $p$; and second, when the number of samples is 
% logarithmically larger than the desired rank $\ell = \Omega(k \ln k)$. 
% The former case has proven to be a useful 
% parameterization for certain numerical 
% implementations~\cite{Mah-mat-rev_BOOK,HMT09_SIREV}; we already see that the 
% guarantees here are as good as those for the leverage-based sampling 
% algorithm with $\Omega(k\ln k)$ samples, and
% better than those for Fourier-based sampling. The latter case is more expensive,
% but we see that the bounds are considerably sharper, suggesting that in
% situations where one desires high accuracy, one can use SPSD sketches
% based on $\Omega(k \ln k)$ Gaussian-based samples.
% For these two cases, we prove the following.

\begin{lemma}
Let $\matA$ be an $n \times n$ SPSD matrix, $q$ be a positive integer, 
$\matS \in \R^{n \times \ell}$ be a matrix of i.i.d standard Gaussians, and 
define
\[
 \gamma = \frac{\lambda_{k+1}(\matA)}{\lambda_k(\matA)}.
\]

If $\ell \geq 2 \epsilon^{-2} k \ln k$ where $\epsilon \in (0,1)$ and $k > 4,$ 
then, when $\matC = \matA^q \matS$ and 
$\matW = \matS^\transp \matA^{2q-1} \matS,$ the corresponding low-rank SPSD 
approximation satisfies
\begin{align*}
  \TNorm{\matA - \matC \matW^\pinv \matC^\transp} & \leq 
  \left( 1 + 
     \left(
       89 \frac{\epsilon^2}{\ln k} + 874 \frac{\epsilon^2}{k}
     \right)^{1/(2q-1)}
  \right) \TNorm{\matA - \matA_k} \\
   & \quad + \left( 219 \frac{\epsilon^2}{k \ln k} \right)^{1/(2q-1)} 
              \cdot \tracenorm{\matA - \matA_k}, \\
 \FNorm{\matA - \matC \matW^\pinv \matC^\transp} & \leq
 \FNorm{\matA - \matA_k} + \left[ 
   \gamma^{q-1} \epsilon 
   \left( \frac{42}{\sqrt{k}} + \frac{14}{\sqrt{\ln k}} \right) \right. \\
  & \quad \quad \quad \quad \quad \quad + \left. \gamma^{2q-2} \epsilon^2
   \left(\frac{45}{\ln k} + \frac{140}{\sqrt{k \ln k}}
          + \frac{219}{k \sqrt{\ln k}} 
     \right) \right] 
     \sqrt{\TNorm{\matA - \matA_k} \tracenorm{\matA - \matA_k}} \\
  & \quad + \left( 21 \gamma^{q-1} \frac{\epsilon}{\sqrt{k \ln k}} 
                   + 70 \gamma^{2q-2} \frac{\epsilon^2}{\sqrt{k}\ln k} \right)
            \tracenorm{\matA - \matA_k} \\
  & \quad + \gamma^{2q-2}\epsilon^2 
            \left(\frac{140}{\sqrt{k\ln k}} + \frac{437}{k} \right)
            \TNorm{\matA - \matA_k}, \text{ and}\\
 \tracenorm{\matA - \matC \matW^\pinv \matC^\transp} & \leq 
  \left(1 + 45 \frac{\gamma^{2q-2}\epsilon^2}{\ln k}\right) 
    \tracenorm{\matA - \matA_k} 
   + 437 \frac{\gamma^{2q-2} \epsilon^2 }{k} \TNorm{\matA - \matA_k}
\end{align*}
simultaneously with probability at least $1 - 2 k^{-1} - 4 k^{-k/\epsilon^2}.$
\label{lem:proj-gaussian}
\end{lemma}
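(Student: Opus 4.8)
The plan is to follow the template of the proofs of Lemmas~\ref{lem:sample-lev} and~\ref{lem:proj-fourier}: invoke the deterministic structural results of Theorems~\ref{thm:spectral-deterministic-error}, \ref{thm:frobenius-deterministic-error}, and~\ref{thm:trace-deterministic-error}, and supply Gaussian-specific high-probability control of the weighted sketching interaction matrices $\matSig_2^{q-1/2}\matOmega_2\matOmega_1^\pinv$ and $\matSig_2^{1/2}\matOmega_2\matOmega_1^\pinv$ (in both the spectral and the Frobenius norms, since Theorem~\ref{thm:frobenius-deterministic-error} requires both). The decisive observation is that, because $\matU = \begin{pmatrix}\matU_1 & \matU_2\end{pmatrix}$ is orthogonal and $\matS$ has i.i.d.\ $\mathcal{N}(0,1)$ entries, the blocks $\matOmega_1 = \matU_1^\transp\matS \in \R^{k\times\ell}$ and $\matOmega_2 = \matU_2^\transp\matS \in \R^{(n-k)\times\ell}$ are themselves \emph{independent} matrices of i.i.d.\ standard Gaussians; in particular $\matOmega_1$ has full row rank almost surely, so the hypotheses of the three deterministic theorems hold.

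First I would condition on $\matOmega_1$ and control its extreme singular values. A Davidson--Szarek-type bound (as recorded in~\cite{HMT09_SIREV}) gives $\sigma_{\min}(\matOmega_1) \ge \sqrt{\ell} - \sqrt{k} - t$ with probability at least $1 - \e^{-t^2/2}$; taking $t$ of order $\sqrt{\ln k}$ yields, with failure probability $\const{O}(k^{-1})$, the estimates $\TNorm{\matOmega_1^\pinv} = \const{O}((\sqrt{\ell}-\sqrt{k})^{-1})$ and $\FNormS{\matOmega_1^\pinv} \le k\,\TNormS{\matOmega_1^\pinv} = \const{O}(k/(\sqrt{\ell}-\sqrt{k})^2)$. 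With the prescribed $\ell = 2\epsilon^{-2}k\ln k$ these become $\TNormS{\matOmega_1^\pinv} = \const{O}(1/\ell) = \const{O}(\epsilon^2/(k\ln k))$ and $\FNormS{\matOmega_1^\pinv} = \const{O}(k/\ell) = \const{O}(\epsilon^2/\ln k)$; the presence of the extra logarithm in $\ell$ is also what produces the $\epsilon^2/k$ scalings (via quantities of size $(\ln k)/\ell$) that appear in the statement.

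Next, conditionally on $\matOmega_1$, each quantity I need is of the form $\mat{B}_1\,\matOmega_2\,\mat{B}_2$ with $\mat{B}_1 = \matSig_2^{j}$ a fixed diagonal matrix for $j \in \{\tfrac12,\,q-\tfrac12\}$, $\mat{B}_2 = \matOmega_1^\pinv$ fixed, and $\matOmega_2$ an independent Gaussian matrix. For such structured Gaussian matrices I would use the two standard facts from the toolkit of~\cite{HMT09_SIREV}: $\E\FNormS{\mat{B}_1\matOmega_2\mat{B}_2} = \FNormS{\mat{B}_1}\FNormS{\mat{B}_2}$, and $\E\TNorm{\mat{B}_1\matOmega_2\mat{B}_2} \le \TNorm{\mat{B}_1}\FNorm{\mat{B}_2} + \FNorm{\mat{B}_1}\TNorm{\mat{B}_2}$; then promote these expectation bounds to high-probability bounds by Gaussian Lipschitz concentration, using that $\matOmega_2 \mapsto \FNorm{\mat{B}_1\matOmega_2\mat{B}_2}$ and $\matOmega_2 \mapsto \TNorm{\mat{B}_1\matOmega_2\mat{B}_2}$ are Lipschitz with constant at most $\TNorm{\mat{B}_1}\TNorm{\mat{B}_2}$. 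Because the natural scale of each such norm is commensurate with $\FNorm{\mat{B}_1}\FNorm{\mat{B}_2}$ while the Lipschitz constant carries the much smaller factor $\TNorm{\matOmega_1^\pinv} = \const{O}(\ell^{-1/2})$, the deviation event at the relevant scale has probability controlled by a quantity of the form $k^{-ck/\epsilon^2}$; a union bound over the $\const{O}(1)$ events then gives the claimed overall success probability $1 - 2k^{-1} - 4k^{-k/\epsilon^2}$. Concretely, this step produces estimates of the shape $\FNorm{\matSig_2^{1/2}\matOmega_2\matOmega_1^\pinv} = \const{O}(\epsilon(\ln k)^{-1/2})\sqrt{\tracenorm{\matSig_2}}$, spectral estimates $\TNormS{\matSig_2^{q-1/2}\matOmega_2\matOmega_1^\pinv} \le C_1\,\TNorm{\matSig_2}^{2q-1} + C_2\,\tracenorm{\matSig_2^{2q-1}}$ with $C_1 = \const{O}(\epsilon^2/\ln k + \epsilon^2/k)$ and $C_2 = \const{O}(\epsilon^2/(k\ln k))$, and an analogous bound for $\TNorm{\matSig_2^{1/2}\matOmega_2\matOmega_1^\pinv}$.

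Finally I would assemble the pieces. Substituting these Gaussian estimates into Theorems~\ref{thm:spectral-deterministic-error}, \ref{thm:frobenius-deterministic-error}, and~\ref{thm:trace-deterministic-error} respectively, using $\TNormS{\matSig_2^{q-1/2}} = \TNorm{\matSig_2}^{2q-1}$, $\tracenorm{\matSig_2^{2q-1}} \le \TNorm{\matSig_2}^{2q-2}\tracenorm{\matSig_2}$, the identities $\TNorm{\matSig_2} = \TNorm{\matA - \matA_k}$, $\tr{\matSig_2} = \tracenorm{\matSig_2} = \tracenorm{\matA - \matA_k}$, and $\FNorm{\matSig_2} = \FNorm{\matA - \matA_k}$, and the subadditivity of $x\mapsto x^{1/(2q-1)}$ (exactly as in the proof of Theorem~\ref{thm:spectral-deterministic-error}), then plugging in $\ell = 2\epsilon^{-2}k\ln k$ and collecting constants, yields the three stated inequalities; the power-iteration factors $\gamma^{q-1}$ and $\gamma^{2(q-1)}$ are carried along automatically since they already appear in the deterministic bounds, and, in contrast to Lemma~\ref{lem:proj-fourier}, there is no $(1-\sqrt{\epsilon})^{-1}$ loss because the Gaussian singular-value bound controls $\TNorm{\matOmega_1^\pinv}$ with an $\const{O}(1)$ constant rather than through a subspace-embedding argument. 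I expect the only real difficulty to be the bookkeeping: tracking the numerous constants and the joint $q$-, $\gamma$-, $k$-, and $\ell$-dependencies through the three deterministic bounds so that, after taking $(2q-1)$-th roots and applying subadditivity, every term lands at the advertised scale ($\epsilon^2/\ln k$, $\epsilon^2/k$, $\epsilon^2/(k\ln k)$, $\epsilon/\sqrt{\ln k}$, $\epsilon/\sqrt{k\ln k}$, $\ldots$); the probabilistic content is routine once the independence of $\matOmega_1$ and $\matOmega_2$ is exploited.
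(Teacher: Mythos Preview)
Your proposal is correct and follows essentially the same approach as the paper. The only cosmetic difference is that the paper invokes the pre-packaged deviation bounds for $\TNorm{\matD\matOmega_2\matOmega_1^\pinv}$ and $\FNorm{\matD\matOmega_2\matOmega_1^\pinv}$ from \cite[Section~10]{HMT09_SIREV} directly (with the specific choices $t=\e$, $u=\sqrt{2\ln k}$, and $p = \ell - k \ge \epsilon^{-2}k\ln k$), whereas you have unpacked those bounds into their constituent steps (independence of $\matOmega_1,\matOmega_2$; Davidson--Szarek for $\sigma_{\min}(\matOmega_1)$; expectation identities; Lipschitz concentration for $\matOmega_2$)---but these are precisely the ingredients of the cited HMT bounds, so the arguments coincide.
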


\begin{proof}
\noindent
As before, this result is established by bounding the quantities involved in 
Theorems~\ref{thm:spectral-deterministic-error},
\ref{thm:frobenius-deterministic-error},
 and~\ref{thm:trace-deterministic-error}. The following deviation bounds, established in \cite[Section 10]{HMT09_SIREV}, are useful in that regard: if $\matD$ is a diagonal matrix, $\ell = k + p$ with $p > 4$ and $u,t \geq 1,$ then
 \begin{align}
  \Prob{\TNorm{\matD \matOmega_2 \matOmega_1^\pinv} > \TNorm{\matD} \left( \sqrt{\frac{3k}{p+1}} \cdot t 
        + \frac{\e \sqrt{\ell}}{p+1} \cdot tu \right) + \FNorm{\matD} \frac{\e \sqrt{\ell} }{p+1} \cdot t } & 
        \leq 2t^{-p} + \e^{-u^2/2}, \text{ and } \notag \\
 \Prob{\FNorm{\matD \matOmega_2 \matOmega_1^\pinv} > \FNorm{\matD} \sqrt{\frac{3k}{p+1}} \cdot t 
       + \TNorm{\matD} \frac{\e \sqrt{\ell}}{p+1}\cdot tu} & 
       \leq 2t^{-p} + \e^{-u^2/2}.
\label{eqn:gaussian-sampling-deviation-bounds}
\end{align}

Write $\ell = k + p.$ Since $\ell \geq  2 \epsilon^{-2} k \ln k,$ 
we have that $ p \geq \epsilon^{-2} k \ln k.$ Accordingly, the 
following estimates~hold:
\begin{align*}
 \sqrt{\frac{3k}{p+1}} & \leq \sqrt{\frac{3 k}{p}} \leq \sqrt{\frac{3}{\ln k}} \epsilon \\
 \frac{\sqrt{\ell}}{p+1} & \leq \frac{ \sqrt{ k + p}}{p} \leq \sqrt{\frac{\epsilon^4}{k \ln^2 k} + \frac{\epsilon^2}{k \ln k}} < \sqrt{\frac{2}{k \ln k}} \epsilon.
\end{align*}
Use these estimates and take $t = \e$ and $u = \sqrt{2 \ln k}$ in \eqref{eqn:gaussian-sampling-deviation-bounds} to obtain that
\begin{align*}
 \TNormS{\matSig_2^{q-1/2} \matOmega_2 \matOmega_1^\dagger} & \leq  \left[\epsilon \left( \e \sqrt{\frac{3}{\ln k}} +
 2 \e^2 \sqrt{\frac{1}{k}} \right) \cdot \TNorm{\matSig_2^{q-1/2}} + 
  \epsilon \e^2 \sqrt{\frac{2}{k \ln k}} \cdot \FNorm{\matSig_2^{q-1/2}}\right]^2 \\
  & \leq 2 \epsilon^2 \left( \e \sqrt{\frac{3}{\ln k}} +
 2 \e^2 \sqrt{\frac{1}{k}} \right)^2 \cdot \TNorm{\matSig_2}^{2q-1} + \frac{4 \epsilon^2 \e^4}{k \ln k} \cdot \FNorm{\matSig_2^{q-1/2}}^2 \\
  & \leq \left(\frac{12 \e^2 }{\ln k} + \frac{16 \e^4}{k} \right) \epsilon^2 \cdot \TNorm{\matSig_2}^{2q-1} 
  + \frac{4 \epsilon^2 \e^4}{k \ln k} \cdot \tracenorm{\matSig_2^{2q-1}}
\end{align*}
with probability at least $1 - k^{-1}  - 2 k^{-k/\epsilon^2}$ and
\begin{align*}
  \FNorm{\matSig_2^{1/2} \matOmega_2 \matOmega_1^\dagger} & \leq \sqrt{\frac{3}{\ln k}} \epsilon \e \cdot \FNorm{\matSig_2^{1/2}} 
  +  \frac{2 \e^2}{\sqrt{k}} \epsilon \cdot \TNorm{\matSig_2^{1/2}} \\
  & = \epsilon \e \sqrt{\frac{3}{\ln k} \tracenorm{\matSig_2}} + 
   \frac{2\e^2}{\sqrt{k}} \epsilon \cdot \TNorm{\matSig_2}^{1/2}
\end{align*}
with the same probability. Likewise,
\begin{align*}
 \FNormS{\matSig_2^{1/2} \matOmega_2 \matOmega_1^\dagger} &  
 \leq \left(\epsilon \e \sqrt{\frac{3}{\ln k} \tracenorm{\matSig_2}} + 
   \frac{2\e^2}{\sqrt{k}} \epsilon \cdot \TNorm{\matSig_2}^{1/2}\right)^2 \\
  & \leq \frac{6}{\ln k} \epsilon^2\e^2 \cdot \tracenorm{\matSig_2} + \frac{8 \e^4}{k}\epsilon^2 \cdot \TNorm{\matSig_2}
\end{align*}
with the same probability. 

These estimates used in 
Theorems~\ref{thm:spectral-deterministic-error}
and~\ref{thm:trace-deterministic-error}
yield the stated spectral and trace norm bounds. To obtain the corresponding 
Frobenius norm bound, define the quantities
\begin{align*}
 G_1 & = \left(\frac{12\e^2}{\ln k} + \frac{16 \e^4}{k} \right)\epsilon^2 &
 G_3 & = 3 \e^2 \frac{\epsilon^2}{\ln k} \\
 G_2 & = 4 \e^4 \frac{\epsilon^2}{k\ln k} &
 G_4 & = 4 \e^4 \frac{\epsilon^2}{k}
\end{align*}
By Theorem~\ref{thm:frobenius-deterministic-error} and our estimates for
$\TNorm{\matSig_2^{1/2} \matOmega_2 \matOmega_1^\pinv} $ and 
$\FNorm{\matSig_2^{1/2} \matOmega_2 \matOmega_1^\pinv}$, 
\begin{equation}
\label{eqn:gaussian-frob-bound-intermediate2}
\begin{aligned}
 \FNorm{\matA - \matC \matW^\pinv \matC^\transp} & \leq 
 \FNorm{\matSig_2} 
   + \gamma^{q-1} \TNorm{\matSig_2^{1/2} \matOmega_2 \matOmega_1^\pinv} 
     \cdot \left( \sqrt{2 \tr{\matSig_2}} 
       + \gamma^{q-1} \FNorm{\matSig_2^{1/2} \matOmega_2 \matOmega_1^\pinv} 
     \right) \\
  & \leq \FNorm{\matSig_2} 
   + \gamma^{q-1} (G_1 \TNorm{\matSig_2} + G_2 \tracenorm{\matSig_2})^{1/2} 
     \times \\
  & \quad \quad \left( \sqrt{2 \tr{\matSig_2}} 
       + \gamma^{q-1} \sqrt{ G_3 \tracenorm{\matSig_2}} +  \gamma^{q-1}\sqrt{G_4 \TNorm{\matSig_2}} 
     \right) \\
  & \leq \FNorm{\matSig_2} + \left( \gamma^{q-1} \sqrt{2 G_1} 
         + \gamma^{2q-2} (\sqrt{G_1G_3} + \sqrt{G_2 G_4}) \right)
    \cdot \sqrt{\TNorm{\matSig_2} \tracenorm{\matSig_2}} \\
& \quad + \left( \gamma^{q-1} \sqrt{2 G_2} 
          + \gamma^{2q-2} \sqrt{G_2 G_3} \right)
    \cdot \tracenorm{\matSig_2} \\
& \quad + \gamma^{2q-2} \sqrt{G_1 G_4} \TNorm{\matSig_2}.
\end{aligned}
\end{equation}
The following estimates hold for the coefficients in this inequality:
\begin{align*}
\sqrt{2 G_1} & \leq 
  \left( \frac{42}{\sqrt{k}} + \frac{14}{\sqrt{\ln k}} \right) \epsilon &
\sqrt{G_1 G_3} & \leq 
 \left(\frac{45}{\ln k} + \frac{140}{\sqrt{k \ln k}} \right) \epsilon^2 \\
\sqrt{G_2 G_4} & \leq \frac{219}{k \sqrt{\ln k}} \epsilon^2 &
\sqrt{2 G_2} & \leq 21 \frac{\epsilon}{\sqrt{k \ln k}} \\
\sqrt{G_2 G_3} & \leq 70 \frac{\epsilon^2}{\sqrt{k}\ln k} &
\sqrt{G_1 G_4} & \leq 
 \left(\frac{140}{\sqrt{k\ln k}} + \frac{437}{k} \right) \epsilon^2.
\end{align*}
The Frobenius norm bound follows from using these estimates in 
Equation~\eqref{eqn:gaussian-frob-bound-intermediate2} and grouping terms
appropriately:
\begin{align*}
 \FNorm{\matA - \matC \matW^\pinv \matC^\transp} & \leq
 \FNorm{\matSig_2} + \left[ 
   \gamma^{q-1} \epsilon 
   \left( \frac{42}{\sqrt{k}} + \frac{14}{\sqrt{\ln k}} \right) \right. \\
  & \quad \quad \quad \quad \quad \quad + \left. \gamma^{2q-2} \epsilon^2
   \left(\frac{45}{\ln k} + \frac{140}{\sqrt{k \ln k}}
          + \frac{219}{k \sqrt{\ln k}} 
     \right) \right] 
     \sqrt{\TNorm{\matSig_2} \tracenorm{\matSig_2}} \\
  & \quad + \left( 21 \gamma^{q-1} \frac{\epsilon}{\sqrt{k \ln k}} 
                   + 70 \gamma^{2q-2} \frac{\epsilon^2}{\sqrt{k}\ln k} \right)
            \cdot \tracenorm{\matSig_2} \\
  & \quad + \gamma^{2q-2}\epsilon^2 
            \left(\frac{140}{\sqrt{k\ln k}} + \frac{437}{k} \right)
            \TNorm{\matSig_2}.
\end{align*}
\end{proof}

\noindent
\textbf{Remark.}
The way we have parameterized these bounds for Gaussian-based projections 
makes explicit the dependence on various parameters, but hides the 
structural simplicity of these bounds.
In particular,
note that the Frobenius norm bound is upper bounded by a term that depends 
on the Frobenius norm of the error and a term that depends on the trace 
norm of the error; and that, similarly, the trace norm bound is upper 
bounded by a multiplicative factor that can be set to $1+\epsilon$ with an 
appropriate choice of parameters.

\subsubsection{Sampling Columns Uniformly at Random}
\label{sxn:theory-rand-unif}

Here, the columns of $\matA$ are sampled uniformly at random (with or 
without replacement). 
Such uniformly-at-random column sampling only makes sense when the leverage 
scores of the top $k$-dimensional invariant subspace of the matrix are 
sufficiently uniform that no column is significantly more informative than 
the others. 
For this case, we can prove the following.

\begin{lemma}
Let $\matA$ be an $n \times n$ SPSD matrix, $q$ be a positive integer,
and $\matS$ be a sampling matrix  of size $n \times \ell$ corresponding to 
sampling the columns of $\matA$ uniformly at random 
(with or without replacement).
Let $\mu$ denote the coherence of the top $k$-dimensional eigenspace of 
$\matA$ and fix a failure probability $\delta \in (0,1)$ and accuracy factor 
$\epsilon \in (0,1).$  Define
\[
 \gamma = \frac{\lambda_{k+1}(\matA)}{\lambda_k(\matA)}.
\]
If  $\ell \geq 2 \mu \epsilon^{-2} k \ln(k/\delta)$,
then, when $\matC = \matA^q \matS$ and 
$\matW = \matS^\transp \matA^{2q-1} \matS,$ the corresponding low-rank SPSD 
approximation satisfies
 \begin{align*}
  \TNorm{\matA - \matC \matW^\pinv \matC^\transp} & \leq  
    \left( 1 
     + \left(\frac{n}{(1-\epsilon)\ell} \right)^{1/(2q-1)}
    \right) \TNorm{\matA - \matA_k} , \\
 \FNorm{\matA - \matC \matW^\pinv \matC^\transp} & \leq 
  \FNorm{\matA - \matA_k} 
  + \left( \gamma^{q-1} \frac{\sqrt{2} }{\delta\sqrt{1-\epsilon}} 
         + \frac{\gamma^{2q-2}}{(1-\epsilon) \delta^2 }
    \right) \tracenorm{\matA - \matA_k}
  ,\text{ and} \\
 \tracenorm{\matA - \matC \matW^\pinv \matC^\transp} & \leq 
  \left(1 + \frac{\gamma^{2q-2}}{\delta^2 (1-\epsilon)} \right) 
  \tracenorm{\matA - \matA_k},
\end{align*}
simultaneously with probability at least $1 - 3\delta.$
\label{lem:sample-unif}
\end{lemma}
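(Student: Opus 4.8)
The plan is to follow the same two-step recipe used for Lemmas~\ref{lem:sample-lev}--\ref{lem:proj-gaussian}. First, I would instantiate the deterministic structural bounds of Theorems~\ref{thm:spectral-deterministic-error}, \ref{thm:frobenius-deterministic-error}, and~\ref{thm:trace-deterministic-error}, which reduce the problem to bounding the interaction quantities $\TNorm{\matSig_2^{q-1/2}\matOmega_2\matOmega_1^\pinv}$, $\TNorm{\matSig_2^{1/2}\matOmega_2\matOmega_1^\pinv}$, and $\FNorm{\matSig_2^{1/2}\matOmega_2\matOmega_1^\pinv}$ (and, along the way, to verifying that $\matOmega_1 = \matU_1^\transp\matS$ has full row rank). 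Second, I would supply high-probability bounds on these quantities tailored to uniform column sampling. The bookkeeping is simplest if one peels off $\matOmega_1^\pinv$ by submultiplicativity, e.g. $\FNorm{\matSig_2^{1/2}\matOmega_2\matOmega_1^\pinv} \leq \FNorm{\matSig_2^{1/2}\matOmega_2}\,\TNorm{\matOmega_1^\pinv}$ and $\TNorm{\matSig_2^{q-1/2}\matOmega_2\matOmega_1^\pinv} \leq \TNorm{\matSig_2^{q-1/2}}\,\TNorm{\matOmega_2}\,\TNorm{\matOmega_1^\pinv}$, so that the spectral and Frobenius sizes of $\matOmega_1^\pinv$, $\matOmega_2$, and $\matSig_2^{1/2}\matOmega_2$ can be handled in isolation.

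The three estimates I would establish are the following, using the usual normalization $\matS = \matR\matD$ with $\matD$ chosen so that $\E[\matS\matS^\transp] = \matI$. (i) A lower bound on $\sigma_{\min}(\matOmega_1)$: since $\matOmega_1\matOmega_1^\transp$ is (a rescaled) average of $\ell$ rank-one terms $(\matU_1)_j^\transp(\matU_1)_j$, each of operator norm at most $\mu k/n$ by the definition of the coherence $\mu$, a matrix Chernoff bound (as in~\cite{Gittens12_TR}) shows that $\ell \geq 2\mu\epsilon^{-2}k\ln(k/\delta)$ forces $\lambda_{\min}(\matOmega_1\matOmega_1^\transp) \geq 1-\epsilon$, equivalently $\TNormS{\matOmega_1^\pinv} \leq (1-\epsilon)^{-1}$, with probability at least $1-\delta$; this is the event making the hypotheses of the three deterministic theorems valid, and it is the origin of the stated sample complexity. (ii) A deterministic bound $\TNorm{\matOmega_2} \leq \sqrt{n/\ell}$: when columns are sampled without replacement, $\matOmega_2\matOmega_2^\transp$ is, up to the $n/\ell$ scaling, a principal submatrix of the orthoprojector $\matU_2\matU_2^\transp$, hence has norm at most $n/\ell$; this is where the factor $n/\ell$ in the spectral-norm bound comes from. (iii) A Markov bound on $\FNorm{\matSig_2^{1/2}\matOmega_2}$: because $\E\FNormS{\matSig_2^{1/2}\matOmega_2} = \tr{\matSig_2^{1/2}(\E\matOmega_2\matOmega_2^\transp)\matSig_2^{1/2}} = \tr{\matSig_2} = \tracenorm{\matA - \matA_k}$, Markov's inequality gives $\FNormS{\matSig_2^{1/2}\matOmega_2} \leq \delta^{-1}\tracenorm{\matA - \matA_k}$ with probability at least $1-\delta$; this crude use of Markov (rather than a concentration inequality) is precisely why this lemma carries $\delta^{-1}$ and $\delta^{-2}$ factors, unlike Lemmas~\ref{lem:sample-lev}--\ref{lem:proj-gaussian}.

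Assembling these: (i) and (ii) feed into Theorem~\ref{thm:spectral-deterministic-error} to give $\TNorm{\matSig_2^{q-1/2}\matOmega_2\matOmega_1^\pinv}^2 \leq \frac{n}{(1-\epsilon)\ell}\TNorm{\matSig_2}^{2q-1}$, hence the claimed $\bigl(1 + (\frac{n}{(1-\epsilon)\ell})^{1/(2q-1)}\bigr)\TNorm{\matA - \matA_k}$. For the Frobenius and trace bounds I would bound the spectral-norm factor appearing in Theorems~\ref{thm:frobenius-deterministic-error} and~\ref{thm:trace-deterministic-error} by the corresponding Frobenius-norm factor and then combine (i) with (iii): both $\TNorm{\matSig_2^{1/2}\matOmega_2\matOmega_1^\pinv}$ and $\FNorm{\matSig_2^{1/2}\matOmega_2\matOmega_1^\pinv}$ are then at most $\frac{1}{\delta^{1/2}\sqrt{1-\epsilon}}\sqrt{\tracenorm{\matA - \matA_k}}$; substituting these and simplifying with $\tr{\matSig_2} = \tracenorm{\matSig_2}$ and $\delta^{1/2} \geq \delta$ yields the stated Frobenius and trace norm bounds. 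A union bound over the (at most three) failure events --- the Chernoff step, the Markov step, and, see below, one more for with-replacement sampling --- gives probability at least $1 - 3\delta$. Specializing to $q = 1$ (or $\gamma = 1$) collapses $\gamma^{q-1}$ and $\gamma^{2(q-1)}$ to $1$ and recovers the entries of Table~\ref{table:bounds-comparison}, which are the upper bounds matching the Wang--Zhang lower bounds~\cite{WZ13}.

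The main obstacle I anticipate is estimate (ii) in the with-replacement case: there $\matR$ no longer has orthonormal columns, $\matOmega_2\matOmega_2^\transp$ is not a principal submatrix of $\matU_2\matU_2^\transp$, and a column drawn with multiplicity $m$ inflates $\TNormS{\matOmega_2}$ by a factor $m$, which would spoil the clean $n/\ell$ scaling. I would handle this either by adjoining a simple balls-into-bins tail bound on the maximum multiplicity (which is $\const{O}(1)$ in the only interesting regime $\ell = \const{O}(n)$, since for larger $\ell$ the a priori bound $\xiNorm{\matA - \matC\matW^\pinv\matC^\transp} \leq \xiNorm{\matA}$ already suffices), or, more cleanly, by invoking the with-replacement matrix-concentration machinery of~\cite{Gittens12_TR} directly to bound $\TNorm{\matSig_2^{q-1/2}\matOmega_2}$; this extra event is what accounts for the third $\delta$ in the failure probability. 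The remaining steps --- the matrix Chernoff inequality, Markov's inequality, and the arithmetic of feeding the resulting estimates through the three deterministic theorems --- are routine.
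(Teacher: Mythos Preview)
Your plan is essentially the paper's: the same three deterministic theorems, a matrix Chernoff bound for $\TNorm{\matOmega_1^\pinv}$, a trivial bound on $\TNorm{\matOmega_2}$, and Markov's inequality for the Frobenius term. Two small technical differences are worth noting. First, the paper works with the \emph{unrescaled} column-selection matrix, so that $\TNorm{\matOmega_2}\le\TNorm{\matU_2}\TNorm{\matS}\le 1$ and the factor $n/\ell$ instead appears in $\TNormS{\matOmega_1^\pinv}\le n/((1-\epsilon)\ell)$; the product $\TNormS{\matOmega_2}\TNormS{\matOmega_1^\pinv}$ is of course the same as yours. Second, the paper applies Jensen and then Markov to $\FNorm{\matSig_2^{1/2}\matOmega_2}$ itself rather than to its square, obtaining $\FNorm{\matSig_2^{1/2}\matOmega_2\matOmega_1^\pinv}\le\delta^{-1}(1-\epsilon)^{-1/2}\sqrt{\tracenorm{\matSig_2}}$ directly---this is the origin of the $\delta^{-1}$ and $\delta^{-2}$ in the statement, whereas your Markov-on-the-square gives the sharper $\delta^{-1/2}$ and $\delta^{-1}$, which you then unnecessarily weaken. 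Finally, your worry about estimate (ii) under sampling with replacement is legitimate: the paper simply writes $\TNorm{\matS}\le 1$ and does not treat repeated columns separately, so on that point your proposal is more careful than the paper's own argument.
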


% Assuming this is the case, we have the following result from~\cite{Gittens12_TR} for sampling with and without replacement. Let $\tau$ denote the maximum squared row norm of $\mat{U}_1$ and fix a failure probability $\delta \in (0,1).$ For any $\epsilon \in (0,1),$ if
% \[
%  \ell \geq 
% \]
% then
% \[
% \TNorm{\matA - \matC \matW^\pinv \matC^\transp} \leq \TNorm{\matSig_2} \left(1 + \frac{n}{\epsilon\ell} \right) 
% \]
% with probability exceeding $1-\delta.$ This follows from the fact that
% with the same probability.

%(If we wanted to find the leverages scores of an invariant subspace to check their smallness, we could use, in the notation of the Fast Leverage Scores paper, the leverage scores of a matrix in $\mathcal{S}_{\TNorm{\cdot},\epsilon}$--- but why would knowing these leverage scores are small imply anything about column sampling succeeding from the original matrix? Could maybe use a perturbative argument like in Demanet's paper to compare $\TNorm{\matA_\epsilon - \matC_\epsilon \matW_\epsilon^\pinv \matC^\transp_\epsilon}$ to the column sampling error)

% If, additionally, $\matA$ has spectral decay and is strongly incoherent (all the entries of the singular vectors are bounded in modulus by a small constant $\gamma$), then for the same number of measurements we have the bound
%  \[
% \TNorm{\matA - \matC \matW^\pinv \matC^\transp} \leq \TNorm{\matSig_2} + \frac{1}{\epsilon} \min\left\{\frac{1}{\delta}, \left(2 + 4 \gamma \ln^2\bigg(\frac{1}{\delta}\bigg)\right)\cdot\left(\frac{n-\ell}{n-1}\right)\right\} \cdot \tr{\matSig_2}
% \]
% with probability exceeding $1-3\delta.$

\begin{proof}
 In \cite{Gittens12_TR}, it is shown that 
\[
  \TNormS{\matOmega_1^\pinv} \leq \frac{n}{(1-\epsilon)\ell} 
 \]
with probability at least $1- \delta$ when $\ell$ satisfies the stated bound. 
Observe that $\TNorm{\matOmega_2} \leq 
\TNorm{\matU_2}\TNorm{\mat{S}} \leq 1,$ so that
\begin{equation*}
 \TNormS{\matSig_2^{q-1/2} \matOmega_2 \matOmega_1^\dagger} \leq 
 \TNormS{\matSig_2^{q-1/2}} \TNormS{\matOmega_1^\dagger} \leq 
 \TNorm{\matSig_2}^{2q-1} \frac{n}{(1-\epsilon)\ell}
\end{equation*}
with probability at least $1 - \delta.$ Also,
\begin{equation}
 \FNorm{\matSig_2^{1/2} \matOmega_2 \matOmega_1^\dagger} \leq 
  \sqrt{\frac{n}{(1-\epsilon) \ell}} \FNorm{\matSig_2^{1/2} \matOmega_2}
 \label{eqn:uniform-term-bound}
\end{equation}
with at least the same probability. Observe that since $\mat{S}$ 
selects $\ell$ columns uniformly at random,
\[
 \E \FNormS{\matSig_2^{1/2} \matOmega_2} = 
 \E \FNormS{\matSig_2^{1/2} \matU_2^\transp \mat{S}} = 
 \sum_{i=1}^\ell \E \|\vec{x}_i\|^2,
\]
where the summands $\vec{x}_i$ are distributed uniformly at random over the 
columns of $\matSig_2^{1/2} \matU_2^\transp.$ Regardless of whether $\mat{S}$
selects the columns with replacement or without replacement, the summands all 
have the same expectation:
\[
 \E \|\vec{x}_i\|^2 = 
 \frac{1}{n} \sum_{j=1}^n \| (\matSig_2^{1/2} \matU_2^\transp)^j) \|^2 = 
 \frac{1}{n} \FNormS{\matSig_2^{1/2} \matU_2^\transp} = 
 \frac{1}{n} \FNormS{\matSig_2^{1/2}} =
 \frac{1}{n} \tracenorm{\matSig_2}.
\]
Consequently,
\[
 \E \FNormS{\matSig_2^{1/2} \matOmega_2} = 
 \frac{\ell}{n} \tracenorm{\matSig_2},
\]
so by Jensen's inequality
\[
 \E \FNorm{\matSig_2^{1/2} \matOmega_2} \leq 
 \left(\E \FNormS{\matSig_2^{1/2} \matOmega_2}\right)^{1/2} = 
 \sqrt{\frac{\ell}{n} \tracenorm{\matSig_2}}.
\]
Now applying Markov's inequality to \eqref{eqn:uniform-term-bound}, we see 
that
\[
 \FNorm{\matSig_2^{1/2} \matOmega_2 \matOmega_1^\dagger} \leq 
 \frac{1}{\delta} \sqrt{\frac{1}{(1-\epsilon)} \tracenorm{\matSig_2}}
\]
with probability at least $1 - 2\delta.$ Thus, we also know that 
\[
 \FNormS{\matSig_2^{1/2} \matOmega_2 \matOmega_1^\dagger} \leq 
 \frac{1}{(1-\epsilon) \delta^2} \tracenorm{\matSig_2}
\]
also with probability at least $1 - 2\delta.$
These estimates used in 
Theorems~\ref{thm:spectral-deterministic-error}
and~\ref{thm:trace-deterministic-error} yield the stated spectral and
trace norm bounds. 

To obtain the Frobenius norm bound, observe that 
Theorem~\ref{thm:frobenius-deterministic-error} implies
\begin{align*}
 \FNorm{\matA - \matC \matW^\pinv \matC^\transp} & \leq 
  \FNorm{\matSig_2} + 
  \gamma^{p-1} \TNorm{\matSig_2^{1/2} \matOmega_2 \matOmega_1^\pinv}
   \left( \sqrt{2 \tr{\matSig_2} } 
     + \gamma^{p-1} \FNorm{\matSig_2^{1/2} \matOmega_2 \matOmega_1^\pinv} \right) \\
  & \leq
  \FNorm{\matSig_2} + 
   + \gamma^{p-1} \FNorm{\matSig_2^{1/2} \matOmega_2 \matOmega_1^\pinv}
   \left( \sqrt{2 \tr{\matSig_2} } 
     + \gamma^{p-1} \FNorm{\matSig_2^{1/2} \matOmega_2 \matOmega_1^\pinv} \right) \\
  & \leq
  \FNorm{\matSig_2} 
   + \gamma^{2p-2} \FNormS{\matSig_2^{1/2} \matOmega_2 \matOmega_1^\pinv}
   + \gamma^{p-1} \FNorm{\matSig_2^{1/2} \matOmega_2 \matOmega_1^\pinv}
     \sqrt{2 \tr{\matSig_2} }.
\end{align*}
Now substitute our estimate for 
$\FNormS{\matSig_2^{1/2} \matOmega_2 \matOmega_1^\pinv}$ to obtain
the stated Frobenius norm bound.
\end{proof}

% Note that we have the estimate $T_1 \leq \TNorm{\matOmega_2 \matOmega_1^\pinv}^4 \FNormS{\matSig_2},$ so one can modify the structural result for the Frobenius norm to have the form
% \[
%    \FNorm{\matA - \matC \matW^\pinv \matC^\transp} \leq \FNorm{\matSig_2}\bigg(1 +  \sqrt{2} \TNorm{\matOmega_2 \matOmega_1^\pinv} + \TNormS{\matOmega_2 \matOmega_1^\pinv}\bigg).
% \]
% It follows that for column sampling with $\ell$ as stipulated,
% \[
%  \FNorm{\matA - \matC \matW^\pinv \matC^\transp} \leq \FNorm{\matSig_2} \left( 1 + \sqrt{\frac{2 n}{\epsilon \ell}} + \frac{n}{\epsilon \ell} \right)
% \]
% with probability at least $1 - \delta.$
% 
% Alternatively, we can use the original structural result, the fact that
% \[
%  \E \FNormS{\matSig_2 \matOmega_2} = \frac{\ell}{n} \FNormS{\matSig_2},
% \]
% and Markov's inequality to conclude that
% \[
%   \FNorm{\matA - \matC \matW^\pinv \matC^\transp} \leq \left( 1 + \frac{1}{\delta}\sqrt{\frac{2}{\epsilon}}\right)\FNorm{\matSig_2} + \frac{1}{\delta \epsilon} \tr{\matSig_2}
% \]
% with probability at least $1- 2\delta.$ (One could also use the concentration on the symmetric group here to take advantage of strong incoherence as in the spectral norm bound.)

\noindent
\textbf{Remark.}
As with previous bounds for uniform sampling, \emph{e.g.}, 
\cite{KMT12,Gittens12_TR}, these results for uniform sampling are much weaker 
than our bounds from the previous subsections, since the sampling complexity 
depends on the coherence of the input matrix.
When the matrix has small coherence, however, these bounds are similar to 
the bounds derived from the leverage-based sampling probabilities.
Recall that, by the algorithm of~\cite{DMMW12_JMLR}, the coherence of an 
arbitrary input matrix can be computed in roughly the time it takes to 
perform a random projection on the input matrix.

\section{Discussion and Conclusion}
\label{sxn:conc}

We have presented a unified approach to a large class of low-rank 
approximations of Laplacian and kernel matrices that arise in machine 
learning and data analysis applications; and in doing so we have provided 
qualitatively-improved worst-case theory and clarified the performance of 
these algorithms in practical settings.
Our theoretical and empirical results suggest several obvious directions for 
future~work.

In general, our empirical evaluation demonstrates that, to obtain 
moderately high-quality low-rank approximations, as measured by minimizing 
the reconstruction error, depends in complicated ways 
on the spectral decay, the leverage score structure, the eigenvalue gaps in 
relevant parts of the spectrum, etc. 
(Ironically, our empirical evaluation also demonstrates that \emph{all} the 
sketches considered are reasonably-effective at approximating both sparse 
and dense, and both low-rank and high-rank matrices which arise in practice. 
That is, with only roughly $\const{O}(k)$ measurements, the spectral, 
Frobenius, and trace approximation errors stay within a small multiplicative 
factor of around $3$ of the optimal rank-$k$ approximation errors. 
The reason for this is that matrices for which uniform sampling is least 
appropriate tend to be those which are least well-approximated by low-rank 
matrices, meaning that the residual error is much larger.)
Thus, \emph{e.g.}, depending on whether one is interested in $\ell$ being
slightly larger or much larger than $k$, leverage-based sampling or a 
random projection might be most appropriate; and, more generally, an 
ensemble-based method that draws complementary strengths from each of 
these methods might be best.

In addition, we should note that, in situations where one is concerned with 
the quality of approximation of the actual eigenspaces, one desires both a 
small spectral norm error (because by the Davis--Kahan sin$\Theta$ theorem 
and similar perturbation results, this would imply that the range space of 
the sketch effectively captures the top $k$-dimensional 
eigenspace of $\mat{A}$) as well as to use as few samples as possible 
(because one prefers to approximate the top $k$-dimension eigenspace of 
$\mat{A}$ with as close to a $k$-dimensional subspace as possible). 
Our results suggest that the leverage score probabilities supply the best 
sampling scheme for balancing these two competing objectives.

More generally, although our empirical evaluation consists of \emph{random} 
sampling and \emph{random} projection algorithms, our theoretical analysis 
clearly decouples the randomness in the algorithm from the structural 
heterogenities in the Euclidean vector space that are responsible for the 
poor performance of uniform sampling algorithms.
Thus, if those structural conditions can be satisfied with a deterministic
algorithm, an iterative algorithm, or any other method, then one can certify (after running the algorithm) that good 
approximation guarantees hold for particular input matrices in less time 
than is required for general matrices.
Moreover, this structural decomposition suggests greedy 
heuristics---\emph{e.g.}, greedily keep some number of columns according to 
approximate statistical leverage scores and ``residualize.''
In our experience, a procedure of this form often performs quite well in 
practice, although theoretical guarantees tend to be much weaker; and thus
we expect that, when coupled with our results, such procedures will perform 
quite well in practice in many medium-scale and large-scale machine learning applications.

\vspace{5mm}
\textbf{Acknowledgments.}
AG would like to acknowledge the support, under the auspice of Joel Tropp, of ONR awards N00014-08-1-0883 and N00014-11-1-0025, AFOSR award FA9550-09-1-0643, and a Sloan Fellowship; and 
MM would like to acknowledge a grant from the Defense Advanced Research 
Projects Agency.
\vspace{5mm}
%\newpage

%%\bibliographystyle{amsalpha}
\bibliographystyle{plain}
%\bibliography{nystromefficacy}
%\bibliography{nystromefficacy,communities,mwmbib_jrnl,mwmbib_proc,mwmbib_book,mwmbib_misc}
\bibliography{communities,mwmbib_jrnl,mwmbib_proc,mwmbib_book,mwmbib_misc,nystromefficacy}

\newpage
\appendix

\end{document}